\let\mathbb\varmathbb
\crefname{lemma}{Lemma}{Lemmas}
\crefname{fact}{Fact}{Facts}
\crefname{theorem}{Theorem}{Theorems}
\crefname{corollary}{Corollary}{Corollaries}
\crefname{claim}{Claim}{Claims}
\crefname{example}{Example}{Examples}
\crefname{algorithm}{Algorithm}{Algorithms}
\crefname{problem}{Problem}{Problems}
\crefname{definition}{Definition}{Definitions}
\crefname{exercise}{Exercise}{Exercises}
\newtheorem{theorem}{Theorem}[section]
\newtheorem*{theorem*}{Theorem}
\newtheorem{lemma}[theorem]{Lemma}
\newtheorem*{lemma*}{Lemma}
\newtheorem{fact}[theorem]{Fact}
\newtheorem*{fact*}{Fact}
\newtheorem*{proposition*}{Proposition}
\newtheorem*{corollary*}{Corollary}
\newtheorem*{hypothesis*}{Hypothesis}
\newtheorem*{conjecture*}{Conjecture}
\theoremstyle{definition}
\newtheorem{definition}[theorem]{Definition}
\newtheorem*{definition*}{Definition}
\newtheorem*{construction*}{Construction}
\newtheorem*{example*}{Example}
\newtheorem*{question*}{Question}
\newtheorem{algorithm}[theorem]{Algorithm}
\newtheorem*{algorithm*}{Algorithm}
\newtheorem*{assumption*}{Assumption}
\newtheorem*{problem*}{Problem}
\newtheorem*{openquestion*}{Open Question}
\theoremstyle{remark}
\newtheorem*{claim*}{Claim}
\newtheorem*{remark*}{Remark}
\newtheorem*{observation*}{Observation}
\let\originalleft\left
\let\originalright\right
\renewcommand{\left}{\mathopen{}\mathclose\bgroup\originalleft}
\renewcommand{\right}{\aftergroup\egroup\originalright}
\let\latexparagraph\paragraph
\RenewDocumentCommand{\paragraph}{som}{%
  \IfBooleanTF{#1}
    {\latexparagraph*{#3}}
    {\IfNoValueTF{#2}
       {\latexparagraph{\maybe@addperiod{#3}}}
       {\latexparagraph[#2]{\maybe@addperiod{#3}}}%
  }%
}
\newcommand{\maybe@addperiod}[1]{%
  #1\@addpunct{.}%
}
\newcommand{\Authornote}[2]{}
\newcommand{\Authornotecolored}[3]{}
\newcommand{\Authorcomment}[2]{}
\newcommand{\Authorfnote}[2]{}
\newcommand{\Dnote}{\Authornote{D}}
\newcommand{\Paren}[1]{\left(#1\right)}
\newcommand{\abs}[1]{\lvert#1\rvert}
\newcommand{\set}[1]{\{#1\}}
\newcommand{\Set}[1]{\left\{#1\right\}}
\newcommand{\norm}[1]{\lVert#1\rVert}
\newcommand{\Norm}[1]{\left\lVert#1\right\rVert}
\newcommand{\iprod}[1]{\langle#1\rangle}
\newcommand{\Esymb}{\mathbb{E}}
\newcommand{\Psymb}{\mathbb{P}}
\DeclareMathOperator*{\E}{\Esymb}
\DeclareMathOperator*{\ProbOp}{\Psymb}
\renewcommand{\Pr}{\ProbOp}
\newcommand{\seteq}{\mathrel{\mathop:}=}
\newcommand\bdot\bullet
\newcommand{\N}{\mathbb N}
\newcommand{\R}{\mathbb R}
\newcommand{\cA}{\mathcal A}
\renewcommand{\leq}{\leqslant}
\renewcommand{\le}{\leqslant}
\renewcommand{\geq}{\geqslant}
\renewcommand{\ge}{\geqslant}
\let\epsilon=\varepsilon
\numberwithin{equation}{section}
\newcommand\MYcurrentlabel{xxx}
\newcommand{\MYstore}[2]{%
  \global\expandafter \def \csname MYMEMORY #1 \endcsname{#2}%
}
\newcommand{\MYload}[1]{%
  \csname MYMEMORY #1 \endcsname%
}
\newcommand{\MYnewlabel}[1]{%
  \renewcommand\MYcurrentlabel{#1}%
  \MYoldlabel{#1}%
}
\newcommand{\MYdummylabel}[1]{}
\newcommand{\torestate}[1]{%
  \let\MYoldlabel\label%
  \let\label\MYnewlabel%
  #1%
  \MYstore{\MYcurrentlabel}{#1}%
  \let\label\MYoldlabel%
}
\newcommand{\restatetheorem}[1]{%
  \let\MYoldlabel\label
  \let\label\MYdummylabel
  \begin{theorem*}[Restatement of \cref{#1}]
    \MYload{#1}
  \end{theorem*}
  \let\label\MYoldlabel
}
\newcommand{\restatelemma}[1]{%
  \let\MYoldlabel\label
  \let\label\MYdummylabel
  \begin{lemma*}[Restatement of \cref{#1}]
    \MYload{#1}
  \end{lemma*}
  \let\label\MYoldlabel
}
\newcommand{\restateprop}[1]{%
  \let\MYoldlabel\label
  \let\label\MYdummylabel
  \begin{proposition*}[Restatement of \cref{#1}]
    \MYload{#1}
  \end{proposition*}
  \let\label\MYoldlabel
}
\newcommand{\restatefact}[1]{%
  \let\MYoldlabel\label
  \let\label\MYdummylabel
  \begin{fact*}[Restatement of \cref{#1}]
    \MYload{#1}
  \end{fact*}
  \let\label\MYoldlabel
}
\newcommand{\restate}[1]{%
  \let\MYoldlabel\label
  \let\label\MYdummylabel
  \MYload{#1}
  \let\label\MYoldlabel
}
\newcommand{\e}{\epsilon}
\providecommand{\Dnote}[1]{}
\providecommand{\pmin}{p_{\mathrm{min}}}
\providecommand{\tightlist}{%
\setlength{\itemsep}{0pt}\setlength{\parskip}{0pt}}
\newcommand*{\dyad}[1]{#1#1{}^{\mkern-1.5mu\mathsf{T}}}
\title{
  Beyond Parallel Pancakes: Quasi-Polynomial Time Guarantees for Non-Spherical Gaussian Mixtures\thanks{This project has received funding from the European Research Council (ERC) under the European Union’s Horizon 2020 research and innovation programme (grant agreement No 815464).}
}
\author{
  Rares-Darius Buhai\thanks{ETH Z\"urich.
}
  \and
  David Steurer\footnotemark[2]
}
\begin{document}

\pagestyle{empty}

\maketitle
\thispagestyle{empty} %

\begin{abstract}

  We consider mixtures of \(k\ge 2\) Gaussian components with unknown means and unknown covariance (identical for all components) that are well-separated, i.e., distinct components have statistical overlap at most \(k^{-C}\) for a large enough constant \(C\ge 1\).

Previous statistical-query \cite{MR3734219-DiakonikolasKane17} and lattice-based \cite{bruna2021continuous, gupte2022continuous} lower bounds give formal evidence that, even for the special case of colinear means, distinguishing such mixtures from (pure) Gaussians may be exponentially hard (in \(k\)).

We show that, surprisingly, this kind of hardness can only appear if mixing weights are allowed to be exponentially small.
For polynomially lower bounded mixing weights, we show how to achieve non-trivial statistical guarantees in quasi-polynomial time.

Concretely, we develop an algorithm based on the sum-of-squares method with running time quasi-polynomial in the minimum mixing weight.
The algorithm can reliably distinguish between a mixture of \(k\ge 2\) well-separated Gaussian components and a (pure) Gaussian distribution.
As a certificate, the algorithm computes a bipartition of the input sample that separates some pairs of mixture components, i.e., both sides of the bipartition contain most of the sample points of at least one component.

For the special case of colinear means, our algorithm outputs a \(k\)-clustering of the input sample that is approximately consistent with all components of the underlying mixture.
We obtain similar clustering guarantees also for the case that the overlap between any two mixture components is lower bounded quasi-polynomially in \(k\) (in addition to being upper bounded polynomially in \(k\)).

A significant challenge for our results is that they appear to be inherently sensitive to small fractions of adversarial outliers unlike most previous algorithmic results for Gaussian mixtures.
The reason is that such outliers can simulate exponentially small mixing weights even for mixtures with polynomially lower bounded mixing weights.

A key technical ingredient of our algorithms is a characterization of separating directions for well-separated Gaussian components in terms of ratios of polynomials that correspond to moments of two carefully chosen orders logarithmic in the minimum mixing weight.

\end{abstract}

\clearpage

\microtypesetup{protrusion=false}
\tableofcontents{}
\microtypesetup{protrusion=true}

\clearpage

\pagestyle{plain}
\setcounter{page}{1}

\section{Introduction}
\label{sec:introduction}

Gaussian mixture models (GMMs) are among the most extensively studied statistical models in a wide range of scientific disciplines \cite{10.2307/90667,DBLP:conf/focs/Dasgupta99,DBLP:journals/jacm/AshtianiBHLMP20}.
Over the course of the last two decades, a major body of research explored what kinds of algorithmic guarantees are feasible for GMMs \cite{DBLP:conf/focs/Dasgupta99,DBLP:conf/focs/VempalaW02,DBLP:conf/stoc/KalaiMV10,DBLP:conf/focs/MoitraV10,MR3385380-Hsu13}.

Recent years have seen significant algorithmic advances along two dimensions.

The first kinds of advances concern mixtures of a large number of spherical Gaussians, i.e., Gaussians with identity \(I_d\) as covariance.\footnote{
  Many known algorithms for mixtures of Gaussians with covariance \(I_d\) also extend to somewhat more general settings, e.g., the case that the covariances are different multiples of \(I_d\) or diagonal matrices (axis-aligned case) or that case that the covariance is upper bounded in the Loewner order by \(I_d\).
  For our discussion, we focus on the simplest case (all covariances identity) because, to the best of our knowledge, these kinds of generalizations are orthogonal to the kind of generalization we aim for in this work.
}
Several works showed how to cluster such mixtures in time quasi-polynomial in the number \(k\) of components under a minimum mean-separation requirement of \(O(\sqrt{\log k})\), which up to a constant factor matches the minimum separation that guarantees clusterability of the mixture \cite{MR3826314-Hopkins18,MR38263160-Diakonikolas18,MR3826315-KothariSS18}.
In a recent breakthrough, the running time has been improved to polynomial assuming a slightly larger minimum separation of \(O(\log^{1/2+c} k)\) for any \(c>0\) \cite{MR4490076-LiuLi22}.
Even without any separation requirement, it is possible to compute quasi-polynomially sized covers of the set of means \cite{MR4232034-Diakonikolas20}.

The second kinds of advances concern mixtures of a small number of Gaussian components with unknown covariances.
These advances extended previous algorithmic guarantees to the robust setting, i.e., in the presence of a small constant fraction of adversarially chosen outliers.
Concretely, it is now possible to estimate the parameters of an arbitrary mixture of Gaussian components in the presence of such outliers \cite{MR4232031-DiakonikolasHopkinsKothari2020, DBLP:conf/stoc/LiuM21, DBLP:journals/corr/abs-2012-02119}.
The running time is polynomial in the ambient dimension but (at least) exponential in the number of components.

One of the most outstanding challenges remaining in this area is to clarify what kinds of algorithmic guarantees are possible when the number of components is large and their covariances are unknown.  So far, mixtures of a large number of Gaussian components with unknown covariances have defied comparable algorithmic progress.\footnote{
  A notable exception is a particular smoothed model for such mixtures when the ambient dimension is large enough \cite{DBLP:conf/stoc/GeHK15}.}
Indeed, there is formal evidence, in the form of statistical-query \cite{MR3734219-DiakonikolasKane17} and lattice-based \cite{bruna2021continuous, gupte2022continuous} lower bounds, to suggest that this setting is computationally inherently harder than the spherical setting.
Specifically, these results suggest that even for \(k\) components with tiny statistical overlaps, say at most \(2^{-k}\), approximately clustering the components may require time exponential in \(k\) despite the sample complexity being polynomial in \(d\) and \(k\).
Underlying this evidence is the well-known parallel pancakes construction:
Orthogonal to a randomly chosen direction \(u\), all components of this mixture distribution agree with a (pure) standard Gaussian distribution, and along direction \(u\), the components are well-separated but their mixture matches the first \(k\) moments of a (univariate) standard Gaussian distribution.\footnote{
  As a consequence of this construction, all means are colinear with \(u\).
  We also emphasize that these means are well-separated relative to the variance of each component in direction \(u\).
  However, since this variance is very small (about \(k^{-O(1)}\)), the standard Euclidean distance between the components is tiny.
  We remark that parallel pancakes constructions have been discussed in the literature already before \cite{MR3734219-DiakonikolasKane17}.
  The influential work \cite{DBLP:conf/focs/BrubakerV08} provided an efficient algorithm for the case \(k=2\).
}

In this work, we show that, surprisingly, this kind of hardness can appear only in the case that mixing weights are allowed to be exponentially small.
Indeed, we develop algorithms with substantial statistical guarantees that run in quasi-polynomial time whenever the mixing weights are bounded from below by a polynomial.
Before our work, the best known running times to achieve these kinds of guarantees were (at least) exponential in \(k\).
We hope that our work opens up a new direction of research on efficient algorithms for mixtures of well-separated Gaussians with polynomially lower bounded mixing weights.

\Dnote{MAYBE: skip this conjecture}
\Dnote{traditionally these kinds of natural open questions are placed in a conclusions setting.
  but it appears to be more natural to state it here.}
Within this new direction of research, we identify the following appealing open question:

\addtolength\leftmargini{0.3in}
\begin{quote}
  \itshape
  Consider a mixture of \(k\ge 2\) Gaussian components with unknown means \(\mu_1,\ldots,\mu_k\in\R^d\) and unknown covariance \(\Sigma\in\R^{d\times d}\) (identical for all components)
  \footnote{The hard instances in all lower bounds cited above are mixtures with identical covariances. A natural motivation to consider identical covariances is affine invariance. Algorithms for the spherical case assume that the input data is presented in a favorable affine transformation. However, a natural property desired for algorithms operating on geometric data is to be invariant under affine transformations \cite{DBLP:conf/focs/BrubakerV08}.}
  and with minimum mixing weight \(\pmin>0\).
  Suppose the components are well-separated in the sense that any two distinct components have statistical overlap at most \(\pmin^C\) for a large enough constant \(C\ge 1\).
  
  Given a sample of size \(n\ge d^{O(\log (1/\pmin))}\), can we compute in time polynomial in \(n\) a \(k\)-clustering of the sample that is consistent with the mixture components on all but at most a \(\pmin^{10}\) fraction of the sample?
\end{quote}

We conjecture that such an algorithm does exist.
Indeed, we confirm the conjecture for the special case that the means are colinear (\cref{thm:colinear-means-results}) and under a diameter bound (\cref{thm:small-radius-results}).
In the general case, our algorithm provides a somewhat weaker guarantee and computes only a bipartition of the sample that separates at least one pair of mixture components (\cref{thm:separating-polynomial-results}).\footnote{The algorithm for the general case also computes such a bipartition under the weaker assumption that \emph{there exists} a pair of mixture components that has small overlap (as opposed to all pairs having small overlap). Under this assumption full clustering is impossible and a partial clustering seems the appropriate guarantee to aim for.}

We identify an interesting challenge in the context of establishing the above conjecture that our techniques can partially overcome:
Any hypothetical algorithm establishing the above conjecture or our (non-hypothetical) algorithms inherently cannot be robust to even a tiny fraction of outliers (assuming the hardness of the parallel-pancakes constructions in \cite{MR3734219-DiakonikolasKane17, bruna2021continuous, gupte2022continuous}).
The reason is that a tiny \(1/k^{100}\) fraction of outliers are enough to simulate these hard instances by adding components with appropriately decaying mixing weights and spaced means.
At the same time, many recent algorithmic approaches in the context of GMMs are inherently tied to robustness.
For example, certain kinds of identifiability proofs used in the analysis of sum-of-squares based algorithms automatically imply robust algorithms.
Also many kinds of iteration schemes inherently require robustness for their subroutines in order to guarantee that the next iteration can successfully deal with the errors introduced by previous iterations.

\subsection{Results}
\label{sec:results}

\paragraph{Separating bipartition}

Suppose we are given a quasi-polynomial size sample of a mixture of \(k\) Gaussian components with unknown means \(\mu_1,\ldots,\mu_k\in\R^d\) and unknown covariance \(\Sigma\in\R^{d\times d}\) and with minimum mixing weight at least $1/k^{100}$ such that there exists a pair of mixture components $a \neq b$ with \(\norm{\Sigma^{-1/2} (\mu_a-\mu_b)}\gg \sqrt{\log k}\).
Then, as \cref{thm:separating-polynomial-results} shows, it is possible to compute in quasi-polynomial time a bipartition of the samples such that, for each side of the bipartition, there exists a component with $0.99$ of its samples assigned to it. 

\begin{theorem}
  \label{thm:separating-polynomial-results}
  Given a sample of size \(n\ge (kd)^{O(\log k)}\) from a mixture of \(k\) Gaussian components \(N(\mu_1,\Sigma),\ldots,N(\mu_k,\Sigma)\) with minimum mixing weight at least $1/k^{100}$ such that \(\max_{a\neq b} \norm{\Sigma^{-1/2} (\mu_a-\mu_b)}\gg \sqrt{\log k}\),
  there exists an algorithm that runs in time $n \cdot d^{O(\log k)}$ and returns with probability $0.99$ a partition of $[n]$ into two sets $C_1$ and $C_2$ such that, if the true clustering of the samples is $S_1, ..., S_k$, then
  \[\max_i \frac{|C_1 \cap S_i|}{|S_i|} \geq 0.99 \quad \text{and} \quad \max_i \frac{|C_2 \cap S_i|}{|S_i|} \geq 0.99.\]  
\end{theorem}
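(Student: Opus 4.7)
The plan is to reduce the task to finding one unit vector $\hat v \in \R^d$ along which the projected sample exhibits a pronounced Gaussian-gap. The hypothesis $\max_{a\ne b}\norm{\Sigma^{-1/2}(\mu_a-\mu_b)}\gg \sqrt{\log k}$ furnishes at least one pair $(a,b)$ such that along the direction $v^\star=\Sigma^{-1/2}(\mu_a-\mu_b)/\norm{\Sigma^{-1/2}(\mu_a-\mu_b)}$ (viewed after whitening) the two projected means are separated by $\gg \sqrt{\log k}$. Thresholding $\iprod{v^\star,X}$ at their midpoint then places almost all samples of component $a$ on one side and almost all of component $b$ on the other by standard Gaussian tail bounds, already yielding a bipartition of the required type. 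The algorithmic task is thus purely to find \emph{some} such separating direction without knowing which pair of components it separates.

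\textbf{Algorithm.}
After whitening the sample using an SoS-compatible proxy for $\Sigma$ (for instance its empirical covariance or a refinement thereof), set $t=\Theta(\log k)$ and consider the degree-$2t$ projection moment
\[
  M_{2t}(v) \;=\; \E_{X\sim \text{sample}}\iprod{v,\,X-\bar X}^{2t} \,.
\]
For a pure Gaussian, a low-degree SoS proof certifies $M_{2t}(v)\le (2t-1)!!\cdot\snormt{v}^{t}\cdot(1+o(1))$. By contrast, the well-separated component alone contributes at least $\tfrac{1}{k}(\sqrt{\log k})^{2t}$ to $M_{2t}(v^\star)$, exceeding the Gaussian bound by a $\poly(k)$ factor at $t=\Theta(\log k)$. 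The key technical statement, mirroring the abstract's ``ratio of polynomials corresponding to moments of two carefully chosen orders logarithmic in $\pmin$,'' should take the form of a degree-$O(\log k)$ SoS inequality asserting that any $v$ whose $M_{2t}(v)$ substantially exceeds the Gaussian value must be correlated with a true separating direction, i.e.\ $\max_{a\ne b}\iprod{v,\tilde\mu_a-\tilde\mu_b}^2 \gtrsim \log k$, where $\tilde\mu$ denotes the whitened means. The algorithm then solves
\[
  \max\ \pE_v\bigl[M_{2t}(v)\bigr] \quad\text{subject to}\quad \snormt{v}=1
\]
as a degree-$O(\log k)$ pseudo-distribution, in time $d^{O(\log k)}$, rounds by extracting the top eigenvector of $\pE_v[v v^\top]$ to obtain $\hat v$, projects the sample onto $\hat v$, and thresholds inside the valley of the resulting one-dimensional empirical distribution. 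Gaussian tail bounds guarantee mis-assignment rates $k^{-\Omega(1)}$ on the two separated components, delivering the bipartition required by \cref{thm:separating-polynomial-results}.

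\textbf{Main obstacle.}
The crux is the SoS certification of the moment-ratio inequality at degree only $O(\log k)$. At this degree the Gaussian upper bound and the planted lower bound differ by only a $\poly(k)$ factor, so the proof cannot afford more than $\polylog(k)$ slack in its coefficients, and it must hold uniformly across all feasible pseudo-distributions rather than just at $v^\star$. Identifying the right pair of moment orders $(2s, 2t)$, and exhibiting an explicit SoS proof of the corresponding polynomial identity that is tight enough to survive the rounding analysis, is where essentially all of the technical difficulty concentrates; whitening, rounding, and thresholding are comparatively standard once this ingredient is in hand.
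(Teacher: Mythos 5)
Your setup (difference of two moment orders logarithmic in $k$, SoS relaxation of the resulting polynomial system, degree $O(\log k)$, sample size $d^{O(\log k)}$) matches the paper's, but your rounding step contains the gap that the paper explicitly identifies as the central obstacle and works around. You propose to extract a single direction $\hat v$ as the top eigenvector of $\pE_v[vv^\top]$ and then threshold the projected sample. Nothing forces the pseudo-distribution to concentrate near any one separating direction: the feasible set contains many genuinely different separating directions (e.g., $\Sigma^{-1}(\mu_a-\mu_b)$ for the various separated pairs, and convex-like combinations thereof at the pseudo-moment level), so $\pE[vv^\top]$ can be an average of several far-apart rank-one terms whose top eigenvector is not itself separating for any pair. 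The paper states this directly ("it appears to be challenging to extract a single separating direction") and circumvents it by outputting a \emph{separating polynomial} $q(u)=\pE_{D(v)}\iprod{u,v}^{2s}$ instead of a vector: linearity of pseudo-expectation guarantees that $q(\mu_a-\mu_b)$ is large for \emph{some} pair $(a,b)$ while $\E\, q(\Sigma_z^{1/2}\bm w)$ is small, and $q(\cdot)^{1/2s}$ satisfies the triangle inequality, so a greedy ball-growing procedure on the induced metric yields the bipartition (\cref{lemma:bipartition-from-distance}). Your averaging argument works for the polynomial but not for the rounded vector, and that substitution is precisely where the proof would fail.

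A secondary, smaller divergence: your SoS program maximizes the single moment $M_{2t}(v)$ subject to $\snormt{v}=1$, whereas the paper's feasibility system imposes both a lower bound on the order-$2s$ moment and an upper bound on the order-$2t$ moment of the \emph{pairwise differences} $\bm z = \bm y - \bm y'$ (plus a covariance-norm constraint needed for the finite-sample transfer). Working with differences removes the unknown centering and makes the "same component" case exactly $N(0,2\Sigma)$, which is what makes the second bullet of \cref{thm:separating-polynomial-main} clean; you would need to redo this with an empirical mean, which is doable but not addressed. The two-moment formulation is also what the completeness lemma (\cref{lemma:separating-polynomial-mean-lower-bound}) leans on to isolate the mean contribution inside the SoS proof system.
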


For general mixing weights, the same result holds with $k$ replaced by $1/\pmin$ in all guarantees. See \cref{thm:separating-bipartition-main} for the full result.

\paragraph{Colinear means}

Suppose, in addition, that the mixture of Guassians is well-separated, i.e., the minimum mean separation satisfies \(\min_{a\neq b} \norm{\Sigma^{-1/2} (\mu_a-\mu_b)}\gg \sqrt{\log k}\), and that the unknown means $\mu_1, ..., \mu_k$ are colinear.
Given a quasi-polynomial number of samples from the mixture, \cref{thm:colinear-means-results} shows that it is possible to compute in quasi-polynomial time a partition of the samples into $k$ clusters such that the fraction of samples assigned to incorrect clusters is polynomially small in $k$.

For simplicity, in the theorem statement below we assume that all eigenvalues of $\Sigma$ and all eigenvalues of the covariance matrix of the mixture are polynomially lower and upper bounded in $k$ and $d$.

\begin{theorem}
  \label{thm:colinear-means-results}
  Given a sample of size \(n\ge (kd)^{O(\log k)}\) from a mixture of \(k\) Gaussian components \(N(\mu_1,\Sigma),\ldots,N(\mu_k,\Sigma)\) with minimum mixing weight at least $1/k^{100}$ such that \(\min_{a\neq b} \norm{\Sigma^{-1/2} (\mu_a-\mu_b)}\gg \sqrt{\log k}\) and $\mu_1, ..., \mu_k$ colinear,
  there exists an algorithm that runs in time $n^{O(\log k)}$ and returns with high probability a partition of \([n]\) into $k$ sets $C_1, ..., C_k$ such that, if the true clustering of the samples is $S_1, ..., S_k$, then there exists a permutation $\pi$ of $[k]$ such that 
  \[1 - \frac{1}{n} \sum_{i=1}^k |C_i \cap S_{\pi(i)}| \leq k^{-O(1)}.\]
\end{theorem}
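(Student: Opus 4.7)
My plan is to reduce to a one-dimensional clustering problem via projection onto the line through the (whitened) means, and then recursively apply the separating bipartition algorithm of \cref{thm:separating-polynomial-results} to split the mixture at each level until individual components are isolated.

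\textbf{Reduction to 1D.} Let $L\subseteq \R^d$ be the line containing the whitened means $\Sigma^{-1/2}\mu_1,\ldots,\Sigma^{-1/2}\mu_k$; colinearity makes this a single line. All directions orthogonal to $L$ are pure (shared) noise under $\Sigma$ and are statistically independent of the component label, so the projection of the whitened sample onto $L$ is a 1D mixture of $k$ unit-variance Gaussians with pairwise separation $\gg \sqrt{\log k}$. I would first estimate $L$ approximately from the sample covariance, using the polynomial spectral bounds assumed in \cref{thm:colinear-main} to ensure that the leading nonspherical direction of the empirical covariance matches $L$ up to small error.

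\textbf{Bipartition as an affine threshold.} Calling the algorithm of \cref{thm:separating-polynomial-results} on the sample yields a bipartition $(C_1,C_2)$ certified by a degree-$O(\log k)$ polynomial distance function, with each side containing at least $99\%$ of some component. Using colinearity and the common covariance, I would argue that on the effective support of the mixture this polynomial can be replaced, up to error $k^{-\Omega(1)}$ per component, by an affine threshold along $L$: the informative part of the moment-ratio construction underlying \cref{thm:separating-polynomial-results} must concentrate on $L$, since orthogonal directions carry no label information. This implies $(C_1,C_2)$ corresponds to a clean 1D threshold, and hence induces a partition of the \emph{indices} $\{1,\ldots,k\}$ into two nonempty sets $A,B$ with the property that each component has all but at most a $k^{-\Omega(1)}$ fraction of its samples on the correct side.

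\textbf{Recursion and error control.} I would recurse on the samples in $C_1$ and $C_2$ separately. Each side is again a colinear well-separated Gaussian mixture with $\pmin\ge 1/k$, so the non-uniform version of \cref{thm:separating-polynomial-results} noted after its statement still applies with separation $\gg \sqrt{\log k}$. The recursion tree has at most $k$ leaves; since every internal node splits off at least one component on each side, the depth is $O(\log k)$ after at most $O(k)$ total calls, giving running time $n^{O(\log k)}$. Misclassification fractions from different levels compose additively, so choosing the hidden constants in the per-level guarantees sufficiently large yields a total misclassification rate of $k^{-\Omega(1)}$, which matches the conclusion of \cref{thm:colinear-means-results} after an appropriate relabeling $\pi$.

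\textbf{Main obstacle.} The step I expect to be hardest is the second one: justifying that the SoS-certified polynomial separator from \cref{thm:separating-polynomial-results} really behaves like an affine threshold along $L$ on the effective support of the mixture. A priori this polynomial has degree $O(\log k)$ in $d$ variables and could slice components in complicated ways. The argument must combine colinearity, which forces the label-predictive part of the polynomial to concentrate along $L$, the common covariance, which rules out label-predictive signal in the orthogonal directions, and hypercontractive concentration of low-degree Gaussian polynomials to turn the polynomial-level guarantee into a clean threshold guarantee on samples. I expect this to require a refinement of the moment-ratio analysis behind \cref{thm:separating-polynomial-results} rather than a pure black-box application.
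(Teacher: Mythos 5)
Your proposal diverges substantially from the paper's route (which puts the mixture in isotropic position, recovers the single mean direction $u$ via SoS identifiability proofs for degree-$O(\log k)$ moment maximization/minimization, rounds the pseudo-expectation to a unit vector $\hat u$, and then clusters the one-dimensional projections), and it has gaps that I do not think can be repaired along the lines you sketch. The first and most basic one is in your reduction step: you propose to estimate the line $L$ "from the sample covariance, using the polynomial spectral bounds... to ensure that the leading nonspherical direction of the empirical covariance matches $L$." This fails because the mixture covariance need not have a distinguished nonspherical direction at all — after the isotropic-position transformation the mixture covariance is exactly $I_d$, and the direction of the means is visible only in the \emph{component} covariance $\Sigma = I_d - (1-\sigma^2)uu^\top$, which you cannot observe directly. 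This is precisely the parallel-pancakes phenomenon the paper is built around: identifying $u$ requires moments of order $\Theta(\log k)$, which is why \cref{thm:find-direction-main} and \cref{thm:sample-direction-main} exist. Without a correct way to find $L$, the rest of your argument has no line to threshold along.

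The second gap is in your use of \cref{thm:separating-polynomial-results}. Its guarantee is existential: \emph{some} component has $99\%$ of its mass in $C_1$ and \emph{some} component has $99\%$ of its mass in $C_2$. It says nothing about the remaining $k-2$ components, which the polynomial threshold can split arbitrarily (e.g., $50/50$), and a single such split already forces misclassification $\Omega(1/k)$ that no later stage of the recursion can undo — far above the $(\pmin/k)^{O(1)}$ target of \cref{thm:colinear-main}. Your proposed repair — that colinearity forces the SoS separator to act as a clean affine threshold on $L$ with only $k^{-\Omega(1)}$ leakage per component — is exactly the step you flag as the "main obstacle," and it is not established; the pseudo-distribution underlying $q$ is only constrained in aggregate by the two moment inequalities, and even a genuine distribution over separating directions yields a ball in the $q^{1/2s}$ metric around a random center, i.e., an interval on $L$ whose endpoints can land in the middle of a component. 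Finally, even granting a clean split, recursing on $C_1$ and $C_2$ applies the algorithm to truncated, contaminated sub-populations rather than Gaussian mixtures; the paper explicitly notes that these algorithms are \emph{not} robust to small fractions of outliers, because outliers can simulate exponentially small mixing weights, so the black-box recursive call is not justified. The paper avoids all three issues by never attempting a recursive bipartition in the colinear case: it recovers $u$ once and hands the 1D projections to a dedicated one-dimensional clustering routine (\cref{thm:clustering-main}).
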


For general mixing weights, the same result holds with $k$ replaced by $1/\pmin$ in all guarantees. See \cref{thm:colinear-main} for the full result.

Given such a clustering, we can also recover the means and the covariance of the components using robust Gaussian estimation algorithms \cite{MR3945261-DiakonikolasKaneJournal19} or robust moment estimation algorithms \cite{MR3826315-KothariSS18}.
For example, via \cite{MR3826315-KothariSS18}, we obtain a multiplicative approximation to the covariance $(1-k^{-O(1)}) \Sigma \preceq \hat{\Sigma} \preceq (1+k^{-O(1)}) \Sigma$ and a "covariance-aware" approximation to the means $\|\Sigma^{-1/2}(\hat{\mu}_i - \mu_i)\| \leq k^{-O(1)}$.

\paragraph{Small radius}

If instead of colinear means we have bounded means $\norm{\Sigma^{-1/2} \mu_i} \leq R$ with ${R = \operatorname{polylog}(k)}$, \cref{thm:small-radius-results} shows that it is again possible to cluster the samples with a quasi-polynomial number of samples and quasi-polynomial time.

\begin{theorem}
  \label{thm:small-radius-results}
  Given a sample of size \(n\ge (kd)^{O(R^2 + \log k)}\) from a mixture of \(k\) Gaussian components \(N(\mu_1,\Sigma),\ldots,N(\mu_k,\Sigma)\) with minimum mixing weight at least $1/k^{100}$ such that \(\min_{a\neq b} \norm{\Sigma^{-1/2} (\mu_a-\mu_b)}\gg \sqrt{\log k}\) and $\norm{\Sigma^{-1/2} \mu_i} \leq R$,
  there exists an algorithm that runs in time $n^{O(R^2 + \log k)}$ and returns with high probability a partition of \([n]\) into $k$ sets $C_1, ..., C_k$ such that, if the true clustering of the samples is $S_1, ..., S_k$, then there exists a permutation $\pi$ of $[k]$ such that 
  \[1 - \frac{1}{n} \sum_{i=1}^k |C_i \cap S_{\pi(i)}| \leq k^{-O(1)}.\]
\end{theorem}

For general mixing weights, the same result holds with $k$ replaced by $1/\pmin$ in all guarantees. See \cref{thm:small-radius-main} for the full result.

As in the case of colinear means, given such a a clustering, we can recover the means and the covariance of the components. 

Unlike our results for separating bipartitions and colinear means, this result follows from a direct reduction to a previous algorithm for spherical components \cite{MR3826314-Hopkins18}.
Concretely, we observe that this algorithm requires only a rough multiplicative approximation (in the SOS sense) of a polynomial of the form \(q(v)=\norm{\Sigma^{1/2} v}^t\) for some \(t\) polylogarithmic in \(k\).
As we show, the empirical moment tensor of the mixture readily provides such an approximation.

\subsection{Related works}

\paragraph{Comparision to recent algorithms based on lattice basis reduction}

Two independent works (also independent and concurrent with a preprint of our work) obtain polynomial-time algorithms for learning parallel-pancakes mixtures for the case that the component variance is zero along the hidden direction (infinitesimally flat pancakes)\footnote{These works also crucially assume a mild bound on the bit complexity of the unknown means} \cite{zadik2022lattice,diakonikolas2022non}.
These algorithms are based on the LLL lattice basis reduction algorithm \cite{MR682664-Lenstra82} and have a completely different flavor than our algorithms and previous algorithms for Gaussian mixture models.
However, these lattice basis reduction techniques are expected to be brittle and limited to the case that the variance in the hidden direction is tiny.

\paragraph{Comparision to previous algorithms for mixtures with few components and unknown covariances}

Like our algorithms, many recent algorithms for learning GMMs make use of the sum-of-squares semidefinite programming hierarchy.
While these algorithms and analyses have not been designed for our setting, we find it still instructive to discuss the differences and similarities to our algorithms.

Many of these algorithms also have in common that they employ the proof-to-algorithm paradigm, which has become the predominant way to analyze algorithms based on sum-of-squares for statistical estimation problems.
(For expositions of this paradigm, see \cite{MR3727623-BarakSteurerICM14,MR3966537-RaghavendraSchrammSteurerICM18,DBLP:journals/fttcs/FlemingKP19}.)
This paradigm allows us to derive efficient estimation algorithms in a black-box way from identifiability proofs formalized in the sum-of-squares proof system.

As mentioned earlier, several recent algorithms consider mixtures of few well-separated Gaussian components with unknown covariances in the presence of adversarial outliers
\cite{MR4232031-DiakonikolasHopkinsKothari2020,DBLP:journals/corr/abs-2005-02970,DBLP:journals/corr/abs-2005-06417}.
While these algorithms have running times (at least) exponential in the number of components, their separation requirements are also exponentially stronger than ours.
Even in the case that all covariances are the same (\(\Sigma\)) and the well-separatedness stems purely from the means \(\mu_1,\ldots,\mu_k\), their identifiability proof requires separation \(\norm{\Sigma^{-1/2}(\mu_a-\mu_b)}^2\ge k^{O(1)}\) (e.g., \cite[Lemma 4.16]{DBLP:journals/corr/abs-2005-02970}).
In constrast, our separation condition is logarithmic in \(k\), which is the weakest separation condition, up to constant factors, that guarantees clusterability.

In order to deal with the kind of mild separation considered in this work, one could use one of the (robust) algorithms for parameter learning or density estimation of general \(k\)-component GMMs \cite{DBLP:conf/focs/MoitraV10,DBLP:conf/colt/BelkinS10,DBLP:journals/corr/abs-2012-02119,DBLP:conf/stoc/LiuM21}.
While some of these works use separation between components in order to compute a rough partial clustering of far-away components as a pre-processing step, there doesn't appear to be a way to further exploit milder kinds of separation.
For example, \cite{DBLP:conf/focs/MoitraV10} learns the means of the mixture up to small error after projecting along a randomly chosen direction.
This kind of projection cannot be expected to preserve any kind of separation of the high-dimensional mixture and even the sample complexity for recovering the means of this 1-dimensional mixture may be exponential in \(k\) (as shown in \cite{DBLP:conf/focs/MoitraV10}).
Both of the more recent works \cite{DBLP:journals/corr/abs-2012-02119,DBLP:conf/stoc/LiuM21} end up enumerating subspaces related to the unknown parameters of the mixture.
To the best of our knowledge, their approaches cannot avoid this step even for the kind of mildly-separated mixtures with lower bounded mixing weights considered in our work.

\section{Techniques}
\label{sec:techniques}

We consider uniform\footnote{In this section we restrict ourselves for ease of explanation to uniform mixtures. Our technical sections state all results for non-uniform mixtures.} mixtures of \(k\ge 2\) well-separated Gaussian components \(N(\mu_1,\Sigma),\ldots,N(\mu_k,\Sigma)\) with unknown means \(\mu_1,\ldots,\mu_k\in\R^d\) and unknown covariance \(\Sigma\in\R^{d\times d}\) (identical for all components).
Here, we say components are \emph{well-separated}\footnote{
  The term "clusterable mixture" is sometimes used in the literature to refer to mixtures with well-separated components.}
if the maximum affinity\footnote{The affinity of two probability measures is defined to be \(1\) minus their statistical distance \cite{pollard2002user}.} (also called overlap) between two distinct components is bounded by \(1/k^C\) for a large enough constant \(C\ge 1\).
For Gaussian components, this notion of well-separatedness means
\begin{equation}
  \label{eq:well-separated-techniques}
  \min_{a\neq b} \Norm{\Sigma^{-1/2}(\mu_a-\mu_b)}
  \gg \sqrt{\log k\,}
  \,.
\end{equation}

\paragraph{Distinguishing well-separated mixtures from (pure) Gaussians}

Our algorithms are informed by investigating the parallel pancakes construction underlying Statistical Query lower bounds for such mixtures.
This construction provides a mixture of \(k\) well-separated Gaussian components that appears to be exponentially hard to distinguish from the standard Gaussian distribution \(N(0,I_d)\).
In particular, this mixture matches the first \(\Omega(k)\) moments of \(N(0,1)\).

The starting point of our algorithms is the following observation:
In order for a mixture with \(k\ge 2\) well-separated Gaussian components to match the first \(t\) moments of \(N(0,I_d)\), the minimum mixing weight is necessarily smaller than \(2^{-\Omega(t)}\).
In particular, if the mixture has uniform mixing weights \(\frac 1k\), then always one of its first \(O(\log k)\) moments distinguishes it from a standard Gaussian.

Underlying this observation is the following simple fact:
\emph{A distribution uniform over \(k\) real values can match no more than \(O(\log k)\) moments of \(N(0,1)\).}
To verify this fact, let \(\bm A\) be a random variable uniformly distributed over \(k\) (not necessarily distinct) real values.
Then, for all even integers \(s\le t\), the ratio of the normalized order-\(s\) and order-\(t\) moments of \(\bm A\) is sandwiched in the following way,
\begin{equation}
  \label{eq:moment-ratio-techniques}
  k^{-1/s}
  \le \frac{\Paren{\E \bm A^s}^{1/s}}{\Paren{\E \bm A^t}^{1/t}}
  \le 1
  \,.
\end{equation}
(This ratio is maximized if \(\bm A\) is constant and minimized if \(\Pr\set{\bm A\neq 0}=1/k\).)
In particular for \(s=\log_2 k\), this ratio is lower bounded by \(1/2\).
On the other hand, for \(\bm B\sim N(0,1)\), the normalized moments satisfy \(\Paren{\E \bm B^r}^{1/r}=\Theta(r)^{1/2}\) and thus the ratio of normalized order-\(s\) and order-\(t\) moments is \(\Theta(s/t)^{1/2}\).
In particular, for some choice \(t=\Theta(s)\), this ratio is smaller than \(1/2\).
It follows that for this choice of \(s\) and \(t\), the ratios of normalized moments differ for \(\bm A\) and for \(\bm B\), which means that either their order-\(s\) or their order-\(t\) moments differ.\footnote{
  This proof shows that in order to distinguish a uniform distribution over \(k\) values from \(N(0,1)\), it is enough to compare two moments of order logarithmic in \(k\) where the choice of orders depends only on \(k\) but not on the particular distribution. 
}

\medskip

This observation about uniform mixtures of \(k\ge 2\) well-separated Gaussian components raises two questions:
(1) do the first \(O(\log k)\) moments also allow us to identify parameters of the mixture that are useful for clustering (in addition to allowing us to distinguish the mixture from \(N(0,1)\)),
and (2) can we make make these results computationally efficient?

At a high level, we address question (1) by investigating ratios akin to \cref{eq:moment-ratio-techniques} between multivariate polynomials of degree \(\Theta(\log k)\) derived from moments of the underlying mixture.
To address question (2), we employ the proofs-to-algorithms paradigm (cf. \cite{MR3727623-BarakSteurerICM14,MR3966537-RaghavendraSchrammSteurerICM18})
and translate our arguments to syntactic proofs captured by the sum-of-squares proof system.
These proofs then allow us to derive efficient algorithms (with running time \((kd)^{O(\log k)}\) or \((kd)^{(\log k)^{O(1)}}\)) in a black-box way.

\paragraph{From decision to search: separating directions and ratios of moments}

In order to address question (1), we consider the goal of finding a direction \(v\in\R^d\) that may be useful for clustering in the sense that along direction \(v\), two of the components are significantly further apart than their standard deviation in this direction.
More formally, we say that \(v\) is a \emph{separating direction} for a mixture of \(k\) Gaussian components with unknown means \(\mu_1,\ldots,\mu_k\in \R^d\) and unknown covariance \(\Sigma\in\R^{d\times d}\) if there exist two means \(\mu_a\) and \(\mu_b\) such that
\begin{equation}
  \label{eq:separating-direction-techniques}
  \abs{\iprod{\mu_a-\mu_b,v}}
  \gg \sqrt{\log k} \cdot \norm{\Sigma^{1/2}v}
  \,.
\end{equation}
We note that this direction \(v\) witnesses that the overlap of the components \(N(\mu_a,\Sigma)\) and \(N(\mu_b,\Sigma)\) is \(k^{-\omega(1)}\).
Conversely, whenever the overlap of two components is that small, there exists a vector \(v\) as above. 

We aim to identify separating directions as solutions to inequalities between the following kind of moment polynomials:
For \(r\in \N\), we denote the degree-\(2r\) \emph{moment polynomial} \(p_{2r}\in\R_{2r}[v]\) by
\begin{equation}
  \label{eq:moment-polynomial-techniques}
  p_{2r}(v) \seteq \E \iprod{\bm y-\bm y', v}^{2r} \,,
\end{equation}
where \(\bm y,\bm y'\) are two independent random vectors identically distributed according to a uniform mixture of \(k\) Gaussian components \(N(\mu_1,\Sigma),\ldots,N(\mu_k,\Sigma)\).

Using the fact that \(\bm y-\bm y'\) can be expressed as a sum of two independent random vectors, one distributed uniformly over \(\set{\mu_a-\mu_b}_{a,b\in [k]}\) and one distributed according to \(N(0,2\Sigma)\), these polynomials turn out to admit the following kind of approximation,
\begin{equation}
  \label{eq:moment-polynomial-approximation-techniques}
  p_{2r}(v) = \Paren{k^{-2/r}\cdot \Norm{M v}_{2r}^2 + \Theta(r)\cdot \Norm{A v}_2^2}^{r}\,.
\end{equation}
Here, \(A=\sqrt 2 \cdot \Sigma^{1/2}\), \(M\in\R^{k^2\times d}\) consists of the differences of means \((\mu_a-\mu_b)_{a,b\in [k]}\subseteq \R^d\) as rows and \(\Theta(r)\) hides a nonnegative function upper bounded \(O(r)\) and lower bounded by \(\Omega(r/k^{2/r})\).
(Since we will only consider \(r\ge \log k\), we have \(k^{-2/r} \ge \Omega(1)\).)
Note that the first term \(k^{-2}\cdot \Norm{M v}_{2r}^{2r}\) in (the binomial expansion of) \cref{eq:moment-polynomial-approximation-techniques} corresponds to the order-\(r\) moment of the uniform distribution over \(\set{\mu_a-\mu_b}_{a,b\in[k]}\) and the last term \(\Theta(r)^r\cdot \iprod{v, 2\Sigma v}^{r}\) to the order-\(r\) moment of \(N(0,2\Sigma)\).

We claim that for an appropriate choice \(s \le t\) with \(s=\Theta(t)=\Theta(\log k)\), a direction \(v\) is separating in the sense of \cref{eq:separating-direction-techniques} if and only \(p_{2s}(v)^{1/s} \gtrsim p_{2t}(v)^{1/t}\).
(Note that by convexity, \(p_{2s}(v)^{1/s} \le p_{2t}(v)^{1/t}\) holds for all directions \(v\).)
Underlying this claim is the familiar fact  that for all \(r\ge \log k\), the norm \(\Norm{M v}_{2r}\) equals up to constant factors the maximum entry of \(M v\), i.e., \(\max_{a\neq b}\abs{\iprod{\mu_a-\mu_b,v}}\). 

Indeed, suppose that \(v\) is a separating direction.
Then, \(p_{2s}(v)\) satisfies the lower bound,
\begin{equation}
  \label{eq:lower-bound-techniques}
  p_{2s}(v)^{1/s}
  \ge k^{-2/s}\cdot \max_{a\neq b} \iprod{\mu_a-\mu_b,v}^2
  \,.
\end{equation}
Since \(s=\Theta(\log k)\), we have \(p_{2s}(v)^{1/s}\gtrsim \max_{a\neq b}\iprod{\mu_a-\mu_b,v}^2\).
At the same time, \(p_{2t}(v)\) satisfies the upper bound,
\begin{equation}
  \label{eq:upper-bound-techniques}
  p_{2t}(v)^{1/t}
  \le \max_{a\neq b} \iprod{\mu_a-\mu_b,v}^2 + O(t)\cdot \norm{Av}_2^2
  \,.
\end{equation}
Since \(v\) is a separating direction and \(t=\Theta(\log k)\), the upper bound is dominated by the first term \(\max_{a\neq b} \iprod{\mu_a-\mu_b,v}^2\).
Taking together both bounds, it follows that \(p_{2s}(v)^{1/s} \gtrsim p_{2t}(v)^{1/t}\) for every separating direction \(v\).

Conversely, suppose that \(p_{2s}(v)^{1/s} \gtrsim  p_{2t}(v)^{1/t}\) and our goal is to show that \(v\) is a separating direction.
We lower bound \(p_{2t}(v)\) using the last term in the approximation \cref{eq:moment-polynomial-approximation-techniques} and apply the upper bound from \cref{eq:upper-bound-techniques} to \(p_{2s}(v)\).
In this way, we obtain the inequality
\begin{equation}
  \Omega(t) \cdot \norm{Av}_2^2
  \le \max_{a\neq b}\iprod{\mu_a-\mu_b,v}^2 + O(s)\cdot \norm{Av}_2^2
  \,.
\end{equation}
By choosing \(t\) to be a large enough constant multiplied by \(s\), we can ensure that the second term on the right-hand side is negligible.
In this case, \(v\) satisfies \(\max_{a\neq b}\iprod{\mu_a-\mu_b,v}^2 \ge \Omega(t) \cdot \norm{Av}_2^2\) for \(t=\Theta(\log k)\), which means that \(v\) is a separating direction.

\paragraph{Challenges toward efficient algorithms for clustering}

Disregarding computational efficiency, the above characterization of separating directions in terms of ratios of moment polynomials suggests the following simple strategy for clustering uniform mixtures of Gaussian components \(N(\mu_1,\Sigma),\dots,N(\mu_k,\Sigma)\):
we find an \(\e\)-cover of all separating directions by brute-force searching for an \(\e\)-cover of all solutions to an explicit polynomial system of the form \(\set{p_{2s}(v)=1,~p_{2t}(v)\le O(1)^{t}}\).
Each separating direction gives us some information about what pairs of sample points belong to different components.
For large enough mean separation \(\min_{a\neq b}\norm{\Sigma^{-1/2}(\mu_a-\mu_b)}\gg \sqrt{\log k\,}\), we can hope that by considering all such directions, we collect enough information to be able to extract a clustering of the sample that is approximately consistent with the components of the mixture.

This naive approach would require access only to moments of order \(O(\log k)\) (which could be accurately estimated from a sample of size \(d^{O(\log k)}\)) but the running time is exponentially large (due to brute-force searching for solutions to a polynomial system). 

A natural strategy to make this approach computationally efficient is the sum-of-squares hierarchy of semidefinite programming relaxations for systems of polynomial inequalities.
Indeed, we can show that the above characterization of separating directions is faithfully captured by the sum-of-squares proof system underlying the sum-of-squares hierarchy.
Unfortunately, it appears to be challenging to carry out the rounding step in full generality, i.e., extracting from the sum-of-squares hierarchy enough separating directions to separate all pairs of components and obtain a complete clustering of the sample.\footnote{
  In the context of estimation problems, rounding procedures for sum-of-squares hierarchies tend to work well if there is a unique target solution (e.g., a planted sparse vector in a random subspace) or if there is a small number of target solutions (e.g., the components of a low-rank tensor).
  One could try to simplify the structure of the set of separating directions, e.g., by focusing on "extreme" separating directions of the form \(v=\Sigma^{-1}(\mu_a-\mu_b)\).
  Unfortunately, we do not know the same kind of characterization in terms of polynomial inequalities for such a simplified set of separating directions.  
}

However, we can show that using the sum-of-squares hierarchy, it is possible to separate at least \emph{some} pairs of components of the mixture by what we call a separating polynomial.
Furthermore, for the special case of well-separated components with colinear means, we provide a more careful analysis and show that in this case the sum-of-squares hierarchy does offer enough information to extract a complete clustering.   

\paragraph{Efficiently computing a separating polynomial}

As discussed above, we consider the goal of separating some pairs of components of a mixture (as opposed to the stronger goal of separating all pairs of components as would be required for a complete clustering).
One way to achieve this goal is by finding a separating direction in the sense of \cref{eq:separating-direction-techniques}.
In light of our previous characterization of separating directions, a natural starting point is a sum-of-squares relaxation for a polynomial system \(\cA\) of the form \(\set{p_{2s}(v)=1,~p_{2t}(v)\le O(1)^{t}}\) for appropriate \(s\le t\) satisfying \(s=\Theta(t)=\Theta(\log k)\). 

Unfortunately, the structure of the set of separating directions does not appear to be amenable to the usual kind of rounding techniques for sum-of-squares relaxations, and it appears to be challenging to extract a single separating direction.
To overcome this obstacle, we allow our rounding procedure to output a more general object, called a \emph{separating polynomial}, that still allows us to separate some pairs of mixture components.

Recall that a solution to a sum-of-squares relaxation for a polynomial system \(\cA\) can be interpreted as \emph{pseudo-distribution} \(D\) that behaves in certain ways like a distribution supported on vectors satisfying \(\cA\).
More concretely, the pseudo-distribution \(D\) satsifies (in expectation) all polynomial inequalities that can be derived syntactically from \(\cA\) by a low-degree sum-of-squares proof (see \cref{sec:preliminaries}, especially \cref{def:sos-proof-preliminaries}).
The previously discussed characterization of separating directions in terms of the polynomial system \(\cA\) turns out to be captured by low-degree sum-of-squares proofs.
Concretely, we can derive from \(\cA\) via low-degree sum-of-squares proof the polynomial inequality\footnote{
  Here, we reuse the notation introduced in the context of \cref{eq:moment-polynomial-approximation-techniques}.}
\(\norm{M v}_{2s}^{2s} \ge (C \log k)^{s} \cdot \norm{A v}_2^{2s}\) (corresponding to \cref{eq:separating-direction-techniques}).
Here, \(C\ge 1\) is an absolute constant that we can choose as large as we like.
Consequently, the pseudo-distribution \(D\) satisfies this inequality in expectation, 
\begin{math}
  \tilde \E_{D(v)} \norm{M v}_{2s}^{2s}
  \ge (C \log k)^{s}\cdot\tilde \E_{D(v)} \norm{Av}^{2s}_2
  \,.   
\end{math}
By linearity of (pseudo-)expectation, there exist distinct components \(a\neq b\) such that
\begin{equation}
  \label{eq:pseudo-expectation-separating-direction-techniques}
  \tilde {\E_{D(v)}} \iprod{\mu_a-\mu_b,v}^{2s}
  \ge k^{-2} \tilde {\E_{D(v)}} \norm{Mv}_{2s}^{2s}
  \ge (k^{-2/s}\cdot C \log k)^{s}\cdot\tilde {\E_{D(v)}} \norm{Av}^{2s}_2
  \,.     
\end{equation}

We extract the following polynomial from this pseudo-distribution,
\begin{equation}
  \label{eq:separating-polynomial-techniques}
  q(u)
  \seteq \tilde {\E_{D(v)}} \iprod{u,v}^{2s}
  \,.
\end{equation}
By construction, \(q(\mu_a-\mu_b)\) equals the left-hand side of \cref{eq:pseudo-expectation-separating-direction-techniques}.
At the same time, letting \(\bm y,\bm y'\sim N(\mu_c,\Sigma)\) and \(\bm w\sim N(0,I_d)\), we have
\begin{align*}
  \E q(\bm y-\bm y')
  &= \E q(A\bm w)\\
  &= \tilde {\E_{D(v)}} \E \iprod{v,A \bm w}^{2s}\\
  &=  (2s-1)!! \cdot\tilde {\E_{D(v)}} \norm{A \bm w}^{2s}_2
\end{align*}
Consequently, since \(s=\Theta(\log k)\) and \((2s-1)!!\le O(s)^{s}\),
\begin{equation}
  \frac{q(\mu_a-\mu_b)}{\E q(A \bm w)}
  \ge \frac{(k^{-2/s}\cdot C \log k)^{s}}{(2s-1)!!}
  \ge \Omega(C)^s
  \,. 
\end{equation}
For an appropriate choice of \(C\ge 1\), the right-hand side above is at least \(10^s\).
Since by convexity \(\E q(\mu_a - \mu_b + A\bm w)\ge q(\mu_a-\mu_b)\), we obtain the following inequality,
\begin{equation}
  \label{eq:separation-guarantee-techniques}
  \frac{\E q(\mu_a-\mu_b+A\bm w)}{\E q(A \bm w)}
  \ge 10^s
  \,.
\end{equation}
This inequality shows that the polynomial \(q(u)\) separates the components \(N(\mu_a,\Sigma)\) and \(N(\mu_b,\Sigma)\) in the following sense:
The numerator of \cref{eq:separation-guarantee-techniques} is the typical value of \(q(\bm y-\bm y')\) for \(\bm y\sim N(\mu_a,\Sigma)\) and \(\bm y'\sim N(\mu_b,\Sigma)\).
The denominator of  \cref{eq:separation-guarantee-techniques} is the typical value of \(q(\bm y-\bm y')\) for \(\bm y,\bm y'\sim N(\mu_c,\Sigma)\) and all \(c\in [k]\).
\Cref{eq:separation-guarantee-techniques} asserts that the gap between these values is at least \(10^s\). 

The polynomial $q(u)$ can be used to compute a bipartition of the sample that separates at least one pair of components.
Note that $q(u)^{1/2s} = (\tilde{\E}_{D(v)} \iprod{u,v}^{2s})^{1/2s}$ satisfies the triangle inequality (see Lemma 4.5 in \cite{MR3727623-BarakSteurerICM14}).
Then we can define the distance function $d_q(x, y) = q(x - y)^{1/2s}$ and use it in a greedy algorithm in order to obtain the bipartition.

\paragraph{Efficiently computing a clustering for colinear means}

For the case that the means are colinear, we consider a strengthening of our previous approach.
Instead of trying to solve a polynomial system of the form \(\set{p_{2s}(v)=1,~p_{2t}(v)\le O(1)^{t}}\), we aim to solve the following related optimization problem: 
\begin{equation}
  \label{eq:optimization-techniques}
  \text{minimize} \quad
  \frac{p_{2t}(v)^{1/t}} {p_{2s}(v)^{1/s}}
  \quad \text{subject to}\quad
  v\in\R^d.
\end{equation}
Algorithmically, we again employ an appropriate sum-of-squares formulation.

To simplify some of our arguments, it is useful to preprocess the mixture and bring \(\bm y-\bm y'\) in isotropic position so that \(\tfrac 1{k^2}  \sum_{a,b=1}^k \dyad{(\mu_a-\mu_b)} + 2\Sigma = I_d\).
(Here, \(\bm y, \bm y'\) are two independent random vectors distributed according to the mixture.)
For every vector \(v\), we denote by \(v^{\parallel}\) its orthogonal projection into the span of \(\set{\mu_a-\mu_b}_{a,b\in [k]}\) and by \(v^{\perp} = v-v^{\parallel}\) its projection into the orthogonal complement.

Every optimizer \(v\) of \cref{eq:optimization-techniques} necessarily satisfies,
\begin{equation}
  \label{eq:optimality-techniques}
  \frac{p_{2t}(v)^{1/t}} {p_{2s}(v)^{1/s}}
  \le   \frac{p_{2t}(v^{\parallel})^{1/t}} {p_{2s}(v^{\parallel})^{1/s}}
  \le O(1)
  \,.
\end{equation}
Here, the upper bound \(O(1)\) hides an absolute constant whenever we have well-separated components and \(\log k \le s \le t\).
The argument for this upper bound is similar to our discussion for the characterization of separating directions.

We can also use the decomposition \(v=v^{\parallel}+v^{\perp}\) for our previous approximation \cref{eq:moment-polynomial-approximation-techniques} of moment polynomials,
\begin{equation}
  \label{eq:moment-polynomial-approximation-colinear-techniques}
  p_{2r}(v)
  = \Paren{k^{-2/r}\cdot \Norm{M v^{\parallel}}_{2r}^2 + \Theta(r)\cdot \Paren{\Norm{A v^{\parallel}}_2^2 + \Norm{v^{\perp}}_2^2}}^{r}
  \,.
\end{equation}
Here, we use that after bringing \(\bm y-\bm y'\) in isotropic position, the covariance \(\Sigma\) acts as identity orthogonal to the span of \(\set{\mu_a-\mu_b}_{a,b\in [k]}\).
In particular, \(A v = A v^{\parallel} + v^{\perp}\) and \(\norm{Av}^2_2 = \norm{A v^{\parallel}}_2^2 + \norm{v^{\perp}}_2^2\).

An immediate consequence of \cref{eq:moment-polynomial-approximation-colinear-techniques} is the following representation of the ratio we seek to minimize,
\begin{equation}
  \label{eq:ratio-approximation-colinear-techniques}
  \frac{p_{2t}(v)^{1/t}} {p_{2s}(v)^{1/s}}
  = \frac{
    p_{2t}(v^{\parallel})^{1/t} + \Theta(t)\cdot \norm{v^{\perp}}_2^2 \pm \Theta(t)\cdot \norm{A v^{\parallel}}_2^2
  }{
    p_{2s}(v^{\parallel})^{1/s} + \Theta(s)\cdot \norm{v^{\perp}}_2^2 \pm \Theta(s)\cdot \norm{A v^{\parallel}}_2^2
  }\,. 
\end{equation}
We claim that for an appropriate choice of \(s\) and \(t\) \cref{eq:ratio-approximation-colinear-techniques} and \cref{eq:optimality-techniques} together imply that \(\norm{v^{{\perp}}}\lesssim \norm{A v^{\parallel}}\).
Indeed, for the sake of a contradiction, suppose \(\norm{v^{{\perp}}}\gg \norm{A v^{\parallel}}\) so that the terms involving \(\norm{A v^{\parallel}}\) in \cref{eq:ratio-approximation-colinear-techniques} are negligible.
But then, if we choose \(t\) as \(s\) times a sufficiently larger constant factor, the remaining ratio \(\frac{p_{2t}(v^{\parallel})^{1/t} + \Theta(t)\cdot \norm{v^{\perp}}_2^2 } {p_{2s}(v^{\parallel})^{1/s} + \Theta(s)\cdot \norm{v^{\perp}}_2^2}\) is strictly bigger than \(\frac{p_{2t}(v^{\parallel})^{1/t} } {p_{2s}(v^{\parallel})^{1/s}}\), which contradicts our optimality condition \cref{eq:optimality-techniques}.
(For this argument, we are also using the previous upper bound \(\frac{p_{2t}(v^{\parallel})^{1/t} } {p_{2s}(v^{\parallel})^{1/s}}\le O(1)\) from \cref{eq:optimality-techniques}.)

It turns out that in order to compute a clustering for colinear means, it suffices to find a vector satisfying \(\norm{v^{{\perp}}}\lesssim \norm{A v^{\parallel}}\).

We note that the algorithm we present in \cref{sec:pp,sec:colinear} to find such a direction $v$ follows a somewhat different strategy and minimizes ratios of the form \({\norm{v}_2^2}/{p_{2s}(v)^{1/s}}\) or \({p_{2t}(v)^{1/t}}/{\norm{v}_2^2}\) via appropriate sum-of-squares formulations.

\section{Preliminaries}
\label{sec:preliminaries}

In this section we introduce sum-of-squares proofs and their duals, pseudo-distributions and pseudo-expectations.

\paragraph{Sum-of-squares proofs.}

\begin{definition}[Sum-of-squares proofs]
\label{def:sos-proof-preliminaries}
Let $p(x)$ and $q_1(x), ..., q_m(x)$ be polynomials over $x \in \mathbb{R}^n$ and let $\mathcal{A} = \{q_1(x) \geq 0, ..., q_m(x) \geq 0\}$ be a system of polynomial inequalities.
A \textit{sum-of-squares proof of degree $t$} that $p(x) \geq 0$ under $\mathcal{A}$ is an identity of the form
\begin{equation}
p(x) = \sum_{S \subseteq [m]} \left(\sum_{i=1}^{m_S} r_{S,i}(x)^2\right) \prod_{j\in S} q_j(x)
\end{equation}
for polynomials $r_{S, i}(x)$, such that $\max_{S, i} \operatorname{deg}(r_{S, i}(x)^2 \prod_{j \in S} q_j(x)) \leq t$.
\end{definition}

If there exists a sum-of-squares proof of degree $t$ that $p(x) \geq 0$ under $\mathcal{A}$, we write $\mathcal{A} \sststile{t}{x} p(x) \geq 0$.
We also use the notation $\mathcal{A} \sststile{t}{x} p(x) \geq q(x)$ if $\mathcal{A} \sststile{t}{x} p(x) - q(x) \geq 0$ and $\mathcal{A} \sststile{t}{x} p(x) \leq q(x)$ if $\mathcal{A} \sststile{t}{x} q(x) - p(x) \geq 0$.
If $\mathcal{A}=\emptyset$, we omit it altogether and write $\sststile{t}{x} p(x) \geq 0$. 
We also sometimes omit $\mathcal{A}$ if it is clear from context what axioms are assumed.
We note that if $\mathcal{A} \sststile{t}{x} p(x) \geq q(x)$ and $\mathcal{A} \sststile{t}{s} q(x) \geq r(x)$, then $\mathcal{A} \sststile{t}{x} p(x) \geq r(x)$, which allows writing chains of inequalities of the form $\mathcal{A} \sststile{t}{x} p(x) \geq s(x) \geq r(x)$.

\paragraph{Pseudo-distributions and pseudo-expectations.}

We begin by defining pseudo-distributions and pseudo-expectations.

\begin{definition}[Pseudo-distributions]
A \textit{pseudo-distribution} $D$ of degree $t$ is a function from $\mathbb{R}^n$ to $\mathbb{R}$ with finite support such that $\sum_{x \in \operatorname{supp}(D)} D(x) = 1$ and $\sum_{x \in \operatorname{supp}(D)} D(x) p(x)^2 \geq 0$ for all polynomials $p(x)$ with $\operatorname{deg}(p(x)^2) \leq t$.
\end{definition}

\begin{definition}[Pseudo-expectations]
Given a pseudo-distribution $D$ of degree $t$, the associated \textit{pseudo-expectation} $\tilde{\mathbb{E}}_{D(x)}$ is defined by $\tilde{\mathbb{E}}_{D(x)} f(x) = \sum_{x \in \operatorname{supp}(D)} D(x) f(x)$ for a function $f(x)$.
\end{definition}

We now define the notion of a pseudo-distribution that satisfies a set of polynomial inequalities.

\begin{definition}[Constrained pseudo-distributions]
A pseudo-distribution $D$ of degree $t$ \textit{satisfies} the set of polynomial inequalities $\mathcal{A} = \{q_1(x) \geq 0, ..., q_m(x)\geq 0\}$ if, for all $S \subseteq [m]$, $\tilde{\mathbb{E}}_{D(x)} r(x)^2 \prod_{j \in S} q_j(x) \geq 0$ for all polynomials $r(x)$ such that $\operatorname{deg}(r(x)^2 \prod_{j \in S}q_j(x)) \leq t$.

$D$ \textit{approximately satisfies} $\mathcal{A}$ up to error $\eta$ if, under the same conditions as in the previous case, $\tilde{\mathbb{E}}_{D(x)} r(x)^2 \prod_{j \in S} q_j(x) \geq -\eta \|r(x)^2\|_2 \prod_{j \in S} \|q_j(x)\|_2$, where $\|p(x)\|_2$ denotes the $2$-norm of the vector of coefficients of the polynomial $p(x)$.
\end{definition}

The connection between pseudo-distributions and sum-of-squares proofs is made in \cref{fact:pe-satisfies-sos}, which shows that if a pseudo-distribution satisfies a set of polynomial inequalities, then it also satisfies any other polynomial inequalities derived from this set through sum-of-squares proofs.

\begin{fact}
\label{fact:pe-satisfies-sos}
If $D$ is a pseudo-distribution of degree $t$ that satisfies $\mathcal{A}$ and if $\mathcal{A} \sststile{s}{x} p(x) \geq 0$, then ${\tilde{\mathbb{E}}_{D(x)} r(x)^2 p(x) \geq 0}$ for all polynomials $r(x)$ such that $\operatorname{deg}(r(x)^2 p(x)) \leq t$.
If $D$ approximately satisfies $\mathcal{A}$ up to error $\eta$, then, under the same conditions as in the previous case, ${\tilde{\mathbb{E}}_{D(x)} r(x)^2 p(x) \geq -\eta \|r(x)^2\|_2 \|p(x)\|_2}$.
\end{fact}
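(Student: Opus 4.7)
The plan is to unpack the definitions of both the SoS proof and of a pseudo-distribution satisfying $\mathcal{A}$, and then observe that the SoS certificate transports nonnegativity term-by-term under the pseudo-expectation. Concretely, by the definition of $\mathcal{A} \sststile{s}{x} p(x) \geq 0$, we can write
\[
p(x) = \sum_{S \subseteq [m]} \sum_{i=1}^{m_S} r_{S,i}(x)^2 \prod_{j \in S} q_j(x),
\]
where each summand has degree at most $s$. Multiplying both sides by $r(x)^2$ and applying linearity of pseudo-expectation gives
\[
\tilde{\E}_{D(x)} r(x)^2 p(x) = \sum_{S \subseteq [m]} \sum_{i=1}^{m_S} \tilde{\E}_{D(x)} \bigl(r(x) \cdot r_{S,i}(x)\bigr)^2 \prod_{j \in S} q_j(x).
\]
Each summand on the right is exactly of the form whose nonnegativity is guaranteed by $D$ satisfying $\mathcal{A}$, namely a pseudo-expectation of a square times a subproduct of the constraints. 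For this to apply, we need $\deg\!\bigl((r \cdot r_{S,i})^2 \prod_{j \in S} q_j\bigr) \leq t$; this follows from the hypothesis $\deg(r^2 p) \leq t$ together with the fact that each SoS decomposition term can be taken to have degree at most $\deg(p)$ (padding the decomposition as needed). Summing the nonnegative terms yields the exact claim.

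For the approximate case, the argument is structurally the same, but each application of the defining property of $D$ now produces an error term. Replacing "$\tilde{\E}(\cdot) \geq 0$" with "$\tilde{\E}(\cdot) \geq -\eta \|(r \cdot r_{S,i})^2\|_2 \prod_{j \in S} \|q_j\|_2$" in each summand and adding up, we get
\[
\tilde{\E}_{D(x)} r(x)^2 p(x) \geq -\eta \sum_{S,i} \bigl\|(r \cdot r_{S,i})^2\bigr\|_2 \prod_{j \in S} \|q_j\|_2.
\]
It then remains to repackage the right-hand side as $-\eta\,\|r^2\|_2\,\|p\|_2$. This is done via the standard coefficient-norm inequalities: the $2$-norm of coefficients is submultiplicative under polynomial multiplication (up to factors depending on degree and dimension that are absorbed into $\eta$), and the triangle inequality bounds $\sum_{S,i} \|r_{S,i}^2 \prod_{j \in S} q_j\|_2$ in terms of $\|p\|_2$ via the SoS decomposition of $p$.

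I expect the main obstacle to be the bookkeeping in the approximate case, specifically the step that bundles the per-term errors back into a clean product $\|r^2\|_2 \|p\|_2$. The exact case is essentially definition-chasing once the SoS certificate is expanded, and the degree accounting goes through as long as the SoS proof degree and the pseudo-distribution degree are compatible. The approximate case is where the proof really needs to invoke an auxiliary submultiplicativity-of-coefficient-norms lemma, and any slack in that lemma has to be consistent with the exact form of the bound stated in the fact.
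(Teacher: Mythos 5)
The paper does not prove this statement at all: it is quoted in \cref{sec:preliminaries} as a standard fact from the sum-of-squares literature, so there is no in-paper argument to compare against. Judged on its own terms, your exact-case argument is the standard one and is essentially correct: expand the certificate, multiply by $r(x)^2$, absorb $r$ into each square, and invoke the defining property of a constrained pseudo-distribution term by term. One caveat on the degree accounting: your parenthetical claim that ``each SoS decomposition term can be taken to have degree at most $\deg(p)$'' is false in general, since a degree-$s$ certificate may rely on high-degree terms that cancel. The condition you actually need is $2\deg(r)+s\le t$, and the fact as stated in the paper (with the condition $\deg(r^2p)\le t$) shares this imprecision, so I would not count it against you, but you should state the correct hypothesis rather than assert a padding lemma that does not hold.

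The approximate case, however, has a genuine gap exactly where you predicted one. After summing the per-term errors you need
\[
\sum_{S,i}\bigl\|(r\cdot r_{S,i})^2\bigr\|_2\prod_{j\in S}\|q_j\|_2\;\le\;\|r^2\|_2\,\|p\|_2,
\]
and the tools you invoke cannot deliver this. The triangle inequality applied to the SoS decomposition gives $\|p\|_2\le\sum_{S,i}\|r_{S,i}^2\prod_{j\in S}q_j\|_2$, i.e., it bounds $\|p\|_2$ \emph{from above} by the sum of term norms, which is the wrong direction: massive cancellation among the certificate's terms can make $\|p\|_2$ arbitrarily small while the individual $\|r_{S,i}^2\|_2$ are huge. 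Consequently the error in the approximate case inherently depends on the magnitude of the coefficients appearing in the certificate, not just on $\|p\|_2$; the correct version of this fact carries an error of the form $\eta\cdot B$ where $B$ bounds the coefficient norms of the $r_{S,i}$ and $q_j$ in the proof. Your remark that extra factors are ``absorbed into $\eta$'' concedes precisely this, but it is inconsistent with proving the bound in the exact form stated; the clean product $\|r^2\|_2\|p\|_2$ is not provable from the definitions given, and in applications one instead argues that the relevant certificates have polynomially bounded coefficients so that the $2^{-n^{\Theta(t)}}$ error from \cref{fact:sos-to-pe} remains negligible.
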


Finally, \cref{fact:sos-to-pe} shows that there exists an algorithm with time complexity $(n+m)^{O(t)}$ to compute a pseudo-distribution of degree $t$ that approximately satisfies $\mathcal{A}$ up to error $2^{-n^{\Theta(t)}}$.

\begin{fact}
\label{fact:sos-to-pe}
For $x \in \mathbb{R}^n$, if $\mathcal{A} = \{q_1(x) \geq 0, ..., q_m(x) \geq 0\}$ is feasible and explicitly bounded
\footnote{Explicit boundedness means that $\mathcal{A}$ contains a constraint of the form $x_1^2 + ... + x_n^2 \leq B$.
In our applications it is possible to add such a constraint with $B$ large enough such that the constraint is always satisfied by the intended solution.}
, then there exists an algorithm that runs in time $(n+m)^{O(t)}$ and computes a pseudo-distribution of degree $t$ that approximately satisfies $\mathcal{A}$ up to error $2^{-n^{\Theta(t)}}$
\footnote{In our applications this error is negligible.}.
\end{fact}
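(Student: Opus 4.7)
The plan is to formulate the search for a degree-$t$ pseudo-distribution as a semidefinite program whose dimensions are bounded by $(n+m)^{O(t)}$, and then invoke a standard SDP solver (the ellipsoid method, or an interior-point method) to produce a numerical solution up to exponentially small error. The correspondence between pseudo-distributions and SDP feasibility points is by duality with sum-of-squares proofs, and is by now standard.

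More concretely, I would first introduce the vector of pseudo-moments $M = (\tilde{\E}_{D(x)} x^{\alpha})_{\alpha}$ indexed by multi-indices $\alpha \in \N^n$ with $|\alpha| \le t$; this has $\binom{n+t}{t} = n^{O(t)}$ entries. The condition that $D$ is a degree-$t$ pseudo-distribution satisfying $\cA$ translates into a finite list of linear constraints on $M$ together with positive semidefiniteness of the moment matrix $(\tilde{\E}_{D(x)} x^{\alpha+\beta})_{|\alpha|,|\beta|\le t/2}$ and of each localizing matrix $(\tilde{\E}_{D(x)} q_j(x)\prod_{i\in S} q_i(x) \cdot x^{\alpha+\beta})$ for every $S\subseteq [m]$; after grouping these matrices, one obtains an SDP whose variables and constraints have total bit complexity $(n+m)^{O(t)}$. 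Any actual point $x^* \in \R^n$ satisfying $\cA$ induces a feasible SDP solution (the moment sequence of the Dirac distribution at $x^*$), so feasibility of $\cA$ gives feasibility of the SDP.

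Next I would invoke the ellipsoid method to solve this SDP. The explicit boundedness axiom $\sum_i x_i^2 \le B$ immediately yields an a priori ball containing the set of feasible moment sequences (since $\tilde{\E} x^{\alpha+\beta}$ values can be bounded in terms of $B$ via Cauchy-Schwarz / SoS reasoning), so a starting outer ellipsoid of explicit radius is available. A polynomial-time separation oracle is obtained by computing minimum eigenvalues of the moment and localizing matrices and returning the corresponding vector inequality whenever one is violated. Standard complexity analysis of the ellipsoid method on well-bounded convex SDPs then yields, in time $(n+m)^{O(t)}\cdot \polylog(1/\eta)$, a point within distance $\eta$ of the feasible region; choosing $\eta = 2^{-n^{\Theta(t)}}$ still keeps the running time $(n+m)^{O(t)}$ and leads to a pseudo-moment sequence that satisfies each SDP constraint up to additive error $\eta$, which in turn translates (via Cauchy--Schwarz on the coefficient vector of $r(x)^2\prod_{j\in S}q_j(x)$) into approximate satisfaction of $\cA$ up to error $\eta$ in the sense of the definition.

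The main obstacle, and the one that requires some care, is the interface between numerical accuracy of the SDP solver and the notion of ``approximately satisfies up to error $\eta$'' used in the paper: the ellipsoid method produces a matrix whose relevant eigenvalues are at least $-\eta$, and one must translate this spectral guarantee into the required statement that $\tilde{\E}_{D(x)} r(x)^2 \prod_{j \in S} q_j(x) \ge -\eta \|r(x)^2\|_2\prod_{j\in S}\|q_j(x)\|_2$ for \emph{every} polynomial $r$ of the relevant degree. This reduction is routine but uses crucially that the coefficient-norm of $r(x)^2\prod_{j\in S}q_j(x)$ controls the Frobenius norm of the associated constraint matrix acting on the moment vector; combined with the a priori norm bound on the moment vector coming from explicit boundedness, one obtains the stated error guarantee. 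Everything else is bookkeeping, and the running time bound $(n+m)^{O(t)}$ is simply the size of the SDP times a polylogarithmic dependence on $1/\eta$ which is absorbed into the exponent for the chosen $\eta = 2^{-n^{\Theta(t)}}$.
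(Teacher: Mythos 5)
The paper does not prove this statement at all: it is stated as a black-box \emph{Fact}, i.e.\ the standard result that degree-$t$ sum-of-squares relaxations can be solved approximately in time $(n+m)^{O(t)}$ (Shor/Parrilo/Lasserre, with the numerical-accuracy caveats addressed via explicit boundedness). Your proposal is essentially the canonical argument the paper implicitly relies on, and it is correct in its main points: the pseudo-moment vector has $n^{O(t)}$ coordinates, feasibility of $\cA$ gives a feasible moment sequence (a Dirac solution), the explicit boundedness axiom yields the a priori bound on the moment vector needed for the ellipsoid method to be well-bounded (this is exactly the point that avoids the known bit-complexity issues with unconstrained SoS SDPs), eigenvalue computations give a separation oracle, and the spectral error $\eta$ is converted into the paper's coefficient-norm notion of approximate satisfaction via Cauchy--Schwarz. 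One small imprecision: you write that a localizing matrix is imposed ``for every $S\subseteq[m]$,'' which taken literally is $2^m$ matrices; only those $S$ for which $\deg\bigl(r(x)^2\prod_{j\in S}q_j(x)\bigr)\le t$ can occur, so the number of localizing constraints is $m^{O(t)}$, and with that reading your size and running-time bounds $(n+m)^{O(t)}$ go through as claimed.
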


\section{Separating polynomial}
\label{sec:separatingpolynomial}

\paragraph{Setting.} We consider a mixture of \(k\) Gaussian distributions \(N(\mu_i, \Sigma)\) with mixing weights \(p_i\) for \(i=1,...,k\),
where \(\mu_i \in \mathbb{R}^d\), \(\Sigma \in \mathbb{R}^{d \times d}\) is positive definite, and \(p_i \geq 0\) and \(\sum_{i=1}^k p_i = 1\). 
Let \(\pmin = \min_i p_i\).

The distribution satisfies mean separation for at least one pair of means:
for some $C_{sep} > 0$, there exist $a, b \in [k]$ such that
\[\left\|\Sigma^{-1/2} (\mu_a - \mu_b)\right\|^2 \geq C_{sep} \log \pmin^{-1}.\] 

\begin{theorem}[Separating polynomial algorithm]
\label{thm:separating-polynomial-main}
Consider the Gaussian mixture model defined above, with $C_{sep}$ larger than some universal constant. Let $n_0 = (\pmin^{-1}d)^{O(\log \pmin^{-1})}$. Given a sample of size $n \geq n_0$ from the mixture, there exists an algorithm that computes in time $n \cdot d^{O(\log \pmin^{-1})}$ a $d$-variate degree-$O(\log \pmin^{-1})$ polynomial $q$ such that with high probability the following two properties hold. Let $s=\lceil \log \pmin^{-1}\rceil$. Then:
\begin{itemize}
\item There exist distinct \(a,b\in [k]\) such that the independent random vectors \(\bm y\sim N(\mu_a,\Sigma)\) and \(\bm y'\sim N(\mu_b,\Sigma)\) satisfy 
  \[
    \Pr\Set{q(\bm y - \bm y') \ge \frac{1}{20^{s}}}
    \ge 0.99999.
    \,
  \]
\item For all \(a\in [k]\), the independent random vectors \(\bm y,\bm y'\sim N(\mu_a,\Sigma)\) satisfy
  \[
    \Pr\Set{q(\bm y - \bm y') \le \frac{1}{200^{s}}}
    \ge 0.99999.
    \,
  \]
\end{itemize}
\end{theorem}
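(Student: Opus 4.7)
My plan is to follow the blueprint laid out in Section~\ref{sec:techniques}. Let $s=\lceil\log\pmin^{-1}\rceil$ and $t=C's$ for a sufficiently large absolute constant $C'$; let $A=\sqrt{2}\,\Sigma^{1/2}$ and let $M$ have the differences $\mu_a-\mu_b$ as rows. From a sample of size $n_0=(\pmin^{-1}d)^{O(\log\pmin^{-1})}$, standard concentration for degree-$2t$ polynomial statistics of Gaussian mixtures, combined with a polynomial-net union bound over directions, lets the algorithm work with empirical versions of the moment polynomials $p_{2s}, p_{2t}$ of \eqref{eq:moment-polynomial-techniques} that may be treated as exactly equal to their populations. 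I then run the sum-of-squares relaxation of $\mathcal A=\{p_{2s}(v)=1,\; p_{2t}(v)\le B^t\}$ at degree $O(s)$ via \cref{fact:sos-to-pe}, where $B$ is an absolute constant matching the upper bound \eqref{eq:upper-bound-techniques}. The system $\mathcal A$ is feasible: any well-separating direction rescaled so that $p_{2s}=1$ witnesses both constraints, using the approximation \eqref{eq:moment-polynomial-approximation-techniques}. The output is a degree-$O(s)$ pseudo-distribution $D$ satisfying $\mathcal A$ up to negligible error, computed in time $d^{O(\log\pmin^{-1})}$.

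The core technical step is a degree-$O(s)$ sos proof of $\mathcal A\sststile{O(s)}{v}\lVert Mv\rVert_{2s}^{2s}\ge (C\log\pmin^{-1})^s\lVert Av\rVert_2^{2s}$, for an arbitrarily large absolute constant $C$. This is the sos-certifiable counterpart of the ratio argument \eqref{eq:lower-bound-techniques}--\eqref{eq:upper-bound-techniques}: sos lower and upper bounds on $p_{2s}$ and $p_{2t}$ derived from \eqref{eq:moment-polynomial-approximation-techniques}, combined with the two constraints of $\mathcal A$, sos-certify both $\lVert Av\rVert_2^2 = O(B/t)$ and $\lVert Mv\rVert_{2s}^{2s}\ge \Omega(1)^s$; dividing yields the desired ratio since $t = \Theta(\log\pmin^{-1})$. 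Taking pseudo-expectations and expanding $\lVert Mv\rVert_{2s}^{2s}=\sum_{a,b}\iprod{\mu_a-\mu_b,v}^{2s}$, linearity of $\tilde\E$ produces a pair $a\ne b$ satisfying \eqref{eq:pseudo-expectation-separating-direction-techniques}. I define $q(u)\seteq c_s\cdot\tilde\E_{D(v)}\iprod{u,v}^{2s}$ with a normalization $c_s$ chosen below.

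To translate to high probability, I compute $q$ on both regimes of the theorem. For $\bm y,\bm y'\sim N(\mu_c,\Sigma)$, the difference is distributed as $A\bm w$ with $\bm w\sim N(0,I_d)$, so Isserlis' identity gives $\E q(A\bm w)=c_s(2s-1)!!\,\tilde\E_{D(v)}\lVert Av\rVert_2^{2s}$. For $\bm y\sim N(\mu_a,\Sigma),\,\bm y'\sim N(\mu_b,\Sigma)$, Jensen's inequality gives $\E q(\mu_a-\mu_b+A\bm w)\ge q(\mu_a-\mu_b)=c_s\,\tilde\E_{D(v)}\iprod{\mu_a-\mu_b,v}^{2s}$. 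Combining and absorbing $(2s-1)!!\le (2s)^s$ by taking $C$ large enough reproduces the expectation gap $\E q(\mu_a-\mu_b+A\bm w)\ge 8^s\cdot\E q(A\bm w)$ of \eqref{eq:separation-guarantee-techniques}. I then pick $c_s$ so that $\E q(A\bm w)\approx 1/80^s$ and $\E q(\mu_a-\mu_b+A\bm w)\gtrsim 1/8^s$, and invoke Gaussian hypercontractivity for the fixed degree-$2s$ polynomial $q$ to obtain multiplicative concentration around the mean with probability at least $0.99999$. Choosing $C_{sep}$ (and hence $C$) large enough creates additional slack beyond the factor-$10^s$ threshold margin so that both tail events are absorbed.

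The main obstacle is the sos proof in the second paragraph: translating the scalar moment-ratio fact \eqref{eq:moment-ratio-techniques} and the approximations \eqref{eq:moment-polynomial-approximation-techniques} of $p_{2r}$ into algebraic sum-of-squares identities certifying $\lVert Mv\rVert_{2s}^{2s}\ge(C\log\pmin^{-1})^s\lVert Av\rVert_2^{2s}$ from $\mathcal A$ uniformly over $v$, rather than as a pointwise inequality for a single distribution. A secondary technical point is making the empirical-moment concentration uniform over the pseudo-distribution's support, which is what pushes the sample complexity to $(\pmin^{-1}d)^{O(\log\pmin^{-1})}$.
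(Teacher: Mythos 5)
Your overall architecture matches the paper's: solve an sos relaxation of a two-moment system, read off $q(u)=\tilde\E_{D(v)}\iprod{u,v}^{2s}$, and compare its typical values on within-component versus cross-component differences. But there is a genuine gap in how you convert the expectation gap into the high-probability statement of the first bullet. Jensen only gives $\E_{\bm w}\, q(\mu_a-\mu_b+A\bm w)\ge q(\mu_a-\mu_b)$, and ``multiplicative concentration around the mean'' for the lower tail of $q(\mu_a-\mu_b+A\bm w)$ is not available: hypercontractivity requires $(\Var q(\mu_a-\mu_b+A\bm w))^{1/2}\le O(1)^{-s}\,\E q(\mu_a-\mu_b+A\bm w)$, and bounding the second moment forces you to control $\tilde\E\iprod{\mu_a-\mu_b,v}^{4s}$ in terms of $\bigl(\tilde\E\iprod{\mu_a-\mu_b,v}^{2s}\bigr)^2$ --- a reverse Jensen that pseudo-expectations do not satisfy (and the available bounds leave a factor $\pmin^{-2}\cdot O(1)^{s}$ that swamps the mean). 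The paper avoids this entirely: it uses the deterministic sos triangle inequality $q(\mu_a-\mu_b+A\bm w)\ge 2^{-(2s-1)}q(\mu_a-\mu_b)-q(A\bm w)$, and then only needs the \emph{upper} tail of the nonnegative quantity $q(A\bm w)$, which follows from a first- and second-(pseudo-)moment computation plus Chebyshev. You should replace your Jensen-plus-concentration step with this pointwise inequality.

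Two secondary issues. First, your finite-sample step (``treat empirical moments as exact via a net union bound'') is not enough: what is needed is an sos-certified transfer from the empirical constraints to the population constraints, valid for the pseudo-distribution rather than for actual points $v$, and the paper shows this requires adding the explicit constraint $\lVert\widehat{\operatorname{cov}}(\bm z)^{1/2}v\rVert^2\le O(1)$ to the program (\cref{lemma:separating-polynomial-finite-sample-main}); without it the coefficient-norm errors are uncontrolled. Second, you flag the central sos derivation ($\mathcal A\sststile{O(s)}{}\lVert Mv\rVert_{2s}^{2s}\gtrsim(\log\pmin^{-1})^s\lVert Av\rVert_2^{2s}$) as an obstacle rather than proving it; in the paper this is exactly \cref{lemma:separating-polynomial-variance-upper-bound,lemma:separating-polynomial-mean-lower-bound}, which follow from the binomial expansion of $\E\iprod{\bm z,v}^{2t}$ being a polynomial identity in $v$ whose terms are manifestly sums of squares.
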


\begin{theorem}[Separating bipartition algorithm]
\label{thm:separating-bipartition-main}
Consider the Gaussian mixture model defined above, with $C_{sep}$ larger than some universal constant. Let $n_0 = (\pmin^{-1}d)^{O(\log \pmin^{-1})}$. Given a sample of size $n \geq n_0$ from the mixture, there exists an algorithm that runs in time $n \cdot d^{O(\log \pmin^{-1})}$ and returns with probability $0.99$ a partition of $[n]$ into two sets $C_1$ and $C_2$ such that, if true clustering of the samples is $S_1, ..., S_k$, then
\[\max_i \frac{|C_1 \cap S_i|}{|S_i|} \geq 0.99 \quad \text{and} \quad \max_i \frac{|C_2 \cap S_i|}{|S_i|} \geq 0.99.\]  
\end{theorem}

We introduce some further notation for this section. Let the random variable \(\bm{z} \in \mathbb{R}^d\) be distributed according to the difference of two independent samples from the mixture. Then \(\bm{z}\) is distributed according to a mixture of Gaussians $N(\mu_i - \mu_j, 2\Sigma)$ with mixing weights \(p_i p_j\) for all \(i, j \in [k]\). Let $\Sigma_z = 2\Sigma$, let $\bm{\mu}_z$ be $\mu_i - \mu_j$ with probability \(p_i p_j\), and let \(\bm{w}_z \sim N(0, \Sigma_z)\). Then we also have that $\bm{z} = \bm{\mu}_z + \bm{w}_z$, with $\bm{\mu}_z$ and $\bm{w}_z$ independent of each other.

\subsection{Exact moment results}

The main ingredient of the algorithm is \cref{lemma:separating-polynomial-completeness}, stated below.
This lemma shows that, given a pseudo-expectation that satisfies the moment lower bound $\mathbb{E} \langle \bm z, v\rangle^{2s} \geq c^s$ and the moment upper bound $\mathbb{E} \langle \bm z, v\rangle^{2t} \leq C^t$ for $s \ll t$, it is possible to construct a separating polynomial.
Note that the constraints that the pseudo-expectation satisfies are expressed in terms of exact moments of the distribution, to which we do not have access.
Finite sample considerations are discussed starting with \cref{subsec:separating-polynomial-finite-sample}.

\begin{lemma}[Separating polynomial from pseudo-expectation]
\label{lemma:separating-polynomial-completeness}
Let $c > 0$ and $C \geq 0$. Let $s \geq 1$ and $t \geq 50000 C s/c$ integers.
Given a pseudo-expectation \(\tilde{\mathbb{E}}\) of degree at least $2t$ over a variable \(v \in \mathbb{R}^d\) that satisfies $\{\mathbb{E} \langle \bm z, v\rangle^{2s} \geq c^s, \mathbb{E} \langle \bm z, v\rangle^{2t} \leq C^t\}$, let $q(u) = \langle \tilde{\mathbb{E}} v^{\otimes 2s}, u^{\otimes 2s}\rangle$.
Then:

\begin{itemize}
  \item There exist distinct \(a,b\in [k]\) such that the independent random vectors \(\bm y\sim N(\mu_a,\Sigma)\) and \(\bm y'\sim N(\mu_b,\Sigma)\) satisfy 
    \[
      \Pr\Set{q(\bm y - \bm y') \ge \frac{1}{2}\left(\frac{c}{16}\right)^s}
      \ge 0.99999
      \,.
    \]
  \item For all \(a\in [k]\), the independent random vectors \(\bm y,\bm y'\sim N(\mu_a,\Sigma)\) satisfy
    \[
      \Pr\Set{q(\bm y - \bm y') \le 320 \left(\frac{4Cs}{t}\right)^s}
      \ge 0.99999
      \,.
    \]
  \end{itemize}
\end{lemma}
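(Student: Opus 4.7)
The strategy is to establish two expectation-level statements and then convert each to the stated probability bound. Let $\bm{w} \sim N(0, \Sigma_z)$. The targets are: (i) an upper bound on $\mathbb{E}_{\bm{w}} q(\bm{w})$ that sits at least a factor of $10^5$ below $11(4Cs/t)^s$, and (ii) a lower bound of $c^s/2$ on $\mathbb{E}_{\bm{w}} q(\mu_a - \mu_b + \bm{w})$ for some $a \neq b$. The second bullet of the lemma will follow from (i) by Markov's inequality; the first bullet will follow from (ii) via the triangle inequality for $q^{1/(2s)}$ combined with a Markov bound on the diagonal noise.

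For (i), Fubini and the Gaussian moment identity give $\mathbb{E}_{\bm{w}} q(\bm{w}) = (2s-1)!!\cdot \pE(\transpose{v}\Sigma_z v)^s$. I bound $\pE(\transpose{v}\Sigma_z v)^s$ via the sum-of-squares inequality
\[
  \mathbb{E}\langle \bm{z}, v\rangle^{2t} \;\geq\; \Bigl(\sum_a p_a^2\Bigr)(2t-1)!!\,(\transpose{v}\Sigma_z v)^t,
\]
which follows by expanding $\mathbb{E}\langle \bm{z}, v\rangle^{2t} = \sum_{a',b'} p_{a'} p_{b'} \mathbb{E}_{\bm{w}'}\langle \mu_{a'} - \mu_{b'} + \bm{w}', v\rangle^{2t}$ as a sum of SoS polynomials in $v$ and retaining only the diagonal, highest-in-$\bm{w}'$ contribution. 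Combined with the hypothesis $\mathbb{E}\langle \bm{z}, v\rangle^{2t} \leq C^t$, this yields $\pE(\transpose{v}\Sigma_z v)^t \leq C^t/((\sum_a p_a^2)(2t-1)!!)$; iterated Cauchy-Schwarz for pseudo-expectations along a dyadic chain (pseudo-H\"older) pushes this down to the $s$-th moment, and Stirling on the double factorials together with $t \geq 50000 Cs/c$ deliver the safety factor.

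For (ii), applying $\pE$ to $\mathbb{E}\langle \bm{z}, v\rangle^{2s} \geq c^s$ and expanding the mixture yields $\sum_{a,b} p_a p_b F(a,b) \geq c^s$, where $F(a,b) := \mathbb{E}_{\bm{w}} q(\mu_a - \mu_b + \bm{w})$. The diagonal contribution equals $(\sum_a p_a^2) \mathbb{E}_{\bm{w}} q(\bm{w})$, which is negligible by (i), so $\sum_{a \neq b} p_a p_b F(a,b) \geq c^s/2$, and since $\sum_{a \neq b} p_a p_b \leq 1$, some off-diagonal pair satisfies $F(a,b) \geq c^s/2$.

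The conversion to high probability is where the main work lies. The second bullet follows directly from Markov using the $10^5$ slack in (i). For the first, I exploit the pointwise SoS upper bound $F(a,b) \leq 2^{2s-1}(q(m) + \mathbb{E}_{\bm{w}} q(\bm{w}))$, obtained from the elementary $(X+Y)^{2s} \leq 2^{2s-1}(X^{2s}+Y^{2s})$ applied inside $\mathbb{E}_{\bm{w}}\langle m + \bm{w}, v\rangle^{2s}$ with $m = \mu_a - \mu_b$, to promote (ii) into the deterministic lower bound $q(m) \geq (c/4)^s/2$ (with room to spare). I then apply the triangle inequality for $q^{1/(2s)}$ (Lemma~4.5 of \cite{MR3727623-BarakSteurerICM14}),
\[
  q(m + \bm{w})^{1/(2s)} \;\geq\; q(m)^{1/(2s)} - q(\bm{w})^{1/(2s)},
\]
and use Markov on $q(\bm{w})$ (fed by (i)) to show that $q(\bm{w})^{1/(2s)}$ is negligible compared to $q(m)^{1/(2s)}$ with probability at least $0.99999$. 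The main obstacle is balancing the $(1-\delta)^{2s}$ exponent loss in the triangle inequality against the Markov tail bound on $q(\bm{w})$: the hypothesis $t \geq 50000 Cs/c$ is precisely what gives enough polynomial-in-$s$ slack in the diagonal moment to drive the $\delta$ below $1/(2s)$ with the required probability.
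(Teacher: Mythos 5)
Your overall architecture matches the paper's: decompose $\bm z = \bm\mu_z + \bm w_z$, use the degree-$2t$ constraint to force $\tilde{\mathbb{E}}(v^\top\Sigma_z v)^s \lesssim (2C/t)^s$, use the degree-$2s$ constraint to extract a pair $a\neq b$ with $q(\mu_a-\mu_b)\gtrsim (c/4)^s$, and then control the Gaussian fluctuation of $q$. Your substitution of the triangle inequality for $q^{1/2s}$ in place of the paper's direct SoS inequality $\langle v,m+w\rangle^{2s}\geq 2^{-(2s-1)}\langle v,m\rangle^{2s}-\langle v,w\rangle^{2s}$ is a legitimate variant, and extracting the off-diagonal pair by subtracting the diagonal from $\sum_{a,b}p_ap_bF(a,b)\geq c^s$ is equivalent to the paper's argmax argument.

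There is, however, a genuine quantitative gap in the concentration step: you control $q(\Sigma_z^{1/2}\bm w)$ by Markov using only its first moment. At failure probability $10^{-5}$ Markov costs a factor $10^5$ on $\mathbb{E}\,q(\Sigma_z^{1/2}\bm w)\leq (2Cs/t)^s$, and this factor is \emph{not} absorbed by the hypothesis $t\geq 50000\,Cs/c$ when $s$ is small. Concretely, for $s=1$ your second bullet becomes $q\leq 10^5\cdot 2C/t$, which with $t=50000\,Cs/c$ equals $4c$ and thus exceeds the mean-difference signal $(c/16)^s$ entirely; likewise in the first bullet the ratio $q(\bm w)^{1/2s}/q(m)^{1/2s}\leq 2\cdot 10^{5/(2s)}\sqrt{2Cs/(tc)}$ evaluates to about $4$ at $s=1$, so the triangle inequality yields nothing. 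The paper avoids this by bounding the \emph{second} moment, $\mathbb{E}\,q(\Sigma_z^{1/2}\bm w)^2\leq \mathbb{E}\,\tilde{\mathbb{E}}\langle v,\Sigma_z^{1/2}\bm w\rangle^{4s}\leq (4Cs/t)^{2s}$ via the pseudo-Jensen inequality $(\tilde{\mathbb{E}}p)^2\leq\tilde{\mathbb{E}}p^2$, and then applying Chebyshev, which replaces the $10^5$ by $\sqrt{10^5}\approx 316$ against $(4Cs/t)^s$; only then does $t\geq 50000\,Cs/c$ suffice uniformly in $s\geq 1$. This second-moment/pseudo-Jensen ingredient is missing from your plan and is needed for the lemma as stated. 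A secondary issue: in your lower bound $\mathbb{E}\langle\bm z,v\rangle^{2t}\geq(\sum_a p_a^2)(2t-1)!!(v^\top\Sigma_z v)^t$ you keep only the diagonal pairs, introducing an unnecessary factor $(\sum_a p_a^2)^{-1/t}$, which can exceed any constant for small $t$ and many components; the $j=0$ binomial term contributes with total weight $1$ over \emph{all} pairs, so the restriction should be dropped.
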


In what follows, we prove a number of supporting lemmas, after which we prove \cref{lemma:separating-polynomial-completeness}.
Then, we state and prove \cref{lemma:separating-polynomial-soundness}, which shows that there exists a vector $v \in \mathbb{R}^d$ which satisfies the constraints required by \cref{lemma:separating-polynomial-completeness}.

We proceed with the supporting lemmas. \cref{lemma:moment-upper-bound-weak} and \cref{lemma:moment-lower-bound-weak} give sum-of-squares bounds on the moments of the mixture.

\begin{lemma}[Moment upper bound]
\label{lemma:moment-upper-bound-weak}
For $t \geq 1$ integer,
\[\sststile{2t}{v} \mathbb{E} \langle \bm{z}, v \rangle^{2t} \leq 2^{2t-1} \mathbb{E} \langle \bm{\mu}_z, v\rangle^{2t} + 2^{2t-1} (v^\top \Sigma_z v)^t t^t.\]
\end{lemma}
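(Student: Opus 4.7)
The plan is to apply the elementary sum-of-squares fact that $2^{2t-1}(a^{2t} + b^{2t}) - (a+b)^{2t}$ is a sum of squares in the scalar variables $a, b$, substitute $a = \langle \bm\mu_z, v\rangle$ and $b = \langle \bm w_z, v\rangle$, take expectation over the randomness, and finally compute the resulting Gaussian moment via Isserlis's formula followed by an AM-GM estimate on the double factorial.

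First I would verify the scalar SOS identity: changing variables to $c = (a+b)/2$ and $u = (a-b)/2$, a short calculation gives the explicit decomposition
\[2^{2t-1}(a^{2t} + b^{2t}) - (a+b)^{2t} = 2^{2t}\sum_{\substack{k=2 \\ k \text{ even}}}^{2t} \binom{2t}{k} \bigl(c^{(2t-k)/2} u^{k/2}\bigr)^2,\]
so this is SOS in $(a,b)$ of degree $2t$. Substituting the linear-in-$v$ forms $a = \langle \bm\mu_z, v\rangle$ and $b = \langle \bm w_z, v\rangle$ and using $\bm z = \bm\mu_z + \bm w_z$ yields, for each realization of $(\bm\mu_z, \bm w_z)$, an SOS certificate in $v$ of degree $2t$ for the inequality $\langle \bm z, v\rangle^{2t} \leq 2^{2t-1}\bigl(\langle \bm\mu_z, v\rangle^{2t} + \langle \bm w_z, v\rangle^{2t}\bigr)$.

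Next I would take expectation over $(\bm\mu_z, \bm w_z)$. The key observation is that the expectation of a squared polynomial in $v$ with random coefficients is itself SOS in $v$: if $P(v,\omega) = \sum_I c_I(\omega) v^I$, then $\mathbb{E}\,P(v,\omega)^2$ is a quadratic form in the monomial vector with PSD Gram matrix given by the covariance of the coefficients. Hence expectation preserves SOS certificates in $v$, giving
\[\sststile{2t}{v} \mathbb{E}\langle \bm z, v\rangle^{2t} \leq 2^{2t-1}\mathbb{E}\langle \bm\mu_z, v\rangle^{2t} + 2^{2t-1}\mathbb{E}\langle \bm w_z, v\rangle^{2t}.\]
Applying the Isserlis/Wick formula to the Gaussian $\bm w_z \sim N(0, \Sigma_z)$ yields $\mathbb{E}\langle \bm w_z, v\rangle^{2t} = (2t-1)!!\,(v^\top \Sigma_z v)^t$, and AM-GM applied to the odd integers $\{1, 3, \ldots, 2t-1\}$ (whose arithmetic mean is exactly $t$) gives $(2t-1)!! \leq t^t$. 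Since $\Sigma_z \succeq 0$, the polynomial $(v^\top \Sigma_z v)^t = \|\Sigma_z^{1/2} v\|_2^{2t}$ is manifestly SOS, so replacing $(2t-1)!!$ by the larger constant $t^t$ only adds a non-negative SOS term. Combining these steps yields the claimed SOS inequality.

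The only mildly subtle point is the justification that taking expectation preserves the SOS character of the proof in $v$, but the covariance-matrix representation above handles this cleanly and is the standard way such "randomized" SOS arguments are made rigorous. The rest of the proof is routine algebraic manipulation.
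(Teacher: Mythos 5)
Your proof is correct and follows essentially the same route as the paper's: decompose $\langle \bm z, v\rangle = \langle \bm\mu_z, v\rangle + \langle \bm w_z, v\rangle$, apply the SOS inequality $(A+B)^{2t} \le 2^{2t-1}(A^{2t}+B^{2t})$ (which the paper cites as \cref{lemma:sos-triangle-two} rather than deriving), take expectation, and then use $\mathbb{E}\langle \bm w_z, v\rangle^{2t} = (2t-1)!!\,(v^\top\Sigma_z v)^t \le t^t (v^\top\Sigma_z v)^t$. The only additions on your end are the explicit square decomposition of the triangle inequality and the Gram-matrix justification that expectation preserves SOS certificates, both of which are correct and are left implicit in the paper.
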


\begin{proof}
\begin{align*}
  \sststile{2t}{v} \mathbb{E} \langle \bm z, v\rangle^{2t}
&= \mathbb{E} (\langle \bm{\mu}_z, v\rangle + \langle \bm{w}_z, v\rangle)^{2t}
\stackrel{(1)}{\leq} 2^{2t-1} \mathbb{E} \langle \bm{\mu}_z, v\rangle^{2t} + 2^{2t-1} \mathbb{E} \langle \bm{w}_z, v\rangle^{2t}\\
&\stackrel{(2)}{\leq} 2^{2t-1} \mathbb{E} \langle \bm{\mu}_z, v\rangle^{2t} + 2^{2t-1} (v^\top \Sigma_z v)^t t^{t}
\end{align*}
where in (1) we used \cref{lemma:sos-triangle-two}
and in (2) we used that \(\mathbb{E}\langle \bm{w}_z, v\rangle^{2t} = (v^\top \Sigma_z v)^{t} (2t-1)!! \leq (v^\top \Sigma_z v)^t t^{t}\).
\end{proof}

\begin{lemma}[Moment lower bound]
\label{lemma:moment-lower-bound-weak}
For $t \geq 1$ integer,
\[\sststile{2t}{v} \mathbb{E} \langle \bm z, v\rangle^{2t} \geq \mathbb{E} \langle \bm{\mu}_z, v\rangle^{2t} + (v^\top \Sigma_z v)^t \frac{t^t}{2^t}.\]
\end{lemma}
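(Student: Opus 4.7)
The plan is to decompose $\bm z = \bm \mu_z + \bm w_z$ with $\bm\mu_z$ and $\bm w_z$ independent and $\bm w_z \sim N(0,\Sigma_z)$, then expand $\langle \bm z, v\rangle^{2t} = (\langle \bm\mu_z, v\rangle + \langle \bm w_z, v\rangle)^{2t}$ by the binomial theorem and take expectations. Since $\bm w_z$ is centered Gaussian, its odd-order moments along any direction vanish, while $\mathbb{E}\langle \bm w_z, v\rangle^{2j} = (v^\top \Sigma_z v)^j (2j-1)!!$. Matching indices with $\ell = t - j$, this produces the polynomial identity (in $v$)
\[
\mathbb{E}\langle \bm z, v\rangle^{2t}
\;=\;
\sum_{\ell=0}^{t} \binom{2t}{2\ell}\bigl(2(t-\ell)-1\bigr)!!\;
\mathbb{E}\langle \bm\mu_z, v\rangle^{2\ell}\cdot (v^\top \Sigma_z v)^{t-\ell},
\]
which, being a polynomial identity of degree $2t$, is trivially a degree-$2t$ sum-of-squares identity.

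Next I would verify that every summand on the right-hand side is itself a degree-$2t$ SoS polynomial in $v$. The factor $\mathbb{E}\langle \bm\mu_z, v\rangle^{2\ell} = \mathbb{E}_{\bm\mu_z}\bigl(\langle \bm\mu_z, v\rangle^{\ell}\bigr)^2$ is a convex combination of squares of degree-$\ell$ polynomials and hence SoS; the factor $(v^\top \Sigma_z v)^{t-\ell}$ is SoS because $\Sigma_z$ is PSD; and the binomial coefficients together with $(2(t-\ell)-1)!!$ are positive scalars. Dropping every intermediate term ($1\le \ell \le t-1$) and keeping only the two extremes $\ell = t$ and $\ell = 0$ therefore gives
\[
\sststile{2t}{v}\;
\mathbb{E}\langle \bm z, v\rangle^{2t}\;\ge\;
\mathbb{E}\langle \bm\mu_z, v\rangle^{2t} + (v^\top \Sigma_z v)^{t}\,(2t-1)!!.
\]

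To finish, it suffices to show the purely numeric inequality $(2t-1)!! \ge t^t/2^t$. Using $(2t-1)!! = (2t)!/(2^t t!)$ and Stirling, $(2t-1)!! \sim \sqrt{2}\,(2t/e)^t$, so $(2t-1)!! / (t/2)^t \sim \sqrt{2}\,(4/e)^t$, which exceeds $1$ with large slack for every $t\ge 1$; the handful of small values ($t=1,2,3$) are checked directly. Substituting this bound into the previous SoS inequality yields the lemma.

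I do not anticipate a substantive obstacle: the only bookkeeping to be careful about is that the binomial expansion on the right has total degree exactly $2t$ in $v$ and that each summand admits an SoS certificate of degree at most $2t$, both of which are immediate from the expressions above. The potential pitfall would be trying to prove the tighter bound $(2t-1)!!$ directly in the statement; the lemma absorbs this into the factor $2^{-t}$, which gives the constant-factor slack needed for the Stirling estimate to go through.
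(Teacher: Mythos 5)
Your proposal is correct and follows essentially the same route as the paper: binomial expansion of $\langle \bm\mu_z + \bm w_z, v\rangle^{2t}$, independence plus vanishing odd Gaussian moments to reduce to even-index terms, dropping the intermediate (manifestly SoS) summands to keep only the $\ell=0$ and $\ell=t$ extremes, and finally $(2t-1)!!\ge t^t/2^t$. The only cosmetic difference is that you invoke Stirling for the last numeric bound where the identity $(2t-1)!!=(t+1)(t+2)\cdots(2t)/2^t\ge t^t/2^t$ gives it immediately.
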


\begin{proof}
\begin{align*}
\sststile{2t}{v} \mathbb{E} \langle \bm z, v\rangle^{2t}
&= \mathbb{E} (\langle \bm{\mu}_z, v\rangle + \langle \bm{w}_z, v\rangle)^{2t}
= \sum_{j=0}^{2t} \binom{2t}{j} \mathbb{E} \langle \bm{\mu}_z, v\rangle^{j} \langle \bm{w}_z, v\rangle^{2t-j}\\
&\stackrel{(1)}{=} \sum_{j=0}^{2t} \binom{2t}{j} \mathbb{E} \langle \bm{\mu}_z, v\rangle^{j} \mathbb{E} \langle \bm{w}_z, v\rangle^{2t-j}
\stackrel{(2)}{=} \sum_{s=0}^t \binom{2t}{2s} \mathbb{E} \langle \bm{\mu}_z, v\rangle^{2s} \mathbb{E} \langle \bm{w}_z, v\rangle^{2t-2s}\\
&\geq \mathbb{E} \langle \bm{\mu}_z, v\rangle^{2t} + \mathbb{E} \langle \bm{w}_z, v\rangle^{2t}
\stackrel{(3)}{\geq} \mathbb{E} \langle \bm{\mu}_z, v\rangle^{2t} + (v^\top \Sigma_z v)^t \frac{t^t}{2^t}
\end{align*}
where in (1) we used that \(\bm{\mu}_z\) and \(\bm{w}_z\) are independent, in (2) we used that \(\mathbb{E}\langle \bm{w}_z, v\rangle^{2t-j} = 0\) for \(2t-j\) odd, and in (3) we used that \(\mathbb{E}\langle \bm{w}_z, v\rangle^{2t} = (v^\top \Sigma_z v)^{t} (2t-1)!! \geq (v^\top \Sigma_z v)^t \frac{t^t}{2^t}\).
\end{proof}

Going forward, \cref{lemma:separating-polynomial-variance-upper-bound} proves that, if the moments of $\bm{z}$ are small in direction $v$, then the variance of the components of the mixture is also small in direction $v$.
Given in addition an upper bound on the moments of $\bm{z}$ in direction $v$ for a sufficiently large moment, \cref{lemma:separating-polynomial-mean-lower-bound} proves that the contribution of the means in direction $v$ is large.

\begin{lemma}
\label{lemma:separating-polynomial-variance-upper-bound}
Let $C \geq 0$. For \(t \geq 1\) integer,
\[\left\{\mathbb{E} \langle \bm z, v\rangle^{2t} \leq C^t\right\} \sststile{2t}{v} \left\{ v^\top \Sigma_z v \leq \frac{2C}{t} \right\}.\]
\end{lemma}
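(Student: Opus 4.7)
The plan is to combine the moment lower bound of \cref{lemma:moment-lower-bound-weak} with an explicit SOS identity that extracts a $t$-th root. Applying that lower bound and dropping the manifestly SOS term $\mathbb{E}\langle \bm\mu_z, v\rangle^{2t} = \sum_{i,j} p_i p_j \bigl(\langle \mu_i - \mu_j, v\rangle^{t}\bigr)^2$ gives
\[
\sststile{2t}{v} (v^\top \Sigma_z v)^t \cdot \tfrac{t^t}{2^t} \leq \mathbb{E}\langle \bm z, v\rangle^{2t},
\]
and substituting the hypothesis $\mathbb{E}\langle \bm z, v\rangle^{2t} \leq C^t$ then yields the $t$-th power inequality
\[
\{\mathbb{E}\langle \bm z, v\rangle^{2t} \leq C^t\} \sststile{2t}{v} (v^\top \Sigma_z v)^t \leq \left(\tfrac{2C}{t}\right)^t.
\]

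The remaining task is to linearize: pass from $a^t \leq b^t$ to $a \leq b$, where $a := v^\top \Sigma_z v$ is SOS in $v$ (since $\Sigma_z \succeq 0$ gives $a = \|\Sigma_z^{1/2} v\|_2^2$) and $b := 2C/t \geq 0$. Assuming $C > 0$ (the degenerate case reduces by continuity), I would rely on the algebraic identity
\[
t b^{t-1}(b - a) \;=\; (b^t - a^t) + (b - a)^2 \cdot \sum_{j=1}^{t-1} b^{t-1-j}\bigl(b^{j-1} + b^{j-2}a + \cdots + a^{j-1}\bigr),
\]
which one verifies by computing $t b^{t-1} - (b^{t-1} + \cdots + a^{t-1}) = \sum_{j=0}^{t-1} b^{t-1-j}(b^j - a^j)$ and multiplying by $b - a$. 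The right-hand side is manifestly SOS in $v$: $(b-a)^2$ is a square of a polynomial, and the inner sum is a nonnegative constant combination of products of powers of the SOS polynomial $a$. Dividing by the positive constant $t b^{t-1}$ therefore exhibits $b - a$ as an SOS polynomial plus a nonnegative constant multiple of the already-derived nonnegative polynomial $b^t - a^t$, giving the required SOS proof.

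A quick degree check: $(b-a)^2 \cdot a^{t-2}$ and $b^t - a^t$ both have degree $2t$ in $v$, matching the stated turnstile degree. The main obstacle is precisely this $t$-th root extraction: the monotonicity statement ``$a^t \leq b^t$ with $a, b \geq 0$ implies $a \leq b$'' has no intrinsic SOS content, so one cannot invoke it as a black box and must instead write down an explicit algebraic certificate. The identity above succeeds exactly because $a$ is itself a sum of squares and $b$ is a nonnegative constant, so that every intermediate power $a^j$ and every difference $b^j - a^j$ unfolds into an SOS-friendly expression.
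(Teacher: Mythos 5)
Your proof is correct, and its first half is exactly the paper's: substitute the lower bound of \cref{lemma:moment-lower-bound-weak} into the axiom, drop the manifestly SOS term $\mathbb{E}\langle\bm{\mu}_z,v\rangle^{2t}$, and divide by the constant $t^t/2^t$ to obtain the degree-$2t$ derivation of $(v^\top\Sigma_z v)^t\le(2C/t)^t$. Where you genuinely diverge is the root-extraction step: the paper simply invokes the black-box \cref{lemma:sos-square-root-bound} (i.e.\ $\{X^t\le 1\}\sststile{t}{X}\{X\le 1\}$, applied after normalizing by $2C/t$), whereas you construct an explicit certificate from the identity $t b^{t-1}(b-a)=(b^t-a^t)+(b-a)^2\sum_{j=1}^{t-1}b^{t-1-j}(b^{j-1}+b^{j-2}a+\cdots+a^{j-1})$, which I checked is valid, and whose right-hand side is indeed ``SOS plus already-derived'' because $a=v^\top\Sigma_z v=\|\Sigma_z^{1/2}v\|^2$ is itself a sum of squares, $b=2C/t$ is a nonnegative constant, and the top degree is $2t$ as you note. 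The trade-offs: the paper's route is shorter and does not need $a$ to be SOS, but the cited lemma is stated only for even $t$ (a parity restriction the paper glosses over when applying it for all integers $t\ge1$); your certificate is self-contained and works for every $t\ge1$, at the price of exploiting the special structure of $a$ and of requiring $C>0$ to divide by $tb^{t-1}$. That last requirement is not a real loss, since the paper's proof also implicitly needs $C>0$ to normalize, but your parenthetical ``reduces by continuity'' is not an SOS argument and should be dropped: in fact for $C=0$ and $t\ge2$ no certificate of the stated form exists at any degree (matching the degree-$2$ homogeneous parts of any purported identity would force $-\Sigma_z\succeq 0$), so the $C\ge0$ hypothesis in the lemma is itself slightly too generous; the case is immaterial to its use downstream, where $C$ is an absolute constant like $30$.
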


\begin{proof}
Substitute the lower bound of \cref{lemma:moment-lower-bound-weak} into the axiom:
\[\sststile{2t}{v} \mathbb{E} \langle \bm{\mu}_z, v\rangle^{2t} + (v^\top \Sigma_z v)^t t^t/2^t \leq C^t.\]
Use that $\sststile{2t}{v} \mathbb{E} \langle \bm{\mu}_z, v\rangle^{2t} \geq 0$ to drop the first term and then divide by $t^t/2^t$. This proves that $\sststile{2t}{v} (v^\top \Sigma_z v)^t \leq 2^tC^t/t^t.$
Finally, by \cref{lemma:sos-square-root-bound}, this implies that $\sststile{2t}{v} v^\top \Sigma_z v \leq 2C/t$.
\end{proof}

\begin{lemma}
\label{lemma:separating-polynomial-mean-lower-bound}
Let $c > 0$ and $C \geq 0$. For $s \geq 1$ and $t \geq 16 C s/c$ integers,
\[\left\{\mathbb{E} \langle \bm{z}, v\rangle^{2s} \geq c^s, \mathbb{E} \langle \bm z, v\rangle^{2t} \leq C^t\right\} \sststile{2t}{v} \left\{ \mathbb{E}\langle \bm{\mu}_z, v\rangle^{2s} \geq \left(\frac{c}{4}\right)^s \right\}.\]
\end{lemma}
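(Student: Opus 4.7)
The plan is to combine the sum-of-squares moment upper bound at order $2s$ with the variance bound extracted from the higher-order axiom, arranging things so that the remaining slack on the right-hand side is absorbed entirely by the mean-moment polynomial $\mathbb{E}\langle \bm{\mu}_z, v\rangle^{2s}$.

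First, \cref{lemma:moment-upper-bound-weak} applied at order $2s$, chained with the lower-moment axiom $\mathbb{E}\langle \bm{z}, v\rangle^{2s} \geq c^s$, gives the SoS inequality
\begin{equation*}
c^s \;\leq\; 2^{2s-1}\,\mathbb{E}\langle \bm{\mu}_z, v\rangle^{2s} \;+\; 2^{2s-1}\, s^s\, (v^{\top} \Sigma_z v)^s,
\end{equation*}
so it suffices to upper bound $2^{2s-1} s^s (v^{\top} \Sigma_z v)^s$ by $c^s/2$, after which rearrangement yields the target $\mathbb{E}\langle \bm{\mu}_z, v\rangle^{2s} \geq c^s/4^s = (c/4)^s$.

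Second, \cref{lemma:separating-polynomial-variance-upper-bound} applied to the second axiom yields $v^{\top} \Sigma_z v \leq 2C/t$, which the hypothesis $t \geq 16Cs/c$ strengthens to $v^{\top} \Sigma_z v \leq c/(8s)$. Raising to the $s$-th power through the telescoping identity $b^s - a^s = (b-a)\sum_{i=0}^{s-1} b^{s-1-i}a^i$ is a valid SoS manipulation since $a = v^{\top} \Sigma_z v$ is SoS (using $\Sigma_z \succ 0$) and $b = c/(8s)$ is a nonnegative constant; this gives $(v^{\top} \Sigma_z v)^s \leq (c/(8s))^s$, so that
\[
  2^{2s-1} s^s (v^{\top} \Sigma_z v)^s \;\leq\; 2^{2s-1}(c/8)^s \;=\; 2^{-s-1} c^s \;\leq\; c^s/2,
\]
as required.

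The main technical point to verify is that the raising-to-power step stays within an SoS degree compatible with $2t$: the telescoping identity multiplies the degree-$2t$ variance certificate by an SoS factor of degree $2(s-1)$, so the overall proof has degree $O(t+s) = O(t)$, which is consistent with the claimed $\sststile{2t}{}$ bound up to an absolute constant that can be absorbed into the constant in the hypothesis on $t$. The quantitative choice $t \geq 16Cs/c$ is precisely what makes the variance bound tight enough to compensate for the $s^s$ factor coming from the $2s$-th moment of the Gaussian noise $\bm{w}_z$; any weaker hypothesis of the form $t \geq s$ alone would fail to bring $2^{2s-1} s^s (v^{\top}\Sigma_z v)^s$ below $c^s/2$.
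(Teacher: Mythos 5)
Your proof is correct and follows essentially the same route as the paper: apply \cref{lemma:moment-upper-bound-weak} at order $2s$ to the first axiom, extract $v^\top \Sigma_z v \le 2C/t$ from the second axiom via \cref{lemma:separating-polynomial-variance-upper-bound}, raise it to the $s$-th power, and use $t \ge 16Cs/c$ to absorb the noise term into $c^s/2$. The only cosmetic difference is that you inline the telescoping identity to raise the variance bound to the $s$-th power where the paper invokes \cref{lemma:sos-square-power-upper-bound}, which is the same argument.
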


\begin{proof}
Substitute the upper bound of \cref{lemma:moment-upper-bound-weak} into the first axiom:
\[\sststile{2s}{v} 2^{2s-1} \mathbb{E} \langle \bm{\mu}_z, v\rangle^{2s} + 2^{2s-1} (v^\top \Sigma_z v)^s s^s \geq c^s.\]
Use, by \cref{lemma:separating-polynomial-variance-upper-bound} and \cref{lemma:sos-square-power-upper-bound}, that $\sststile{2t}{v} (v^\top \Sigma_z v)^s \leq \left(2C/t\right)^s$:
\[\sststile{2s}{v} 2^{2s-1} \mathbb{E} \langle \bm{\mu}_z, v\rangle^{2s} + \left(8Cs/t\right)^s \geq c^s.\]
Then use that $t \geq 16Cs/c$ to obtain that $\sststile{2s}{v} 2^{2s-1} \mathbb{E} \langle \bm{\mu}_z, v\rangle^{2s} \geq c^s/2$. Finally, divide by $2^{2s-1}$ to obtain that $\sststile{2s}{v} \mathbb{E} \langle \bm{\mu}_z, v\rangle^{2s} \geq (c/4)^s$.
\end{proof}

Now we prove \cref{lemma:separating-polynomial-completeness}.

\begin{proof}[Proof of \cref{lemma:separating-polynomial-completeness}]
Let $\bm{w} \sim N(0, I_d)$.
Note that, for \(\bm{y} \sim N(\mu_{a}, \Sigma)\) and \(\bm{y}' \sim N(\mu_{b}, \Sigma)\) we have that \(\bm{y} - \bm{y}' \sim \mu_{a} - \mu_{b} + \Sigma_z^{1/2}\bm{w}\), so $q(\bm{y} -\bm{y}')=q(\mu_a - \mu_b + \Sigma_z^{1/2} \bm{w})$.
Similarly, for \(\bm{y}, \bm{y}' \sim N(\mu_{a}, \Sigma)\) we have that \(\bm{y} - \bm{y}' \sim \Sigma_z^{1/2} \bm{w}\), so $q(\bm{y} -\bm{y}')=q(\Sigma_z^{1/2} \bm{w})$.
Therefore, we want to show (1) that there exist distinct $a, b \in [k]$ such that $q(\mu_a - \mu_b + \Sigma_z^{1/2} \bm{w})$ is large and (2) that $q(\Sigma_z^{1/2} \bm{w})$ is small.

By \cref{lemma:separating-polynomial-mean-lower-bound}, we have $\tilde{\mathbb{E}} \mathbb{E} \langle \bm{\mu}_z, v\rangle^{2s} \geq \left(c/4\right)^s$.
By linearity, we also have $\mathbb{E} \tilde{\mathbb{E}} \langle \bm{\mu}_z, v\rangle^{2s} \geq \left(c/4\right)^s$, so $\mathbb{E} q(\bm{\mu}_z) \geq (c/4)^s$. 
Therefore there exists some $\mu_z$ in the support of $\bm{\mu}_z$ such that $q(\mu_z) \geq (c/4)^s$.
Therefore, there exist $a, b \in [k]$ such that $q(\mu_a - \mu_b) \geq (c/4)^s$. 
Furthermore, $a$ and $b$ are distinct, because otherwise $q(\mu_a - \mu_b) = 0$.

We attempt to lower bound $q(\mu_a - \mu_b + \Sigma_z^{1/2} \bm{w})$:
\begin{align*}
q(\mu_{a} - \mu_{b} + \Sigma_z^{1/2} \bm{w})
&= \langle \tilde{\mathbb{E}} v^{\otimes 2s}, (\mu_{a} - \mu_{b} + \Sigma_z^{1/2} \bm{w})^{\otimes 2s}\rangle\\
&= \tilde{\mathbb{E}} \langle v, \mu_{a} - \mu_{b} + \Sigma_z^{1/2} \bm{w} \rangle^{2s}\\
&\geq \frac{1}{2^{2s-1}} \tilde{\mathbb{E}} \langle v, \mu_{a} - \mu_{b} \rangle^{2s} - \tilde{\mathbb{E}} \langle v, \Sigma_z^{1/2} \bm{w} \rangle^{2s}\\
&= \frac{1}{2^{2s-1}} q(\mu_{a} - \mu_{b}) - q(\Sigma_z^{1/2} \bm{w})\\
&\geq \left(c/16\right)^s - q(\Sigma_z^{1/2} \bm{w}).
\end{align*}

We want to show that $q(\Sigma_z^{1/2} \bm{w})$ is small with high probability. We start by analyzing the mean and second moment of $q(\Sigma_z^{1/2} \bm{w})$. For $\ell \in \{1, 2\}$, we have
\begin{align*}
\mathbb{E} q(\Sigma_z^{1/2} \bm{w})^{\ell}
&= \mathbb{E} \left(\langle \tilde{\mathbb{E}} v^{\otimes 2s}, (\Sigma_z^{1/2} \bm{w})^{\otimes 2s} \rangle\right)^{\ell}
= \mathbb{E} \left( \tilde{\mathbb{E}} \langle v, \Sigma_z^{1/2} \bm{w}\rangle^{2s} \right)^{\ell}\\
&\stackrel{(1)}{\leq} \mathbb{E} \tilde{\mathbb{E}} \langle v, \Sigma_z^{1/2} \bm{w}\rangle^{2s\ell}
= \tilde{\mathbb{E}} \mathbb{E} \langle v, \Sigma_z^{1/2} \bm{w}\rangle^{2s\ell}\\
&= \tilde{\mathbb{E}} \mathbb{E} \langle \Sigma_z^{1/2} v, \bm{w}\rangle^{2s\ell}
\leq (s\ell)^{s\ell} \tilde{\mathbb{E}} (v^\top \Sigma_z v)^{s\ell}\\
&\stackrel{(2)}{\leq} \left(2CS\ell/t\right)^{s\ell},
\end{align*}
where in (1) for $\ell=2$ we used \cref{lemma:sos-pe-jensen} and in (2) we used that, by \cref{lemma:separating-polynomial-variance-upper-bound} and \cref{lemma:sos-square-power-upper-bound}, \(\tilde{\mathbb{E}} (v^\top \Sigma_z v)^{s\ell} \leq O\left(2C/t\right)^{s\ell}\).
Then \(\mathbb{E} q(\Sigma_z{1/2} \bm{w}) \leq \left(2Cs/t\right)^{s}\) and \(\mathbb{E} q(\Sigma_z^{1/2} \bm{w})^2 \leq \left(4Cs/t\right)^{2s}\).
Therefore, by Chebyshev's inequality, with probability $0.99999$,
\[q(\Sigma_z^{1/2} \bm{w}) \leq \left(2Cs/t\right)^{s} + \sqrt{100000} \left(4Cs/t\right)^{s} \leq 320 \left(4Cs/t\right)^{s}.\]
In this case, we also have
\[q(\mu_{a} - \mu_{b} + \Sigma_z^{1/2} \bm{w}) \geq \left(c/16\right)^s - 320 \left(4Cs/t\right)^{s} \geq \left(c/16\right)^s/2,\]
where in the last inequality we used that $t \geq 50000 C s/c$. This concludes the proof.
\end{proof}

\begin{lemma}[Existence of vector that satisfies moment contraints]
\label{lemma:separating-polynomial-soundness}
Let \(s, t \geq \lceil \log \pmin^{-1} \rceil\) integers.
If \(t \leq \max_{i, j} \|\Sigma_z^{-1/2}(\mu_{i} - \mu_{j})\|^2\), there exists some \(v \in \mathbb{R}^d\) that satisfies \(\mathbb{E} \langle \bm z, v\rangle^{2s} = 1\) and \(\mathbb{E} \langle \bm z, v\rangle^{2t} \leq 30^t\).
Furthermore, $\|\operatorname{cov}(\bm{z})^{1/2} v\|^2 \leq 8$.
\end{lemma}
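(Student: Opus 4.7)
The plan is to construct $v$ explicitly from the pair of components achieving the maximum Mahalanobis separation. Let $(a,b)$ achieve $R := \max_{i,j}\|\Sigma_z^{-1/2}(\mu_i-\mu_j)\|^2$, so $R \geq t$ by hypothesis, and take the Fisher direction
\[
v \;=\; \beta \cdot \frac{\Sigma_z^{-1}(\mu_a - \mu_b)}{R},
\]
with the scalar $\beta > 0$ chosen so that $\mathbb{E}\langle \bm z, v\rangle^{2s} = 1$. A direct calculation yields $\langle \mu_a - \mu_b, v\rangle = \beta$ and $v^\top \Sigma_z v = \beta^2/R$, while Cauchy--Schwarz applied to the vectors $\Sigma_z^{-1/2}(\mu_i - \mu_j)$ and $\Sigma_z^{-1/2}(\mu_a - \mu_b)$, together with the maximality of $R$, gives $|\langle \mu_i - \mu_j, v\rangle| \leq \beta$ for every pair $(i,j)$.

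Both moment bounds then reduce to estimating $\mathbb{E}(Y+G)^{2r}$, where $Y = \langle \bm{\mu}_z, v\rangle$ is bounded in absolute value by $\beta$ and $G = \langle \bm{w}_z, v\rangle \sim N(0, \beta^2/R)$ is independent of $Y$. Using the binomial theorem and the vanishing of odd Gaussian moments,
\[
\mathbb{E}(Y+G)^{2r} \;=\; \sum_{\ell=0}^{r}\binom{2r}{2\ell}\,\mathbb{E} Y^{2\ell}\,\mathbb{E} G^{2(r-\ell)}.
\]
For $r = s$, keeping only the $\ell = s$ term yields $\mathbb{E}\langle \bm z, v\rangle^{2s} \geq 2 p_a p_b\,\beta^{2s} \geq 2\pmin^2\,\beta^{2s}$, so the normalization forces $\beta^{2s} \leq 1/(2\pmin^2)$. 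The hypothesis $s \geq \lceil \log \pmin^{-1}\rceil$ then bounds $\beta^2 \leq (2\pmin^2)^{-1/s}$ by an absolute constant (at most $4$).

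For $r = t$, combine $\mathbb{E} Y^{2\ell} \leq \beta^{2\ell}$ with $\mathbb{E} G^{2m} = (\beta^2/R)^m(2m-1)!! \leq 2^m\beta^{2m}$, where the last inequality uses $m \leq t \leq R$ crucially. The identity $\sum_{m=0}^{t}\binom{2t}{2m}y^m = \tfrac{1}{2}[(1+\sqrt y)^{2t} + (1-\sqrt y)^{2t}]$ applied at $y=2$ then gives
\[
\mathbb{E}\langle \bm z, v\rangle^{2t} \;\leq\; \beta^{2t}\sum_{m=0}^{t}\binom{2t}{2m}2^m \;\leq\; \beta^{2t}\,(3+2\sqrt 2)^t \;\leq\; 4^t(3+2\sqrt 2)^t \;\leq\; 30^t.
\]
The covariance bound follows from $v^\top\operatorname{cov}(\bm z)v = v^\top \Sigma_z v + \operatorname{Var}\langle \bm{\mu}_z, v\rangle \leq \beta^2/R + \beta^2 \leq 2\beta^2 \leq 8$.

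The Fisher direction is essentially forced: it simultaneously maximizes the signal $|\langle \mu_a - \mu_b, v\rangle|$ and minimizes the noise standard deviation $\|\Sigma_z^{1/2}v\|$ for fixed scale, producing a signal-to-noise ratio of order $\sqrt{R}$, and the assumption $t \leq R$ then matches this SNR to the highest moment order we need to bound. The main technical obstacle is the constant bookkeeping: one needs the polynomial and Gaussian contributions in the binomial expansion to remain within a fixed constant factor of each other all the way up to order $2t$, which in turn dictates that the hypotheses $t \leq R$ and $s \geq \log \pmin^{-1}$ are tight up to constants for the stated bounds $30^t$ and $8$.
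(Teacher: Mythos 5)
Your construction is the same as the paper's: take the Fisher direction $\Sigma_z^{-1}(\mu_a-\mu_b)$ for the maximally separated pair, normalize so that the $2s$-th moment equals $1$, control $\max_{i,j}|\langle\mu_i-\mu_j,v\rangle|$ by Cauchy--Schwarz, and split $\langle\bm z,v\rangle$ into the bounded mean part and the Gaussian part, using $t\le R$ to keep the Gaussian contribution of the same order as the mean contribution. Structurally the argument is sound.

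There is, however, a concrete error in the constant tracking that breaks two of the three stated conclusions as written. From $1=\mathbb{E}\langle\bm z,v\rangle^{2s}\ge 2\pmin^2\beta^{2s}$ you get $\beta^2\le(2\pmin^2)^{-1/s}$, and since the paper's $\log$ is the natural logarithm (it repeatedly uses $\pmin^{1/s}\ge e^{-1}$), the hypothesis $s\ge\log\pmin^{-1}$ only yields $\beta^2\le e^2\approx 7.39$, not $4$; the bound $\beta^2\le 4$ would require $\log$ base $2$. With the correct $\beta^2\le e^2$, your chain $\beta^{2t}(3+2\sqrt2)^t\le 4^t(3+2\sqrt2)^t\le 30^t$ fails, since $e^2(3+2\sqrt2)\approx 43>30$, and likewise $2\beta^2\le 8$ fails since $2e^2\approx14.8$. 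Both are repairable. For the $2t$-th moment, drop the superfluous factor $2^m$: since $(2m-1)!!\le m^m\le t^m$ and $t\le R$, you in fact have $\mathbb{E}G^{2m}\le\beta^{2m}$, so the sum is at most $2^{2t}\beta^{2t}\le(4e^2)^t\le 30^t$, which is exactly the paper's bound. For the covariance, you must not discard the factor $1/R$: write $v^\top\operatorname{cov}(\bm z)v\le\beta^2+\beta^2/R=\beta^2(1+1/R)$ and invoke the section's mean-separation assumption (which makes $R\ge\tfrac{C_{sep}}{2}\log\pmin^{-1}$ large), so that this is $e^2+o(1)\le 8$; the crude bound $2\beta^2$ is not small enough.
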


\begin{proof}
Let \((a, b) = \arg\max_{(i, j)} \|\Sigma_z^{-1/2}(\mu_i - \mu_j)\|\) and let \(v = \Sigma_z^{-1} (\mu_{a} - \mu_{b})\).
The vector for which we will guarantee the stated properties is $v^* = \frac{v}{(\mathbb{E} \langle \bm z, v\rangle^{2s})^{1/2s}}$.

We begin by proving that $\max_{i,j} \langle \mu_i - \mu_j, v\rangle^2 = \langle \mu_a - \mu_b, v\rangle^2$, which will be used later.
We have
\begin{align*}
\max_{i,j} \langle \mu_i - \mu_j, v\rangle^2
&= \max_{i,j} \langle \mu_i - \mu_j, \Sigma_z^{-1} (\mu_{a} - \mu_{b})\rangle^2\\
&= \max_{i,j} \langle \Sigma_z^{-1/2} (\mu_i - \mu_j), \Sigma_z^{-1/2} (\mu_{a} - \mu_{b})\rangle^2\\
&\leq \|\Sigma_z^{-1/2} (\mu_i - \mu_j)\|^2 \cdot \|\Sigma_z^{-1/2} (\mu_a - \mu_b)\|^2\\
&\leq \|\Sigma_z^{-1/2} (\mu_a - \mu_b)\|^4\\
&= \langle \mu_a - \mu_b, v\rangle^2.
\end{align*}

We also have for the variance in direction $v$ that 
\[(v^\top \Sigma_z v)^t = ((\mu_{a} - \mu_{b})^\top \Sigma_z^{-1} (\mu_{a} - \mu_{b}))^t = \|\Sigma_z^{-1/2}(\mu_{a} - \mu_{b})\|^{2t}.\]

We now derive upper bounds for the $2t$ moments of $\langle \bm z, v\rangle$ in direction $v$ and lower bounds for the $2s$ moments in direction $v$. Recall that we assume $t \leq \max_{i, j} \|\Sigma_z^{-1/2}(\mu_{i} - \mu_{j})\|^2$. For the upper bound, using \cref{lemma:moment-upper-bound-weak} we have
\begin{align*}
\mathbb{E} \langle \bm z, v\rangle^{2t}
&\leq 2^{2t-1} \mathbb{E} \langle \bm{\mu}_z, v\rangle^{2t} + 2^{2t-1} (v^\top \Sigma_z v)^t t^t\\
&\leq 2^{2t-1} \max_{i, j} \langle \mu_i - \mu_j, v\rangle^{2t} + 2^{2t-1} \|\Sigma_z^{-1/2}(\mu_{a} - \mu_{b})\|^{2t} t^t\\
&= 2^{2t-1} \|\Sigma_z^{-1/2}(\mu_{a} - \mu_{b})\|^{4t} + 2^{2t-1} \|\Sigma_z^{-1/2}(\mu_{a} - \mu_{b})\|^{2t} t^t\\
&\leq 2^{2t-1} \|\Sigma_z^{-1/2}(\mu_{a} - \mu_{b})\|^{4t} + 2^{2t-1} \|\Sigma_z^{-1/2}(\mu_{a} - \mu_{b})\|^{4t}\\
&\leq 2^{2t} \|\Sigma_z^{-1/2}(\mu_{a} - \mu_{b})\|^{4t}.
\end{align*}

For the lower bound, using \cref{lemma:moment-lower-bound-weak} we have
\begin{align*}
\mathbb{E} \langle \bm z, v\rangle^{2s}
&\geq \mathbb{E} \langle \bm{\mu}_z, v\rangle^{2s} + (v^\top \Sigma_z v)^s \frac{s^s}{2^s}\\
&\geq \pmin^{2} \max_{i, j} \langle \mu_i - \mu_j, v\rangle^{2s} + \|\Sigma_z^{-1/2}(\mu_{a} - \mu_{b})\|^{2s} \frac{s^s}{2^s}\\
&= \pmin^{2} \|\Sigma_z^{-1/2}(\mu_{a} - \mu_{b})\|^{4s} + \|\Sigma_z^{-1/2}(\mu_{a} - \mu_{b})\|^{2s} \frac{s^s}{2^s}\\
&\geq \pmin^{2} \|\Sigma_z^{-1/2}(\mu_{a} - \mu_{b})\|^{4s}.
\end{align*}

Recall that $v^* = \frac{v}{(\mathbb{E} \langle \bm z, v\rangle^{2s})^{1/2s}}$. Clearly, \(\mathbb{E} \langle \bm z, v^* \rangle^{2s} = 1\). Furthermore,
\begin{align*}
\mathbb{E} \langle \bm z, v^*\rangle^{2t}
&= \frac{\mathbb{E} \langle \bm z, v\rangle^{2t}}{(\mathbb{E} \langle \bm z, v\rangle^{2s})^{t/s}}
\leq \frac{2^{2t} \|\Sigma_z^{-1/2}(\mu_{a} - \mu_{b})\|^{4t}}{\pmin^{2t/s} \|\Sigma_z^{-1/2}(\mu_{a} - \mu_{b})\|^{4t}}
= \left(\frac{4}{\pmin^{2/s}}\right)^t
\leq \left( 4e^2 \right)^t \leq 30^t
\end{align*}
where in the last inequality we used that $\pmin^{1/s} \geq e^{-1}$. Therefore $v^*$ satisfies the desired moment constraints.

Finally, we prove that $\|\operatorname{cov}(\bm z)^{1/2} v^*\|^2 \leq 8$.
Note that $\operatorname{cov}(\bm z) = \operatorname{cov}(\bm{\mu}_z) + \Sigma_z$.
We have then
\begin{align*}
\|\operatorname{cov}(\bm z)^{1/2} v^*\|^2
&= (v^*)^\top \operatorname{cov}(\bm z) v^*\\
&= (v^*)^\top \operatorname{cov}(\bm{\mu}_z) v^* + (v^*)^\top \Sigma_z v^*\\
&\stackrel{(1)}{\leq} \max_{i,j} \langle \mu_i - \mu_j, v^*\rangle^2 + \|\Sigma_z^{1/2} v^*\|^2\\
&= \frac{\|\Sigma_z^{-1/2}(\mu_a - \mu_b)\|^4}{(\mathbb{E} \langle \bm z, v\rangle^{2s})^{1/s}} + \frac{\|\Sigma_z^{1/2} \Sigma_z^{-1} (\mu_a - \mu_b)\|^2}{(\mathbb{E} \langle \bm z, v\rangle^{2s})^{1/s}}\\
&\stackrel{(2)}{\leq} \frac{\|\Sigma_z^{-1/2}(\mu_a - \mu_b)\|^4}{\pmin^{2/s} \|\Sigma_z^{-1/2}(\mu_a - \mu_b)\|^4} + \frac{\|\Sigma_z^{1/2} \Sigma_z^{-1} (\mu_a - \mu_b)\|^2}{\pmin^{2/s} \|\Sigma_z^{-1/2}(\mu_a - \mu_b)\|^4}\\
&\stackrel{(3)}{\leq} e^2 + e^2 \frac{1}{\|\Sigma_z^{-1/2}(\mu_a - \mu_b)\|^2}\\
&\stackrel{(4)}{=} e^2 + o(1) \leq 8,
\end{align*}
where in (1) we used that $(v^*)^\top \operatorname{cov}(\bm{\mu}_z) v^* = (v^*)^\top \mathbb{E} \bm{\mu}_z\bm{\mu}_z^\top v^* \leq \max_{\mu_z} (v^*)^\top \mu_z \mu_z^\top v^*$ for $\mu_z$ in the support of $\bm{\mu}_z$,
in (2) we used the lower bound that we derived above on $\mathbb{E} \langle \bm z, v\rangle^{2s}$,
in (3) we used that $\pmin^{1/s} \geq e^{-1}$,
and in (4) we used mean separation.

\end{proof}

\subsection{Finite sample bounds}
\label{subsec:separating-polynomial-finite-sample}

Recall that, to apply \cref{lemma:separating-polynomial-completeness}, we need to find a pseudo-expectation that satisfies the moment lower bound $\mathbb{E} \langle \bm z, v\rangle^{2s} \geq c^s$ and the moment upper bound $\mathbb{E} \langle \bm z, v\rangle^{2t} \leq C^t$ for $s \ll t$.
\cref{lemma:separating-polynomial-finite-sample-main} shows that it suffices to find a pseudo-expectation that satisfies $\|\widehat{\operatorname{cov}}(\bm z)^{1/2} v\|^2 \lesssim 8$, $\hat{\mathbb{E}}\langle \bm{z}, v\rangle^{2s} \succsim c^s$, and $\hat{\mathbb{E}}\langle \bm{z}, v\rangle^{2t} \lesssim C^t$.
Without the bound on the norm of $\widehat{\operatorname{cov}}(\bm z)^{1/2} v$ the errors may be arbitrarily large.

The result is supported by \cref{lemma:separating-polynomial-finite-sample-norm}, which shows that quadratics in the empirical covariance matrix are close to quadratics in the population covariance matrix of the components, and by \cref{lemma:separating-polynomial-finite-sample-closeness}, which shows that the empirical moments are close to the population moments.
The proofs of these two lemmas are deferred to the appendix.

\begin{lemma}[Moment constraints from empirical moment constraints]
\label{lemma:separating-polynomial-finite-sample-main}
Let $\eta < 0.001$. Let $c, C \geq 0$ and let $t \geq 1$ integer. For \(n \geq (\pmin^{-1} d)^{O(t)} \eta^{-2} \epsilon^{-1}\), with probability $1-\epsilon$,
\begin{align*}
&\left\{\|\widehat{\operatorname{cov}}(\bm z)^{1/2} v\|^2 \leq (1+\eta) \cdot 8, \hat{\mathbb{E}}\langle \bm{z}, v\rangle^{2s} \geq c^s + \eta, \hat{\mathbb{E}}\langle \bm{z}, v\rangle^{2t} \leq C^t - \eta \right\}\\
&\qquad \sststile{2t}{v} \left\{ \mathbb{E} \langle \bm{z}, v\rangle^{2s} \geq c^s, \mathbb{E} \langle \bm{z}, v\rangle^{2t} \leq C^t \right\}.
\end{align*}
Furthermore, with probability $1-\epsilon$, the axiom is satisfied with $s$, $t$, and $v$ as in \cref{lemma:separating-polynomial-soundness} and with $c=(1-\eta)^{1/s}$ and $C=(30^t+\eta)^{1/t}$.
\end{lemma}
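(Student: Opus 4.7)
The plan is to chain together two concentration statements, both assumed to be captured by low-degree sum-of-squares proofs: \cref{lemma:separating-polynomial-finite-sample-norm}, which compares the empirical and population covariance quadratic forms, and \cref{lemma:separating-polynomial-finite-sample-closeness}, which compares the empirical and population $2r$-th moment polynomials. Crucially, both closeness statements will be controlled by the normalized quantity $\norm{\operatorname{cov}(\bm z)^{1/2} v}^2$, which is precisely why the hypothesis of \cref{lemma:separating-polynomial-finite-sample-main} includes the axiom $\norm{\widehat{\operatorname{cov}}(\bm z)^{1/2} v}^2 \le (1+\eta)\cdot 8$: without such a bound the polynomial closeness errors would be uncontrolled.

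For the main implication, I would first invoke \cref{lemma:separating-polynomial-finite-sample-norm} to translate the empirical covariance axiom into a sum-of-squares proof of a population covariance bound of the form $\norm{\operatorname{cov}(\bm z)^{1/2} v}^2 \le (1 + O(\eta))\cdot 8 \le 16$. With $v$ restricted to this ball in the SoS sense, \cref{lemma:separating-polynomial-finite-sample-closeness} provides sum-of-squares proofs of the closeness inequalities $\lvert \E\iprod{\bm z, v}^{2r} - \hat{\mathbb{E}}\iprod{\bm z, v}^{2r}\rvert \le \eta$ for $r \in \{s, t\}$, all of degree $O(t)$. Combining these closeness inequalities with the empirical moment axioms, and absorbing the $\eta$ slack that was built into the hypothesis, recovers the population axioms $\E\iprod{\bm z, v}^{2s} \ge c^s$ and $\E\iprod{\bm z, v}^{2t} \le C^t$.

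For the \emph{furthermore} part, I would take the explicit vector $v^*$ constructed in \cref{lemma:separating-polynomial-soundness}, for which $\E\iprod{\bm z, v^*}^{2s} = 1$, $\E\iprod{\bm z, v^*}^{2t} \le 30^t$, and $\norm{\operatorname{cov}(\bm z)^{1/2} v^*}^2 \le 8$. For this single fixed vector, no SoS certification is needed and ordinary scalar concentration suffices: a Chebyshev-type second-moment bound combined with the sample complexity $n \ge (\pmin^{-1} d)^{O(t)} \eta^{-2} \epsilon^{-1}$ yields, with probability $1-\epsilon$, that $\norm{\widehat{\operatorname{cov}}(\bm z)^{1/2} v^*}^2 \le (1+\eta)\cdot 8$, $\hat{\mathbb{E}}\iprod{\bm z, v^*}^{2s} \ge 1 - \eta$, and $\hat{\mathbb{E}}\iprod{\bm z, v^*}^{2t} \le 30^t + \eta$. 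Reinterpreting these inequalities via $c = (1-\eta)^{1/s}$ and $C = (30^t + \eta)^{1/t}$ gives the stated empirical axioms.

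The main obstacle lies in the first direction, because the closeness statements must hold as polynomial identities certifiable in the sum-of-squares system rather than merely as pointwise numerical inequalities. Concretely, the difference between the empirical and population degree-$2t$ moment polynomials in $v$ must be bounded by a sum of squares times a suitable power of $\norm{\operatorname{cov}(\bm z)^{1/2} v}^2$; the $(\pmin^{-1} d)^{O(t)}$ factor in the sample complexity reflects the effective dimension of the space of degree-$t$ tensors that must be concentrated simultaneously, and the $\pmin^{-1}$ appears because rare mixture components contribute to the mean of $\bm z$ with weight as small as $\pmin^2$. Establishing these SoS closeness inequalities is the content of the deferred \cref{lemma:separating-polynomial-finite-sample-norm} and \cref{lemma:separating-polynomial-finite-sample-closeness}.
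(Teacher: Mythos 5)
Your proposal matches the paper's proof essentially step for step: the forward implication chains \cref{lemma:separating-polynomial-finite-sample-norm} (to convert the empirical covariance axiom into an SoS-certified population covariance bound) with \cref{lemma:separating-polynomial-finite-sample-closeness} (to transfer the empirical moment axioms to population moments, absorbing the built-in $\eta$ slack), and the \emph{furthermore} part instantiates the vector $v^*$ from \cref{lemma:separating-polynomial-soundness} and checks that its empirical quantities concentrate. The only cosmetic difference is that for the fixed vector $v^*$ you invoke plain scalar concentration, whereas the paper reuses the reverse direction of \cref{lemma:separating-polynomial-finite-sample-norm} and \cref{lemma:separating-polynomial-finite-sample-closeness}; these coincide once the SoS statements are evaluated at a single point, so the argument is the same.
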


\begin{proof}
By \cref{lemma:separating-polynomial-finite-sample-norm}, $\sststile{2}{v} \|\operatorname{cov}(\bm z)^{1/2} v\|^2 \leq (1+\eta)^2 \cdot 8 \leq 9$.
Then, by \cref{lemma:separating-polynomial-finite-sample-closeness}, $\sststile{2s}{v} \mathbb{E} \langle \bm{z}, v\rangle^{2s} \geq c^s$ and $\sststile{2t}{v} \mathbb{E} \langle \bm{z}, v\rangle^{2t} \leq C^t$.
With $n$ as given, this holds with probability at least $1-\epsilon$.

For the second claim of the lemma, we have by \cref{lemma:separating-polynomial-soundness} that there exists some $v$ with \(\mathbb{E} \langle \bm z, v\rangle^{2s} = 1\), \(\mathbb{E} \langle \bm z, v\rangle^{2t} \leq 30^t\), and $\|\operatorname{cov}(\bm{z})^{1/2} v\|^2 \leq 8$.
By \cref{lemma:separating-polynomial-finite-sample-norm}, we also have that $\|\widehat{\operatorname{cov}}(\bm z)^{1/2} v\|^2 \leq (1+\eta)^2 \cdot 8 \leq 9$, and then by \cref{lemma:separating-polynomial-finite-sample-closeness}, we also have that $\hat{\mathbb{E}}\langle \bm{z}, v\rangle^{2s} \geq 1 - \eta$ and $\hat{\mathbb{E}}\langle \bm{z}, v\rangle^{2t} \leq 30^t + \eta$.
Again, with $n$ as given, this holds with probability at least $1-\epsilon$.
\end{proof}

\begin{lemma}
\label{lemma:separating-polynomial-finite-sample-norm}
Let $C \geq 0$. For \(n \geq kd^2 \log^2(d/\epsilon) O(\eta^{-2})\), with probability $1-\epsilon$,
\[\{\|\widehat{\operatorname{cov}}(\bm z)^{1/2} v\|^2 \leq C\} \sststile{2}{v} \{\|\operatorname{cov}(\bm z)^{1/2} v\|^2 \leq (1+\eta) C\},\]
\[\{\|\operatorname{cov}(\bm z)^{1/2} v\|^2 \leq C\} \sststile{2}{v} \{\|\widehat{\operatorname{cov}}(\bm z)^{1/2} v\|^2 \leq (1+\eta) C\}.\]
\end{lemma}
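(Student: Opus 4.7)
The plan is to reduce the sum-of-squares statement to a PSD matrix inequality between $\widehat{\operatorname{cov}}(\bm z)$ and $\operatorname{cov}(\bm z)$, and then establish the matrix inequality via standard covariance concentration for the mixture $\bm z$.

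First, I observe that any PSD matrix inequality of the form $A \succeq 0$ yields a degree-$2$ SoS proof of $v^\top A v \geq 0$: writing $A = MM^\top$ by Cholesky factorization gives $v^\top A v = \|M^\top v\|^2$, which is literally a sum of squares of degree $2$ in $v$. Consequently, if I can establish the matrix sandwich
\[ (1-\eta/2)\, \operatorname{cov}(\bm z) \;\preceq\; \widehat{\operatorname{cov}}(\bm z) \;\preceq\; (1+\eta/2)\, \operatorname{cov}(\bm z), \]
then both SoS implications in the lemma follow: from the first axiom, $v^\top \operatorname{cov}(\bm z) v \leq \frac{1}{1 - \eta/2} v^\top \widehat{\operatorname{cov}}(\bm z) v \leq (1+\eta) C$ via a degree-$2$ SoS proof, and the other direction is analogous.

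Second, to establish the matrix sandwich, I would whiten by $\operatorname{cov}(\bm z)^{-1/2}$ and reduce to showing $\|\widehat{\Sigma}_{\bm u} - I_d\|_{\mathrm{op}} \leq \eta/2$ for the whitened variable $\bm u = \operatorname{cov}(\bm z)^{-1/2}(\bm z - \mathbb{E}\bm z)$. The variable $\bm u$ decomposes as a bounded part (the centered mean component, for which $\|\bm u_{\mathrm{mean}}\|^2$ is controlled in terms of $\pmin^{-1}$ since each atom of $\bm{\mu}_z$ has probability at least $\pmin^{2}$) plus a Gaussian part $\operatorname{cov}(\bm z)^{-1/2} \bm w_z$ with operator norm at most $1$. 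The standard route is to split $\bm u\bm u^\top - I_d$ as a sum of i.i.d. matrices and apply matrix Bernstein to a truncated version (truncating at radius $O(\sqrt{d \log(d/\epsilon)})$ to handle the Gaussian tail, and using a Gaussian tail bound to argue that truncation is benign), yielding the bound for $n \geq kd^2 \log^2(d/\epsilon) \cdot O(\eta^{-2})$. An alternative and arguably more convenient route — which matches the loose $d^2$ dependence in the claimed sample complexity — is entrywise concentration of $(\widehat{\Sigma}_{\bm u})_{ij}$ via a scalar Bernstein-type inequality for sub-exponential products, followed by a union bound over the $d^2$ entries and conversion from Frobenius to operator norm.

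Third, a minor correction: $\widehat{\operatorname{cov}}(\bm z)$ centers by the empirical mean $\hat{\mathbb{E}}\bm z$ rather than the true mean. The resulting rank-one perturbation $(\hat{\mathbb{E}}\bm z - \mathbb{E}\bm z)(\hat{\mathbb{E}}\bm z - \mathbb{E}\bm z)^\top$ has operator norm $\tilde O(d/n)$ (in the whitened coordinates) with high probability, which is absorbed in the $\eta$ budget under the same sample size.

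\emph{Main obstacle.} The critical step is the multiplicative matrix concentration for the whitened mixture. The difficulty is that the Gaussian component $\bm w_z$ produces unbounded samples, so matrix Bernstein does not apply out of the box; a truncation argument is needed, and the truncation radius must be tuned so that (i) the truncated summands are bounded and (ii) the induced bias is $o(\eta)$ in operator norm. The slightly wasteful $k d^2$ factor in the claimed sample complexity suggests that the authors take the entrywise-plus-union-bound route, trading tightness for a shorter argument that avoids delicate truncation analysis; I would pursue the same route.
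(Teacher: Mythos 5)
Your reduction of the SoS claim to a two-sided \Lowner comparison between $\widehat{\operatorname{cov}}(\bm z)$ and $\operatorname{cov}(\bm z)$ is exactly what the paper does: its proof writes $v^\top \operatorname{cov}(\bm z)v = v^\top \widehat{\operatorname{cov}}(\bm z)^{1/2} B\, \widehat{\operatorname{cov}}(\bm z)^{1/2}v$ with $B=\widehat{\operatorname{cov}}(\bm z)^{-1/2}\operatorname{cov}(\bm z)\widehat{\operatorname{cov}}(\bm z)^{-1/2}$ and uses $\|B\|_{\mathrm{op}} I - B\succeq 0$ to get a degree-$2$ SoS bound $v^\top\operatorname{cov}(\bm z)v \le \|B\|_{\mathrm{op}}\cdot\|\widehat{\operatorname{cov}}(\bm z)^{1/2}v\|^2$, which is your Cholesky observation in conjugated form. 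Where you diverge is the concentration step: the paper does not prove the multiplicative covariance estimate at all, but cites it as a black box (Theorem 4 of Brubaker--Vempala, restated as \cref{lemma:finite-sample-isotopic-samples}, with a short elementary derivation of the ``inverted'' direction in \cref{lemma:finite-sample-isotopic-samples-inverted}); your guess that the authors run an entrywise union bound is therefore not what happens. Your self-contained route via truncated matrix Bernstein (or entrywise Bernstein) is legitimate, and the centering correction you flag is handled correctly.

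The one substantive caveat is quantitative. The whitened mean part $\operatorname{cov}(\bm z)^{-1/2}\bm\mu_z$ has atoms of norm up to $\pmin^{-1}$ (each atom has probability only $\ge \pmin^2$), so any Bernstein-type argument on $\bm u\bm u^\top - I_d$ inherits a variance/range proxy scaling with $\pmin^{-2}$ rather than with $k$. For non-uniform mixtures $\pmin^{-1}$ can vastly exceed $k$, so your route does not recover the stated sample bound $n\ge kd^2\log^2(d/\epsilon)\,O(\eta^{-2})$ as written; the cited Brubaker--Vempala theorem is what supplies the $k$ (rather than $\pmin^{-1}$) dependence. This is harmless for the paper's downstream use, since \cref{lemma:separating-polynomial-finite-sample-main} already demands $n\ge(\pmin^{-1}d)^{O(t)}\eta^{-2}\epsilon^{-1}$, but if you insist on proving the lemma exactly as stated you would need either the black-box citation or a per-component concentration argument rather than a direct Bernstein bound on the whitened mixture.
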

\begin{proof}
See \cref{proofs-separatingpolynomial}.
\end{proof}

\begin{lemma}
\label{lemma:separating-polynomial-finite-sample-closeness}
Let $C \geq 0$ and let $t \geq 1$ integer. For \(n \geq (C \pmin^{-1} d)^{O(t)} \eta^{-2} \epsilon^{-1}\), with probability $1-\epsilon$,
\[\{\|\operatorname{cov}(\bm{z})^{1/2} v\|^2 \leq C\} \sststile{O(t)}{v} \{\hat{\mathbb{E}}\langle \bm z, v\rangle^{2t} \leq \mathbb{E}\langle \bm z, v\rangle^{2t} + \eta\},\]
\[\{\|\operatorname{cov}(\bm{z})^{1/2} v\|^2 \leq C\} \sststile{O(t)}{v} \{\hat{\mathbb{E}}\langle \bm z, v\rangle^{2t} \geq \mathbb{E}\langle \bm z, v\rangle^{2t} - \eta\}.\]
\end{lemma}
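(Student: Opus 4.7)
The plan is to lift a spectral bound on the empirical-versus-population moment tensor into an SoS proof. I will sketch only the upper-bound direction; the lower bound follows by applying the same argument with $-E$ in place of $E$.

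First, I would perform a linear change of variables to bring the covariance $\Omega = \operatorname{cov}(\bm z)$ into isotropic position. Setting $\tilde v = \Omega^{1/2} v$ and $\tilde{\bm z} = \Omega^{-1/2} \bm z$, the axiom $\|\operatorname{cov}(\bm z)^{1/2} v\|^2 \leq C$ becomes $\|\tilde v\|^2 \leq C$, and the polynomial $\hat{\mathbb{E}}\langle \bm z, v\rangle^{2t} - \mathbb{E}\langle \bm z, v\rangle^{2t}$ becomes $\langle \tilde v^{\otimes 2t}, E\rangle$ with $E = \hat{\mathbb{E}}\tilde{\bm z}^{\otimes 2t} - \mathbb{E}\tilde{\bm z}^{\otimes 2t}$. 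Since this substitution is linear, SoS proof degrees are preserved, so it suffices to derive the target inequality in the new coordinates.

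Next, I would reduce the SoS inequality to an operator-norm bound on a flattened matrix. Symmetrically flatten $E$ to a $d^t \times d^t$ symmetric matrix $M_E$, so that $\langle \tilde v^{\otimes 2t}, E\rangle = (\tilde v^{\otimes t})^\top M_E \tilde v^{\otimes t}$. The PSD inequality $\normop{M_E} \cdot I \succeq M_E$ is a deterministic degree-$2$ SoS identity in its $d^t$ arguments, and after substituting $\tilde v^{\otimes t}$ it yields the degree-$2t$ SoS proof
\[
(\tilde v^{\otimes t})^\top M_E \tilde v^{\otimes t} \leq \normop{M_E} \cdot \|\tilde v^{\otimes t}\|^2 = \normop{M_E} \cdot (\|\tilde v\|^2)^t \leq \normop{M_E} \cdot C^t
\]
under the axiom $\|\tilde v\|^2 \leq C$, using the standard SoS fact that $X \ge 0$ and $X \le C$ imply $X^t \le C^t$. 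It therefore suffices to show $\normop{M_E} \leq \eta/C^t$ with probability $1-\epsilon$.

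For the spectral bound, I would use entrywise concentration together with the crude estimate $\normop{M_E} \leq \normf{M_E}$. Since $\Omega \succeq p_a p_b (\mu_a - \mu_b)(\mu_a - \mu_b)^\top \succeq \pmin^2 (\mu_a - \mu_b)(\mu_a - \mu_b)^\top$ for each pair $a \neq b$, the isotropized component means satisfy $\|\Omega^{-1/2}(\mu_a - \mu_b)\| \leq \pmin^{-1}$, while the common within-component covariance $\Omega^{-1/2}(2\Sigma) \Omega^{-1/2}$ is PSD-dominated by $I$. Thus each coordinate of $\tilde{\bm z}$ has mean and variance bounded by $\pmin^{-O(1)}$, so each entry of $\tilde{\bm z}^{\otimes 2t}$ has second moment at most $\pmin^{-O(t)}$. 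By Chebyshev's inequality and a union bound over the $d^{2t}$ entries, $\normf{M_E} \leq d^{2t}\pmin^{-O(t)}\epsilon^{-1/2}n^{-1/2}$ with probability $1-\epsilon$, and the target $\normop{M_E} \leq \eta/C^t$ then follows as soon as $n \geq (C\pmin^{-1}d)^{O(t)}\eta^{-2}\epsilon^{-1}$. The main obstacle will be correctly tracking how the isotropization inflates the within-component means (by up to $\pmin^{-1}$), which is precisely the source of the $\pmin^{-1}$ dependence in the sample complexity; the crude bound $\normop{M_E} \leq \normf{M_E}$ additionally wastes a $\sqrt{d^t}$ factor, but this is absorbed into the stated $(C\pmin^{-1}d)^{O(t)}$ target.
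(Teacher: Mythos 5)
Your proposal is correct and follows essentially the same route as the paper: both arguments reduce the claim to Euclidean-norm concentration of the empirical $2t$-th moment tensor of the isotropized variable $\operatorname{cov}(\bm z)^{-1/2}\bm z$ (coordinate-wise moment bounds of order $\pmin^{-O(t)}$, Chebyshev, and a union bound over the $d^{O(t)}$ entries), and both use the axiom only through the bound $\|\operatorname{cov}(\bm z)^{1/2}v\|^{2t}\le C^t$. The only difference is SoS bookkeeping: the paper keeps the original variable $v$, squares the deviation, applies the Cauchy--Schwarz step against $(\operatorname{cov}(\bm z)^{1/2}v)^{\otimes 2t}$ and finishes with the SoS square-root lemma, whereas you change variables and certify the linear bound directly via the PSD inequality $\normop{M_E}I\succeq M_E$ applied to $\tilde v^{\otimes t}$; both derivations are valid, of degree $O(t)$, and yield the same sample complexity.
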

\begin{proof}
See \cref{proofs-separatingpolynomial}.
\end{proof}

\subsection{Proof of Theorem~\ref{thm:separating-polynomial-main}}

\begin{proof}
Let $s = \lceil \log \pmin^{-1} \rceil$ and $t = 10000000 s$. The algorithm is:
\begin{enumerate}
  \item Compute a pseudo-expectation $\tilde{\mathbb{E}}$ of degree $2t$ over $v \in \mathbb{R}^d$ such that $\|\widehat{\operatorname{cov}}(\bm z)^{1/2} v\|^2 \leq 1.01 \cdot 8$, $\hat{\mathbb{E}}\langle \bm{y}, v\rangle^{2s} \geq 1 - 0.005$, and $\hat{\mathbb{E}}\langle \bm{y}, v\rangle^{2t} \leq 30^t + 0.005\}$.
  \item Construct a separating polynomial $q$ based on $\tilde{\mathbb{E}}$ as in \cref{lemma:separating-polynomial-completeness}.
  \item Return $q$.
\end{enumerate}

We now analyze the algorithm.
First, we argue that there exists a pseudo-expectation $\tilde{\mathbb{E}}$ that satisfies the given constraints.
Note that $t \leq \max_{i,j} \|\Sigma_z^{-1/2}(\mu_i - \mu_j)\|^2$ if $C_{sep}$ is a large enough constant.
Therefore, the conditions of \cref{lemma:separating-polynomial-soundness} for $s$ and $t$ are satisfied.
Then, by \cref{lemma:separating-polynomial-finite-sample-main}, for $n \geq n_0$, there exists a vector $v \in \mathbb{R}^d$ that satisfies the given constraints. 
Then, there also exists a pseudo-expectation that satisfies the constraints.

Second, we argue that $q$ has the desired properties.
By \cref{lemma:separating-polynomial-finite-sample-main}, $\tilde{\mathbb{E}}$ also sastisfies with high probability that $\hat{\mathbb{E}}\langle \bm{y}, v\rangle^{2s} \geq 1 - 0.01 \geq 0.99^s$ and $\hat{\mathbb{E}}\langle \bm{y}, v\rangle^{2t} \leq 30^t + 0.01 \leq 31^t$.
Then the conditions of \cref{lemma:separating-polynomial-completeness} are satisfied with $c=0.99$ and $C=31$.
Then, we are guaranteed to return a separating polynomial $q$ with the following properties:
\begin{itemize}
  \item For independent random vectors $\bm{y}$ and $\bm{y}'$ sampled from different components, we have with probability at least $0.99999$ that
  \[q(\bm{y} - \bm{y}') \geq \frac{1}{2}\left(\frac{c}{16}\right)^s \geq \frac{1}{2} \left(\frac{0.99}{16}\right)^s \geq \frac{1}{20^s}.\] 
  \item For independent random vectors $\bm{y}$ and $\bm{y}'$ sampled from the same component, we have with probability at least $0.99999$ that
  \[q(\bm{y} - \bm{y}') \leq 320 \left(\frac{4Cs}{t}\right)^s \leq 320 \left(\frac{4\cdot 31 \cdot s}{t}\right)^s \leq \frac{1}{200^s}.\]
\end{itemize}

The time complexity of the algorithm is dominated by the time to compute the pseudo-expectation.
The pseudo-expectation is of degree $O(\log \pmin^{-1})$ over $d$ variables, and each constraint requires summing over the $n$ samples.
Therefore, the time to compute the pseudo-expectation is $n \cdot d^{O(\log \pmin^{-1})}$.
\end{proof}

\subsection{Proof of Theorem~\ref{thm:separating-bipartition-main}}

We start by stating and proving \cref{lemma:bipartition-from-distance}, which shows how to obtain a bipartition of the samples given a suitable distance function. We then prove \cref{thm:separating-bipartition-main}.

\begin{lemma}[Bipartition from distance function]
\label{lemma:bipartition-from-distance}
Assume access to a distance function $d_q: \mathbb{R}^d \times \mathbb{R}^d \to \mathbb{R}$ such that:
\begin{itemize}
    \item There exist distinct \(a,b\in [k]\) such that the independent random vectors \(\bm y\sim N(\mu_a,\Sigma)\) and \(\bm y'\sim N(\mu_b,\Sigma)\) satisfy 
      \[
        \Pr\Set{d_q(\bm y, \bm y') \ge \frac{1}{\sqrt{20}}}
        \ge 0.99999.
        \,
      \]
    \item For all \(a\in [k]\), the independent random vectors \(\bm y,\bm y'\sim N(\mu_a,\Sigma)\) satisfy
      \[
        \Pr\Set{d_q(\bm y, \bm y') \le \frac{1}{\sqrt{200}}}
        \ge 0.99999.
        \,
      \]
    \end{itemize}
Given a sample of size \(n = \Omega(\pmin^{-1})\) from the mixture, there exists a polynomial-time algorithm that returns with probability $0.99$ a partition of $[n]$ into two sets $C_1$ and $C_2$ such that, if true clustering of the samples is $S_1, ..., S_k$, then
\[\max_i \frac{|C_1 \cap S_i|}{|S_i|} \geq 0.99 \quad \text{and} \quad \max_i \frac{|C_2 \cap S_i|}{|S_i|} \geq 0.99.\]
\end{lemma}

\begin{proof}
The algorithm is: 
\begin{enumerate}
    \item Choose $i \in [n]$ uniformly at random.
    \item Let $S = \{j \in [n]: d_q(y_i, y_j) \leq \frac{1}{\sqrt{200}}\}$.
    \item Return $S$ and $[n] \setminus S$.
\end{enumerate}

We now analyze the algorithm.

First, we prove that, with probability at least $0.995$, a $0.99$-fraction of the samples from the same component as $i$ are included in $S$.
With high probability, a $0.99998$-fraction of the pairs of samples $(y, y')$ with $y$ and $y'$ from the same component as $i$ satisfy $d_q(y, y') \leq \frac{1}{\sqrt{200}}$.
Then, the fraction of samples from this component that are farther than $\frac{1}{\sqrt{200}}$ from more than a $0.01$-fraction of the other samples in the component is at most $\frac{1-0.99998}{0.01} = 0.002$.
Then, overall, with probability at least $0.995$, $i$ is closer than $\frac{1}{\sqrt{200}}$ to at least a $0.99$-fraction of the other samples in the component.
In this case, $S$ includes a $0.99$-fraction of the samples from the same component as $i$.

Second, we prove that, with probability at least $0.995$, at least a $0.99$-fraction of the samples from one of the components are not included in $S$.
Let $a, b \in [k]$ be the two components for which the large-distance guarantee holds.
With high probability, a $0.99998$-fraction of the pairs of samples $(y, y')$ with $y$ from $a$ and $y'$ from $b$ satisfy $d_q(y, y') \geq \frac{1}{\sqrt{20}}$.
We have $d_q(y,y') \leq d_q(y, y_i) + d_q(y', y_i)$, so if $d_q(y,y') \geq \frac{1}{\sqrt{20}}$, then at least one of $d_q(y, y_i)$ or $d_q(y', y_i)$ is at least $\frac{1}{2\sqrt{20}} > \frac{1}{\sqrt{200}}$.
Then, for such pairs, it is impossible for both $y$ and $y'$ to be in $S$.
Suppose that a $p_a$-fraction of the samples from $a$ are in $S$ and that a $p_b$-fraction of the samples from $b$ are in $S$.
We need then that $1-p_a p_b \geq 0.99998$, so $\min(p_a, p_b) \leq \sqrt{1-0.9998} \leq 0.01$.
Therefore, $[n]\setminus S$ contains at least a $0.99$-fraction of the samples from one of $a$ or $b$.

Therefore, with probability at least $0.99$, the conclusion of the lemma holds.
\end{proof}

\begin{proof}[Proof of \cref{thm:separating-bipartition-main}]
The algorithm is:
\begin{enumerate}
  \item Run the algorithm from \cref{thm:separating-polynomial-main} to obtain a polynomial $q$.
  \item Run the algorithm from \cref{lemma:bipartition-from-distance} with the distance function $d_q(x,y) = q(x - y)^{1/2s}$.
  \item Return the resulting bipartition.
\end{enumerate}
We now analyze the algorithm.
Recall that $q(u) = \tilde{\E} \iprod{u,v}^{2s}$, so $q(u)^{1/2s} = (\tilde{\E} \iprod{u,v}^{2s})^{1/2s}$.
Then, we have by \cref{lemma:sos-pe-triangle} that $q^{1/2s}$ satisfies the triangle inequality.
It follows that $d_q(x,y) = q(x - y)^{1/2s}$ is a distance function.
Then, by the guarantees of \cref{thm:separating-polynomial-main}, $d_q$ satisfies the requirements of \cref{lemma:bipartition-from-distance}, so the stated guarantees follow.

The time complexity is dominated by the time complexity of the algorithm from \cref{thm:separating-polynomial-main}.

\end{proof}

\section{Parallel pancakes}
\label{sec:pp}

The model studied in this section is a well-separated mixture of Gaussians with colinear means that is in isotropic position.
As shown in \cref{sec:isotropic-position-properties}, the isotropic position property makes this model similar to the parallel pancakes construction, in the sense that the only direction in which the components of the mixture have variance different from $1$ is the direction of the means.
In \cref{sec:colinear} we study the same model without the isotropic position assumption.

\paragraph{Setting.} We consider a mixture of \(k\) Gaussian distributions \(N(\mu_i, \Sigma)\) with mixing weights \(p_i\) for \(i=1,...,k\),
where \(\mu_i \in \mathbb{R}^d\), \(\Sigma \in \mathbb{R}^{d \times d}\) is positive definite, and \(p_i \geq 0\) and \(\sum_{i=1}^k p_i = 1\). 
Let \(\pmin = \min_i p_i\).

The distribution is in isotropic position: for $\bm{y}$ distributed according to the mixture, we have $\mathbb{E} \bm{y} = 0$ and $\operatorname{cov}(\bm{y}) = I_d$.

The distribution also satisfies mean separation and mean colinearity:
\begin{itemize}
\tightlist
\item
  Mean separation:
  for some $C_{sep} > 0$ and for all \(i \neq j\),
  \[\left\|\Sigma^{-1/2} (\mu_i - \mu_j)\right\|^2 \geq C_{sep} \log \pmin^{-1}.\]
\item
  Mean colinearity:
  for some unit vector \(u \in \mathbb{R}^d\) and for all \(i\),
  \[\mu_i = \langle \mu_i, u\rangle u.\]
\end{itemize}

Also define $\sigma^2 = u^\top \Sigma u$, which is the variance of the components in the direction of the means.

\begin{theorem}[Parallel pancakes algorithm]
\label{thm:parallel-pancakes-main}
Consider the Gaussian mixture model defined above, with $C_{sep}$ larger than some universal constant. Let 
\[n_0 = \left(\frac{1}{\sigma^2}\right)^{O(1)} \cdot (\pmin^{-1} d)^{O(\log \pmin^{-1})}.\]
Given a sample of size $n \geq n_0$ from the mixture, there exists an algorithm that runs in time $n^{O(\log \pmin^{-1})}$ and returns with high probability a partition of \([n]\) into $k$ sets $C_1, ..., C_k$ such that, if the true clustering of the samples is $S_1, ..., S_k$, then there exists a permutation $\pi$ of $[k]$ such that 
\[1 - \frac{1}{n} \sum_{i=1}^k |C_i \cap S_{\pi(i)}| \leq \left(\frac{\pmin}{k}\right)^{O(1)}.\]
\end{theorem}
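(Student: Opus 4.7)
}

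The plan is to reduce the problem to a one-dimensional clustering task by first extracting an approximation $\hat{u}\in\R^d$ of the common direction $u$ along which the means lie, and then performing a simple one-dimensional clustering of the projections $\langle \hat u, \bm{y}^{(i)}\rangle$. In isotropic position with colinear means, the subspace decomposition $v = v^{\parallel} + v^{\perp}$ introduced in the techniques section specializes to $v^{\parallel} = \langle v, u\rangle u$, so every direction $v$ with $\|v^{\perp}\|$ small compared to $\|\Sigma^{1/2} v^{\parallel}\|$ is necessarily close to $\pm u$.

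The first step is to compute, via \cref{fact:sos-to-pe}, a pseudo-distribution $D(v)$ of degree $\Theta(\log\pmin^{-1})$ over $v\in\R^d$ satisfying a system of the form $\{\hat p_{2s}(v)^{1/s} \geq c,\ \|v\|^2 \leq 1\}$ (with $\hat p_{2r}$ the empirical analogue of the moment polynomial $p_{2r}$ from \cref{eq:moment-polynomial-techniques}), obtained by optimizing one of the ratios $\|v\|_2^2 / p_{2s}(v)^{1/s}$ or $p_{2t}(v)^{1/t}/\|v\|_2^2$ flagged in the techniques section. Feasibility follows from choosing $v = u / \sqrt{\mathbb{E}\langle \bm z, u\rangle^{2s}}$ and invoking \cref{lemma:separating-polynomial-soundness} together with the finite-sample bounds of \cref{subsec:separating-polynomial-finite-sample}.

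The second step is the rounding. Using the SoS-provable characterization from the techniques section (specifically the chain of inequalities leading to \cref{eq:ratio-approximation-colinear-techniques}), one shows that the pseudo-distribution $D$ satisfies in expectation $\tilde{\E}_{D(v)} \|v^{\perp}\|^2 \ll \tilde{\E}_{D(v)} \|\Sigma^{1/2} v^{\parallel}\|^2$ at the chosen degree. From this, one extracts a candidate direction $\hat u$ (either the top eigenvector of $\tilde{\E}_{D(v)} v v^{\top}$ or, equivalently, a rounding of $\tilde{\E}_{D(v)} \langle v, \cdot\rangle^{2s}$ via the separating-polynomial construction of \cref{lemma:separating-polynomial-completeness}) and argues that $\|\hat u - \alpha u\|$ is small for some scalar $\alpha$, with quantitative error polynomial in $\pmin/k$.

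The third step is one-dimensional clustering of the projected samples $\bm{t}^{(i)} = \langle \hat u, \bm{y}^{(i)}\rangle$. Since $\hat u$ is close to $u$ and the means are colinear, the projections approximately follow a univariate Gaussian mixture $\sum_i p_i N(\langle \mu_i, u\rangle, \sigma^2)$, whose means satisfy pairwise gaps of order $\sigma\sqrt{\log\pmin^{-1}}$ by the separation assumption. In one dimension one can cluster by sorting the projected samples and cutting at the $k-1$ largest gaps, or equivalently by iterating the separating-polynomial procedure of \cref{thm:separating-polynomial-main} in one dimension. Standard Gaussian concentration bounds give a misclassification rate at most $(\pmin/k)^{O(1)}$, matching the target.

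The main obstacle is the second step: showing that an SoS rounding of $D$ genuinely concentrates on directions aligned with $u$, with a quantitative deviation that translates into only a $(\pmin/k)^{O(1)}$ fraction of mis-clustered samples after projection. This requires combining the moment-ratio characterization with a careful analysis of $\tilde{\E}_{D(v)} v v^{\top}$ on the orthogonal complement of $u$, and controlling the propagation of the empirical-vs-population moment errors from \cref{lemma:separating-polynomial-finite-sample-closeness} through the rounding. The remaining steps, including the 1D clustering and the sample complexity accounting, are comparatively standard once $\hat u$ is available.
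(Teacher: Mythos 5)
Your high-level plan (extract an approximate mean direction $\hat u$ by sum-of-squares on low-degree moment constraints, then cluster the one-dimensional projections) is the same as the paper's, but the proposal has genuine gaps at exactly the points the paper has to work hardest.

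First, your accuracy target for $\hat u$ is wrong. You claim $\|\hat u - \alpha u\|$ small ``with quantitative error polynomial in $\pmin/k$.'' After isotropization the components have variance $1$ in every direction orthogonal to $u$ and variance $\sigma^2$ along $u$, and $\sigma^2$ is \emph{not} lower bounded by any polynomial in $\pmin/k$ (only the sample complexity depends on $1/\sigma^2$). If $\langle u,\hat u\rangle^2 = 1-\epsilon$ with $\epsilon \gg \sigma^2$, the projected components have variance $\approx \epsilon$ while the means are separated by only $O(\sqrt{\sigma^2\log\pmin^{-1}})$, so the projection is not clusterable. The paper's \cref{thm:infinite-algorithm-high-correlation} therefore proves $\langle u,\hat u\rangle^2 \ge 1-O(\min(\sigma^2,1/k))$, and this $O(\sigma^2)$ dependence is essential; a $(\pmin/k)^{O(1)}$ error bound does not suffice in the thin-pancake regime that is the whole point of the theorem.

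Second, the identification step cannot be a single optimization, and you leave unresolved which of your two ratio programs to run. In isotropic position the $2s$-th directional moment orthogonal to $u$ is $(2s-1)!!\approx(2s/e)^s$, while along $u$ it is roughly $\mathbb{E}\langle\bm\mu,u\rangle^{2s}+\sigma^{2s}(2s-1)!!$; since $\sum_i p_i\langle\mu_i,u\rangle^2=1-\sigma^2\le 1$, the quantity $\mathbb{E}\langle\bm\mu,u\rangle^{2s}$ can be as small as $O(1)$, in which case $u$ is the \emph{minimizer} of the low moment, not the maximizer, and the constraint $\{\hat p_{2s}(v)^{1/s}\ge c,\ \|v\|^2\le 1\}$ is satisfied by directions orthogonal to $u$. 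The paper handles this by computing \emph{two} pseudo-expectations, one from maximizing $\mathbb{E}\langle\bm y,v\rangle^{2s}$ and one from minimizing $\mathbb{E}\langle\bm y,v\rangle^{2t}$ over $\|v\|^2=1$ (\cref{thm:find-direction-main}, via \cref{lemma:find-direction-moment-maximization,lemma:find-direction-moment-minimization}), together with a data-driven test ($\mathbb{E}\langle\bm y,\hat u_U\rangle^{2s}\gtrless(50s)^s$) to decide which rounding to trust. Your proposal acknowledges the two ratios but supplies neither the case analysis nor the SoS identifiability proofs, and you explicitly flag the rounding as ``the main obstacle'' — which is to say the core of the theorem is not proved. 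As a smaller point, your 1D clustering by ``cutting at the $k-1$ largest gaps'' is not justified: with quasi-polynomially many samples the extreme within-cluster order-statistic gaps can rival the between-cluster gaps at separation $\Theta(\sqrt{\log\pmin^{-1}})$; the paper instead invokes the clustering guarantee of \cite{MR3826314-Hopkins18} (\cref{thm:clustering-main}).
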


We introduce some further notation for this section.
Let $\bm{y}$ be distributed according to the mixture.
We specify the model as \(\bm{y} = \bm{\mu} + \bm{w}\), where \(\bm{\mu}\) takes value \(\mu_i\) with probability \(p_i\) and \(\bm{w} \sim N(0, \Sigma)\), with \(\bm{\mu}\) and \(\bm{w}\) independent of each other.

\subsection{Isotropic position properties}
\label{sec:isotropic-position-properties}

In this section we prove some consequences of the fact that \(\bm{y}\) is in isotropic position. 
\cref{lemma:isotropic-position-covariance} shows that \(\Sigma = I_d - (1-\sigma^2) uu^\top\) with \(0 < \sigma^2 \leq 1\).
This means that \(\Sigma\) can have at most one eigenvalue less than \(1\) and that the eigenvectors corresponding to this eigenvalue are parallel to the direction of the means $u$.
Then \cref{lemma:isotropic-position-separation} uses this form of \(\Sigma\) to quantify the separation of the means along direction \(u\) in terms of \(\sigma^2\).

\begin{lemma}[Isotropic position component covariance matrix]
\label{lemma:isotropic-position-covariance}
We have that (1) $\sigma^2 = 1 - \sum_{i=1}^k p_i \langle \mu_i, u\rangle^2$, (2) $\Sigma = I_d - (1-\sigma^2) uu^\top$, and (3) $0 < \sigma^2 \leq 1$.
\end{lemma}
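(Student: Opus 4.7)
The proof plan is to exploit the decomposition $\bm{y} = \bm{\mu} + \bm{w}$ together with the independence of $\bm{\mu}$ and $\bm{w}$ to split the isotropy condition $\operatorname{cov}(\bm{y}) = I_d$ into a constraint on $\Sigma$ alone. Since the two summands are independent, $\operatorname{cov}(\bm{y}) = \operatorname{cov}(\bm{\mu}) + \Sigma$, so the entire content of the lemma is really an explicit computation of $\operatorname{cov}(\bm{\mu})$ using the colinearity hypothesis.

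First I would compute $\operatorname{cov}(\bm{\mu})$. Colinearity gives $\mu_i = \langle \mu_i, u\rangle u$, so $\mu_i \mu_i^{\top} = \langle \mu_i, u\rangle^2 \, uu^{\top}$. Taking expectations, $\mathbb{E}\bm{\mu}\bm{\mu}^{\top} = \alpha\, uu^{\top}$ where $\alpha \defeq \sum_{i=1}^k p_i \langle \mu_i, u\rangle^2$. The mean of $\bm{\mu}$ is zero: since $\mathbb{E}\bm{y} = 0$ and $\mathbb{E}\bm{w} = 0$ we get $\mathbb{E}\bm{\mu} = 0$, so $\operatorname{cov}(\bm{\mu}) = \alpha\, uu^{\top}$. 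Combined with $\operatorname{cov}(\bm{y}) = I_d$ this yields $\Sigma = I_d - \alpha\, uu^{\top}$.

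Next I would identify $\alpha$ with $1 - \sigma^2$. Since $u$ is a unit vector, $\sigma^2 = u^{\top}\Sigma u = u^{\top}(I_d - \alpha uu^{\top})u = 1 - \alpha$, which gives claim (1) and, upon substitution, claim (2). For claim (3), $\alpha \ge 0$ is immediate from its definition as a nonnegative combination of squares, so $\sigma^2 \le 1$, while positive definiteness of $\Sigma$ forces $\sigma^2 = u^{\top}\Sigma u > 0$.

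There is no real obstacle here; the only thing to be careful about is invoking the right hypotheses at the right moments (independence of $\bm{\mu}$ and $\bm{w}$ for the covariance split, $\mathbb{E}\bm{y} = 0$ to get $\mathbb{E}\bm{\mu} = 0$ so that the second moment equals the covariance, colinearity to factor out $uu^{\top}$, and positive definiteness of $\Sigma$ for the strict lower bound in (3)). The entire argument is essentially one line of linear algebra once the decomposition is in place.
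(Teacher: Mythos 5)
Your proof is correct and follows essentially the same route as the paper: compute $\operatorname{cov}(\bm{\mu})$ using $\mathbb{E}\bm{\mu}=0$ (from $\mathbb{E}\bm{y}=0$) and colinearity, split $\operatorname{cov}(\bm{y})=\operatorname{cov}(\bm{\mu})+\Sigma$ by independence, and read off $\sigma^2 = u^\top\Sigma u$, with positive definiteness giving $\sigma^2>0$ and nonnegativity of the weighted squares giving $\sigma^2\le 1$. No gaps.
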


\begin{proof}
We have that $\mathbb{E} \bm y = \mathbb{E} \bm \mu + \mathbb{E} \bm w = \mathbb{E} \bm \mu$. Because the distribution is in isotropoic position, we also have that \(\mathbb{E} \bm y = 0\), so the equation above implies that \(\mathbb{E} \bm \mu = 0\). Then \(\operatorname{cov}(\bm \mu) = \sum_{i=1}^k p_i \mu_i \mu_i^\top\).

Furthermore, since $\bm \mu$ and $\bm w$ are independent, we have that $\operatorname{cov}(\bm y) = \operatorname{cov}(\bm \mu) + \operatorname{cov}(\bm w) = \sum_{i=1}^k p_i \mu_i \mu_i^\top + \Sigma$.
Because the distribution is in isotropic position, we also have that \(\operatorname{cov}(\bm y) = I_d\), so the equation above implies $\Sigma = I_d - \sum_{i=1}^k p_i \mu_i \mu_i^\top$. Plugging in \(\mu_i = \langle \mu_i, u\rangle u\), we have $\Sigma = I_d - \left(\sum_{i=1}^k p_i \langle \mu_i, u\rangle^2\right) uu^\top$.

Then, it follows that $\sigma^2 = u^\top \Sigma u = 1 - \sum_{i=1}^k p_i \langle \mu_i, u\rangle^2$. This proves (1). The fact that $1-\sigma^2 = \sum_{i=1}^k p_i \langle \mu_i, u\rangle^2$ also proves (2). For (3), \(\sigma^2 > 0\) follows by the definition using that $\Sigma$ is positive definite and \(\sigma^2 \leq 1\) follows by (1).
\end{proof}

\begin{lemma}[Isotropic position mean separation]
\label{lemma:isotropic-position-separation}
For all \(i \neq j\),
\[\langle \mu_i - \mu_j, u\rangle^2 \geq C_{sep} \sigma^2 \log \pmin^{-1}.\]
\end{lemma}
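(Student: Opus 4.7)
The plan is to combine the explicit form of $\Sigma$ from \cref{lemma:isotropic-position-covariance} with the mean colinearity hypothesis to directly translate the Mahalanobis-type mean separation into a statement about $\langle \mu_i - \mu_j, u\rangle^2$. There is no real obstacle here; once the spectral structure of $\Sigma$ is in hand, the result drops out in a few lines.

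First, I would invoke \cref{lemma:isotropic-position-covariance} to write $\Sigma = I_d - (1-\sigma^2)uu^\top$. Since $u$ is a unit vector, this identity exhibits $u$ as an eigenvector of $\Sigma$ with eigenvalue $\sigma^2$, while every vector orthogonal to $u$ is an eigenvector with eigenvalue $1$. In particular, $\Sigma^{-1/2} u = \sigma^{-1}\, u$.

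Next, I would use mean colinearity to write $\mu_i - \mu_j = \langle \mu_i - \mu_j, u\rangle\, u$, so that $\mu_i - \mu_j$ lies in the one-dimensional eigenspace on which $\Sigma^{-1/2}$ acts by $\sigma^{-1}$. Applying $\Sigma^{-1/2}$ term by term gives
\[
\Sigma^{-1/2}(\mu_i - \mu_j) \;=\; \sigma^{-1}\,\langle \mu_i - \mu_j, u\rangle\, u,
\]
and hence $\|\Sigma^{-1/2}(\mu_i - \mu_j)\|^2 = \sigma^{-2}\,\langle \mu_i - \mu_j, u\rangle^2$.

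Finally, I would substitute this identity into the mean separation hypothesis $\|\Sigma^{-1/2}(\mu_i - \mu_j)\|^2 \geq C_{sep}\log \pmin^{-1}$ and multiply through by $\sigma^2$ to conclude that $\langle \mu_i - \mu_j, u\rangle^2 \geq C_{sep}\,\sigma^2 \log \pmin^{-1}$, as desired.
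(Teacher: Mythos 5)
Your proof is correct and follows essentially the same route as the paper: it invokes \cref{lemma:isotropic-position-covariance} to get $\Sigma = I_d - (1-\sigma^2)uu^\top$, uses colinearity to see that $\Sigma^{-1/2}(\mu_i-\mu_j) = \sigma^{-1}\langle \mu_i-\mu_j,u\rangle u$, and then rescales the separation hypothesis by $\sigma^2$. Phrasing the action of $\Sigma^{-1/2}$ via the eigendecomposition rather than the explicit formula $\Sigma^{-1/2} = I_d + (\sigma^{-1}-1)uu^\top$ is a cosmetic difference only.
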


\begin{proof}
By \cref{lemma:isotropic-position-covariance}, \(\Sigma = I_d - (1-\sigma^2) uu^\top\). This implies that $\Sigma^{-1/2} = I_d + \left(1/\sqrt{\sigma^2} - 1\right) uu^\top$. Then, using that \(\mu_i = \langle \mu_i, u\rangle u\), we have that
\begin{align*}
\Sigma^{-1/2} (\mu_i - \mu_j)
&= \left(I_d + \left(\frac{1}{\sqrt{\sigma^2}} - 1\right) uu^\top\right) \left(\langle \mu_i - \mu_j, u\rangle u\right)
= \frac{1}{\sqrt{\sigma^2}} \langle \mu_i - \mu_j, u\rangle u.
\end{align*}
Therefore, the separation condition $\left\|\Sigma^{-1/2} (\mu_i - \mu_j)\right\|^2 \geq C_{sep} \log \pmin^{-1}$ is equivalent to $\frac{1}{\sigma^2} \langle \mu_i - \mu_j, u\rangle^2 \geq C_{sep} \log \pmin^{-1}$. The conclusion follows by multiplying both sides by $\sigma^2$.
\end{proof}

\subsection{Exact moment direction recovery}
\label{sec:sosident}

In this section we discuss how to recover a direction close to the direction of the means $u$, assuming oracle access to moments \(\mathbb{E} \bm{y}^{\otimes t}\) for any positive integer \(t\).
Access to these moments allows us to calculate exactly directional moments of the form $\mathbb{E} \langle \bm{y}, v\rangle^t$, which simplifies the analysis.
Finite sample considerations are discussed starting with \cref{sec:finitesample_pp}.

\cref{thm:infinite-algorithm-high-correlation} shows that there exists an algorithm that computes a unit vector $\hat{u}$ with correlation $1-O(\min(\sigma^2, \frac{1}{k}))$ with the direction of the means $u$.
We remark that it is necessary for $\hat{u}$ to have a correlation of at least $1-O(\sigma^2)$ with $u$ in order for the components of the mixture to be separated along direction $\hat{u}$.

\begin{theorem}[Direction recovery with exact moments]
\label{thm:infinite-algorithm-high-correlation}
Assume oracle access to \(\mathbb{E} \bm{y}^{\otimes t}\) for any positive integer \(t\).
Then there exists an algorithm with time complexity \(\left(\log \frac{1}{\sigma^2}\right) \cdot d^{O(\log \pmin^{-1})}\) that outputs a unit vector \(\hat{u} \in \mathbb{R}^d\) such that \(\langle u, \hat{u}\rangle^2 \geq 1 - 320\min(\sigma^2, \frac{1}{k})\).
\end{theorem}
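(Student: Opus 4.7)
The plan is to cast direction recovery as a sum-of-squares (SoS) optimization whose (pseudo-)optimizers must align with $u$. The key structural fact, from \cref{lemma:isotropic-position-covariance}, is that $\Sigma = I_d - (1-\sigma^2)uu^\top$, so $u$ is the unique direction along which $\bm{y}=\bm\mu+\bm w$ has variance strictly less than $1$; correspondingly, whenever we decompose $v = \alpha u + v^\perp$ with $v^\perp\perp u$, we have the identity $v^\top\Sigma v = \alpha^2\sigma^2+\|v^\perp\|_2^2$. Following the ratio paradigm from \cref{sec:techniques}, I would set $s = \lceil\log\pmin^{-1}\rceil$ and $t=\Theta(s)$ (a large constant multiple) and solve an SoS relaxation of degree $2t$ for a program of the form $\{\|v\|_2^2 \le B,\ \mathbb{E}\langle\bm y,v\rangle^{2s} \ge 1\}$, i.e.\ minimizing $\|v\|_2^2/p_{2s}(v)^{1/s}$. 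Since the ``right'' $B$ depends on the unknown $\sigma^2$, I would sweep $B$ through a geometric sequence of $O(\log\tfrac{1}{\sigma^2})$ candidates and take the smallest feasible one; each SoS call takes $d^{O(\log\pmin^{-1})}$ time by \cref{fact:sos-to-pe}, which accounts for the claimed running time. The output $\hat u$ is then the top eigenvector of the matrix $M = \tilde{\mathbb{E}}[vv^\top]$.

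The heart of the analysis is an SoS-provable quadratic-form inequality
\[\tilde{\mathbb{E}}\langle v,u\rangle^2 \;\ge\; \bigl(1-320\min(\sigma^2,1/k)\bigr)\,\tilde{\mathbb{E}}\|v\|_2^2,\]
from which a Rayleigh-quotient argument (using $u^\top M u \ge (1-\eta)\mathrm{tr}(M)$ with $\mathrm{tr}(M)=\tilde{\mathbb{E}}\|v\|_2^2$) immediately yields $\langle u,\hat u\rangle^2 \ge 1-O(\eta)$ for the top eigenvector. To prove the inequality, I would establish analogs of \cref{lemma:moment-upper-bound-weak} and \cref{lemma:moment-lower-bound-weak} for $\bm y$ in place of $\bm z$, namely
\begin{align*}
p_{2s}(v) &\;\lesssim\; 2^{2s-1}\mathbb{E}\langle\bm\mu,v\rangle^{2s} + 2^{2s-1}(v^\top\Sigma v)^s s^s,\\
p_{2s}(v) &\;\gtrsim\; (v^\top\Sigma v)^s s^s/2^s.
\end{align*}
The witness $v = u$ gives feasibility with $B \approx (\mathbb{E}\bm\mu_u^{2s})^{-1/s}$, and by \cref{lemma:isotropic-position-separation} together with mean colinearity, $\mathbb{E}\bm\mu_u^{2s} \ge \pmin(C_{sep}\sigma^2\log\pmin^{-1})^s/4^s$. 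For any feasible $v$, the upper bound on $p_{2s}(v)$ combined with $\mathbb{E}\langle\bm\mu,v\rangle^{2s} = \alpha^{2s}\mathbb{E}\bm\mu_u^{2s}$ and the target value of $B$ forces $\|v^\perp\|_2^2 \lesssim \sigma^2\alpha^2$, yielding $\alpha^2/\|v\|_2^2 \ge 1-O(\sigma^2)$. The alternative $1-O(1/k)$ bound comes from the competing estimate $\mathbb{E}\bm\mu_u^{2s} \le (1-\sigma^2)\cdot \max_i|\langle\mu_i,u\rangle|^{2s-2}$ combined with $|\langle\mu_i,u\rangle|^2\le 1/p_i \le 1/\pmin$ (from $\sum_i p_i\langle\mu_i,u\rangle^2 = 1-\sigma^2$), which becomes tighter than the $\sigma^2$ bound exactly when $\sigma^2 \gtrsim 1/k$.

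The main obstacle is carrying the decomposition $v = \alpha u + v^\perp$ through the SoS proof system without algebraic access to $u$. The resolution is that every polynomial inequality in the argument is expressible in terms of $\mathbb{E}\langle\bm y,v\rangle^{2r}$, $\mathbb{E}\langle\bm\mu,v\rangle^{2r}$, and $v^\top\Sigma v$ -- polynomials in $v$ whose coefficients are (unknown, but fixed) mixture parameters; the decomposition of $v$ enters the analysis only through the degree-$2$ identity $v^\top\Sigma v = \alpha^2\sigma^2+\|v^\perp\|_2^2$ implied by the explicit form of $\Sigma$. The requisite polynomial inequalities reduce, as in \cref{lemma:separating-polynomial-mean-lower-bound,lemma:separating-polynomial-variance-upper-bound}, to degree-$O(s)$ manipulations of the binomial expansion of $\mathbb{E}(\langle\bm\mu,v\rangle+\langle\bm w,v\rangle)^{2s}$. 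Once these SoS proofs are in place, applying $\tilde{\mathbb{E}}$ to the derived inequality gives the quadratic-form bound on $M$, and the eigenvector extraction completes the argument.
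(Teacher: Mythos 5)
There is a genuine gap: your single program (minimize $\|v\|^2$ subject to $\mathbb{E}\langle\bm y,v\rangle^{2s}\ge 1$, i.e.\ maximize the normalized $2s$-th moment) only identifies $u$ in the regime where the means' contribution along $u$ dominates the Gaussian contribution, and it provably fails in the opposite regime, which is exactly the parallel-pancakes case this theorem is about. Concretely, take $k=2$, $p_1=p_2=\tfrac12$, $\mu_{1,2}=\pm c\,u$ with $c^2=1-\sigma^2$ and $\sigma^2$ tiny; the separation hypothesis $\langle\mu_1-\mu_2,u\rangle^2\ge C_{sep}\sigma^2\log\pmin^{-1}$ is satisfied, yet $\mathbb{E}\langle\bm\mu,u\rangle^{2s}=c^{2s}\le 1$. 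Along $u$ the normalized $2s$-th moment is $\approx c^2+\Theta(s)\sigma^2=O(1)$, while for any $v\perp u$ it is $((2s-1)!!)^{1/s}=\Theta(s)$, since $\Sigma$ acts as the identity there. So the optimizer of your program (and the smallest feasible $B$ in your sweep, attained at $\|v\|^2\approx 1/\Theta(s)$ by an orthogonal direction) is nearly orthogonal to $u$, and your claimed implication ``feasibility forces $\|v^\perp\|^2\lesssim\sigma^2\alpha^2$'' is false: it silently assumes $(\mathbb{E}\langle\bm\mu,u\rangle^{2s})^{1/s}\gg s$. This is precisely why the paper's \cref{thm:find-direction-main} runs a dichotomy: a moment-\emph{maximization} pseudo-expectation $\tilde{\mathbb{E}}_U$ (your branch, valid when $\sigma^2\ge\tau$ or $\mathbb{E}\langle\bm\mu,u\rangle^{2s}\ge(4es)^s$, via \cref{lemma:find-direction-moment-maximization}) and a moment-\emph{minimization} pseudo-expectation $\tilde{\mathbb{E}}_L$ for the $2t$-th moment with $t=\Theta(s)$ (via \cref{lemma:find-direction-moment-minimization}), which works in the small-mean regime because $u$ is the unique direction of variance $\sigma^2<1$; the final algorithm then selects between $\hat u_U$ and $\hat u_L$ by testing whether $\mathbb{E}\langle\bm y,\hat u_U\rangle^{2s}\ge(50s)^s$. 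Your proposal has no analogue of the minimization branch or of this selection test, so it cannot yield the stated $1-320\min(\sigma^2,\tfrac1k)$ guarantee.

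A secondary, fixable issue: your rounding step asserts the quadratic inequality $\tilde{\mathbb{E}}\langle u,v\rangle^2\ge(1-O(\eta))\,\tilde{\mathbb{E}}\|v\|^2$ directly, but the SoS certificates naturally deliver only a bound on a high power, $\tilde{\mathbb{E}}\langle u,v\rangle^{2t}\ge(1-\epsilon)^t$, and passing from a pseudo-expectation bound on $\langle u,v\rangle^{2t}$ down to one on $\langle u,v\rangle^{2}$ is not a Jensen-type step (Jensen goes the wrong way for pseudo-expectations). The paper does this with the dedicated boosting argument of \cref{thm:sample-direction-main}, using \cref{lemma:sample-direction-lower-bound-root} and \cref{lemma:sample-direction-boost} under the constraint $\|v\|^2=1$, before extracting the top eigenvector of $\tilde{\mathbb{E}}\,vv^\top$. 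You would need an analogous argument (or to work with the degree-$2$ marginal directly) even in the regime where your maximization program is sound.
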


The two main ingredients for \cref{thm:infinite-algorithm-high-correlation} are \cref{thm:find-direction-main}, which gives an algorithm to compute pseudo-expectations over unit vectors correlated with $u$, and \cref{thm:sample-direction-main}, which gives an algorithm to sample from such pseudo-expectations.
We note that \cref{thm:find-direction-main} can be interpreted as a collection of sum-of-squares identifiability proofs for the direction of the means $u$.

We state these two supporting theorems and then prove \cref{thm:infinite-algorithm-high-correlation}.
After that, we work toward proving the supporting theorems.

\begin{theorem}[Direction sum-of-squares identifiability]
\label{thm:find-direction-main}
Assume oracle access to \(\mathbb{E} \bm{y}^{\otimes t}\) for any positive integer \(t\).
Then there exists an algorithm with time complexity \(\left(\log \frac{1}{\sigma^2}\right) \cdot d^{O(\log \pmin^{-1})}\) that computes two peseudo-expectations \(\tilde{\mathbb{E}}_U\) and \(\tilde{\mathbb{E}}_L\) of degree \(O(\log \pmin^{-1})\) over a variable \(v \in \mathbb{R}^d\) such that the following holds.
Let \(s = \lceil \log \pmin^{-1} \rceil\), let \(t = 5000s\), and let $\tau = \frac{800e}{C_{sep}k^2}$. 
Then \(\tilde{\mathbb{E}}_U \|v\|^2 = 1\), \(\tilde{\mathbb{E}}_L \|v\|^2 = 1\), and:

\begin{itemize}
\tightlist
\item If $\sigma^2 \geq \tau$, then \(\tilde{\mathbb{E}}_U \langle u, v\rangle^{2s} \geq (1-\tau)^{s}\).
\item If $\sigma^2 < \tau$ and $\mathbb{E} \langle \bm{\mu}, u\rangle^{2s} \geq (4es)^s$, then
  \(\tilde{\mathbb{E}}_U \langle u, v\rangle^{2s} \geq (1-\sigma^2)^{s}\).
\item If $\sigma^2 < 0.001$ and $\mathbb{E} \langle \bm{\mu}, u\rangle^{2s} \leq (100s)^s$, then
  \(\tilde{\mathbb{E}}_L \langle u, v\rangle^{2t} \geq (1-20\sigma^2)^{t}\).
\end{itemize}
\end{theorem}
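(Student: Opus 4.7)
The plan is to construct $\tilde{\E}_U$ and $\tilde{\E}_L$ as solutions to two parameterized SoS feasibility problems whose thresholds are set by binary search, verify feasibility via the explicit witness $v = u$, and complete the argument with SoS identifiability bounds that translate the moment axiom into a lower bound on $\langle u, v\rangle^{2r}$. The $\log(1/\sigma^2)$ factor in the running time arises from the binary search.

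\textbf{Setup.} Using mean colinearity $\bm\mu = \langle\bm\mu, u\rangle u$, the decomposition $\bm y = \bm\mu + \bm w$ with $\bm w \sim N(0, \Sigma)$ independent, and the explicit form $\Sigma = I_d - (1-\sigma^2)uu^\top$ from \cref{lemma:isotropic-position-covariance}, the binomial expansion of $\E\langle\bm y, v\rangle^{2r}$ yields the identity
\begin{equation*}
\E\langle\bm y, v\rangle^{2r} = \sum_{j=0}^{r}\binom{2r}{2j}(2r-2j-1)!!\,m_{2j}\,\langle u, v\rangle^{2j}\,\bigl(1 - (1-\sigma^2)\langle u, v\rangle^2\bigr)^{r-j},
\end{equation*}
with $m_{2j} = \E\langle\bm\mu, u\rangle^{2j}$, and each step admits a low-degree SoS proof from $\{\|v\|^2 = 1\}$. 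For $\tilde{\E}_U$ I would binary-search for the largest $\theta$ such that $\{\|v\|^2 = 1,\ \E\langle\bm y, v\rangle^{2s} \geq \theta\}$ has a degree-$O(s)$ pseudo-expectation; for $\tilde{\E}_L$, for the smallest $\theta'$ such that $\{\|v\|^2 = 1,\ \E\langle\bm y, v\rangle^{2t} \leq \theta'\}$ does. Each iteration is an SDP of size $d^{O(\log\pmin^{-1})}$, with $O(\log(1/\sigma^2))$ iterations needed for the required precision.

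\textbf{Analysis.} Plugging $v = u$ (so $a = \langle u, u\rangle^2 = 1$) into the moment identity provides feasibility at the right thresholds in each case: in Case 1, the $j = s$ summand gives $\E\langle\bm y, u\rangle^{2s} \geq m_{2s} \geq (1-\sigma^2)^s \geq (1-\tau)^s$ by the power-mean inequality applied to $\E\langle\bm\mu, u\rangle^2 = 1-\sigma^2$ (from \cref{lemma:isotropic-position-covariance}); in Case 2, $m_{2s} \geq (4es)^s$ by hypothesis; in Case 3, the identity at $r = t$ upper-bounds $\E\langle\bm y, u\rangle^{2t}$ by a sum of small terms, since both $\sigma^{2t}$ and $m_{2t}$ are small under the hypotheses (the latter by SoS monotonicity of even moments combined with the hypothesis on $m_{2s}$). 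For the SoS identifiability direction, I would substitute the moment identity to obtain an SoS inequality in $a = \langle u, v\rangle^2 \in [0, 1]$: the $j = s$ summand contributes $m_{2s}\,a^s$, the $j = 0$ summand contributes $(2s-1)!!\,(1-(1-\sigma^2)a)^s$, and the intermediate summands can be absorbed into these two extremes by SoS versions of $a^j(1-(1-\sigma^2)a)^{s-j} \leq \alpha\, a^s + \beta\,(1-(1-\sigma^2)a)^s$ with carefully chosen $\alpha, \beta$. For $\tilde{\E}_L$, smallness of $\E\langle\bm y, v\rangle^{2t}$ analogously forces $(v^\top\Sigma v)^t$ small, which via $v^\top\Sigma v = 1-(1-\sigma^2)\langle u, v\rangle^2$ forces $\langle u, v\rangle^2$ close to $1$.

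\textbf{Main obstacle.} The technical crux is obtaining the tight factors $(1-\tau)^s$, $(1-\sigma^2)^s$, and $(1-20\sigma^2)^t$ rather than merely constant-factor bounds: a naive SoS inequality such as $(X+Y)^{2s} \leq 2^{2s-1}(X^{2s} + Y^{2s})$ would cost an unacceptable $4^s$ factor. Avoiding it requires handling the binomial expansion more carefully, isolating the dominant $j = s$ (or $j = 0$) term in each case and bounding the remainder without splitting term-by-term. The three cases must be handled asymmetrically, and pinning down the precise constants (for instance the $20$ in $(1-20\sigma^2)^t$ for Case 3) will require a sharp SoS manipulation of the factor $(1-(1-\sigma^2)a)^t$ that respects its monotonicity in $a$.
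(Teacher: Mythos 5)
Your overall architecture matches the paper's: binary search over SoS feasibility thresholds for the constraints $\{\|v\|^2=1,\ \E\langle\bm y,v\rangle^{2s}\ge T_U\}$ and $\{\|v\|^2=1,\ \E\langle\bm y,v\rangle^{2t}\le T_L\}$, feasibility certified by the witness $v=u$, the same exact moment identity (the paper's \cref{lemma:moment-equality-strong}), and SoS identifiability lemmas converting near-extremality of a directional moment into a lower bound on $\langle u,v\rangle^{2r}$. You also correctly diagnose that the crux is avoiding the $4^s$ loss from naive term-by-term triangle inequalities.

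There is, however, a genuine gap in your Case 1. When $\sigma^2\ge\tau$ the theorem demands $\tilde\E_U\langle u,v\rangle^{2s}\ge(1-\tau)^s$ with $\tau=\Theta(1/(C_{sep}k^2))$, which is \emph{stronger} than $(1-\sigma^2)^s$ since $\tau\le\sigma^2$. Your Case 1 analysis only establishes that the witness $v=u$ attains $\E\langle\bm y,u\rangle^{2s}\ge(1-\tau)^s$ — that is feasibility of a threshold, not the identifiability conclusion about an arbitrary pseudo-expectation attaining the maximum. The identifiability bound obtainable from moment maximization degrades with the ratio of the mean contribution $m_{2s}$ to the Gaussian contribution $\Theta(s)\cdot v^\top\Sigma v$: with no extra information one only gets $\langle u,v\rangle^{2s}\ge(1-O(\sigma^2))^s$, which is vacuous when $\sigma^2$ is a constant. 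The paper closes this with \cref{lemma:find-direction-sigma}: pairwise mean separation together with colinearity forces some mean to lie at distance $\Omega(k\sqrt{C_{sep}\sigma^2\log\pmin^{-1}})$ from the origin, hence $m_{2s}^{1/s}\gtrsim C_{sep}k^2\sigma^2\log\pmin^{-1}$, and then \cref{lemma:find-direction-moment-maximization} is invoked with the large parameter $M=\Theta(C_{sep}k^2\sigma^2)$ to yield $(1-4\sigma^2/M)^s=(1-\tau)^s$. Your plan contains no substitute for this structural input, and without it Case 1 fails.

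On your acknowledged ``main obstacle'': the paper's resolution differs from your proposed pairwise absorption of cross terms. It bounds the double factorials by $\binom{t}{s}(t/2)^{t-s}\le\binom{2t}{2s}(2t-2s-1)!!\le\binom{t}{s}(et)^{t-s}$ and applies power-mean inequalities to the $m_{2j}$, so that the entire binomial sum collapses into a perfect $t$-th power $\bigl(\langle u,v\rangle^2 m_{2t}^{1/t}+c\,t\,(v^\top\Sigma v)\bigr)^t$ (\cref{lemma:moment-upper-bound-strong,lemma:moment-lower-bound-strong}). The identifiability step then reduces to a \emph{univariate} implication between $t$-th powers of quadratics in $\langle u,v\rangle$, proved by establishing the linearized inequality $q(x)-1-\lambda(p(x)-1)\ge0$ pointwise, lifting it to $t$-th powers by a calculus argument, and invoking that nonnegative univariate polynomials are sums of squares (\cref{lemma:find-direction-sos-power-aid-lower-bound,lemma:find-direction-sos-power-main-lower-bound} and their upper-bound counterparts). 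This perfect-power reduction is what makes the sharp constants $(1-\sigma^2)^s$ and $(1-20\sigma^2)^t$ tractable; your two-term absorption $a^j b^{s-j}\le\alpha a^s+\beta b^s$ would need to be carried out with weights that recombine into something equally sharp, which you have not exhibited.
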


\begin{theorem}[Direction sum-of-squares sampling]
\label{thm:sample-direction-main}
Let \(t \in \mathbb{N}\) and \(\epsilon \in \mathbb{R}\) such that \(t \geq 1\) and \(0 \leq \epsilon \leq 1/(3t^2)\).
Let \(u \in \mathbb{R}^d\) be a unit vector.
Given a pseudo-expectation $\tilde{\mathbb{E}}$ of degree \(2t\) over a variable \(v \in \mathbb{R}^d\) that satisfies \(\tilde{\mathbb{E}} \|v\|^2 = 1\) and \(\tilde{\mathbb{E}} \langle u, v\rangle^{2t} \geq (1-\epsilon)^t\),
there exists an algorithm with time complexity \(d^{O(t)}\) that returns a unit vector \(\hat{u} \in \mathbb{R}^d\) such that \(\langle u, \hat{u}\rangle^2 \geq 1 - 16 \epsilon\).
\end{theorem}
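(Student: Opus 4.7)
The plan is to recover $u$ by a spectral method applied to a degree-$t$ tensor lift of the pseudo-expectation. First, I would form the positive semidefinite matrix
\[
  M \seteq \tilde{\mathbb{E}}\bigl[(v^{\otimes t})(v^{\otimes t})^{\top}\bigr]\in\mathbb{R}^{d^t\times d^t},
\]
which is computable from the degree-$2t$ pseudo-moments of $\tilde{\mathbb{E}}$ in time $d^{O(t)}$. The hypothesis $\tilde{\mathbb{E}}\langle u,v\rangle^{2t}\ge (1-\epsilon)^t$ rewrites as $(u^{\otimes t})^{\top}M\,u^{\otimes t}\ge (1-\epsilon)^t$, so the symmetric rank-one tensor $u^{\otimes t}\in\mathbb{R}^{d^t}$ is a high-quadratic-form witness.

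Next, I would compute the top unit eigenvector $\hat{x}$ of $M$. Since $M$'s range lies in the symmetric-tensor subspace, $\hat{x}$ corresponds to a symmetric tensor $\hat{X}\in(\mathbb{R}^d)^{\otimes t}$ of Frobenius norm $1$. The output $\hat{u}$ is the top left singular vector of the $d\times d^{t-1}$ matrix obtained by flattening $\hat{X}$ along its first index. All of this fits into $d^{O(t)}$ time.

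The analysis proceeds in two stages. For the first, I would establish $\langle u^{\otimes t},\hat{x}\rangle^2\ge 1-O(\epsilon)$ in $\mathbb{R}^{d^t}$. The SoS polynomial identity $\|v\|^2-\langle u,v\rangle^2=\|v-\langle u,v\rangle u\|^2$, iterated and multiplied with products of squares, yields $\langle u,v\rangle^{2t}\le\|v\|^{2t}$ and $\langle u,v\rangle^{2t}\le\|v\|^{2(t-1)}\langle u,v\rangle^2$ as SoS inequalities in degree $2t$. Combined with the normalization $\tilde{\mathbb{E}}\|v\|^2=1$ and SoS Cauchy-Schwarz applied within the $2t$-degree budget, these give an upper bound on $y^{\top}M\,y$ for any unit $y\in\mathbb{R}^{d^t}$ orthogonal to $u^{\otimes t}$; the constraint $\epsilon\le 1/(3t^2)$ makes the resulting eigenvalue gap large enough to force $\hat{x}$ close to $u^{\otimes t}$. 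For the second stage, I would write $\hat{X}=\cos\theta\cdot u^{\otimes t}+\sin\theta\cdot X^{\perp}$ with $X^{\perp}\perp u^{\otimes t}$ and note that its flattening splits as $\cos\theta\cdot u(u^{\otimes(t-1)})^{\top}$ plus a perturbation of Frobenius norm $\sin\theta$, so by a singular-vector stability argument the top left singular vector lies within $O(\sin\theta)$ of $u$; tracking the constants through both stages produces the promised bound $\langle u,\hat{u}\rangle^2\ge 1-16\epsilon$.

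The main obstacle is the first stage. The trace $\operatorname{tr}(M)=\tilde{\mathbb{E}}\|v\|^{2t}$ is \emph{not} controlled by $\tilde{\mathbb{E}}\|v\|^2=1$ alone (consider pseudo-expectations concentrated on a few vectors of large norm), so the standard ``top eigenvector captures most of the trace'' argument is unavailable. Instead one must work directly with the quadratic form of $M$, converting the lower bound $(u^{\otimes t})^{\top}M\,u^{\otimes t}\ge(1-\epsilon)^t$ into an upper bound on $y^{\top}M\,y$ for $y\perp u^{\otimes t}$ via SoS Cauchy-Schwarz on low-degree polynomials of $v$, and carefully tracking how the tight constraint $\epsilon\le 1/(3t^2)$ buys back the constant $16$ in the final alignment loss.
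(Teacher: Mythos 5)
Your approach has a genuine gap: the spectral method on the degree-$2t$ moment matrix $M = \tilde{\mathbb{E}}[(v^{\otimes t})(v^{\otimes t})^{\top}]$ does not work, and the ``first stage'' you flag as the main obstacle is in fact impossible, not merely delicate. The constraints $\tilde{\mathbb{E}}\|v\|^2=1$ and $\tilde{\mathbb{E}}\langle u,v\rangle^{2t}\ge(1-\epsilon)^t$ place \emph{no} upper bound on $y^{\top}My$ for $y\perp u^{\otimes t}$. Consider the actual distribution that puts mass $1-\delta$ on $\alpha u$ and mass $\delta$ on $Rw$ for a unit vector $w\perp u$, with $\delta R^2=\epsilon/2$ and $\alpha^2=(1-\epsilon/2)/(1-\delta)\ge 1$. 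Then $\E\|v\|^2=1$ and $\E\langle u,v\rangle^{2t}=(1-\delta)\alpha^{2t}\ge(1-\epsilon)^t$, yet $(w^{\otimes t})^{\top}M\,w^{\otimes t}=\delta R^{2t}=(\epsilon/2)R^{2t-2}\to\infty$ as $R\to\infty$ with $\delta R^2$ held fixed. So the top eigenvector of $M$ can be essentially $w^{\otimes t}$, completely misaligned with $u^{\otimes t}$; since this is a genuine distribution, no SoS Cauchy--Schwarz manipulation can rescue the argument. The SoS inequalities you cite ($\langle u,v\rangle^{2t}\le\|v\|^{2t}$, etc.) bound the wrong side.

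The paper's proof goes in the opposite direction: instead of lifting to degree $2t$, it \emph{descends} to degree $2$. From $\{\|v\|^2=1\}$ one gets $0\le\langle u,v\rangle^2\le 1$ in SoS, whence $\tilde{\mathbb{E}}\langle u,v\rangle^2\ge 1-t\epsilon$ (a weak root-taking lemma), and then the SoS inequality $(1-Y)^t\le 1-\tfrac{C-2}{C-1}tY$ valid for $0\le Y\le 1/(Ct)$ (applied to $Y=1-\langle u,v\rangle^2$ with $C=1/(t^2\epsilon)$ --- this is precisely where $\epsilon\le 1/(3t^2)$ is used, not to create an eigenvalue gap) boosts this to $\tilde{\mathbb{E}}\langle u,v\rangle^2\ge 1-2\epsilon$. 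One then takes the best rank-one approximation of the $d\times d$ matrix $\tilde{\mathbb{E}}vv^{\top}$, whose trace \emph{is} controlled by $\tilde{\mathbb{E}}\|v\|^2=1$, and a short Frobenius-norm perturbation argument yields $\langle u,\hat u\rangle^2\ge 1-16\epsilon$. The missing idea in your proposal is this degree-reduction (boosting) step; without it there is no matrix in sight whose spectrum is controlled by the hypotheses.
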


\begin{proof}[Proof of \cref{thm:infinite-algorithm-high-correlation}]
Let $\tau = \frac{800e}{C_{sep}k^2}$. The algorithm is:
\begin{enumerate}
\tightlist
\item
Run the algorithm from \cref{thm:find-direction-main} to obtain pseudo-expectations \(\tilde{\mathbb{E}}_U\) and \(\tilde{\mathbb{E}}_L\).
\item
Run the algorithm from \cref{thm:sample-direction-main} for pseudo-expectations \(\tilde{\mathbb{E}}_U\) and \(\tilde{\mathbb{E}}_L\) to obtain unit vectors $\hat{u}_U \in \mathbb{R}^d$ and $\hat{u}_L \in \mathbb{R}^d$, respectively.
\item
If $\sigma^2 \geq \tau$, return $\hat{u}_U$. Else, for $s=\lceil \log \pmin^{-1} \rceil$, if $\mathbb{E} \langle \bm{y}, \hat{u}_U\rangle^{2s} \geq (50s)^s$, return $\hat{u}_U$. Else, return $\hat{u}_L$.
\end{enumerate}

We now analyze the algorithm.  We consider the three possible cases in step (3) of the algorithm:

\begin{itemize}
\item Suppose $\sigma^2 \geq \tau$. Then \cref{thm:find-direction-main} guarantees that \(\tilde{\mathbb{E}}_U \langle u, v\rangle^{2s} \geq (1-\tau)^{s}\), so by \cref{thm:sample-direction-main} we have $\langle u, \hat{u}_U\rangle^2 \geq 1-16\tau \geq 1-16\min(\sigma^2, \tau)$.
\item Suppose $\sigma^2 < \tau$ and $(\mathbb{E} \langle \bm{y}, \hat{u}_U\rangle^{2s})^{1/s} \geq 50s$. We have by \cref{lemma:moment-upper-bound-strong} that $(\mathbb{E} \langle \bm{y}, \hat{u}_U\rangle^{2s})^{1/s} \leq (\mathbb{E} \langle \bm{\mu}, u\rangle^{2s})^{1/s} + es,$
so it must be the case that $(\mathbb{E} \langle \bm{\mu}, u\rangle^{2s})^{1/s} \geq (50-e)s \geq 4es$. Then \cref{thm:find-direction-main} guarantees that \(\tilde{\mathbb{E}}_U \langle u, v\rangle^{2s} \geq (1-\sigma^2)^{s}\), so by \cref{thm:sample-direction-main} we have $\langle u, \hat{u}_U\rangle^2 \geq 1 - 16\sigma^2 \geq 1-16\min(\sigma^2, \tau)$. 
\item Suppose $\sigma^2 < \tau$ and $\mathbb{E} \langle \bm{y}, \hat{u}_U\rangle^{2s} < (50s)^s$. We have by \cref{lemma:moment-lower-bound-strong} that $(\mathbb{E} \langle \bm{y}, \hat{u}_U\rangle^{2s})^{1/s} \geq (\mathbb{E} \langle \bm{\mu}, u\rangle^{2s})^{1/s},$ so it must be the case that $(\mathbb{E} \langle \bm{\mu}, u\rangle^{2s})^{1/s} < 50s \leq 100s$. Then \cref{thm:find-direction-main} guarantees that \(\tilde{\mathbb{E}}_L \langle u, v\rangle^{2t} \geq (1-20\sigma^2)^{t}\), so by \cref{thm:sample-direction-main} we have $\langle u, \hat{u}_L\rangle^2 \geq 1 - 16\cdot 20\sigma^2 \geq 1-320\min(\sigma^2, \tau)$.
\end{itemize}

Let $\hat{u}$ be the unit vector returned by step (3) of the algorithm. Then we are guaranteed that in all cases $\langle u, \hat{u}\rangle^2 \geq 1 - 320\min(\sigma^2, \tau) \geq 1 - 320\min(\sigma^2, \frac{1}{k})$, where we used the loose upper bound $\tau \leq \frac{1}{k}$.

The time complexity of the algorithm is dominated by the time to run the algorithm from \cref{thm:find-direction-main}.
\end{proof}

\subsubsection{Sum-of-squares identifiability (proof of Theorem~\ref{thm:find-direction-main})}

We prove a number of supporting lemmas and then prove \cref{thm:find-direction-main}. 
The most important components are \cref{lemma:find-direction-moment-maximization} and \cref{lemma:find-direction-moment-minimization},
which give sum-of-squares proofs that, for suitably chosen $s, t= O(\log \pmin^{-1})$, either the maximizer of $\mathbb{E}\langle \bm{y}, v\rangle^{2s}$ or the minimizer of $\mathbb{E}\langle \bm{y}, v\rangle^{2t}$ over unit vectors $v$ must be close to $u$.

We start with \cref{lemma:moment-equality-strong}, \cref{lemma:moment-upper-bound-strong} and \cref{lemma:moment-lower-bound-strong}, which give sum-of-squares bounds on the moments of the mixture.
Informally, for \(t = \Omega(\log \pmin^{-1})\), these bounds correspond to the following decomposition of the directional \(2t\) moments:
\begin{equation}
(\mathbb{E} \langle \bm{y}, v\rangle^{2t})^{1/t} = \langle u, v\rangle^2 \left(\mathbb{E} \langle \bm{\mu}, u\rangle^{2t}\right)^{1/t} + \Theta(1) \cdot t (v^\top \Sigma v).
\end{equation}

\begin{lemma}[Moment equality]
\label{lemma:moment-equality-strong}
For $t \geq 1$ integer,
\[\sststile{2t}{v} \mathbb{E} \langle \bm{y}, v \rangle^{2t} = \sum_{s=0}^t \binom{2t}{2s} \langle u, v\rangle^{2s} \mathbb{E} \langle \bm{\mu}, u\rangle^{2s} (v^\top\Sigma v)^{t-s} (2t-2s-1)!!.\]
\end{lemma}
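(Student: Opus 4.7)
The plan is to expand $\langle \bm{y}, v\rangle^{2t}$ using the additive decomposition $\bm{y} = \bm{\mu} + \bm{w}$, exploit the mean colinearity $\bm{\mu} = \langle \bm{\mu}, u\rangle u$ together with the independence of $\bm{\mu}$ and $\bm{w}$, and integrate out the Gaussian noise using the standard univariate Gaussian moment formula. Since $v$ is a formal variable and both sides of the claimed identity are polynomials in $v$ with coefficients that are fixed scalars (the moments of the randomness), the statement is really a polynomial identity in $v$; once established as such, the SOS assertion of degree $2t$ is automatic (the zero polynomial admits an empty sum-of-squares certificate, and the degree bound is easy to verify).

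First, by mean colinearity I would write $\langle \bm{y}, v\rangle = \langle \bm{\mu}, u\rangle \, \langle u, v\rangle + \langle \bm{w}, v\rangle$. Applying the binomial theorem in the variable $v$,
\[
\langle \bm{y}, v\rangle^{2t} = \sum_{j=0}^{2t} \binom{2t}{j} \langle \bm{\mu}, u\rangle^{j} \, \langle u, v\rangle^{j} \, \langle \bm{w}, v\rangle^{2t-j}.
\]
Taking expectations and using independence of $\bm{\mu}$ and $\bm{w}$, each cross term factors as $\mathbb{E}\langle \bm{\mu}, u\rangle^{j} \cdot \mathbb{E}\langle \bm{w}, v\rangle^{2t-j}$. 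Since $\bm{w} \sim N(0,\Sigma)$, the scalar $\langle \bm{w}, v\rangle$ is a centered univariate Gaussian with variance $v^\top \Sigma v$, so its odd moments vanish and only terms with even $j$ survive. Reindexing $j = 2s$ for $s = 0, \dots, t$ and substituting $\mathbb{E} \langle \bm{w}, v\rangle^{2(t-s)} = (v^\top \Sigma v)^{t-s} (2t-2s-1)!!$ yields exactly the claimed expression.

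Finally, I would observe that the identity produced is literally an equality of polynomials in $v$: the left-hand side $\mathbb{E}\langle \bm{y},v\rangle^{2t}$ is a polynomial of degree $2t$ in $v$, and each summand on the right has degree $2s + 2(t-s) = 2t$. Thus the difference of the two sides is the zero polynomial, which is trivially a degree-$2t$ sum of squares (the empty one), giving $\sststile{2t}{v} \mathbb{E}\langle \bm{y},v\rangle^{2t} = \sum_{s=0}^t \binom{2t}{2s} \langle u,v\rangle^{2s} \mathbb{E}\langle \bm{\mu},u\rangle^{2s} (v^\top \Sigma v)^{t-s} (2t-2s-1)!!$ as required.

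There is no genuine obstacle here; the only care needed is the bookkeeping to match the degree $2t$ on both sides and to clearly separate the randomness-dependent scalars (which are constants from the perspective of the SOS proof in the variable $v$) from the polynomial structure in $v$. The lemma functions as a baseline decomposition to be combined with subsequent SOS inequalities on $\mathbb{E}\langle \bm{\mu},u\rangle^{2s}$ and $(v^\top \Sigma v)^{t-s}$ in the downstream arguments.
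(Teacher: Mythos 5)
Your proposal is correct and follows essentially the same route as the paper's proof: binomial expansion of $\langle \bm{\mu}, v\rangle + \langle \bm{w}, v\rangle$, independence of $\bm{\mu}$ and $\bm{w}$, vanishing of odd Gaussian moments, mean colinearity, and the univariate Gaussian moment formula (the paper merely applies colinearity at the end rather than the start). Your closing remark that the resulting polynomial identity trivially yields the degree-$2t$ SOS derivation is also consistent with how the paper treats this step.
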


\begin{proof}
\begin{align*}
\sststile{2t}{v} \mathbb{E} \langle \bm y, v\rangle^{2t}
&= \mathbb{E} (\langle \bm{\mu}, v\rangle + \langle \bm w, v\rangle)^{2t}
= \sum_{j=0}^{2t} \binom{2t}{j} \mathbb{E} \langle \bm{\mu}, v\rangle^{j} \langle \bm{w}, v\rangle^{2t-j}\\
&\stackrel{(1)}{=} \sum_{j=0}^{2t} \binom{2t}{j} \mathbb{E} \langle \bm{\mu}, v\rangle^{j} \mathbb{E} \langle \bm{w}, v\rangle^{2t-j}
\stackrel{(2)}{=} \sum_{s=0}^t \binom{2t}{2s} \mathbb{E} \langle \bm{\mu}, v\rangle^{2s} \mathbb{E} \langle \bm{w}, v\rangle^{2t-2s}\\
&\stackrel{(3)}{=} \sum_{s=0}^t \binom{2t}{2s} \langle u, v\rangle^{2s} \mathbb{E} \langle \bm{\mu}, u\rangle^{2s} (v^\top \Sigma v)^{t-s} (2t-2s-1)!!
\end{align*}
where in (1) we used that \(\bm{\mu}\) and \(\bm{w}\) are independent, in (2) we used that \(\mathbb{E}\langle \bm{w}, v\rangle^{2t-j} = 0\) for \(2t-j\) odd, and in (3) we used that \(\bm{\mu} = \langle \bm{\mu}, u\rangle u\) and that \(\mathbb{E}\langle \bm{w}, v\rangle^{2t-2s} = (v^\top \Sigma v)^{t-s} (2t-2s-1)!!\).
\end{proof}

\begin{lemma}[Moment upper bound]
\label{lemma:moment-upper-bound-strong}
For $t \geq 1$ integer,
\[\sststile{2t}{v} \mathbb{E} \langle \bm{y}, v \rangle^{2t} \leq \left( \langle u, v\rangle^2 \left(\mathbb{E} \langle \bm{\mu}, u\rangle^{2t}\right)^{1/t} + et (v^\top \Sigma v) \right)^t.\]
\end{lemma}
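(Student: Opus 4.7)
The plan is to start from the exact identity furnished by \cref{lemma:moment-equality-strong}, binomially expand the claimed upper bound, and then verify the resulting inequality term-by-term in the basis of monomials $\langle u, v\rangle^{2s}(v^\top \Sigma v)^{t-s}$ for $0 \le s \le t$. The coefficient comparison reduces to an elementary double-factorial estimate combined with Jensen's (power-mean) inequality applied to the scalar non-negative random variable $\langle \bm{\mu}, u\rangle^2$.

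Concretely, \cref{lemma:moment-equality-strong} expands the left-hand side as
\[
\mathbb{E}\langle \bm{y}, v\rangle^{2t}
= \sum_{s=0}^{t} \binom{2t}{2s}(2t-2s-1)!! \cdot \mathbb{E}\langle \bm{\mu}, u\rangle^{2s} \cdot \langle u, v\rangle^{2s}(v^\top \Sigma v)^{t-s},
\]
while the binomial theorem expands the claimed upper bound as
\[
\Bigl(\langle u,v\rangle^{2}\,(\mathbb{E}\langle\bm\mu,u\rangle^{2t})^{1/t} + et\,(v^\top\Sigma v)\Bigr)^{t}
= \sum_{s=0}^{t} \binom{t}{s}(et)^{t-s}\,(\mathbb{E}\langle\bm\mu,u\rangle^{2t})^{s/t}\cdot \langle u,v\rangle^{2s}(v^\top\Sigma v)^{t-s}.
\]
Every monomial $\langle u, v\rangle^{2s}(v^\top \Sigma v)^{t-s}$ is manifestly a sum of squares in $v$ of degree $2t$: the factor $\langle u,v\rangle^{2s}=(\langle u,v\rangle^{s})^{2}$ is a square, and the factor $(v^\top \Sigma v)^{t-s}$ is an integer power of the SoS polynomial $v^\top \Sigma v$ (which is SoS because $\Sigma\succeq 0$, so $v^\top \Sigma v = \|\Sigma^{1/2}v\|^{2}$). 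Consequently any non-negative scalar combination of these monomials is itself a degree-$2t$ SoS proof, and it suffices to verify the scalar term-by-term inequality
\[
\binom{2t}{2s}(2t-2s-1)!!\cdot \mathbb{E}\langle\bm\mu,u\rangle^{2s}
\;\le\;\binom{t}{s}(et)^{t-s}\,(\mathbb{E}\langle\bm\mu,u\rangle^{2t})^{s/t}
\quad\text{for every } 0\le s\le t.
\]

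Using $(2k)!=2^{k}k!\,(2k-1)!!$ the combinatorial prefactor simplifies to $\binom{2t}{2s}(2t-2s-1)!!/\binom{t}{s}=(2t-1)!!/(2s-1)!!$, which I would bound as
\[
\frac{(2t-1)!!}{(2s-1)!!}
=(2s+1)(2s+3)\cdots(2t-1)
\le (2t)^{t-s}\le (et)^{t-s}.
\]
Independently, applying Jensen's inequality to the non-negative scalar random variable $X=\langle\bm\mu,u\rangle^{2}$ yields $\mathbb{E}X^{s}\le (\mathbb{E}X^{t})^{s/t}$, i.e., $\mathbb{E}\langle\bm\mu,u\rangle^{2s}\le (\mathbb{E}\langle\bm\mu,u\rangle^{2t})^{s/t}$. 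Multiplying these two non-negative estimates delivers the required coefficient bound.

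The main (essentially only) obstacle is the numerical comparison of the double-factorial prefactor with $(et)^{t-s}$, which is however immediate from the crude bound $(2t-1)!!/(2s-1)!!\le (2t)^{t-s}$ together with $2 \le e$. Crucially, all delicate inequalities are between scalar constants independent of $v$, so they translate directly into the desired SoS statement in $v$ via the manifest SoS structure of the monomials $\langle u,v\rangle^{2s}(v^\top\Sigma v)^{t-s}$; no axioms on $v$ are needed, matching the unconditional quantifier in the lemma.
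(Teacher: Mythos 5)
Your proof is correct and takes essentially the same route as the paper: both start from the exact expansion of \cref{lemma:moment-equality-strong}, apply Jensen's inequality to get $\mathbb{E}\langle\bm\mu,u\rangle^{2s}\le(\mathbb{E}\langle\bm\mu,u\rangle^{2t})^{s/t}$, bound the coefficients via $\binom{2t}{2s}(2t-2s-1)!!\le\binom{t}{s}(et)^{t-s}$, and resum with the binomial theorem, with the SoS validity following from the manifest SoS structure of the monomials $\langle u,v\rangle^{2s}(v^\top\Sigma v)^{t-s}$. Your derivation of the coefficient bound through the identity $\binom{2t}{2s}(2t-2s-1)!!/\binom{t}{s}=(2t-1)!!/(2s-1)!!\le(2t)^{t-s}$ is a marginally cleaner computation than the one the paper gives in \cref{lemma:find-direction-binom}, but it establishes the identical estimate.
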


\begin{proof}
Starting with the result in \cref{lemma:moment-equality-strong},
\begin{align*}
\sststile{2t}{v} \mathbb{E} \langle \bm y, v\rangle^{2t}
&= \sum_{s=0}^t \binom{2t}{2s} \langle u, v\rangle^{2s} \mathbb{E} \langle \bm{\mu}, u\rangle^{2s} (v^\top \Sigma v)^{t-s} (2t-2s-1)!!\\
&\stackrel{(1)}{\leq} \sum_{s=0}^t \binom{2t}{2s} \langle u, v\rangle^{2s} \left(\mathbb{E} \langle \bm{\mu}, u\rangle^{2t}\right)^{s/t} (v^\top \Sigma v)^{t-s} (2t-2s-1)!!\\
&\stackrel{(2)}{\leq} \sum_{s=0}^t \binom{t}{s} \langle u, v\rangle^{2s} \left(\mathbb{E} \langle \bm{\mu}, u\rangle^{2t}\right)^{s/t} (v^\top \Sigma v)^{t-s} (et)^{t-s}\\
&= \left( \langle u, v\rangle^2 \left(\mathbb{E} \langle \bm{\mu}, u\rangle^{2t}\right)^{1/t} + e t (v^\top \Sigma v) \right)^t.
\end{align*}
In (1) we used that \(s \leq t\) and Jensen's inequality as follows:
\begin{align*}
\mathbb{E} \langle \bm{\mu}, u\rangle^{2s}
&= \mathbb{E} \langle \bm{\mu}, u\rangle^{2t (s/t)}
\leq \left(\mathbb{E} \langle \bm{\mu}, u\rangle^{2t}\right)^{s/t}.
\end{align*}

In (2) we used that \(\binom{2t}{2s} (2t-2s-1)!! \leq \binom{t}{s} (et)^{t-s}\) for \(0 \leq s \leq t\) integers, which is proved in \cref{lemma:find-direction-binom}.
\end{proof}

\begin{lemma}[Moment lower bound]
\label{lemma:moment-lower-bound-strong}
For $t \geq 1$ integer,
\[\sststile{2t}{v} \mathbb{E} \langle \bm y, v\rangle^{2t} \geq \left( \langle u, v\rangle^2 \left(\mathbb{E} \langle \bm{\mu}, u\rangle^{2t}\right)^{1/t} + \pmin^{1/t}t/2 (v^\top \Sigma v) \right)^t.\]
\end{lemma}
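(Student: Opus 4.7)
The plan is to mirror the proof of the upper bound in \cref{lemma:moment-upper-bound-strong}: I start from the exact moment identity in \cref{lemma:moment-equality-strong} and lower bound each term so that the resulting sum recombines into the binomial expansion of the target. Concretely, \cref{lemma:moment-equality-strong} gives
\[\sststile{2t}{} \mathbb{E}\langle\bm y,v\rangle^{2t} = \sum_{s=0}^{t}\binom{2t}{2s}(2t-2s-1)!!\,\mathbb{E}\langle\bm\mu,u\rangle^{2s}\,\langle u,v\rangle^{2s}(v^\top\Sigma v)^{t-s},\]
while the target equals, by the binomial theorem,
\[\Bigl(\langle u,v\rangle^2(\mathbb{E}\langle\bm\mu,u\rangle^{2t})^{1/t} + \pmin^{1/t}(t/2)(v^\top\Sigma v)\Bigr)^t = \sum_{s=0}^{t}\binom{t}{s}(t/2)^{t-s}\pmin^{(t-s)/t}(\mathbb{E}\langle\bm\mu,u\rangle^{2t})^{s/t}\,\langle u,v\rangle^{2s}(v^\top\Sigma v)^{t-s}.\]
Each polynomial $\langle u,v\rangle^{2s}(v^\top\Sigma v)^{t-s}$ is a sum of squares in $v$ (using $\Sigma\succ 0$), so it suffices to establish, for each $0\le s\le t$, the scalar inequality
\[\binom{2t}{2s}(2t-2s-1)!!\,\mathbb{E}\langle\bm\mu,u\rangle^{2s} \;\geq\; \binom{t}{s}(t/2)^{t-s}\pmin^{(t-s)/t}(\mathbb{E}\langle\bm\mu,u\rangle^{2t})^{s/t}.\]

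This scalar inequality splits into two independent ingredients. The first is a reverse-Jensen estimate: since $\bm\mu$ places mass $p_i\ge\pmin$ on each value $\mu_i$, the per-index bound $p_i\langle\mu_i,u\rangle^{2s} \ge \pmin^{(t-s)/t}(p_i\langle\mu_i,u\rangle^{2t})^{s/t}$ (equivalent to $p_i^{(t-s)/t}\ge\pmin^{(t-s)/t}$) together with the subadditivity of $x\mapsto x^{s/t}$ on $\mathbb{R}_{\ge 0}$ yields
\[\mathbb{E}\langle\bm\mu,u\rangle^{2s} \;\geq\; \pmin^{(t-s)/t}(\mathbb{E}\langle\bm\mu,u\rangle^{2t})^{s/t}.\]
The second is a purely combinatorial inequality: after cancellation,
\[\frac{\binom{2t}{2s}(2t-2s-1)!!}{\binom{t}{s}} = \frac{(2t-1)!!}{(2s-1)!!} = \prod_{i=s+1}^{t}(2i-1),\]
and I would prove $\prod_{i=s+1}^{t}(2i-1) \ge (t/2)^{t-s}$ by induction on $t$ with $s$ fixed. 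The base case $t=s$ is an empty product equal to $1$. For the inductive step from $t$ to $t+1$, one multiplies the IH by $(2t+1)$ and reduces the claim to $(4t+2)/(t+1) \ge (1+1/t)^{t-s}$, which holds because $(1+1/t)^{t-s}\le e \le 3 \le (4t+2)/(t+1)$ for all $t\ge 1$. Multiplying these two ingredients gives the scalar inequality term-by-term, and multiplying by the SoS weight $\langle u,v\rangle^{2s}(v^\top\Sigma v)^{t-s}$ and summing assembles the SoS proof.

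The main subtlety lies in getting both constants right simultaneously: the exponent $(t-s)/t$ of $\pmin$ arises precisely because the per-index reverse-Jensen estimate requires $p_i^{(t-s)/t}\ge\pmin^{(t-s)/t}$, and the $(t/2)^{t-s}$ factor must then match this so that the binomial recombination yields the advertised second term $\pmin^{1/t}(t/2)(v^\top\Sigma v)$. Cruder combinatorial arguments (for example the pairing lower bound $(2s+1)(2t-1)\ge t^2/4$) fail for small $s$ and large $t$, so the short induction threading the constant $e$ is the technically essential step; everything else is routine once the moment equality and the reverse-Jensen estimate are in place.
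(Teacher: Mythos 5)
Your proof is correct and takes essentially the same route as the paper: both start from the moment identity of \cref{lemma:moment-equality-strong}, lower bound $\mathbb{E}\langle \bm{\mu},u\rangle^{2s}$ by $\pmin^{(t-s)/t}\left(\mathbb{E}\langle \bm{\mu},u\rangle^{2t}\right)^{s/t}$ via a reverse-Jensen/power-mean argument, invoke $\binom{2t}{2s}(2t-2s-1)!! \ge \binom{t}{s}(t/2)^{t-s}$, and reassemble the sum by the binomial theorem. The only difference is cosmetic: you prove the combinatorial inequality by cancelling to $(2t-1)!!/(2s-1)!! \ge (t/2)^{t-s}$ and inducting on $t$ (with the degenerate $t=0$, $s=0$ step checked directly), whereas the paper's \cref{lemma:find-direction-binom} bounds the product ratio $\frac{(t+s+1)\cdots(2t)}{(2t)^{t-s}}$ directly; both are fine.
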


\begin{proof}
Starting with the result in \cref{lemma:moment-equality-strong},
\begin{align*}
\sststile{2t}{v} \mathbb{E} \langle \bm y, v\rangle^{2t}
&= \sum_{s=0}^t \binom{2t}{2s} \langle u, v\rangle^{2s} \mathbb{E} \langle \bm{\mu}, u\rangle^{2s} (v^\top \Sigma v)^{t-s} (2t-2s-1)!!\\
&\stackrel{(1)}{\geq} \sum_{s=0}^t \binom{2t}{2s} \langle u, v\rangle^{2s} \left(\mathbb{E} \langle \bm{\mu}, u\rangle^{2t}\right)^{s/t} \left(\pmin^{1/t}\right)^{t-s} (v^\top \Sigma v)^{t-s} (2t-2s-1)!!\\
&\stackrel{(2)}{\geq} \sum_{s=0}^t \binom{t}{s} \langle u, v\rangle^{2s} \left(\mathbb{E} \langle \bm{\mu}, u\rangle^{2t}\right)^{s/t} \left(\pmin^{1/t}\right)^{t-s} (v^\top \Sigma v)^{t-s} (t/2)^{t-s}\\
&= \left( \langle u, v\rangle^2 \left(\mathbb{E} \langle \bm{\mu}, u\rangle^{2t}\right)^{1/t} + \pmin^{1/t}t/2 (v^\top \Sigma v) \right)^t.
\end{align*}

In (1) we used that \(s \leq t\) and the fact that the \(s\)-norm is greater than or equal to the \(t\)-norm as follows:
\begin{align*}
\mathbb{E} \langle \bm{\mu}, u\rangle^{2s}
&= \sum_{i=1}^k p_i \langle \mu_i, u\rangle^{2s}
= \sum_{i=1}^k (p_i^{1/s}\langle \mu_i, u\rangle^2)^s\\
&\geq \left(\sum_{i=1}^k (p_i^{1/s}\langle \mu_i, u\rangle^2)^t\right)^{s/t}
= \left(\sum_{i=1}^k p_i^{t/s} \langle \mu_i, u\rangle^{2t}\right)^{s/t}\\
&\geq \left(\pmin^{t/s-1}\sum_{i=1}^k p_i \langle \mu_i, u\rangle^{2t}\right)^{s/t}
= \pmin^{1-s/t} \left(\mathbb{E} \langle \bm{\mu}, u\rangle^{2t}\right)^{s/t}.
\end{align*}

In (2) we used that \(\binom{2t}{2s} (2t-2s-1)!! \geq \binom{t}{s} (t/2)^{t-s}\) for \(0 \leq s \leq t\) integers, which is proved in \cref{lemma:find-direction-binom}.

\end{proof}

\cref{lemma:find-direction-sigma} shows that the contribution of the means to the $\Omega(\log \pmin^{-1})$ moments in direction $u$ is lower bounded by $\Omega(k^2 \sigma^2 \log \pmin^{-1})$.
This result is used in some of the later proofs to argue that if the mean contribution is small, then $\sigma^2$ is small, and conversely, that if $\sigma^2$ is large, then the mean contribution is large.

\begin{lemma}
\label{lemma:find-direction-sigma}
For $2s \geq \lceil \log \pmin^{-1} \rceil$ integer,
\[(\mathbb{E} \langle \bm{\mu}, u\rangle^{2s})^{1/s} \geq \frac{C_{sep}}{100} k^2 \sigma^2 \log \pmin^{-1}.\]
\end{lemma}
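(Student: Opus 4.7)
The plan is to bound the target moment by a single-term estimate and then combine it with an anti-concentration argument for the colinear mean parameters $a_i := \langle \mu_i, u\rangle$.

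First I would observe that, since every mixing weight satisfies $p_i\ge \pmin$ and the hypothesis gives $s \ge \tfrac12 \log \pmin^{-1}$, we have $p_i^{1/s}\ge \pmin^{1/s}\ge e^{-2}$. Dropping all terms except the one of largest magnitude then yields
\[
\bigl(\mathbb{E}\langle \bm{\mu}, u\rangle^{2s}\bigr)^{1/s}
= \Bigl(\sum_{i=1}^k p_i\, a_i^{2s}\Bigr)^{1/s}
\;\ge\; \max_i p_i^{1/s}\, a_i^2
\;\ge\; e^{-2}\max_i a_i^2.
\]
So it suffices to prove $\max_i a_i^2 = \Omega(k^2\sigma^2\log\pmin^{-1})$.

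Next I would use the two ingredients already established in \cref{sec:isotropic-position-properties}. By \cref{lemma:isotropic-position-covariance} we have $\mathbb{E}\bm{\mu}=0$, and hence $\sum_i p_i a_i = 0$; by \cref{lemma:isotropic-position-separation} we have $|a_i - a_j|\ge D := \sqrt{C_{\mathrm{sep}}\,\sigma^2\log\pmin^{-1}}$ for all $i\ne j$. Sorting $a_1\le \dots \le a_k$, the pairwise separation forces $a_k - a_1 \ge (k-1)D$, while the weighted-mean-zero condition forces $a_1\le 0\le a_k$. Combining,
\[
\max_i |a_i| \;\ge\; \tfrac{1}{2}(|a_1|+|a_k|) \;=\; \tfrac{1}{2}(a_k-a_1) \;\ge\; \tfrac{k-1}{2}\, D,
\]
so $\max_i a_i^2 \ge \tfrac{(k-1)^2}{4} D^2$. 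For $k\ge 2$ this is at least $\tfrac{k^2}{16} D^2$, and in fact at least $\tfrac{k^2}{9} D^2$ for $k\ge 3$ since $(k-1)/k\ge 2/3$.

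Plugging back gives $(\mathbb{E}\langle \bm{\mu}, u\rangle^{2s})^{1/s} \ge \tfrac{e^{-2}}{9}\,k^2 C_{\mathrm{sep}}\,\sigma^2\log\pmin^{-1}$ for $k\ge 3$, which comfortably beats the claimed constant $1/100$. The only mildly delicate point is the $k=2$ case, where the generic bound $(k-1)^2/4 = 1/4$ together with $\pmin^{1/s}\ge e^{-2}$ is not quite enough to clear $1/100$. Here I would refine: for $k=2$ we have $\lceil \log\pmin^{-1}\rceil\le \lceil \log 2\rceil = 1$, so $s=1$ already satisfies the hypothesis and $\pmin^{1/s}=\pmin$. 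Using the explicit relation $a_2 = -(p_1/p_2)a_1$ coming from $\sum p_i a_i=0$, one computes $\mathbb{E}\langle \bm{\mu},u\rangle^{2s} \ge p_1(p_2 D)^{2s} + p_2(p_1 D)^{2s}$, from which a short case split on whether $\pmin$ is close to $1/2$ or small (using either $\pmin^{1/s}\ge 1/2$ in the former case or $p_2\approx 1$ in the latter) gives the required bound. No serious obstacle is expected — the argument is essentially a one-term dominance estimate combined with a pigeonhole-type anti-concentration on $k$ separated real numbers whose weighted mean is zero.
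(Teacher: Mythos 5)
Your proof follows essentially the same route as the paper's: use \cref{lemma:isotropic-position-separation} to get pairwise separation $D$, conclude by pigeonhole on the sorted values that some $|\langle\mu_a,u\rangle|\ge\tfrac{k-1}{2}D$, keep only that single term with weight $\ge\pmin$, and finish with $\pmin^{1/s}\ge e^{-2}$. (The paper gets $\max_i|\langle\mu_i,u\rangle|\ge\tfrac{k-1}{2}D$ from the triangle inequality applied to the extremal pair rather than from $\sum_i p_i\langle\mu_i,u\rangle=0$; both are fine.) Your observation about $k=2$ is a genuine catch: the generic chain needs $\pmin^{1/s}\,(k-1)^2/4\ge k^2/100$, which for $k=2$ requires $\pmin^{1/s}\ge 0.16$, and $e^{-2}\approx0.135$ does not supply this — the paper's own displayed inequality is in fact too lossy at $k=2$ as written. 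One error in your patch, though: for $k=2$ it is \emph{not} true that $\lceil\log\pmin^{-1}\rceil\le 1$, since $\pmin$ can be arbitrarily small even with two components; what is true is that the two failure modes are mutually exclusive — if $\pmin$ is bounded away from $0$ then $\pmin^{1/s}$ is much better than $e^{-2}$, while if $\pmin$ is small then $p_2=1-\pmin\approx 1$ and your identity $|a_1|=p_2|a_1-a_2|$ gives $\max_i a_i^2\approx D^2$ rather than $D^2/4$. Your sketched case split is exactly this dichotomy, so the argument closes; just drop the incorrect claim about the ceiling.
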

\begin{proof}
By \cref{lemma:isotropic-position-separation}, for all $i \neq j$, $|\langle \mu_i - \mu_j, u\rangle| \geq \sqrt{C_{sep} \sigma^2 \log \pmin^{-1}}$. Then there exist $a, b\in[k]$ such that 
$|\langle \mu_a - \mu_b, u \rangle| \geq (k-1) \sqrt{C_{sep} \sigma^2 \log \pmin^{-1}}$. Hence, there exists $a \in [k]$ such that $|\langle \mu_a, u \rangle| \geq \frac{k-1}{2} \sqrt{C_{sep} \sigma^2 \log \pmin^{-1}}$. Then
\[(\mathbb{E} \langle \bm{\mu}, u\rangle^{2s})^{1/s} \geq \pmin^{1/s} \max_i \langle \mu_i, u \rangle^2 \geq \pmin^{1/s} \left(\frac{k-1}{2}\right)^2 C_{sep} \sigma^2 \log \pmin^{-1} \geq \frac{C_{sep}}{100} k^2 \sigma^2 \log \pmin^{-1},\]
where we used that $\pmin^{1/s} \geq e^{-2}$.
\end{proof}

We now state and prove the sum-of-squares identifiability proofs of \cref{lemma:find-direction-moment-maximization} and \cref{lemma:find-direction-moment-minimization}. Let $s, t = O(\log \pmin^{-1})$ with $s \ll t$. \cref{lemma:find-direction-moment-maximization} proves that, in the case \((\mathbb{E} \langle \bm{\mu}, u \rangle^{2s})^{1/s} \geq \Theta(s)\), if \(\mathbb{E} \langle \bm{y}, v\rangle^{2s}\) is close to its maximum value over unit vectors \(v\), then \(\langle u, v\rangle^{2s}\) is close to \(1\).
\cref{lemma:find-direction-moment-minimization} proves that, in the opposite case \((\mathbb{E} \langle \bm{\mu}, u \rangle^{2s})^{1/s} \leq \Theta(s)\), if \(\mathbb{E} \langle \bm{y}, v\rangle^{2t}\) is close to its minimum value over unit vectors \(v\), then \(\langle u, v\rangle^{2t}\) is close to \(1\).

\begin{lemma}[Direction sum-of-squares identifiability from moment maximization]
\label{lemma:find-direction-moment-maximization}
Let $M \geq 2$. Let $s$ be an integer such that \(2s \geq \lceil \log \pmin^{-1} \rceil\). Suppose that \(\mathbb{E} \langle \bm{\mu}, u \rangle^{2s} \geq (M es)^{s}\). Then, for $\epsilon \le \sigma^2/M$,
\[\left\{\|v\|^2=1, \mathbb{E} \langle \bm{y}, v\rangle^{2s} \geq (1-\epsilon)\left(\mathbb{E} \langle \bm{\mu}, u\rangle^{2s} - \epsilon\right)\right\} \sststile{2s}{v} \left\{\langle u, v \rangle^{2s} \geq (1-4\sigma^2/M)^{s}\right\}.\]
Furthermore, \(v=u\) satisfies the axiom with \(\epsilon=0\).
\end{lemma}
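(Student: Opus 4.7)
The plan is to combine the moment upper bound of \cref{lemma:moment-upper-bound-strong} with the given lower bound axiom on $\mathbb{E}\langle \bm y, v\rangle^{2s}$ and exploit the isotropic-position identity $\Sigma = I_d - (1-\sigma^2) uu^\top$ from \cref{lemma:isotropic-position-covariance} to isolate the mass of $v$ along $u$. Writing $A := \mathbb{E}\langle \bm\mu, u\rangle^{2s}$, the hypothesis reads $A^{1/s} \geq Mes$.

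First, I would apply \cref{lemma:moment-upper-bound-strong} with $t = s$ to obtain
\begin{equation*}
\sststile{2s}{} \mathbb{E}\langle \bm y, v\rangle^{2s} \leq \bigl(\langle u, v\rangle^2\, A^{1/s} + es\, v^\top \Sigma v\bigr)^s.
\end{equation*}
Using the axiom $\|v\|^2 = 1$ together with \cref{lemma:isotropic-position-covariance}, rewrite $v^\top \Sigma v = 1 - (1-\sigma^2)\langle u, v\rangle^2$. Chaining with the axiom $\mathbb{E}\langle \bm y, v\rangle^{2s} \geq (1-\epsilon)(A - \epsilon)$ and collecting terms in $\alpha := \langle u, v\rangle^2$ then yields
\begin{equation*}
\sststile{2s}{} (1-\epsilon)(A - \epsilon) \leq \bigl(\alpha\,(A^{1/s} - es(1-\sigma^2)) + es\bigr)^s.
\end{equation*}

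Next, I would apply an SOS $s$-th-root extraction step in the spirit of \cref{lemma:sos-square-root-bound} (as used in the proof of \cref{lemma:separating-polynomial-variance-upper-bound}) to reduce this to a degree-one inequality in $\alpha$. The hypothesis $A^{1/s} \geq Mes$ ensures that the coefficient $A^{1/s} - es(1-\sigma^2)$ is at least $(M-1)es > 0$, so dividing through is safe. Combining this with $\epsilon \leq \sigma^2/M$ (which makes $((1-\epsilon)(A-\epsilon))^{1/s}$ differ from $A^{1/s}$ by at most an $O(\sigma^2/M)$ relative factor), a short algebraic manipulation should yield $\alpha \geq 1 - 4\sigma^2/M$, and an SOS power-monotonicity step then gives the claimed $\langle u, v\rangle^{2s} \geq (1-4\sigma^2/M)^{s}$.

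The main obstacle is the SOS $s$-th-root extraction: unlike over the reals, moving from an $s$-th-power polynomial inequality to a degree-one inequality requires both sides to be bounded between nonnegative SOS-controllable quantities, and the additive $\epsilon$ terms must be tracked carefully so that the error stays within the $\sigma^2/M$ budget. The final ``furthermore'' claim is immediate: applying \cref{lemma:moment-lower-bound-strong} at $v = u$ gives $\mathbb{E}\langle \bm y, u\rangle^{2s} \geq (A^{1/s} + \pmin^{1/s} s \sigma^2/2)^{s} \geq A$, which is precisely the axiom at $\epsilon = 0$.
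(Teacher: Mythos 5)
Your first half matches the paper exactly: apply \cref{lemma:moment-upper-bound-strong} with $t=s$, substitute $v^\top\Sigma v = 1-(1-\sigma^2)\langle u,v\rangle^2$ via \cref{lemma:isotropic-position-covariance} and $\|v\|^2=1$, divide by $A$ and use $A^{1/s}\ge Mes$. The handling of the $\epsilon$ terms and the ``furthermore'' claim (via \cref{lemma:moment-lower-bound-strong} at $v=u$) are also fine. The problem is the step you yourself flag as the ``main obstacle'': the SOS $s$-th-root extraction. \cref{lemma:sos-square-root-bound} goes in the opposite direction ($X^t\le 1 \vdash X\le 1$); what you need is a lower-bound root extraction of the form $\{Y\ge 0,\ Y^s\ge c\}\vdash Y\ge c^{1/s}$, and no such degree-$2s$ certificate is available off the shelf. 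The closest tool in the paper, \cref{lemma:sample-direction-lower-bound-root}, only yields the much weaker conclusion $Y\ge c$ (not $Y\ge c^{1/s}$) and requires $Y\le 1$, which would destroy the quantitative bound here. So as written, the reduction to a degree-one inequality in $\alpha=\langle u,v\rangle^2$ is not justified inside the proof system, and this is precisely the hard part of the lemma.

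The paper avoids root extraction entirely. It keeps everything at the level of $s$-th powers and invokes \cref{lemma:find-direction-sos-power-main-lower-bound} (with $x=\langle u,v\rangle$, $\gamma=\tfrac{1}{1-\epsilon}$): one first checks the \emph{linear} implication $q(x)-1-\tfrac{M\delta}{\gamma}(p(x)-1)\ge 0$ pointwise over $\R$ by choosing the multiplier so that the $x^2$ terms cancel, then lifts it to $\{p(x)^s\ge 1\}\Rightarrow\{q(x)^s\ge 1\}$ pointwise via the real-analysis argument of \cref{lemma:find-direction-sos-power-aid-lower-bound}, and finally observes that the resulting certificate is a nonnegative \emph{univariate} polynomial of degree $2s$ in $x$, hence a sum of squares. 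If you want to salvage your route, you should replace the root-extraction step with this ``prove the linear implication pointwise, power it up, then use univariate SOS representability'' mechanism; otherwise the argument has a genuine gap at its central step.
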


\begin{proof}
Substitute the upper bound of Lemma 3 into the axiom:
\[\sststile{2s}{v} \left( \langle u, v\rangle^2 \left(\mathbb{E} \langle \bm{\mu}, u\rangle^{2s}\right)^{1/s} + e s(v^\top \Sigma v) \right)^s \geq (1-\epsilon)\left(\mathbb{E} \langle \bm{\mu}, u\rangle^{2s} - \epsilon\right).\]
Divide by \(\mathbb{E} \langle \bm{\mu}, u\rangle^{2s}\):
\[\sststile{2s}{v} \left( \langle u, v\rangle^2 + (v^\top \Sigma v) \frac{e s}{\left(\mathbb{E} \langle \bm{\mu}, u\rangle^{2s}\right)^{1/s}} \right)^s \geq (1-\epsilon)\left(1 - \frac{\epsilon}{\mathbb{E} \langle \bm{\mu}, u\rangle^{2s}}\right).\]

Recall that \(\left(\mathbb{E} \langle \bm{\mu}, u\rangle^{2s}\right)^{1/s} \geq Mes\), and substitute the lower bound on both sides:
\begin{align*}
\sststile{2s}{v} \left( \langle u, v\rangle^2 + \frac{1}{M}(v^\top \Sigma v) \right)^s
&\geq (1-\epsilon)\left(1 - \frac{\epsilon}{(Mes)^s}\right)
\end{align*}

Use that \(v^\top \Sigma v = 1 - (1-\sigma^2)\langle u, v\rangle^2\):
\[\sststile{2t}{v} \left( \langle u, v\rangle^2 + \frac{1}{M}(1 - (1-\sigma^2)\langle u, v\rangle^2) \right)^s \geq (1-\epsilon)\left(1 - \frac{\epsilon}{(Mes)^s}\right).\]

We simplify now the right-hand side. Use the loose bound $1 - \epsilon/(Mes)^s \geq 1 - \epsilon \geq (1-\epsilon)^{s-1}$ to obtain
\[\sststile{2s}{v} \left( \langle u, v\rangle^2 + \frac{1}{M}(1 - (1-\sigma^2)\langle u, v\rangle^2) \right)^s \geq (1 - \epsilon)^s.\]

Finally, apply \cref{lemma:find-direction-sos-power-main-lower-bound} with \(x=\langle u, v\rangle\) and \(\gamma=\frac{1}{1-\epsilon}\) to obtain that
\begin{align*}
\sststile{2s}{v} \langle u, v \rangle^{2s}
&\geq \left(\frac{M-\frac{1}{1-\epsilon}}{\frac{1}{1-\epsilon}}\frac{1}{M-1+\sigma^2}\right)^s
= \left(\frac{M-1-M\epsilon}{M-1+\sigma^2}\right)^s\\
&\geq \left(\frac{M-1-\sigma^2}{M-1+\sigma^2}\right)^{s}
\geq \left(1-4\sigma^2/M\right)^s.
\end{align*}

To show that \(v=u\) satisfies the axiom, simply note that \cref{lemma:find-direction-moment-minimization} implies that $\mathbb{E} \langle \bm{y}, u\rangle^{2s} \geq \mathbb{E} \langle \bm{\mu}, u\rangle^{2s}$.

\end{proof}

\begin{lemma}[Direction sum-of-squares identifiability from moment minimization]
\label{lemma:find-direction-moment-minimization}
Suppose $\sigma^2 < 0.001$.
Let $s$ be an integer such that \(2s \geq \lceil \log \pmin^{-1} \rceil\).
Suppose that \(\mathbb{E} \langle \bm{\mu}, u \rangle^{2s} \leq (100s)^s\).
Let $t$ be an integer such that $t \geq 5000s$. Then, for \(\epsilon \leq \sigma^2/100\),
\[\left\{\|v\|^2 = 1, \mathbb{E} \langle \bm{y}, v\rangle^{2t} \leq (1+\epsilon)\left(\left(\left(\mathbb{E}\langle \bm{\mu}, u\rangle^{2t}\right)^{1/t} + et \sigma^2 \right)^{t} + \epsilon\right)\right\} \sststile{2t}{v} \left\{\langle u, v\rangle^{2t} \geq (1-20\sigma^2)^t\right\}.\]
Furthemore, \(v=u\) satisfies the axiom with \(\epsilon=0\).
\end{lemma}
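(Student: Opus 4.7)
The plan is to mirror the strategy of \cref{lemma:find-direction-moment-maximization}, but exchanging the roles of the moment lower and upper bounds since here we exploit an upper bound on $\mathbb{E}\langle \bm{y}, v\rangle^{2t}$. That $v=u$ satisfies the axiom with $\epsilon=0$ is immediate: plugging $v=u$ into \cref{lemma:moment-upper-bound-strong} and using $\langle u, u\rangle^2 = 1$ and $u^\top \Sigma u = \sigma^2$ yields exactly the required inequality.

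For the main direction, I would first substitute the sum-of-squares lower bound of \cref{lemma:moment-lower-bound-strong} into the axiom. Writing $A \defeq (\mathbb{E}\langle \bm{\mu}, u\rangle^{2t})^{1/t}$ and $B \defeq \pmin^{1/t}/2$, this yields the SoS inequality
\[
(\langle u, v\rangle^2 A + Bt\,v^\top \Sigma v)^t \;\leq\; (1+\epsilon)\bigl((A + et\sigma^2)^t + \epsilon\bigr).
\]
Next, using $\|v\|^2 = 1$ together with \cref{lemma:isotropic-position-covariance}, I would substitute $v^\top \Sigma v = 1 - (1-\sigma^2)\langle u, v\rangle^2$, which rewrites the left-hand bracket as $A + Bt\sigma^2 + (Bt(1-\sigma^2) - A)(1 - \langle u, v\rangle^2)$.

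The second ingredient is an a priori bound on $A$. From $\mathbb{E}\langle\bm{\mu}, u\rangle^{2s} \leq (100s)^s$ we get $\pmin |\langle\mu_i, u\rangle|^{2s} \leq (100s)^s$ for every $i$, and combined with $\pmin^{1/s} \geq e^{-2}$ this yields $|\langle\mu_i, u\rangle| \leq 10 e\sqrt{s}$, hence $A \leq 100 e^2 s$. Since $t \geq 5000 s$ and $\pmin^{1/t} \geq 1 - 1/2500$, this in turn gives $B \geq 0.49$ and $Bt(1-\sigma^2) - A \geq 0.3\,t$, while the gap $(e-B)t\sigma^2$ between $et\sigma^2$ on the right and $Bt\sigma^2$ on the left is at most about $2.2\,t\sigma^2$. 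Heuristically taking $t$-th roots of the displayed inequality gives $(Bt(1-\sigma^2)-A)(1-\langle u,v\rangle^2) \leq (e-B)t\sigma^2$ up to lower-order terms, so $1-\langle u,v\rangle^2 \leq 8\sigma^2 \leq 20\sigma^2$, with slack to spare.

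The main obstacle will be carrying out this $t$-th-root extraction syntactically inside the sum-of-squares proof system while absorbing the multiplicative slack $(1+\epsilon)$ and additive slack $\epsilon$ without degrading the target constant $20$. In the moment-maximization case this step was isolated into \cref{lemma:find-direction-sos-power-main-lower-bound}; the dual lemma I expect to need is an SoS statement of roughly the form ``if $(a + b(1-x))^t \leq \gamma c^t + \delta$ with $b$ large compared to $a$ and $c$ and both $\gamma-1,\delta$ small, then $x^t \geq (1 - O(c/b))^t$''. Combining such an auxiliary power inequality with the substitutions above, and using $\epsilon \leq \sigma^2/100$ to control the $\epsilon$ slack against the $20\sigma^2$ target, should finish the proof.
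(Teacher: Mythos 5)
Your proposal is correct and follows essentially the same route as the paper: substitute the sum-of-squares moment lower bound (\cref{lemma:moment-lower-bound-strong}) into the axiom, use $v^\top\Sigma v = 1-(1-\sigma^2)\langle u,v\rangle^2$, bound $(\mathbb{E}\langle\bm{\mu},u\rangle^{2t})^{1/t}\leq 100e^2s$ from the degree-$2s$ hypothesis, and finish with a syntactic $t$-th-root extraction. The ``dual'' auxiliary power lemma you anticipate is exactly \cref{lemma:find-direction-sos-power-main-upper-bound} in the paper, applied with $\gamma=\tfrac{1}{1+\epsilon}$ after the $\epsilon$ slack is absorbed via $(1+\epsilon)\leq(1+\epsilon)^{t-1}$.
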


\begin{proof}
We start by proving that, for \(t \geq s\), \(\mathbb{E} \langle \bm{\mu}, u \rangle^{2t} \leq (100e^{2}s)^{t}\).
We have that 
\[\pmin \cdot \max_{i} \langle \mu_i, u\rangle^{2s} \leq \mathbb{E} \langle \bm{\mu}, u \rangle^{2s} \leq (100s)^{s}.\]
Taking the \(s\)-th root and using that \(\pmin^{-1/s} \leq e^{2}\), we obtain that $\max_i \langle \mu_i, u\rangle^{2} \leq 100e^{2}s$.
Therefore, $\mathbb{E} \langle \bm{\mu}, u \rangle^{2t} \leq \max_i \langle \mu_i, u\rangle^{2t} = (100e^{2}s)^t$.

We now proceed with the main claim.
Substitute the lower bound of Lemma 4 into the axiom:
\[\sststile{2t}{v} \left( \langle u, v\rangle^2 \left(\mathbb{E} \langle \bm{\mu}, u\rangle^{2t}\right)^{1/t} + \pmin^{1/t}t/2 (v^\top \Sigma v) \right)^t \leq (1+\epsilon)\left(\left(\left(\mathbb{E}\langle \bm{\mu}, u\rangle^{2t}\right)^{1/t} + \sigma^2 et\right)^{t} + \epsilon\right).\]

Divide by \(\mathbb{E}\langle \bm{\mu}, u\rangle^{2t}\):
\[\sststile{2t}{v} \left( \langle u, v\rangle^2 + (v^\top \Sigma v) \frac{\pmin^{1/t}t/2}{\left(\mathbb{E} \langle \bm{\mu}, u\rangle^{2t}\right)^{1/t}} \right)^t \leq (1+\epsilon)\left(\left(1 + \sigma^2\frac{et}{\left(\mathbb{E} \langle \bm{\mu}, u\rangle^{2t}\right)^{1/t}}\right)^{t} + \frac{\epsilon}{\mathbb{E}\langle \bm{\mu}, u\rangle^{2t}}\right).\]

Let \(\Delta = \frac{\pmin^{1/t}t/2}{\left(\mathbb{E} \langle \bm{\mu}, u\rangle^{2t}\right)^{1/t}}\). Then
\[\sststile{2t}{v} \left( \langle u, v\rangle^2 + \Delta (v^\top \Sigma v) \right)^t \leq (1+\epsilon)\left(\left(1 + \sigma^2 \frac{2e}{\pmin^{1/t}} \Delta \right)^{t} + \frac{\epsilon}{\mathbb{E}\langle \bm{\mu}, u\rangle^{2t}}\right).\]

Note that \(\left(\mathbb{E} \langle \bm{\mu}, u\rangle^{2t}\right)^{1/t} \leq 100e^{2}s\) and \(\pmin^{1/t} = e^{-s/t} \geq e^{-1}\).
Then \(\Delta \geq \frac{e^{-1} t / 2}{100e^2s}\).
For $t \geq 5000s$ we have then \(\Delta \geq 10\) and \(\pmin^{-1/t} \leq 1.4\).
Then \(\frac{2e}{\pmin^{1/t}} \Delta \leq 8 \Delta\).
Then:
\[\sststile{2t}{v} \left( \langle u, v\rangle^2 + \Delta (v^\top \Sigma v)\right)^t \leq (1+\epsilon)\left(\left(1 + 8\Delta \sigma^2\right)^{t} + \frac{\epsilon}{\mathbb{E}\langle \bm{\mu}, u\rangle^{2t}}\right).\]
Divide by \(\left(1 + 8\Delta \sigma^2\right)^{t}\):
\[\sststile{2t}{v} \left( \frac{\langle u, v\rangle^2 + \Delta (v^\top \Sigma v)}{1 + 8\Delta \sigma^2}\right)^t \leq (1+\epsilon)\left(1 + \frac{\epsilon}{\mathbb{E}\langle \bm{\mu}, u\rangle^{2t}\left(1 + 8\Delta \sigma^2\right)^{t}}\right).\]

Use that \(v^\top \Sigma v = 1 - (1-\sigma^2)\langle u, v\rangle^2\):
\[\sststile{2t}{v} \left( \frac{\langle u, v\rangle^2 + \Delta (1 - (1-\sigma^2)\langle u, v\rangle^2)}{1 + 8\Delta \sigma^2}\right)^t \leq (1+\epsilon)\left(1 + \frac{\epsilon}{\mathbb{E}\langle \bm{\mu}, u\rangle^{2t}\left(1 + 8\Delta \sigma^2\right)^{t}}\right).\]

We simplify now the term involving \(\epsilon\).
Note that, by Jensen's inequality, \(\mathbb{E}\langle \bm{\mu}, u\rangle^{2t} \geq (\mathbb{E} \langle \bm{\mu}, u\rangle^2)^t = (1-\sigma^2)^t\). Also note that $(1-\sigma^2)(1+8\Delta\sigma^2) \geq 1$ for $\sigma^2 \leq 1/2$ and $\Delta \geq 10$. Then use the loose bound
\[1 + \frac{\epsilon}{\mathbb{E}\langle \bm{\mu}, u\rangle^{2t}\left(1 + 8\Delta \sigma^2\right)^{t}} \leq 1 + \frac{\epsilon}{((1-\sigma^2)(1+8\Delta\sigma^2))^t} \leq 1 + \epsilon \leq (1+\epsilon)^{t-1}\]
to obtain
\[\sststile{2t}{v} \left( \frac{\langle u, v\rangle^2 + \Delta (1 - (1-\sigma^2)\langle u, v\rangle^2)}{1 + 8\Delta \sigma^2}\right)^t \leq (1 + \epsilon)^t.\]

Finally, apply \cref{lemma:find-direction-sos-power-main-upper-bound} with \(x=\langle u, v\rangle\) and \(\gamma=\frac{1}{1+\epsilon}\) to obtain that
\begin{align*}
\sststile{2t}{v} \langle u, v\rangle^{2t}
&\geq \left( \frac{\frac{1}{1+\epsilon}\Delta - 1}{\frac{1}{1+\epsilon}(\Delta-1)} (1-10\sigma^2)\right)^t = \left( \left( 1 - \frac{\epsilon}{\Delta-1} \right) (1-10\sigma^2)\right)^t\\
&= \left((1-\sigma^2/100) (1-10\sigma^2)\right)^t \geq \left(1-20\sigma^2\right)^t.
\end{align*}

To show that \(v=u\) satisfies the axiom, simply note that \cref{lemma:find-direction-moment-maximization} implies that ${\mathbb{E} \langle \bm{y}, u\rangle^{2t} \leq \left(\left(\mathbb{E} \langle \bm{\mu}, u\rangle^{2t}\right)^{1/t} + et \sigma^2 \right)^t}$.

\end{proof}

We now prove \cref{thm:find-direction-main}.

\begin{proof}[Proof of \cref{thm:find-direction-main}.]
Let $s = \lceil \log \pmin^{-1} \rceil$, $t = 5000s$, and $\tau=800e/(C_{sep}k^2)$. The algorithm is:
\begin{enumerate}
  \item If $\sigma^2 \geq \tau$, then let $M= C_{sep}k^2\sigma^2/(200e)$. Else, let $M=4$.
  \item Binary search up to resolution \(\sigma^2/(100M)\) the largest \(T_U\) in the interval \([0, (\pmin^{-1})^s]\) such that there exists a degree-\(2s\) pseudo-expectation that satisfies \(\{\|v\|^2=1, \mathbb{E}\langle \bm{y}, v\rangle^{2s} \geq T_U\}\). Let \(\tilde{\mathbb{E}}_U\) be the resulting pseudo-expectation for this \(T_U\).
  \item Binary search up to resolution \(\sigma^2/10000\) the smallest \(T_L\) in the interval \([0, (\pmin^{-1}+et)^t]\) such that there exists a degree-\(2t\) pseudo-expectation that satisfies \(\{\|v\|^2=1, \mathbb{E}\langle \bm{y}, v\rangle^{2t} \leq T_L\}\).
  Let \(\tilde{\mathbb{E}}_L\) be the resulting pseudo-expectation for this \(T_L\). 
  \item Return $\tilde{\mathbb{E}}_U$ and $\tilde{\mathbb{E}}_L$.
\end{enumerate}

We now analyze the algorithm.
To begin with, suppose that the \(T_U\) found is at least the maximum value of $\mathbb{E} \langle \bm{y}, v\rangle^{2s}$ and that the \(T_L\) found is at most the minimum value of $\mathbb{E} \langle \bm{y}, v\rangle^{2t}$.
In this case $\tilde{\mathbb{E}}_U$ and $\tilde{\mathbb{E}}_L$ satisfy the axioms of \cref{lemma:find-direction-moment-maximization} and \cref{lemma:find-direction-moment-minimization}, respectively. Then our algorithm achieves the stated guarantees:
\begin{itemize}
  \item Suppose $\sigma^2 \geq \tau$. Note that, in this case, $M= C_{sep}k^2\sigma^2/(200e) \geq 2$. By \cref{lemma:find-direction-sigma}, we have that 
  \[(\mathbb{E} \langle \bm{\mu}, u\rangle^{2s})^{1/s} \geq \frac{C_{sep}}{100} k^2 \sigma^2 \log \pmin^{-1} = 2e M \log \pmin^{-1} \geq Mes.\]
  Then the conditions of \cref{lemma:find-direction-moment-maximization} are satisfied, and $\tilde{\mathbb{E}}_U$ satisfies $\langle u, v\rangle^{2s} \geq (1-4\sigma^2/M)^s = (1-\tau)^s$.
  \item Suppose $\sigma^2 < \tau$ and $\mathbb{E} \langle \bm{\mu}, u\rangle^{2s} \geq (4es)^s$. Note that, in this case, $M=4$. Then the conditions of \cref{lemma:find-direction-moment-maximization} are satisfied, and $\tilde{\mathbb{E}}_U$ satisfies $\langle u, v\rangle^{2s} \geq (1-4\sigma^2/M)^s = (1-\sigma^2)^s$.
  \item Suppose $\sigma^2 < 0.001$ and $\mathbb{E} \langle \bm{\mu}, u\rangle^{2s} \leq (100s)^s$. Then the condition of \cref{lemma:find-direction-moment-minimization} are satisfied, and $\tilde{\mathbb{E}}_L$ satisfies $\langle u, v\rangle^{2s} \geq (1-20\sigma^2)^s$.
\end{itemize}

We argue now that $T_U$ is large enough and that $T_L$ is small enough in order for the pseudo-expectations to satisfy the axioms of the lemmas.
For that, we need 
\[T_U \geq (1-\sigma^2/M) \left(\mathbb{E}\langle\bm{\mu}, u\rangle^{2s} - \sigma^2/M\right),\]
\[T_L \leq (1+\sigma^2/100) \left(\left(\left(\mathbb{E}\langle \bm{\mu}, u\rangle^{2t}\right)^{1/t} + \sigma^2 et\right)^t + \sigma^2/100\right).\]

We prove that the intervals in which we binary search $T_U$ and $T_L$ contain $\mathbb{E} \langle \bm{\mu}, u\rangle^{2s}$ and $((\mathbb{E} \langle \bm{\mu}, u\rangle^{2t})^{1/t}+\sigma^2 et)^t$, respectively.
Then, binary search with the proposed resolutions is guaranteed to find $T_U$ and $T_L$ that satisfy the bounds stated above.

Using that \(\mathbb{E}\langle \bm{\mu}, u\rangle^{2}=1-\sigma^2\), we have that
\begin{align*}
\mathbb{E}\langle \bm{\mu}, u\rangle^{2t} 
&= \sum_{i=1}^k p_i \langle \mu_i, u\rangle^{2t}
= \sum_{i=1}^k (p_i^{1/t} \langle \mu_i, u\rangle^{2})^{t}
\leq \left(\sum_{i=1}^k p_i^{1/t} \langle \mu_i, u\rangle^{2}\right)^t\\
&\leq \left(\pmin^{1/t-1} \sum_{i=1}^k p_i \langle \mu_i, u\rangle^{2}\right)^t
= \pmin^{-(t-1)} \mathbb{E}\langle \bm{\mu}, u\rangle^{2}
= \pmin^{-(t-1)} (1-\sigma^2)\\
&\leq \pmin^{-t}
\end{align*}
and 
\begin{align*}
\mathbb{E}\langle \bm{\mu}, u\rangle^{2t} 
&\geq (\mathbb{E}\langle \bm{\mu}, u\rangle^{2})^t = (1-\sigma^2)^t. 
\end{align*}

Therefore,
\[\mathbb{E}\langle\bm{\mu}, u\rangle^{2s} \in [(1-\sigma^2)^s, (\pmin^{-1})^s],\]
\[\left(\left(\mathbb{E}\langle \bm{\mu}, u\rangle^{2t}\right)^{1/t} + \sigma^2 et\right)^t \in [(1-\sigma^2)^t, (\pmin^{-1} + e t)^t].\]
Then the intervals in which we binary search are wide enough and binary search is guaranteed to succeed.

The time complexity of the algorithm is given by the number of steps in the binary search multiplied by the time to compute each of the pseudo-expectations. The number of steps in the binary search is
\[O\left(\max\left\{\log ((\pmin^{-1})^s k^2), \log \frac{(\pmin^{-1})^s}{\sigma^2}, \log \frac{(\pmin^{-1}+et)^t}{\sigma^2}\right\}\right) = O\left(\log \frac{1}{\sigma^2} + \log^2 \pmin^{-1}\right).\]
For each step, we compute a pseudo-expectation of degree \(O(\log \pmin^{-1})\) over \(d\) variables, which requires time \(d^{O(\log \pmin^{-1})}\).
Therefore the time complexity is
\[O\left(\log \frac{1}{\sigma^2} + \log^2 \pmin^{-1}\right) \cdot d^{O(\log \pmin^{-1})} = \left(\log \frac{1}{\sigma^2}\right) \cdot d^{O(\log \pmin^{-1})}.\]

\end{proof}

\subsubsection{Sum-of-squares sampling (proof of Theorem~\ref{thm:sample-direction-main})}
\label{sec:sossample}

We state and prove \cref{lemma:sample-direction-matrix-to-vector}, which is used in the proof of \cref{thm:sample-direction-main}.
This lemma shows that, given a symmetric postivie definite matrix $M$ correlated with a rank-$1$ matrix $uu^\top$ for a unit vector $u$, there exists an algorithm to recover a unit vector correlated with $u$.
After that, we proceed to prove the theorem.

\begin{lemma}[Matrix rank-$1$ approximation]
\label{lemma:sample-direction-matrix-to-vector}
Let \(0 \leq \epsilon < \frac{1}{8}\).
Let \(u \in \mathbb{R}^d\) be a unit vector.
Given a symmetric positive semi-definite matrix \(M \in \mathbb{R}^{d \times d}\) with \(\|M\|_F \leq 1\) such that \(\langle uu^\top, M\rangle_F \geq 1-\epsilon\), 
there exists a polynomial-time algorithm that finds a unit vector \(\hat{u} \in \mathbb{R}^d\) such that \(\langle u, \hat{u}\rangle^2 \geq 1-8\epsilon\).
\end{lemma}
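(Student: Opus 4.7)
The plan is to take $\hat u$ to be the top eigenvector of $M$, which is computable in polynomial time by a standard eigendecomposition. To analyze this choice, I would translate both the hypothesis and the conclusion into statements about Frobenius distances between rank-one projectors. The key identity is
\begin{equation*}
\Norm{\hat u \hat u^\top - u u^\top}_F^2 \;=\; 2 - 2\iprod{u,\hat u}^2,
\end{equation*}
so the goal $\iprod{u,\hat u}^2 \ge 1-8\epsilon$ reduces to showing $\Norm{\hat u \hat u^\top - u u^\top}_F^2 \le 16\epsilon$, and I will apply the triangle inequality in Frobenius norm with $M$ as the intermediate point.

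The first leg of the triangle inequality uses the hypothesis directly:
\begin{equation*}
\Norm{M - uu^\top}_F^2
= \Norm{M}_F^2 + \Norm{uu^\top}_F^2 - 2 \iprod{M,uu^\top}_F
\le 1 + 1 - 2(1-\epsilon)
= 2\epsilon,
\end{equation*}
using $\Norm{M}_F \le 1$, $\Norm{uu^\top}_F = 1$, and $\iprod{M,uu^\top}_F = u^\top M u \ge 1-\epsilon$. For the second leg, let $\lambda_1 = \hat u^\top M \hat u$ be the top eigenvalue of $M$. Since $\lambda_1 = \max_{\norm{w}=1} w^\top M w \ge u^\top M u \ge 1-\epsilon$, an identical expansion gives
\begin{equation*}
\Norm{M - \hat u \hat u^\top}_F^2
= \Norm{M}_F^2 + 1 - 2\lambda_1
\le 1 + 1 - 2(1-\epsilon)
= 2\epsilon.
\end{equation*}

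Combining the two bounds via the triangle inequality yields $\Norm{\hat u\hat u^\top - uu^\top}_F \le 2\sqrt{2\epsilon}$, hence $\Norm{\hat u \hat u^\top - uu^\top}_F^2 \le 8\epsilon$, which by the identity above gives $\iprod{u,\hat u}^2 \ge 1 - 4\epsilon \ge 1 - 8\epsilon$. The bound $\epsilon < 1/8$ is only needed to ensure $\iprod{u,\hat u}^2 > 0$ is a meaningful conclusion. I do not foresee a real obstacle: the argument is essentially a two-line Frobenius-norm computation once one recognizes that the hypothesis is a statement about the inner product $\iprod{M,uu^\top}_F$ and that the top eigenvector witnesses the largest such inner product over rank-one projectors.
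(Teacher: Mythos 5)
Your proof is correct, and it takes a slightly different (and in fact cleaner) route than the paper. The paper computes the best rank-$1$ approximation $vv^\top$ of $M$ (so $v=\sqrt{\lambda_1}\,\hat u$), bounds $\norm{uu^\top - vv^\top}_F$ by the triangle inequality using the optimality of $vv^\top$ (i.e., $\norm{M-vv^\top}_F\le\norm{M-uu^\top}_F$), and then must pass from $vv^\top$ to the normalized $\hat u\hat u^\top$ via an extra inequality ($1+x^4-2x^2y\ge 1-y$), which costs a factor of $2$ and yields $\iprod{u,\hat u}^2\ge 1-8\epsilon$. You instead work directly with the unit top eigenvector and bound the second leg $\norm{M-\hat u\hat u^\top}_F^2=\norm{M}_F^2+1-2\lambda_1\le 2\epsilon$ using $\lambda_1\ge u^\top Mu\ge 1-\epsilon$; this sidesteps the normalization step entirely and gives the stronger conclusion $\iprod{u,\hat u}^2\ge 1-4\epsilon$. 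Both legs of your triangle inequality are verified correctly, and the identity $\norm{\hat u\hat u^\top-uu^\top}_F^2=2-2\iprod{u,\hat u}^2$ for unit vectors is right, so the argument is complete as stated.
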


\begin{proof}
The algorithm is to compute $vv^\top$ as the best rank-$1$ approximation of $M$ and return $\frac{v}{\|v\|}$, which is uniquely defined up to a sign flip.

We now analyze the accuracy of the algorithm. We have that $\langle uu^\top, M \rangle_F \geq 1-\epsilon$, so $\|uu^\top - M\|_F^2 \leq 2 - 2\langle uu^\top, M\rangle_F \leq 2\epsilon$.
For \(vv^\top\) the best rank-\(1\) approximation of \(M\), we have then that
\[\|uu^\top - vv^\top\|_F \leq \left\|uu^\top - M\right\|_F + \left\|M - vv^\top\right\|_F \leq 2\sqrt{2\epsilon},\]
so $\|uu^\top- vv^\top\|_F^2 \leq 8\epsilon$. Let $\hat{u} = \frac{v}{\|v\|}$. To analyze the error of $\hat{u}$, note that 
\begin{align*}
\|uu^\top - vv^\top\|_F^2
&= 1 + \|v\|^4 - 2\|v\|^2\left\langle uu^\top, \hat{u}\hat{u}^\top\right\rangle_F\\
&\geq 1 - \left\langle uu^\top, \hat{u}\hat{u}^\top\right\rangle_F\\
&= \frac{1}{2} \left\|uu^\top - \hat{u}\hat{u}^\top \right\|_F^2,
\end{align*}
where in the inequality we used that \(1+x^4-2x^2 y \geq 1-y\) for \(x \in \mathbb{R}\) and \(0 \leq y \leq 1\), with $x=\|v\|$ and $y=\langle uu^\top, \hat{u}\hat{u}^\top\rangle$. Then ${\|uu^\top - \hat{u}\hat{u}^\top\|_F^2 \leq 16\epsilon}$.
Therefore,
\[\langle u, \hat{u}\rangle^2 = \langle uu^\top, \hat{u}\hat{u}^\top\rangle_F = 1 - \frac{1}{2} \|uu^\top-\hat{u}\hat{u}^\top\|_F^2 \geq 1 - 8\epsilon.\]
\end{proof}

\begin{proof}[Proof of \cref{thm:sample-direction-main}]
The algorithm is to compute $M = \tilde{\mathbb{E}} vv^\top$, apply the algorithm from \cref{lemma:sample-direction-matrix-to-vector} to $M$ in order to obtain a unit vector $\hat{u}$, and return $\hat{u}$.

We now analyze the algorithm. We start by analyzing the properties of $\tilde{\mathbb{E}}$ in more detail. Our first goal is to obtain the lower bound \(\tilde{\mathbb{E}} \langle u, v\rangle^2 \geq 1-2\epsilon\).
We start by proving the much weaker lower bound \(\tilde{\mathbb{E}} \langle u, v\rangle^2 \geq 1-t\epsilon\). Then, we use this lower bound to prove an upper bound \(\tilde{\mathbb{E}} \langle u,v\rangle^{2t} \leq 1 - t (1-\tilde{\mathbb{E}} \langle u, v\rangle^2)/2\). 
Comparing this result with the given lower bound $\tilde{\mathbb{E}} \langle u, v\rangle^{2t} \geq (1-\epsilon)^t \geq 1-t\epsilon$ leads to the conclusion that \(\tilde{\mathbb{E}} \langle u, v\rangle^2 \geq 1-2\epsilon\).

We proceed with the detailed proof of this fact. Recall that $\tilde{\mathbb{E}}$ satisfies $\|v\|^2 = 1$ and $\langle u, v\rangle^{2t} \geq (1-\epsilon)^t$. We have that \(\{\|v\|^2 = 1\} \sststile{2}{v} \{0 \leq \langle u, v\rangle^2 \leq 1\}\), where the lower bound is trivial and the upper bound is by \cref{lemma:sos-cs}. Therefore, $\tilde{\mathbb{E}}$ also satisfies $0 \leq \langle u, v\rangle^2 \leq 1$.

By \cref{lemma:sample-direction-lower-bound-root} and using that $(1-\epsilon)^t \geq 1-t\epsilon$, we have that
\[\{0 \leq \langle u, v\rangle^2 \leq 1, \langle u, v\rangle^{2t} \geq (1-\epsilon)^t\} \sststile{2t}{v} \{\langle u, v\rangle^2 \geq 1-t\epsilon\}.\]
By \cref{lemma:sample-direction-boost} applied to \(1-\langle u,v\rangle^2\) with \(C=\frac{1}{t^2 \epsilon}\), we also have that
\[\{1 - t\epsilon \leq \langle u, v\rangle^2 \leq 1\} \sststile{2t}{v} \left\{\langle u, v\rangle^{2t} \leq 1-t(1-\langle u,v\rangle^2)/2\right\}.\]
Then
\[1-t\epsilon \leq \tilde{\mathbb{E}} \langle u, v\rangle^{2t} \leq 1- t (1-\tilde{\mathbb{E}} \langle u,v\rangle^2)/2,\]
so by rearranging, $\tilde{\mathbb{E}} \langle u,v\rangle^2 \geq 1-2\epsilon$.

Then $M = \tilde{\mathbb{E}} vv^\top$ satisfies 
\[\langle uu^\top, M\rangle_F = u^\top M u = \tilde{\mathbb{E}} \langle u, v\rangle^2 \geq 1-2\epsilon.\]
In addition, $M$ is symmetric positive-definite and 
\[\|M\|_F \leq \operatorname{Tr}(M) = \tilde{\mathbb{E}} \operatorname{Tr}(vv^\top) = \tilde{\mathbb{E}} \|v\|^2 = 1.\]
Therefore, $M$ satisfies the conditions of \cref{lemma:sample-direction-matrix-to-vector}, and we are guaranteed that $\hat{u}$ satisfies \(\langle u, \hat{u}\rangle^2 \geq 1-16\epsilon\).

The given pseudo-expectation is of degree $O(t)$ over $d$ variables, so representing it requires $d^{O(t)}$ space. Then we simply bound the time complexity by $d^{O(t)}$, which dominates the other steps of the algorithm.
\end{proof}

\subsection{Finite sample bounds}
\label{sec:finitesample_pp}

In \cref{sec:sosident} we assumed oracle access to $\mathbb{E} \bm{y}^{\otimes t}$. However, our algorithm only has access to empirical moments. \cref{lemma:finite-moments-change-empirical} gives a sum-of-squares proof that that the empirical moments are in fact close to the population moments. We defer the proof of the lemma to the appendix.

\begin{lemma}[Closeness of empirical moments and population moments]
\label{lemma:finite-moments-change-empirical}
For \(n \geq (\pmin^{-1} d)^{O(t)} \eta^{-2} \epsilon^{-1}\), with probability $1-\epsilon$,
\[\left\{\|v\|^2 = 1\right\} \sststile{O(t)}{v} \left\{\hat{\mathbb{E}}\langle \bm y, v\rangle^{2t} \leq \mathbb{E}\langle \bm y, v\rangle^{2t} + \eta\right\},\]
\[\left\{\|v\|^2 = 1\right\} \sststile{O(t)}{v} \left\{\hat{\mathbb{E}}\langle \bm y, v\rangle^{2t} \geq \mathbb{E}\langle \bm y, v\rangle^{2t} - \eta\right\}.\]
\end{lemma}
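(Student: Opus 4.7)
The plan is to prove both inequalities at once by first bounding the Frobenius norm of the empirical-vs.-population moment tensor with high probability, and then converting that tensor bound into an SoS-certified scalar bound via a Cauchy--Schwarz identity.

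Let $T = \mathbb{E}\,\bm{y}^{\otimes 2t}$ and $\hat T = \tfrac{1}{n}\sum_{i=1}^{n}\bm{y}_i^{\otimes 2t}$, so that $\langle v^{\otimes 2t}, \hat T - T\rangle = \hat{\mathbb{E}}\langle \bm y, v\rangle^{2t} - \mathbb{E}\langle \bm y, v\rangle^{2t}$. By the i.i.d.\ variance identity, $\mathbb{E}\,\|\hat T - T\|_F^2 \le \tfrac{1}{n}\,\mathbb{E}\,\|\bm y\|^{4t}$. I would next bound $\mathbb{E}\,\|\bm y\|^{4t} \le (\pmin^{-1}d)^{O(t)}$ by writing $\bm y = \bm\mu + \bm w$, using $\|\bm y\|^{4t} \le 2^{O(t)}(\|\bm\mu\|^{4t}+\|\bm w\|^{4t})$, bounding $\max_i\|\mu_i\|^2 \le d/\pmin$ via $\pmin \cdot \max_i\|\mu_i\|^2 \le \mathbb{E}\,\|\bm\mu\|^2 = d - \mathrm{tr}(\Sigma) \le d$ (a consequence of the isotropic position, \cref{lemma:isotropic-position-covariance}), and using the standard Gaussian moment bound $\mathbb{E}\,\|\bm w\|^{4t} \le (d+O(t))^{2t}$, which also exploits $\|\Sigma\|_{\mathrm{op}} \le 1$ from the same lemma. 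Markov's inequality then gives $\|\hat T - T\|_F \le \eta$ with probability at least $1-\epsilon$ provided $n \ge (\pmin^{-1}d)^{O(t)}\eta^{-2}\epsilon^{-1}$, matching the stated sample size.

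For the SoS step, I would establish the Lagrange-type identity
\[
\|A\|_F^2\,\|v\|^{4t} \;-\; \langle v^{\otimes 2t}, A\rangle^2 \;=\; \tfrac{1}{2}\sum_{\alpha,\beta\in[d]^{2t}} \bigl(A_\alpha\, v^\beta - A_\beta\, v^\alpha\bigr)^2,
\]
where $v^\alpha = \prod_{j=1}^{2t} v_{\alpha_j}$. The right-hand side is a manifest SoS of degree $4t$ in $v$, so $\{\|v\|^2 = 1\}\sststile{4t}{v} \langle v^{\otimes 2t}, A\rangle^2 \le \|A\|_F^2$. Then the elementary SoS AM--GM identity $0 \le (\eta - \langle v^{\otimes 2t}, A\rangle)^2$ rearranges to $\langle v^{\otimes 2t}, A\rangle \le \tfrac{1}{2\eta}\bigl(\eta^2 + \langle v^{\otimes 2t}, A\rangle^2\bigr)$. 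Chaining with the Cauchy--Schwarz bound and the hypothesis $\|A\|_F \le \eta$ yields $\{\|v\|^2 = 1\}\sststile{4t}{v} \langle v^{\otimes 2t}, A\rangle \le \eta$. Applying this once with $A = \hat T - T$ and once with $A = T - \hat T$ gives the two desired SoS inequalities.

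The main technical obstacle is the moment bound $\mathbb{E}\,\|\bm y\|^{4t} \le (\pmin^{-1}d)^{O(t)}$: a careless use of the triangle inequality could blow up the exponent, and it is the isotropic position of the mixture that makes this bound hold at the advertised rate (in particular, the $\pmin^{-1}$ dependence enters only through $\max_i \|\mu_i\|^2$). The remaining SoS Cauchy--Schwarz-plus-AM--GM argument is essentially bookkeeping, but one must verify the certificate has degree $O(t)$, which it does because each monomial $v^\alpha$ has degree $2t$ in $v$ and so every square in the Lagrange identity has degree $4t$.
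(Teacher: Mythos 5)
Your proposal is correct and follows essentially the same route as the paper: a high-probability Frobenius-norm bound on $\hat{\mathbb{E}}\,\bm y^{\otimes 2t}-\mathbb{E}\,\bm y^{\otimes 2t}$ (the paper gets this from a Chebyshev-type tensor concentration lemma, you from the equivalent variance-identity-plus-Markov computation), followed by an SoS Cauchy--Schwarz under $\|v\|^2=1$ to transfer the bound to the directional moments. The only cosmetic difference is the last step, where the paper squares the deviation and invokes its SoS square-root lemma while you use the AM--GM square $(\eta-X)^2\ge 0$; both yield the degree-$O(t)$ certificate.
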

\begin{proof}
See \cref{proofs-pp}.
\end{proof}

\subsection{Proof of Theorem~\ref{thm:parallel-pancakes-main}}
\label{subsec:proof-pp}

In the setting of \cref{thm:parallel-pancakes-main} we only have access to empirical moments.
\cref{thm:pp-finite-sample-find-direction} and \cref{thm:pp-finite-sample-high-correlation} adapt \cref{thm:find-direction-main} and \cref{thm:infinite-algorithm-high-correlation} to this setting, respectively.
Also recall that the goal of \cref{thm:parallel-pancakes-main} is to return a clustering, not only a unit vector close to $u$.
Toward that goal, \cref{thm:clustering-main} shows that there exists an algorithm that, given a unit vector close to $u$, computes such a clustering.
We state and prove all of these theorems and then combine them to prove \cref{thm:parallel-pancakes-main}.

\begin{theorem}[Finite sample equivalent of \cref{thm:find-direction-main}]
\label{thm:pp-finite-sample-find-direction}
Let 
\[n_0 = \left(\frac{1}{\sigma^2}\right)^{O(1)} \cdot (\pmin^{-1} d)^{O(\log \pmin^{-1})}.\]
Given a sample of size $n \geq n_0$ from the mixture, there exists an algorithm that runs in time \(\left(\log \frac{1}{\sigma^2}\right) \cdot n \cdot d^{O(\log \pmin^{-1})}\) that computes with high probability two peseudo-expectations \(\tilde{\mathbb{E}}_U\) and \(\tilde{\mathbb{E}}_L\) of degree \(O(\log \pmin^{-1})\) over a variable \(v \in \mathbb{R}^d\) such that the following holds.
Let \(s = \lceil \log \pmin^{-1} \rceil\), let \(t = 5000s\), and let $\tau = \frac{800e}{C_{sep}k^2}$. 
Then \(\tilde{\mathbb{E}}_U \|v\|^2 = 1\), \(\tilde{\mathbb{E}}_L \|v\|^2 = 1\), and:

\begin{itemize}
\tightlist
\item If $\sigma^2 \geq \tau$, then \(\tilde{\mathbb{E}}_U \langle u, v\rangle^{2s} \geq (1-\tau)^{s}\).
\item If $\sigma^2 < \tau$ and $\mathbb{E} \langle \bm{\mu}, u\rangle^{2s} \geq (4es)^s$, then
  \(\tilde{\mathbb{E}}_U \langle u, v\rangle^{2s} \geq (1-\sigma^2)^{s}\).
\item If $\sigma^2 < 0.001$ and $\mathbb{E} \langle \bm{\mu}, u\rangle^{2s} \leq (100s)^s$, then
  \(\tilde{\mathbb{E}}_L \langle u, v\rangle^{2t} \geq (1-20\sigma^2)^{t}\).
\end{itemize}
\end{theorem}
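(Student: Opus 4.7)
The plan is to adapt the algorithm and analysis from \cref{thm:find-direction-main} to the empirical setting by replacing every occurrence of the population moment $\mathbb{E}\langle \bm y, v\rangle^{2r}$ in the SoS constraints with its empirical counterpart $\hat{\mathbb{E}}\langle \bm y, v\rangle^{2r}$. Concretely, I would run the same two binary searches as before, but over the relaxed systems $\{\|v\|^2 = 1,\ \hat{\mathbb{E}}\langle \bm y, v\rangle^{2s} \geq T_U\}$ for the upper threshold and $\{\|v\|^2 = 1,\ \hat{\mathbb{E}}\langle \bm y, v\rangle^{2t} \leq T_L\}$ for the lower threshold, with the same interval endpoints and, up to an additive $\eta$-correction, the same resolutions.

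The bridge from empirical to population moments is \cref{lemma:finite-moments-change-empirical}. I would set the accuracy parameter to $\eta = \Theta(\sigma^2/k^2)$, which is small enough to fit inside the slack tolerated by the identifiability lemmas. For this value, the closeness lemma produces, with high probability and for each $r \in \{s,t\}$, a degree-$O(\log \pmin^{-1})$ sum-of-squares proof under the axiom $\|v\|^2 = 1$ that $|\hat{\mathbb{E}}\langle \bm y, v\rangle^{2r} - \mathbb{E}\langle \bm y, v\rangle^{2r}| \leq \eta$, provided $n \ge (\pmin^{-1}d)^{O(\log \pmin^{-1})}\eta^{-2}$. Plugging in our choice of $\eta$ yields the sample complexity $n_0 = (1/\sigma^2)^{O(1)} (\pmin^{-1} d)^{O(\log \pmin^{-1})}$ stated in the theorem.

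With this bridge in hand, the remainder of the argument mirrors the proof of \cref{thm:find-direction-main}. On the feasibility side, the direction $v = u$ satisfies the population axioms of \cref{lemma:find-direction-moment-maximization} and \cref{lemma:find-direction-moment-minimization} exactly; by the closeness lemma, $v = u$ also witnesses a feasible empirical program with $T_U$ shifted down and $T_L$ shifted up by at most $\eta$, so the binary search intervals still contain the empirical optima and the search succeeds. On the identifiability side, any pseudo-expectation $\tilde{\mathbb{E}}$ that is feasible for the empirical program automatically satisfies the corresponding population constraint with an additional $\eta$ slack by \cref{fact:pe-satisfies-sos} applied to the SoS-proof produced by \cref{lemma:finite-moments-change-empirical}. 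Since \cref{lemma:find-direction-moment-maximization} and \cref{lemma:find-direction-moment-minimization} are both stated with a tunable slack $\epsilon$, we can absorb the cumulative $\eta$ error by choosing $\epsilon = O(\sigma^2/M)$ and $\epsilon = O(\sigma^2)$ respectively, and obtain the same conclusions on $\tilde{\mathbb{E}}_U \langle u,v\rangle^{2s}$ and $\tilde{\mathbb{E}}_L \langle u,v\rangle^{2t}$ in each of the three regimes.

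The main technical subtlety is calibrating $\eta$ so that the combined slack introduced by the empirical-to-population transfer (appearing on both the feasibility and identifiability sides of the binary search) still stays within the $O(\sigma^2)$ budget that the two identifiability lemmas can tolerate; this is what forces the $(1/\sigma^2)^{O(1)}$ factor in the sample complexity. Once $\eta$ is fixed and $n \ge n_0$, the running time is dominated, as in the infinite-sample proof, by the cost $d^{O(\log \pmin^{-1})}$ of computing each pseudo-expectation and the $O(\log(1/\sigma^2) + \log^2 \pmin^{-1})$ binary search steps, with an extra factor of $n$ coming from the cost of evaluating each empirical-moment constraint over the sample, yielding the claimed $(\log(1/\sigma^2)) \cdot n \cdot d^{O(\log \pmin^{-1})}$ total time.
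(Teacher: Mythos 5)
Your proposal is correct and follows essentially the same route as the paper: replace the population-moment constraints with empirical ones in the same binary-search algorithm, use \cref{lemma:finite-moments-change-empirical} to transfer feasibility and soundness between empirical and population moments at accuracy $\eta = \mathrm{poly}(\sigma^2, 1/k)$, and absorb the resulting slack into the $\epsilon$ tolerance of \cref{lemma:find-direction-moment-maximization} and \cref{lemma:find-direction-moment-minimization}. The time-complexity accounting also matches the paper's.
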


\begin{proof}
The algorithm is the same as that in the proof of \cref{thm:find-direction-main}, except that in step (2) and step (3) of the algorithm the constraints that the pseudo-expectations are required to satisfy are $\{\|v\|^2=1, \hat{\mathbb{E}}\langle \bm{y}, v\rangle^{2s} \geq T_U\}$ and $\{\|v\|^2=1, \hat{\mathbb{E}}\langle \bm{y}, v\rangle^{2t} \leq T_L\}$, respectively.

By \cref{lemma:finite-moments-change-empirical}, for $n \geq n_0$, we have that with high probability 
\[\{\|v\|^2=1, \hat{\mathbb{E}}\langle \bm{y}, v\rangle^{2s} \geq T_U\} \sststile{2s}{v} \{\hat{\mathbb{E}}\langle \bm{y}, v\rangle^{2s} \geq T_U - \sigma^2/(100M)\},\]
\[\{\|v\|^2=1, \hat{\mathbb{E}}\langle \bm{y}, v\rangle^{2t} \leq T_L\} \sststile{2s}{v} \{\hat{\mathbb{E}}\langle \bm{y}, v\rangle^{2t} \leq T_L + \sigma^2/10000\}.\]

These errors, combined with the errors from the binary search resolution, are still within the amount tolerated by \cref{lemma:find-direction-moment-maximization} and \cref{lemma:find-direction-moment-minimization}, respectively, so the same guarantees hold.

The number of steps required by the binary search is the same as in \cref{thm:find-direction-main}.
For each step of the binary search, we compute a pseudo-expectation of degree \(O(\log \pmin^{-1})\) over \(d\) variables, and each constraint requires summing over the $n$ samples, so the time required is \(n \cdot d^{O(\log \pmin^{-1})}\). 
Therefore the time complexity is
\[O\left(\log \frac{1}{\sigma^2} + \log^2 \pmin^{-1}\right) \cdot n \cdot d^{O(\log \pmin^{-1})} = \left(\log \frac{1}{\sigma^2}\right) \cdot n \cdot d^{O(\log \pmin^{-1})}.\]
\end{proof}

\begin{theorem}[Finite sample equivalent of \cref{thm:infinite-algorithm-high-correlation}]
\label{thm:pp-finite-sample-high-correlation}
Let 
\[n_0 = \left(\frac{1}{\sigma^2}\right)^{O(1)} \cdot (\pmin^{-1} d)^{O(\log \pmin^{-1})}.\]
Given a sample of size $n \geq n_0$ from the mixture, there exists an algorithm with time complexity \(\left(\log \frac{1}{\sigma^2}\right) \cdot n \cdot d^{O(\log \pmin^{-1})}\) that outputs with high probability a unit vector \(\hat{u} \in \mathbb{R}^d\) such that \(\langle u, \hat{u}\rangle^2 = 1 - 320 \min(\sigma^2, \frac{1}{k})\).
\end{theorem}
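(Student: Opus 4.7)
The plan is to adapt the algorithm from the proof of Theorem~\ref{thm:infinite-algorithm-high-correlation} to the finite-sample setting by replacing each oracle-based subroutine with its empirical counterpart. The main substitutions are: the oracle direction-finding algorithm of Theorem~\ref{thm:find-direction-main} is replaced by its finite-sample version, Theorem~\ref{thm:pp-finite-sample-find-direction}, and the exact moment checks in the final decision rule are replaced by empirical moment checks whose accuracy is controlled by Lemma~\ref{lemma:finite-moments-change-empirical}.

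Concretely, I would first run Theorem~\ref{thm:pp-finite-sample-find-direction} on the sample to obtain pseudo-expectations \(\tilde{\mathbb{E}}_U\) and \(\tilde{\mathbb{E}}_L\). With \(n\ge n_0\), this step guarantees with high probability exactly the same three case-dependent lower bounds on \(\tilde{\mathbb{E}}_U\langle u,v\rangle^{2s}\) and \(\tilde{\mathbb{E}}_L\langle u,v\rangle^{2t}\) as in the oracle setting. I would then apply the sampling algorithm of Theorem~\ref{thm:sample-direction-main} verbatim to each of the two pseudo-expectations to extract unit vectors \(\hat u_U\) and \(\hat u_L\); this step is purely algebraic in the pseudo-expectation table and does not require any finite-sample adaptation.

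The last step is to mimic step~(3) of the infinite-sample algorithm: a threshold comparison involving the \(2s\)-th moment of \(\bm y\) along \(\hat u_U\) (and, in the branch \(\sigma^2\ge\tau\), the value of \(\sigma^2\)) decides between returning \(\hat u_U\) and \(\hat u_L\). Here I would replace \(\mathbb{E}\langle\bm y,\hat u_U\rangle^{2s}\) by the empirical moment \(\hat{\mathbb{E}}\langle\bm y,\hat u_U\rangle^{2s}\); by Lemma~\ref{lemma:finite-moments-change-empirical} applied to the unit vector \(v=\hat u_U\), this empirical quantity is within an additive \(\eta\) of its expectation with high probability for \(n\ge n_0\), and taking \(\eta\) to be a sufficiently small constant preserves the comparison with the threshold \((50s)^s\) used in the oracle analysis. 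The condition \(\sigma^2\ge\tau\) itself can be verified from the sample, for instance via an empirical estimate of \(\hat u_U^{\top}\Sigma \hat u_U\) derived from second-moment information (since \(\hat u_U\) is close to \(u\) precisely in the regime where this branch matters).

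The main obstacle is the bookkeeping for the decision rule: the relevant thresholds depend on \(s=\Theta(\log\pmin^{-1})\), so one has to ensure that the additive slack produced by Lemma~\ref{lemma:finite-moments-change-empirical} is strictly smaller than the multiplicative gaps that drive the case distinction in Theorem~\ref{thm:infinite-algorithm-high-correlation}, and that the estimate of \(\sigma^2\) used to branch is consistent across the three cases. The sample-complexity and running-time bounds then inherit directly from Theorem~\ref{thm:pp-finite-sample-find-direction}, giving \(n_0 = (1/\sigma^2)^{O(1)} \cdot (\pmin^{-1} d)^{O(\log \pmin^{-1})}\) and \((\log(1/\sigma^2))\cdot n \cdot d^{O(\log \pmin^{-1})}\) respectively.
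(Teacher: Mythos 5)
Your proposal matches the paper's proof: the paper likewise reruns the oracle algorithm with \cref{thm:pp-finite-sample-find-direction} substituted for \cref{thm:find-direction-main}, applies \cref{thm:sample-direction-main} unchanged, and replaces the exact moment test in step (3) by the empirical test $\hat{\mathbb{E}}\langle \bm y,\hat u_U\rangle^{2s}\ge (50s)^s$, using \cref{lemma:finite-moments-change-empirical} to bound the additive error and checking that the case analysis tolerates it. Your extra remark about estimating $\sigma^2$ from the sample is a minor point the paper glosses over, but otherwise the approach is essentially identical.
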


\begin{proof}
The algorithm is the same as that in the proof of \cref{thm:infinite-algorithm-high-correlation}, with two exceptions:

\begin{itemize}
  \item In step (1) of the algorithm, we run the algorithm from \cref{thm:pp-finite-sample-find-direction} instead of the algorithm from \cref{thm:find-direction-main}.
  The pseudo-expectations $\tilde{\mathbb{E}}_L$ and $\tilde{\mathbb{E}}_U$ satisfy the same guarantees.
  \item In step (3) of the algorithm, we check if $\hat{\mathbb{E}} \langle \bm{y}, \hat{u}_U\rangle^{2s} \geq (50s)^s$ instead of $\mathbb{E} \langle \bm{y}, \hat{u}_U\rangle^{2s} \geq (50s)^s$.
  By \cref{lemma:finite-moments-change-empirical}, for $n \geq n_0$, with high probability the difference between the two moments is less than $1$.
  Then, if $\hat{\mathbb{E}} \langle \bm{y}, \hat{u}_U\rangle^{2s} \geq (50s)^s$, we also have $\mathbb{E} \langle \bm{y}, \hat{u}_U\rangle^{2s} \geq (50s)^s - 1$, and if $\hat{\mathbb{E}} \langle \bm{y}, \hat{u}_U\rangle^{2s} < (50s)^s$, we also have $\mathbb{E} \langle \bm{y}, \hat{u}_U\rangle^{2s} < (50s)^s + 1$.
  It is easy to verify that the analysis in \cref{thm:infinite-algorithm-high-correlation} is still correct with these slightly weaker bounds.
\end{itemize}

Therefore, the same guarantees hold as in \cref{thm:infinite-algorithm-high-correlation}.
The time complexity of the algorithm is dominated by the time to run the algorithm from \cref{thm:pp-finite-sample-find-direction}.

\end{proof}

\begin{theorem}[Clustering algorithm]
\label{thm:clustering-main}
For some $C > 0$, suppose that a unit vector $\hat{u} \in \mathbb{R}^d$ is known such that \(\langle u, \hat{u}\rangle^2 \geq 1 - C \min(\sigma^2, \frac{1}{k})\).
Suppose that $C_{sep}/C$ is larger than some universal constant.
Then, given a sample of size $n \geq (\pmin^{-1})^{O(1)}$ from the mixture, there exists an algorithm that runs in time $n^{O(\log \pmin^{-1})}$ and returns with high probability a partition of \([n]\) into $k$ sets $C_1, ..., C_k$ such that, if the true clustering of the samples is $S_1, ..., S_k$, then there exists a permutation $\pi$ of $[k]$ such that 
\[1 - \frac{1}{n} \sum_{i=1}^k |C_i \cap S_{\pi(i)}| \leq \left(\frac{\pmin}{k}\right)^{O(1)}.\]
\end{theorem}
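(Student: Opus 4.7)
The plan is to use $\hat u$ to reduce the $d$-dimensional clustering problem to a one-dimensional one by projecting every sample onto $\hat u$. For a sample $\bm y$ from component $j$, the projection $\langle \bm y, \hat u\rangle$ is distributed as $N\bigl(\langle \mu_j, u\rangle \langle u, \hat u\rangle,\ \hat u^\top \Sigma \hat u\bigr)$ because $\mu_j = \langle \mu_j, u\rangle u$. Using \cref{lemma:isotropic-position-covariance} to write $\Sigma = I_d - (1-\sigma^2) uu^\top$, the projected variance becomes $\sigma_{\hat u}^2 \defeq \hat u^\top \Sigma \hat u = 1 - (1-\sigma^2)\langle u, \hat u\rangle^2$, which lies in $[\sigma^2,\ (1+C)\sigma^2]$ as soon as $C\min(\sigma^2,1/k)$ is a sufficiently small constant. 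Combined with $|\langle u, \hat u\rangle| \geq 1/\sqrt 2$ and \cref{lemma:isotropic-position-separation}, the projected mean separation between any two distinct components satisfies $|\langle \mu_i-\mu_j, \hat u\rangle| \geq \sigma \sqrt{C_{sep}\log \pmin^{-1}/2}$, so the 1D projected mixture has standard deviation $\Theta(\sigma)$ and mean gaps $\Omega(\sigma \sqrt{C_{sep} \log \pmin^{-1}})$.

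A Gaussian tail bound then shows that the projection of a single sample from component $j$ lies within $r\cdot \sigma_{\hat u}$ of its projected mean except with probability at most $2\exp(-r^2/2)$. Choosing $r$ of order $\sqrt{\log \pmin^{-1}}$ with a constant much smaller than $\sqrt{C_{sep}}$ but large enough to dominate the $1/k$ term makes this probability at most $(\pmin/k)^{\Omega(1)}$; a standard Chernoff-type concentration bound then yields that with high probability at most a $(\pmin/k)^{\Omega(1)}$ fraction of the $n$ samples are \emph{bad}, in the sense that their projections deviate by more than $r\sigma_{\hat u}$ from the projection of their true mean. By the assumption that $C_{sep}/C$ exceeds a large enough absolute constant, the minimum projected gap $\Omega(\sigma\sqrt{C_{sep}\log \pmin^{-1}})$ strictly dominates the cluster half-width $r\sigma_{\hat u} = O(\sigma\sqrt{\log \pmin^{-1}})$, so the projections of the good samples partition into $k$ disjoint and well-separated intervals, one per component.

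For the final clustering step, one option is to sort the projections, trim the top and bottom $(\pmin/k)^{\Omega(1)}$ fractions as potential outliers, and split at the $k-1$ largest remaining gaps; an alternative is to run optimal one-dimensional $k$-means via the standard $\poly(n,k)$ dynamic program on the sorted projections. In either case, all good samples are assigned to the same cluster as the other good samples from their component for some permutation $\pi$ of $[k]$, so the number of misclassified samples is bounded by the number of bad samples, giving error at most $(\pmin/k)^{\Omega(1)}$ as required. The main technical obstacle is to verify that the chosen 1D rule is in fact robust to the $(\pmin/k)^{\Omega(1)}$ fraction of outliers: one has to rule out that adversarially placed bad samples either bridge a between-cluster gap or create a spurious large gap inside a true cluster. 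This reduces to comparing, for each subinterval of length on the order of $\sigma\sqrt{\log \pmin^{-1}}$, the expected number of bad samples that fall into it to the (much larger) number of good samples from the smallest component, and then applying a uniform concentration bound over the $\poly(n)$ relevant subintervals.
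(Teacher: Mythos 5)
Your reduction to one dimension is exactly the paper's: project onto $\hat u$, use $\Sigma = I_d - (1-\sigma^2)uu^\top$ to get $\hat u^\top \Sigma \hat u = 1-(1-\sigma^2)\langle u,\hat u\rangle^2 \le (1+C)\sigma^2$, and use \cref{lemma:isotropic-position-separation} together with $\langle u,\hat u\rangle^2 \ge 1/2$ to get projected mean gaps of order $\sigma\sqrt{C_{sep}\log \pmin^{-1}}$. Where you diverge is the final one-dimensional clustering step: the paper simply feeds the rescaled projections into the clustering algorithm of Theorem 5.1 of \cite{MR3826314-Hopkins18} as a black box (which is also where its $n^{O(\log\pmin^{-1})}$ runtime comes from), whereas you propose an elementary trim-and-split-at-largest-gaps rule (or 1D $k$-means) and explicitly defer its robustness analysis.

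That deferred step is where the actual difficulty lies, and as sketched it has a genuine gap. The statement only lower-bounds $n$, so $n$ may be far larger than $\pmin^{-\Theta(C_{sep})}$; in that regime the Gaussian tails \emph{will} place samples throughout the between-cluster regions, so between-cluster gaps are not empty and the largest-spacing rule must be justified by comparing order-statistic spacings, not by counting points. Trimming a $(\pmin/k)^{\Omega(1)}$ fraction from the two extremes does not help, because the problematic tail samples sit in the interior, between adjacent clusters. Moreover, the comparison you propose --- expected number of bad samples per subinterval versus the number of good samples of the smallest component --- is the wrong quantity for a gap-based rule: a spurious large spacing can occur near the sparse edge of a heavy cluster, and a true between-cluster gap can be destroyed by a handful of tail points; what must be shown is that the $k-1$ largest spacings (after whatever preprocessing you choose) separate the components, which requires relating the edge cutoff $r$ to $C_{sep}$ and handling non-uniform weights. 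Either carry out that order-statistics argument in full (it is plausible but not routine), or do what the paper does and invoke an existing one-dimensional clustering guarantee.
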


\begin{proof}
Our algorithm runs the algorithm from Theorem 5.1 of \cite{MR3826314-Hopkins18} with some $t = O(\log \pmin^{-1})$ large enough on input samples \(\langle y_1, \hat{u}\rangle/(\sqrt{2(C+1)\sigma^2})\), ..., \(\langle y_n, \hat{u}\rangle/(\sqrt{2(C+1)\sigma^2})\), and returns the clustering that this algorithm computes as an intermediate step.

We now analyze the algorithm. Note that $\langle \bm{y}, \hat{u}\rangle/\sqrt{2(C+1)\sigma^2}$ is distributed according to a one-dimensional mixture of Gaussians in which all the components have the same variance $\hat{u}^\top \Sigma \hat{u}/(2(C+1)\sigma^2)$.
We have that 
\[\hat{u}^\top \Sigma \hat{u} = 1 - (1 - \sigma^2)\langle u, \hat{u}\rangle^2 = 1 - \langle u, \hat{u}\rangle^2 + \sigma^2 \langle u, \hat{u}\rangle^2 \leq C \sigma^2 + \sigma^2 = (C+1)\sigma^2.\]
Therefore, the variance $\hat{u}^\top \Sigma \hat{u}/(2(C+1)\sigma^2)$ is upper bounded by $1/2$.
For the guarantees of the algorithm from \cite{MR3826314-Hopkins18} to hold, we further need to show that the mixture has large separation between the means of the components.
Note that the mean corresponding to $\mu_i$ in the original mixture becomes $\langle \mu_i, \hat{u}\rangle/\sqrt{2(C+1)\sigma^2}$ in the new mixture.
For \(i \neq j\), we have that
\begin{align*}
(\langle \mu_i - \mu_j, \hat{u}\rangle)^2
&= \langle \langle \mu_i, u\rangle u - \langle \mu_j, u\rangle u, \hat{u}\rangle^2
= \langle u, \hat{u}\rangle^2 \langle \mu_i - \mu_j, u \rangle^2\\
&\geq \langle u, \hat{u}\rangle^2 C_{sep} (u^\top\Sigma u) \log \pmin^{-1}
= \langle u, \hat{u}\rangle^2 C_{sep} \sigma^2 \log \pmin^{-1}\\
&\geq \frac{C_{sep}}{2} \sigma^2 \log \pmin^{-1}
\end{align*}
where in the last inequality we used that \(\langle u, \hat{u}\rangle^2 \geq 1 - C/k \geq 1/2\). Then 
\begin{align*}
\left(\frac{\langle \mu_i - \mu_j, \hat{u}\rangle}{\sqrt{2(C+1)\sigma^2}}\right)^2
&\geq \frac{C_{sep}}{4(C+1)} \log \pmin^{-1}.
\end{align*}

For $C_{sep}/C$ larger than some universal constant, the separation coefficient $C_{sep}/(4(C+1))$ is large enough for the guarantees of Theorem 5.1 of \cite{MR3826314-Hopkins18} to hold meaningfully with $t=O(\log \pmin^{-1})$.
Then this algorithm computes a clustering with the stated guarantees.
The algorithm requries $n \geq (\pmin^{-1})^{O(1)}$ and runs in time $n^{O(\log \pmin^{-1})}$.
\end{proof}

\begin{proof}[Proof of \cref{thm:parallel-pancakes-main}]
Run the algorithm from \cref{thm:pp-finite-sample-high-correlation} to obtain a unit vector $\hat{u}$ that satisfies $\langle u, \hat{u}\rangle^2 \geq 1-320\min(\sigma^2, \frac{1}{k})$.
Then run the clustering algorithm from \cref{thm:clustering-main} using this unit vector $\hat{u}$.
For $C_{sep}/320$ larger than some universal constant, this algorithm is guaranteed to return a clustering with the stated guarantees.

The time complexity from \cref{thm:pp-finite-sample-high-correlation} is \(\left(\log \frac{1}{\sigma^2}\right) \cdot n \cdot d^{O(\log \pmin^{-1})}\) and the time complexity from \cref{thm:colinear-main} is $n^{O(\log \pmin^{-1})}$.
We assume $n \geq \left(\frac{1}{\sigma^2}\right)^{O(1)} \cdot (\pmin^{-1} d)^{O(\log \pmin^{-1})}$.
Therefore, the time complexity is dominated by the time to run the clustering algorithm from \cref{thm:colinear-main}.
\end{proof}

\section{Colinear means}
\label{sec:colinear}

In this section we remove the isotropic position assumption from the model in \cref{sec:pp}. Our strategy is straightfoward: we first put the mixture in isotropic position and then run the algorithm from \cref{thm:parallel-pancakes-main}.
The technical challenge is that we can only put the mixture in approximate isotropic position.
Then, we show that the guarantees of \cref{thm:parallel-pancakes-main} continue to hold with approxiamte isotropic position, albeit with a sample complexity that depends on the condition number of the covariance matrix of the mixture.

\paragraph{Setting.} We consider a mixture of \(k\) Gaussian distributions \(N(\mu_i^0, \Sigma^0)\) with mixing weights \(p_i\) for \(i=1,...,k\),
where \(\mu_i^0 \in \mathbb{R}^d\), \(\Sigma^0 \in \mathbb{R}^{d \times d}\) is positive definite, and \(p_i \geq 0\) and \(\sum_{i=1}^k p_i = 1\). 
Let \(\pmin = \min_i p_i\).

The distribution also satisfies mean separation and mean colinearity:
\begin{itemize}
\tightlist
\item
  Mean separation:
  for some $C_{sep} > 0$ and for all \(i \neq j\),
  \[\left\|\left(\Sigma^{0}\right)^{-1/2} (\mu_i^0 - \mu_j^0)\right\|^2 \geq C_{sep} \log \pmin^{-1}.\]
\item
  Mean colinearity:
  for some vector \(\mu_{base}^0 \in \mathbb{R}^d\) and some unit vector \(u^0 \in \mathbb{R}^d\) and for all \(i\),
  \[\mu_i^0 = \mu_{base}^0 + \langle \mu_i^0 - \mu_{base}^0, u^0\rangle u^0.\]
\end{itemize}

Also define, for $\bm{y}^0$ distributed according to the mixture, 
\[\sigma^2 = \frac{(u^0)^\top \operatorname{cov}(\bm{y}^0)^{-1} u^0}{(u^0)^\top (\Sigma^0)^{-1} u^0}.\]
As shown in \cref{lemma:isotropic-position-sigma-sq}, $\sigma^2$ has the same interpretation as in \cref{sec:pp}: it is equal to the variance of the components in the direction of the means after an isotropic position transformation.

\begin{theorem}[Colinear means algorithm]
\label{thm:colinear-main}
Consider the Gaussian mixture model defined above, with $C_{sep}$ larger than some universal constant. For $\bm{y}^0$ distributed according to the mixture, let 
\[n_0 = \left(\frac{1}{\sigma^2} \cdot \|\operatorname{cov}(\bm{y}^0)\| \cdot \|\operatorname{cov}(\bm{y}^0)^{-1}\| \right)^{O(1)} \cdot (\pmin^{-1} d)^{O(\log \pmin^{-1})}.\]
Given a sample of size $n \geq n_0$ from the mixture, there exists an algorithm that runs in time $n^{O(\log \pmin^{-1})}$ and returns with high probability a partition of \([n]\)  into $k$ sets $C_1, ..., C_k$ such that, if the true clustering of the samples is $S_1, ..., S_k$, then there exists a permutation $\pi$ of $[k]$ such that 
\[1 - \frac{1}{n} \sum_{i=1}^k |C_i \cap S_{\pi(i)}| \leq \left(\frac{\pmin}{k}\right)^{O(1)}.\]
\end{theorem}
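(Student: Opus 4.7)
The plan is to reduce the colinear-means setting to the parallel-pancakes setting of \cref{thm:parallel-pancakes-main} via an empirical whitening transformation, and then invoke that theorem as a black box. First I would compute the empirical mean $\widehat{\mu} = \tfrac{1}{n}\sum_i y_i^0$ and the empirical covariance $\widehat{\Sigma}_{\mathrm{mix}} = \tfrac{1}{n}\sum_i (y_i^0-\widehat{\mu})(y_i^0-\widehat{\mu})^\top$, transform each input sample by $\Phi(y) = \widehat{\Sigma}_{\mathrm{mix}}^{-1/2}(y - \widehat{\mu})$, and run the algorithm from \cref{thm:parallel-pancakes-main} on the transformed samples. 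Since any clustering of the sample is invariant under invertible affine transformations, the partition returned on the transformed samples is automatically a valid partition of the original samples.

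To justify this reduction, I would first verify that the \emph{population} whitening $y \mapsto \operatorname{cov}(\bm{y}^0)^{-1/2}(y - \mathbb{E}\bm{y}^0)$ produces a mixture that exactly fits the hypotheses of \cref{thm:parallel-pancakes-main}: by construction it is in isotropic position; its means remain colinear along the direction $u = \operatorname{cov}(\bm{y}^0)^{-1/2} u^0 / \|\operatorname{cov}(\bm{y}^0)^{-1/2} u^0\|$; the Mahalanobis mean separation is unchanged, since $\|\Sigma^{-1/2}(\mu_i-\mu_j)\| = \|(\Sigma^0)^{-1/2}(\mu_i^0-\mu_j^0)\|$ whenever $\Sigma$ is the push-forward of $\Sigma^0$; and $u^\top \Sigma u$ equals the quantity denoted $\sigma^2$ in the theorem statement, via a short Sherman--Morrison calculation exploiting $\operatorname{cov}(\bm{y}^0) = \Sigma^0 + \alpha\, u^0 (u^0)^\top$ for $\alpha = \Var\langle \bm{\mu}^0, u^0\rangle$.

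The second step is to propagate the fact that we use \emph{empirical} rather than population whitening. By standard matrix concentration for subgaussian mixtures, once $n \geq d \cdot (\|\operatorname{cov}(\bm{y}^0)\| \cdot \|\operatorname{cov}(\bm{y}^0)^{-1}\|)^{O(1)}$, the residual $\widehat{\Sigma}_{\mathrm{mix}}^{-1/2}\operatorname{cov}(\bm{y}^0)\widehat{\Sigma}_{\mathrm{mix}}^{-1/2} - I_d$ has negligible operator norm (and likewise the empirical mean is negligibly close to zero after whitening). Consequently, the transformed samples are i.i.d.\ draws from a mixture that is only in \emph{approximate} isotropic position, with a slightly perturbed separation and a slightly perturbed $\sigma^2$; colinearity is preserved exactly because $\Phi$ is linear.

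The main obstacle is that \cref{thm:parallel-pancakes-main} is stated for exactly isotropic mixtures. To close this gap I would inspect the proof chain leading to that theorem --- in particular \cref{lemma:isotropic-position-covariance}, \cref{lemma:isotropic-position-separation}, and the finite-sample moment bounds in \cref{sec:finitesample_pp} --- and verify that each invocation of isotropy only uses properties that are stable under $(1+o(1))$-multiplicative perturbations of $I_d$ (namely, boundedness of $\|\Sigma\|$ and $\sigma^{-2}$, and concentration of empirical moments around their populations). Adjusting constants to absorb this perturbation yields the stated sample complexity $n_0$, with the condition-number factor $\|\operatorname{cov}(\bm{y}^0)\| \cdot \|\operatorname{cov}(\bm{y}^0)^{-1}\|$ entering precisely through the required accuracy of the empirical whitening; the clustering error guarantee $(\pmin/k)^{O(1)}$ then transfers directly from \cref{thm:parallel-pancakes-main}.
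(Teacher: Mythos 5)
Your plan is essentially the paper's own proof: it likewise whitens using the empirical mean and empirical covariance (its $\hat{W}$ agrees with your $\widehat{\Sigma}_{\mathrm{mix}}^{-1/2}$ up to an orthogonal factor), runs the algorithm of \cref{thm:parallel-pancakes-main} on the transformed sample, and closes the exact-vs-approximate isotropy gap by a perturbation analysis whose accuracy requirements are exactly where the condition-number factor in $n_0$ enters. The stability check you defer is carried out in the paper by \cref{lemma:finite-moments-change} (sum-of-squares closeness of the empirical, approximately whitened moments to the population, exactly whitened moments, so the direction-recovery stage works unchanged) together with \cref{lemma:finite-sample-direction-change} and \cref{lemma:finite-sample-variance-change}, which handle the residual mismatch that the recovered direction is correlated with the exactly whitened means direction $Wu^0/\|Wu^0\|$ while the clustered data are whitened by $\hat{W}$.
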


We introduce some further notation for this section.
Let $\bm{y}^0$ be distributed according to the mixture.
We specify the model as \(\bm{y}^0 = \bm{\mu}^0 + \bm{w}^0\),
where \(\bm{\mu}^0\) takes value \(\mu_i\) with probability \(p_i\) and \(\bm{w}^0 \sim N(0, \Sigma^0)\), with \(\bm{\mu}^0\) and \(\bm{w}^0\) independent of each other.

\subsection{Isotropic position transformation}
\label{sec:isotropic}

In this section we argue that, if we put the mixture in exact isotropic position, the conditions of \cref{thm:parallel-pancakes-main} are satisfied and we can simply run that algorithm.

Assume acces to \(\mathbb{E} \bm{y}^0\) and to an invertible matrix \(W \in \mathbb{R}^{d\times d}\) such that \(W \operatorname{cov}(\bm{y}^0) W^\top = I_d\).
Then, define the random variable $\bm{y}$ by the affine transformation $\bm{y} = W(\bm{y}^0 - \mathbb{E} \bm{y}^0)$.
The distribution of $\bm{y}$ is in isotropic position: it has mean \(0\) and covariance matrix \(I_d\).
Furthermore, \cref{lemma:isotropic-position-model} shows that $\bm{y}$ is a mixture of Gaussians in which the components are affine transformed versions of the original components, and that the mixture continues to satisfy mean separation and mean colinearity.
Then, the conditions of \cref{thm:parallel-pancakes-main} are satisfied. Therefore, if the original input samples are $y^0_1, ..., y^0_n$, we can simply run that algorithm on input samples $W(y^0_1 - \mathbb{E} y^0)$, ..., $W(y^0_n - \mathbb{E} y^0)$\footnote{It is straightforward that if sample $y^0_i$ comes from the $i$-th component in the original mixture then $W(y^0_i - \mathbb{E} y^0)$ continues to come from the $i$-th component in the affine transformed mixture.}.

We define now some variables used to state \cref{lemma:isotropic-position-model}. Recall that $\bm{y} = W(\bm{y}^0 - \mathbb{E} \bm y^0)$. Define $\mu_i = W(\mu_i^0 - \mathbb{E} \bm y^0)$, $\Sigma = W \Sigma^0 W^\top$, and $u = \frac{W u^0}{\|W u^0\|v}$. Also define random variables $\bm{\mu} = W (\bm{\mu}^0 - \mathbb{E} \bm y^0)$ and $\bm{w} = W \bm{w}^0$.

\begin{lemma}[Model after isotropic position transformation]
\label{lemma:isotropic-position-model}
The random variable \(\bm{y}\) is distributed according to a mixture of \(k\) Gaussian distributions \(N(\mu_i, \Sigma)\) with mixing weights \(p_i\) for \(i=1,...,k\), where \(\Sigma\) is positive definite.
Alternatively, we specify the model as \(\bm{y} = \bm{\mu} + \bm{w}\), with \(\bm{\mu}\) and \(\bm{w}\) independent of each other.
Furthermore, for all \(i \neq j\) we have mean separation
\[\left\|\Sigma^{-1/2} (\mu_i - \mu_j)\right\|^2 \geq C_{sep} \log \pmin^{-1}\]
and for all \(i\) we have mean colinearity
\[\mu_i = \langle \mu_i, u\rangle u.\]
\end{lemma}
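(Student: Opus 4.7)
The plan is to verify the four assertions of the lemma (mixture structure, positive definiteness of $\Sigma$, mean separation, and mean colinearity) by straightforward algebraic bookkeeping, exploiting the fact that $W$ is invertible and that an affine transformation preserves both Gaussianity and Mahalanobis distances.

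First, I would establish the mixture decomposition together with the independence claim. Starting from $\bm y^0 = \bm \mu^0 + \bm w^0$ with $\bm \mu^0 \perp \bm w^0$, I would simply apply the affine map $\bm y = W(\bm y^0 - \mathbb{E} \bm y^0)$ to both sides, obtaining
\[
\bm y \;=\; W(\bm \mu^0 - \mathbb{E} \bm y^0) + W\bm w^0 \;=\; \bm \mu + \bm w.
\]
Independence of $\bm \mu$ and $\bm w$ is inherited from independence of $\bm \mu^0$ and $\bm w^0$ because the two are measurable functions of disjoint sources of randomness. The random vector $\bm \mu$ takes value $\mu_i = W(\mu_i^0 - \mathbb{E}\bm y^0)$ with probability $p_i$, and $\bm w = W\bm w^0 \sim N(0, W\Sigma^0 W^\top) = N(0,\Sigma)$. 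Conditioning on $\bm \mu = \mu_i$ therefore makes $\bm y$ Gaussian with mean $\mu_i$ and covariance $\Sigma$, exhibiting $\bm y$ as a mixture $\sum_i p_i N(\mu_i,\Sigma)$. Positive definiteness of $\Sigma = W \Sigma^0 W^\top$ follows from invertibility of $W$ and positive definiteness of $\Sigma^0$.

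Next, for mean separation I would exploit the identity $(W \Sigma^0 W^\top)^{-1} = W^{-\top}(\Sigma^0)^{-1} W^{-1}$, valid because $W$ is invertible. Then for any $i,j$,
\[
\bignorm{\Sigma^{-1/2}(\mu_i - \mu_j)}^2
= (\mu_i^0 - \mu_j^0)^\top W^\top (W\Sigma^0W^\top)^{-1} W (\mu_i^0 - \mu_j^0)
= \bignorm{(\Sigma^0)^{-1/2}(\mu_i^0 - \mu_j^0)}^2,
\]
so the Mahalanobis separation is preserved exactly and the lower bound $C_{sep}\log \pmin^{-1}$ transfers verbatim from the assumption on the original mixture.

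Finally, for mean colinearity I would first note that $\mathbb{E} \bm y^0 = \sum_i p_i \mu_i^0$ lies on the affine line $\mu_{base}^0 + \mathbb{R} u^0$, since every $\mu_i^0$ does and $\sum_i p_i = 1$. Hence $\mu_i^0 - \mathbb{E} \bm y^0 = \alpha_i u^0$ for some scalar $\alpha_i \in \mathbb{R}$, and therefore $\mu_i = W(\mu_i^0 - \mathbb{E} \bm y^0) = \alpha_i W u^0 = (\alpha_i \|Wu^0\|) \cdot u$. Thus $\mu_i$ is a scalar multiple of the unit vector $u$, which is equivalent to $\mu_i = \iprod{\mu_i,u} u$.

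There is no real obstacle here: the lemma is essentially a catalogue of facts about how Gaussian mixtures, Mahalanobis distances, and affine lines behave under the invertible linear map $W$. The only mildly delicate point is confirming that the mean separation is an exact invariant of the whitening map, which is why I made that step explicit via the $(W\Sigma^0W^\top)^{-1}$ identity.
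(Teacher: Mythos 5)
Your proposal is correct and follows essentially the same route as the paper's proof: the same affine-map bookkeeping for the mixture structure and positive definiteness, the same $(W\Sigma^0 W^\top)^{-1} = W^{-\top}(\Sigma^0)^{-1}W^{-1}$ identity for preservation of the Mahalanobis separation, and the same observation that $\mu_i^0 - \mathbb{E}\bm y^0$ is a scalar multiple of $u^0$ (hence $\mu_i$ a scalar multiple of $u$) for colinearity. No gaps.
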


\begin{proof}
Recall that \(\bm{y}^0 = \bm{\mu}^0 + \bm{w}^0\). Then $W(\bm{y}^0 - \mathbb{E} \bm y^0) = W(\bm{\mu}^0 - \mathbb{E} \bm y^0) + W\bm{w}^0$, so $\bm{y} = \bm{\mu} + \bm{w}$.
Also note that \(\bm{\mu}\) takes value \(W(\mu_i^0-\mathbb{E} \bm y^0)=\mu_i\) with probability \(p_i\) and \(\bm{w} \sim N(0, W \Sigma^0 W^\top) = N(0, \Sigma)\).
Therefore, \(\bm{y}\) is distributed according to a mixture of \(k\) Gaussian distributions \(N(\mu_i, \Sigma)\) with mixing weights \(p_i\).

To show that \(\Sigma\) is positive definite, we note that for any vector \(v \in\mathbb{R}^d\) with \(v \neq 0\) we have that
\[v^\top \Sigma v = v^\top W \Sigma^0 W^\top v = (W^\top v)^\top \Sigma^0 W^\top v > 0,\]
where we used that \(W^\top v \neq 0\) because \(W\) is invertible, after which we used that \(\Sigma^0\) is positive definite.

We prove now mean colinearity and mean separation. We start with mean colinearity. Recall that $\mu_i^0 = \mu_{base}^0 + \langle \mu_i^0 - \mu_{base}^0, u^0\rangle u^0$. Then, using that $\mathbb{E} \bm y^0 = \mathbb{E} \bm \mu^0$,
\[\mathbb{E} \bm y^0 = \mu_{base}^0 + \langle \mathbb{E} \bm y^0 - \mu_{base}^0, u^0\rangle u^0,\]
so
\[\mu_i = W(\mu_i^0 - \mathbb{E} \bm y^0) = W(\langle \mu_i^0 - \mathbb{E} \bm{y}^0, u^0 \rangle u^0) = \langle \mu_i^0 - \mathbb{E} \bm{y}^0, u^0\rangle W u^0.\]
Then, using that \(u = \frac{W u^0}{\|W u^0\|}\),
\begin{align*}
\langle \mu_i, u\rangle u
&= \left\langle \langle \mu_i^0 - \mathbb{E} \bm{y}^0, u^0\rangle W u^0, \frac{W u^0}{\|W u^0\|}\right\rangle \frac{W u^0}{\|W u^0\|}\\
&= \langle \mu_i^0 - \mathbb{E} \bm{y}^0, u^0\rangle \left\langle \frac{W u^0}{\|W u^0\|}, \frac{W u^0}{\|W u^0\|}\right\rangle W u^0\\
&= \langle \mu_i^0 - \mathbb{E} \bm{y}^0, u^0\rangle W u^0\\
&= \mu_i,
\end{align*}
which proves mean colinearity. For mean separation, we have that
\begin{align*}
\left\|\Sigma^{-1/2} (\mu_i - \mu_j)\right\|^2
&= \left\| (W \Sigma^0 W^\top)^{-1/2} W (\mu_i^0 - \mu_j^0) \right\|^2\\
&= (\mu_i^0 - \mu_j^0)^\top W^\top (W \Sigma^0 W^\top)^{-1} W (\mu_i^0 - \mu_j^0)\\
&= (\mu_i^0 - \mu_j^0)^\top W^\top (W^\top)^{-1} (\Sigma^0)^{-1} W^{-1} W (\mu_i^0 - \mu_j^0))\\
&= (\mu_i^0 - \mu_j^0)^\top (\Sigma^0)^{-1} (\mu_i^0 - \mu_j^0)\\
&= \left\|(\Sigma^0)^{-1/2} (\mu_i^0 - \mu_j^0)\right\|^2\\
&\geq C_{sep} \log \pmin^{-1}.
\end{align*}
\end{proof}

\begin{lemma}
\label{lemma:isotropic-position-sigma-sq}
\[\sigma^2 = \frac{(u^0)^\top \operatorname{cov}(\bm{y}^0)^{-1} u^0}{(u^0)^\top (\Sigma^0)^{-1} u^0} = u^\top \Sigma u.\]
\end{lemma}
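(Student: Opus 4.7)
The plan is to verify the second equality $u^\top \Sigma u = \frac{(u^0)^\top \operatorname{cov}(\bm{y}^0)^{-1} u^0}{(u^0)^\top (\Sigma^0)^{-1} u^0}$ by direct computation; the first equality is just the definition of $\sigma^2$ from the Setting paragraph. I will start by rewriting $u^\top \Sigma u$ in terms of the original quantities $u^0$, $\Sigma^0$, and $\operatorname{cov}(\bm{y}^0)$. Since $W \operatorname{cov}(\bm{y}^0) W^\top = I_d$, inverting both sides yields the identity $W^\top W = \operatorname{cov}(\bm{y}^0)^{-1}$. Substituting $u = W u^0 / \|W u^0\|$ and $\Sigma = W \Sigma^0 W^\top$ into $u^\top \Sigma u$ then produces
\[
u^\top \Sigma u
= \frac{(u^0)^\top W^\top W \Sigma^0 W^\top W u^0}{(u^0)^\top W^\top W u^0}
= \frac{(u^0)^\top \operatorname{cov}(\bm{y}^0)^{-1} \Sigma^0 \operatorname{cov}(\bm{y}^0)^{-1} u^0}{(u^0)^\top \operatorname{cov}(\bm{y}^0)^{-1} u^0}\,.
\]

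The next step exploits mean colinearity. Because every $\mu_i^0 - \mu_{base}^0$ is parallel to $u^0$, the same holds for every $\mu_i^0 - \mathbb{E}\bm{\mu}^0$, so $\operatorname{cov}(\bm{\mu}^0) = \alpha\, u^0 (u^0)^\top$ for some $\alpha \ge 0$. Independence of $\bm{\mu}^0$ and $\bm{w}^0$ then gives the rank-one perturbation
\[
\operatorname{cov}(\bm{y}^0) = \Sigma^0 + \alpha\, u^0 (u^0)^\top\,.
\]
This is the one non-routine observation in the proof, and everything else follows mechanically. Applying the Sherman--Morrison formula and simplifying yields the key identity
\[
\operatorname{cov}(\bm{y}^0)^{-1} u^0 \;=\; \frac{(\Sigma^0)^{-1} u^0}{1 + \alpha \beta}\,, \qquad \text{where } \beta \seteq (u^0)^\top (\Sigma^0)^{-1} u^0\,.
\]

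The final step is bookkeeping with this identity. The denominator in the rewritten expression for $u^\top \Sigma u$ becomes $(u^0)^\top \operatorname{cov}(\bm{y}^0)^{-1} u^0 = \beta/(1+\alpha\beta)$, and the numerator becomes $((\Sigma^0)^{-1} u^0)^\top \Sigma^0 ((\Sigma^0)^{-1} u^0)/(1+\alpha\beta)^2 = \beta/(1+\alpha\beta)^2$, so their ratio equals $1/(1+\alpha\beta)$. On the other hand, the target expression $\frac{(u^0)^\top \operatorname{cov}(\bm{y}^0)^{-1} u^0}{(u^0)^\top (\Sigma^0)^{-1} u^0}$ equals $(\beta/(1+\alpha\beta))/\beta = 1/(1+\alpha\beta)$ as well, so the two match. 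I do not anticipate a serious obstacle: the only conceptual point is recognizing that mean colinearity plus independence collapses $\operatorname{cov}(\bm{y}^0)$ to a rank-one update of $\Sigma^0$, after which Sherman--Morrison does the rest.
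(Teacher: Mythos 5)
Your proof is correct, but it follows a genuinely different route from the paper's. The paper never touches $\operatorname{cov}(\bm{y}^0)^{-1}$ directly: it instead exploits the invariance $\|(\Sigma^0)^{-1/2}(\mu_i^0-\mu_j^0)\|^2=\|\Sigma^{-1/2}(\mu_i-\mu_j)\|^2$ established in \cref{lemma:isotropic-position-model}, evaluates the left side via colinearity as $\|(\Sigma^0)^{-1/2}u^0\|^2\langle\mu_i^0-\mu_j^0,u^0\rangle^2$, evaluates the right side using the isotropic-position form $\Sigma=I_d-(1-\sigma^2)uu^\top$ from \cref{lemma:isotropic-position-covariance} to get $\tfrac{1}{\sigma^2}\|Wu^0\|^2\langle\mu_i^0-\mu_j^0,u^0\rangle^2$, cancels the common factor (which is nonzero by mean separation), and finally invokes $W=Q\operatorname{cov}(\bm{y}^0)^{-1/2}$ to rewrite $\|Wu^0\|^2$. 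Your argument instead works entirely on the original side of the transformation: you reduce $u^\top\Sigma u$ to $\frac{(u^0)^\top\operatorname{cov}(\bm{y}^0)^{-1}\Sigma^0\operatorname{cov}(\bm{y}^0)^{-1}u^0}{(u^0)^\top\operatorname{cov}(\bm{y}^0)^{-1}u^0}$ via $W^\top W=\operatorname{cov}(\bm{y}^0)^{-1}$, observe that colinearity makes $\operatorname{cov}(\bm{y}^0)=\Sigma^0+\alpha\,u^0(u^0)^\top$ a rank-one update, and finish with Sherman--Morrison. What your approach buys: it is self-contained (no reliance on \cref{lemma:isotropic-position-covariance} or \cref{lemma:isotropic-position-model}), it never divides by $\langle\mu_i^0-\mu_j^0,u^0\rangle^2$ and so does not need the mean-separation hypothesis, and it only uses $W\operatorname{cov}(\bm{y}^0)W^\top=I_d$ rather than the polar-decomposition fact $W=Q\operatorname{cov}(\bm{y}^0)^{-1/2}$. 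What the paper's approach buys is brevity given the surrounding infrastructure, since those structural lemmas are needed elsewhere anyway. All your intermediate identities check out ($\operatorname{cov}(\bm{y}^0)^{-1}u^0=(\Sigma^0)^{-1}u^0/(1+\alpha\beta)$, numerator $\beta/(1+\alpha\beta)^2$, denominator $\beta/(1+\alpha\beta)$, both sides equal $1/(1+\alpha\beta)$).
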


\begin{proof}
For the purposes of this proof, we define $\sigma^2 = u^\top \Sigma u$ as in \cref{sec:isotropic-position-properties} and prove that it also matches the definition in this section.

We have, as in the proof of \cref{lemma:isotropic-position-model}, that
\[\|(\Sigma^0)^{-1/2}(\mu_i^0 - \mu_j^0)\|^2 = \|\Sigma^{-1/2}(\mu_i - \mu_j)\|^2.\]
For the left-hand side, we have that  
\begin{align*}
\|(\Sigma^0)^{-1/2}(\mu_i^0 - \mu_j^0)\|^2
&= \|(\Sigma^0)^{-1/2} \langle \mu_i^0 - \mu_j^0, u^0\rangle u^0\|^2\\
&= \|(\Sigma^0)^{-1/2} u^0\|^2 \cdot \langle \mu_i^0 - \mu_j^0, u^0\rangle^2.
\end{align*}
For the right-hand side, using from \cref{lemma:isotropic-position-covariance} that $\Sigma = I_d - (1-\sigma^2) uu^\top$, we have that 
\begin{align*}
\|\Sigma^{-1/2}(\mu_i - \mu_j)\|^2
&= \left\| \left(I_d - (1-\sigma^2) uu^\top\right)^{-1/2} (\mu_i - \mu_j)) \right\|^2\\
&= \left\| \left(I_d - (1-\sigma^2) \frac{(Wu^0)(Wu^0)^\top}{\|Wu^0\|^2}\right)^{-1/2} \langle \mu_i^0 - \mu_j^0, u^0\rangle W u^0 \right\|^2\\
&= \left\|\left(I_d + \left(\frac{1}{\sigma} - 1\right) \frac{(Wu^0)(Wu^0)^\top}{\|Wu^0\|^2}\right)W u^0\right\|^2 \cdot \langle \mu_i^0 - \mu_j^0, u^0\rangle^2\\
&= \frac{1}{\sigma^2} \cdot \|W u^0\|^2 \cdot \langle \mu_i^0 - \mu_j^0, u^0\rangle^2.
\end{align*}
Therefore
\[\|(\Sigma^0)^{-1/2} u^0\|^2 \cdot \langle \mu_i^0 - \mu_j^0, u^0\rangle^2 = \frac{1}{\sigma^2} \cdot \|W u^0\|^2 \cdot \langle \mu_i^0 - \mu_j^0, u^0\rangle^2,\]
so
\[\sigma^2 = \frac{\|W u^0\|^2}{\|(\Sigma^0)^{-1/2} u^0\|^2} = \frac{(u^0)^\top \operatorname{cov}(\bm{y}^0)^{-1} u^0}{(u^0)^\top (\Sigma^0)^{-1} u^0},\]
where we used that, by \cref{lemma:finite-sample-isotropic-matrix-orthogonal}, $W = Q \operatorname{cov}(\bm{y}^0)^{-1/2}$ for an orthogonal matrix $Q$, so $\|W u^0\| = \|\operatorname{cov}(\bm{y}^0)^{-1/2} u^0\|$.
\end{proof}

\subsection{Finite sample isotropic position transformation}
\label{sec:finitesample-isotropic-transformation}

Without access to \(\mathbb{E} \bm{y}^0\) and to $W$, we apply the isotropic position transformation with \(\hat{\mathbb{E}} \bm{y}^0\) and some matrix \(\hat{W} \in \mathbb{R}^{d \times d}\) defined as follows.
Let the singular value decomposition of \(\widehat{\operatorname{cov}}(\bm y^0)\) be \(\hat{U} \hat{\Lambda} \hat{U}^\top\).
Then define $\hat{W}$ and $W$ as
\begin{equation}
\hat{W} = (\hat{U}^\top\widehat{\operatorname{cov}}(\bm y^0)\hat{U})^{-1/2} \hat{U}^\top, \quad W = (\hat{W}\operatorname{cov}(\bm y^0)\hat{W}^\top)^{-1/2} \hat{W}.
\end{equation}
This choice is analogous to that in Appendix C in \cite{MR3385380-Hsu13}.
By \cref{lemma:finite-sample-isotropic-matrix-facts}, we have that $\hat{W} \widehat{\operatorname{cov}}(\bm{y}^0)\hat{W}^\top = I_d$ and $W \operatorname{cov}(\bm{y}^0) W^\top = I_d$.
Hence, $\hat{W}$ corresponds to an isotropic position transformation for the empirical covariance matrix, and $W$ to one for the population covariance matrix.
In our algorithm, we will apply the approximate isotropic position transformation to input samples $y_1^0, ..., y_n^0$ as $\hat{W}(y_1^0 - \hat{\mathbb{E}} y^0), ..., \hat{W}(y_n^0 - \hat{\mathbb{E}} y^0)$.

\subsection{Finite sample bounds}
\label{sec:finitesample_col}

\cref{lemma:finite-moments-change} gives a sum-of-squares proof that the empirical moments of the mixture with approximate isotropic position transformation are close to the population moments of the mixture with exact isotropic position transformation.
This lemma is supported by \cref{lemma:finite-moments-change-mean} and \cref{lemma:finite-moments-change-covariance}, which prove that the moments do not change much due to the use of $\hat{\mathbb{E}} \bm{y}^0$ and $\hat{W}$, respectively.

Additionally, for arbitrary unit vectors $v$, \cref{lemma:finite-sample-direction-change} proves that $\langle \hat{W}(\mu_i^0 - \mu_j^0), v\rangle$ is close to $\langle W(\mu_i^0 - \mu_j^0), v\rangle$ and \cref{lemma:finite-sample-variance-change} proves that $v^\top \hat{W} (\Sigma^0)^{1/2}$ is close to $v^\top W (\Sigma^0)^{1/2}$. These facts are used in the proof of \cref{thm:colinear-main} to argue that the clustering algorithm is correct.

\begin{lemma}[Closeness of empirical approximate isotropic position moments and population exact isotropic position moments]
\label{lemma:finite-moments-change}
Let $\eta < 0.001$. For
\[n \geq \left(\|\operatorname{cov}(\bm{y}^0)\| \cdot \|\operatorname{cov}(\bm{y}^0)^{-1}\|\right)^{O(1)} \cdot (\pmin^{-1} d)^{O(t)} \eta^{-2} \epsilon^{-1},\]
with probability $1-\epsilon$, 
\[\left\{\|v\|^2=1\right\} \sststile{2t}{v} \left\{\hat{\mathbb{E}}\langle \hat{W} (\bm y^0 - \hat{\mathbb{E}} \bm y^0), v\rangle^{2t} \leq (1+\eta) \cdot \mathbb{E}\langle W (\bm y^0 - \mathbb{E} \bm y^0), v\rangle^{2t} + \eta\right\},\]
\[\left\{\|v\|^2=1\right\} \sststile{2t}{v} \left\{\hat{\mathbb{E}}\langle \hat{W} (\bm y^0 - \hat{\mathbb{E}} \bm y^0), v\rangle^{2t} \geq (1-\eta) \cdot \mathbb{E}\langle W (\bm y^0 - \mathbb{E} \bm y^0), v\rangle^{2t} - \eta\right\}.\]
\end{lemma}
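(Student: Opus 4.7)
The plan is to bridge the two quantities $\hat{\mathbb{E}}\langle \hat{W} (\bm y^0 - \hat{\mathbb{E}} \bm y^0), v\rangle^{2t}$ and $\mathbb{E}\langle W (\bm y^0 - \mathbb{E} \bm y^0), v\rangle^{2t}$ through three intermediate quantities, substituting one source of error at a time. First, swap $\hat{\mathbb{E}} \bm y^0$ for $\mathbb{E} \bm y^0$ using \cref{lemma:finite-moments-change-mean}. Second, swap $\hat W$ for $W$ using \cref{lemma:finite-moments-change-covariance}, which controls how the moments change when the empirical whitening matrix is replaced by the population one. Third, pass from the empirical average over the sample to the true expectation by applying \cref{lemma:finite-moments-change-empirical} to the random vector $\bm y = W(\bm y^0 - \mathbb{E} \bm y^0)$; since $\bm y$ is in exact isotropic position by construction of $W$, we have $\|\operatorname{cov}(\bm y)^{1/2} v\|^2 = \|v\|^2 = 1$, which supplies the hypothesis needed by that lemma.

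Each swap is itself a low-degree SOS implication under the axiom $\|v\|^2 = 1$, so composing them yields the desired SOS implication. To get the quantitative form with multiplicative $(1+\eta)$ and additive $\eta$ error, I would invoke each supporting lemma with a parameter $\eta' = \Theta(\eta)$ small enough that the three errors combine without blowing up constants, and then compose using Young-type SOS inequalities of the form $(a+b)^{2t} \leq (1+\eta') a^{2t} + C_{\eta',t}\, b^{2t}$. The condition number factor $\|\operatorname{cov}(\bm y^0)\| \cdot \|\operatorname{cov}(\bm y^0)^{-1}\|$ in the sample-size bound enters through \cref{lemma:finite-moments-change-covariance}: approximating $W$ by $\hat W$ requires the empirical covariance to approximate the population covariance in a relative (multiplicative) sense, which in the whitening step brings in both the spectral norm of $\operatorname{cov}(\bm y^0)$ and of its inverse. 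The factor also propagates through \cref{lemma:finite-moments-change-mean}, since the displacement $\hat{\mathbb{E}}\bm y^0 - \mathbb{E}\bm y^0$ has covariance on the order of $\operatorname{cov}(\bm y^0)/n$ and must be small after being whitened by $\hat W$.

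The main obstacle will be controlling the accumulation of errors across the three swaps while keeping the multiplicative degradation to $(1+\eta)$. Each intermediate quantity is a $2t$-th moment of an inner product, and the naive triangle-style SOS bound $(a+b)^{2t} \leq 2^{2t-1}(a^{2t}+b^{2t})$ would cost an exponential-in-$t$ factor that the statement does not allow in the main term. Using Young's inequality in its SOS form forces the perturbation terms (proportional to $\hat W - W$ and $\hat{\mathbb{E}}\bm y^0 - \mathbb{E}\bm y^0$) to be polynomially small after the $2t$-th power, which is why the $(\pmin^{-1} d)^{O(t)}$ factor appears in the sample size. The symmetric lower bound then follows by the same argument applied to the opposite direction provided by each supporting lemma, with the same parameters.
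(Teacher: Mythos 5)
Your proposal is correct and follows essentially the same route as the paper: decompose the discrepancy into the three swaps (empirical mean $\to$ population mean via \cref{lemma:finite-moments-change-mean}, $\hat W \to W$ via \cref{lemma:finite-moments-change-covariance}, and empirical average $\to$ population expectation via \cref{lemma:finite-moments-change-empirical} applied to the exactly-isotropic $W(\bm y^0 - \mathbb{E}\bm y^0)$), invoke each with error parameter $\Theta(\eta)$ and failure probability $\epsilon/3$, and chain the resulting bounds. The only cosmetic difference is that the Young-type SOS inequalities you describe live inside the two supporting lemmas rather than in the final composition, which in the paper is just a direct substitution of three bounds of the form $A \le (1+\eta/10)B + \eta/10$.
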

\begin{proof}
Select $n$ such that the results of \cref{lemma:finite-moments-change-mean}, \cref{lemma:finite-moments-change-covariance}, and \cref{lemma:finite-moments-change-empirical} hold each with probability $1-\epsilon/3$. 
Then, overall, all three results hold with probability $1-\epsilon$.
Then we have with probability $1-\epsilon$ that 
\begin{align*}
&\hat{\mathbb{E}}\langle \hat{W} (\bm y^0 - \hat{\mathbb{E}} \bm y^0), v\rangle^{2t}\\
&\quad \leq (1+\eta/10) \hat{\mathbb{E}}\langle \hat{W} (\bm y^0 - \mathbb{E} \bm y^0), v\rangle^{2t} + \eta/10\\
&\quad \leq (1+\eta/10) \left((1+\eta/10) \hat{\mathbb{E}}\langle W (\bm y^0 - \mathbb{E} \bm y^0), v\rangle^{2t} + \eta/10 \right) + \eta/10\\
&\quad \leq (1+\eta/10) \left((1+\eta/10) \left( (1+\eta/10) \mathbb{E}\langle W (\bm y^0 - \mathbb{E} \bm y^0), v\rangle^{2t} + \eta/10 \right) + \eta/10 \right) + \eta/10\\
&\quad \leq (1+\eta) \mathbb{E}\langle W (\bm y^0 - \mathbb{E} \bm y^0), v\rangle^{2t} + \eta.
\end{align*}
and 
\begin{align*}
&\hat{\mathbb{E}}\langle \hat{W} (\bm y^0 - \hat{\mathbb{E}} \bm y^0), v\rangle^{2t}\\
&\quad \geq (1-\eta/10) \hat{\mathbb{E}}\langle \hat{W} (\bm y^0 - \mathbb{E} \bm y^0), v\rangle^{2t} - \eta/10\\
&\quad \geq (1-\eta/10) \left((1-\eta/10) \hat{\mathbb{E}}\langle W (\bm y^0 - \mathbb{E} \bm y^0), v\rangle^{2t} - \eta/10 \right) - \eta/10\\
&\quad \geq (1-\eta/10) \left((1-\eta/10) \left( (1-\eta/10) \mathbb{E}\langle W (\bm y^0 - \mathbb{E} \bm y^0), v\rangle^{2t} - \eta/10 \right) - \eta/10 \right) - \eta/10\\
&\quad \geq (1-\eta) \mathbb{E}\langle W (\bm y^0 - \mathbb{E} \bm y^0), v\rangle^{2t} - \eta.
\end{align*}

\end{proof}

\begin{lemma}%
\label{lemma:finite-moments-change-mean}
Let $\eta < 0.001$. For
\[n \geq kd \log^2(d/\epsilon) \cdot \left(\frac{t \cdot \|\operatorname{cov}(\bm{y}^0)\| \cdot \|\operatorname{cov}(\bm{y}^0)^{-1}\|}{\eta}\right)^{O(1)},\]
with probability $1-\epsilon$,
\[\left\{\|v\|^2=1\right\} \sststile{2t}{v} \left\{\hat{\mathbb{E}}\langle \hat{W} (\bm y^0 - \hat{\mathbb{E}} \bm y^0), v\rangle^{2t} \leq (1+\eta) \cdot \hat{\mathbb{E}}\langle \hat{W} (\bm y^0 - \mathbb{E} \bm y^0), v\rangle^{2t} + \eta\right\},\]
\[\left\{\|v\|^2=1\right\} \sststile{2t}{v} \left\{\hat{\mathbb{E}}\langle \hat{W} (\bm y^0 - \hat{\mathbb{E}} \bm y^0), v\rangle^{2t} \geq (1-\eta) \cdot \hat{\mathbb{E}}\langle \hat{W} (\bm y^0 - \mathbb{E} \bm y^0), v\rangle^{2t} - \eta\right\}.\]
\end{lemma}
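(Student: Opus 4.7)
The key observation is that $\hat{W}(\bm y^0 - \hat{\mathbb{E}} \bm y^0) = \hat{W}(\bm y^0 - \mathbb{E} \bm y^0) - \hat{W}\delta$, where $\delta := \hat{\mathbb{E}} \bm y^0 - \mathbb{E} \bm y^0$ is a $v$-independent vector that concentrates to zero at rate $1/\sqrt{n}$. Setting $a := \langle \hat{W}(\bm y^0 - \mathbb{E} \bm y^0), v\rangle$ (which depends on the sample index) and $b := \langle \hat{W}\delta, v\rangle$ (which does not), the goal reduces to comparing $\hat{\mathbb{E}}(a-b)^{2t}$ with $(1+\eta)\hat{\mathbb{E}} a^{2t} + \eta$ via a degree-$2t$ SoS proof under the axiom $\|v\|^2 = 1$.

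The first step is to derive the SoS Young-type inequality $\sststile{2t}{v} (a-b)^{2t} \leq (1+\eta/2)\, a^{2t} + C_{t,\eta}\, b^{2t}$ with $C_{t,\eta} = (O(t/\eta))^{O(t)}$. The starting point is the elementary SoS identity $(\sqrt{\epsilon}\, x + y/\sqrt{\epsilon})^2 \geq 0$, which rearranges to $(x-y)^2 \leq (1+\epsilon) x^2 + (1+1/\epsilon) y^2$. Raising both sides to the $t$-th power preserves the SoS inequality because $Q^t - P^t = (Q-P)\sum_{i=0}^{t-1} Q^{t-1-i} P^i$ and each factor is itself SoS. Expanding the right-hand side via the binomial theorem yields cross terms $x^{2(t-j)} y^{2j}$, which we absorb using the weighted SoS Young inequality $x^{2(t-j)} y^{2j} \leq \tfrac{t-j}{t}\lambda_j x^{2t} + \tfrac{j}{t}\lambda_j^{-(t-j)/j} y^{2t}$ with parameters $\lambda_j$ chosen so that the total coefficient of $x^{2t}$ stays at most $1+\eta/2$. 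Choosing $\epsilon = O(\eta/t)$ in the initial step is essential to prevent the multiplicative error from blowing up under the $t$-fold expansion. The matching lower bound $(a-b)^{2t} \geq (1-\eta/2) a^{2t} - C_{t,\eta} b^{2t}$ follows symmetrically from $(\sqrt{\epsilon}\, x - y/\sqrt{\epsilon})^2 \geq 0$.

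The second step is to take empirical expectation (valid because $b$ is sample-independent) and then apply SoS Cauchy--Schwarz $b^{2t} = \langle \hat{W}\delta, v\rangle^{2t} \leq \|\hat{W}\delta\|^{2t}$ under $\|v\|^2 = 1$, reducing the problem to showing $\|\hat{W}\delta\| \leq (\eta/t)^{O(1)}$ with probability $1-\epsilon$. Writing $\|\hat{W}\delta\| \leq \|\hat{W}\|\cdot \|\delta\|$, standard sub-Gaussian empirical-mean concentration yields $\|\delta\| = O(\sqrt{d\,\|\operatorname{cov}(\bm y^0)\|\log(d/\epsilon)/n})$, while empirical-covariance concentration (Matrix Bernstein for a $k$-component Gaussian mixture, with sub-Gaussian tail controlled by $\|\operatorname{cov}(\bm y^0)\|$; this is where the factor of $k$ in the sample complexity arises) yields $\|\hat{W}\| \leq 2\|\operatorname{cov}(\bm y^0)^{-1/2}\|$. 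Both hold with probability $1-\epsilon/2$ for $n$ satisfying the lemma's bound.

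The main obstacle lies in the first step: obtaining the $(1+\eta/2)$ multiplicative coefficient rather than the trivial $2^{2t-1}$ that one would get from the plain SoS triangle inequality requires careful parameter balancing in the iterated weighted AM-GM, since errors tend to amplify rapidly under the $t$-fold binomial expansion. The second step is essentially standard concentration for Gaussian mixtures and can be handled by classical tools.
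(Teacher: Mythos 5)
Your proposal follows essentially the same route as the paper: the same decomposition $\hat W(\bm y^0-\hat{\mathbb E}\bm y^0)=\hat W(\bm y^0-\mathbb E\bm y^0)-\hat W\delta$, the same refined SoS triangle inequality with parameter $\delta=O(\eta/t)$ giving multiplicative error $1+O(\eta)$ and a $(O(t/\eta))^{O(t)}$ blow-up on the error term (this is exactly the paper's \cref{lemma:sos-triangle}, which you re-derive rather than cite), the same Cauchy--Schwarz step $\langle\hat W\delta,v\rangle^{2t}\le\|\hat W\delta\|^{2t}$ under $\|v\|^2=1$, and the same concentration inputs; bounding $\|\hat W\delta\|\le\|\hat W\|\,\|\delta\|$ instead of factoring through $W^{-1}$ as the paper does is harmless given the condition-number factor in the sample bound. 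The one step that would not work as literally written is the lower bound: $(\sqrt\epsilon\,x-y/\sqrt\epsilon)^2\ge 0$ gives $(x-y)^2\ge(1-\epsilon)x^2-(1/\epsilon-1)y^2$, but you cannot raise this to the $t$-th power, since the right-hand side is not a sum of squares (and the real inequality $P^t\ge Q^t$ fails for even $t$ when $Q$ is very negative). The correct and standard fix, which the paper uses, is to apply the \emph{upper}-bound inequality to $a^{2t}=((a-b)+b)^{2t}\le(1+\delta)^{2t-1}(a-b)^{2t}+(1+1/\delta)^{2t-1}b^{2t}$ and rearrange; with this one-line repair your argument matches the paper's.
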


\begin{proof}
See \cref{proofs-colinear}.
\end{proof}

\begin{lemma}%
\label{lemma:finite-moments-change-covariance}
Let $\eta < 0.001$. For
\[n \geq kd \log^2(d) \epsilon^{-1} \cdot \left(\frac{t \pmin^{-1} d \cdot \|\operatorname{cov}(\bm{y}^0)\| \cdot \|\operatorname{cov}(\bm{y}^0)^{-1}\|}{\eta}\right)^{O(1)},\]
with probability $1-\epsilon$,
\[\left\{\|v\|^2=1\right\} \sststile{2t}{v} \left\{\hat{\mathbb{E}}\langle \hat{W} (\bm y^0 - \mathbb{E} \bm y^0), v\rangle^{2t} \leq (1+\eta) \cdot \hat{\mathbb{E}}\langle W (\bm y^0 - \mathbb{E} \bm y^0), v\rangle^{2t} + \eta\right\},\]
\[\left\{\|v\|^2=1\right\} \sststile{2t}{v} \left\{\hat{\mathbb{E}}\langle \hat{W} (\bm y^0 - \mathbb{E} \bm y^0), v\rangle^{2t} \geq (1-\eta) \cdot \hat{\mathbb{E}}\langle W (\bm y^0 - \mathbb{E} \bm y^0), v\rangle^{2t} - \eta\right\}.\]
\end{lemma}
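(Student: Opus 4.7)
The plan is to isolate the perturbation $\hat{W} - W$ as a multiplicative operator on $W$, propagate it to the $2t$-th directional moments via a weighted sum-of-squares triangle inequality, and absorb the resulting residual using a crude moment bound on $\|W(\bm{y}^0 - \mathbb{E}\bm{y}^0)\|^{2t}$. I will prove only the upper bound; the lower bound follows by swapping $b$ with $-b$ in the same argument.

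First, I will use empirical-covariance concentration (essentially the argument behind \cref{lemma:separating-polynomial-finite-sample-norm}, combined with the identity $W = Q\operatorname{cov}(\bm{y}^0)^{-1/2}$ for an orthogonal matrix $Q$ given by \cref{lemma:finite-sample-isotropic-matrix-orthogonal}) to show that for $n$ as in the lemma, $\|\hat{W}\operatorname{cov}(\bm{y}^0)\hat{W}^\top - I\| \leq \delta$ with probability $1-\epsilon$, where $\delta$ can be made polynomially small in $d$, $t$, $\eta$, and the condition number $\|\operatorname{cov}(\bm{y}^0)\|\cdot\|\operatorname{cov}(\bm{y}^0)^{-1}\|$. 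Taking $M = (\hat{W}\operatorname{cov}(\bm{y}^0)\hat{W}^\top)^{-1/2}$, this gives the clean factorization $\hat{W} = M^{-1} W$ with $\|M^{-1} - I\| = O(\delta)$.

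Second, I will establish a weighted sum-of-squares triangle inequality of the form
\[
\sststile{2t}{a,b} (a+b)^{2t} \leq (1+\eta)\, a^{2t} + K\, b^{2t}, \qquad K = (O(t)/\eta)^{2t-1},
\]
obtained by combining the SoS inequality $2ab \leq (\eta/(4t))a^2 + (4t/\eta)b^2$ with $t$-fold iteration and weighted SoS AM-GM on the mixed binomial terms. Plugging in $a = \langle W(\bm{y}^0 - \mathbb{E}\bm{y}^0), v\rangle$ and $b = \langle (M^{-1}-I)\, W(\bm{y}^0 - \mathbb{E}\bm{y}^0), v\rangle$, averaging over the sample, and invoking SoS Cauchy-Schwarz (\cref{lemma:sos-cs}) to derive $\{\|v\|^2 = 1\}\sststile{2t}{v}\langle (M^{-1}-I)z, v\rangle^{2t} \leq \|M^{-1}-I\|^{2t}\|z\|^{2t}$ yields an upper bound of $(1+\eta)\,\hat{\mathbb{E}}\langle W(\bm{y}^0-\mathbb{E}\bm{y}^0), v\rangle^{2t} + K\cdot O(\delta)^{2t}\cdot\hat{\mathbb{E}}\|W(\bm{y}^0-\mathbb{E}\bm{y}^0)\|^{2t}$. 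Since $W(\bm{y}^0-\mathbb{E}\bm{y}^0)$ has population covariance $I_d$ and sub-Gaussian mixture tails, $\hat{\mathbb{E}}\|W(\bm{y}^0-\mathbb{E}\bm{y}^0)\|^{2t} \leq O(d)^t$ with high probability, so choosing $\delta$ polynomially small makes the residual term at most $\eta$, giving the stated inequality.

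The main obstacle will be the tension between wanting a sharp $(1+\eta)$ multiplicative constant on the leading term and keeping the residual controllable. The unweighted SoS triangle inequality \cref{lemma:sos-triangle-two} costs a blowup of $2^{2t-1}$ on the main term, which is unacceptable here; the weighted version cures this but inflates the coefficient on $b^{2t}$ to $K = (t/\eta)^{O(t)}$, which in turn forces $\|M^{-1}-I\|$ --- and hence the empirical-covariance deviation $\delta$, and hence $n^{-1}$ --- to be polynomially small in $t/\eta$ and in the condition number. This chain of dependencies is precisely what produces the polynomial dependence on $\|\operatorname{cov}(\bm{y}^0)\|\cdot\|\operatorname{cov}(\bm{y}^0)^{-1}\|$, $d$, $t$, $\pmin^{-1}$, and $\eta^{-1}$ recorded in the sample-complexity bound of the lemma.
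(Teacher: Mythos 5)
Your proposal follows essentially the same route as the paper's proof: the same factorization $\hat{W}-W = \bigl((\hat{W}\operatorname{cov}(\bm{y}^0)\hat{W}^\top)^{1/2}-I_d\bigr)W$, the same weighted sum-of-squares triangle inequality with parameter $\delta \approx \eta/t$ (the paper's \cref{lemma:sos-triangle}), the same Cauchy--Schwarz control of the cross term under $\|v\|^2=1$, and the same absorption of the residual via covariance concentration together with a crude bound on $\hat{\mathbb{E}}\|W(\bm{y}^0-\mathbb{E}\bm{y}^0)\|^{2t}$. The only slip is quantitative: that empirical moment is $(\pmin^{-1}d)^{O(t)}$ rather than $O(d)^t$, since after isotropization the component means can have norm up to roughly $\pmin^{-1/2}$; this is harmless because the required smallness of $\|M^{-1}-I_d\|$ (and hence the sample size) already scales polynomially in $\pmin^{-1}$, exactly as in the paper.
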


\begin{proof}
See \cref{proofs-colinear}.
\end{proof}

\begin{lemma}
\label{lemma:finite-sample-direction-change}
Let $\eta < 0.001$. Let $v \in \mathbb{R}^d$ be a unit vector. For
\[n \geq kd \log^2(d/\epsilon) \cdot \left(\frac{\|\operatorname{cov}(\bm{y}^0)\| \cdot \|\operatorname{cov}(\bm{y}^0)^{-1}\|}{\eta}\right)^{O(1)},\]
with probability $1-\epsilon$, for all $i, j \in [k]$,
\[|\langle W(\mu_i^0 - \mu_j^0), v\rangle - \langle \hat{W}(\mu_i^0 - \mu_j^0), v\rangle| \leq \eta.\]
\end{lemma}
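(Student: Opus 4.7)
The plan is to reduce the claim via Cauchy--Schwarz to bounding $\|W - \hat{W}\|_{\mathrm{op}}$ and $\|\mu_i^0 - \mu_j^0\|$ separately, since $\|v\|=1$ gives
\[
\bigl|\langle (W-\hat{W})(\mu_i^0-\mu_j^0),v\rangle\bigr| \leq \|W-\hat{W}\|_{\mathrm{op}}\cdot\|\mu_i^0-\mu_j^0\|.
\]

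For the second factor I would use mean colinearity together with the mixing weight lower bound. Since $\mu_i^0-\mu_j^0$ is parallel to $u^0$, we have $p_i\langle \mu_i^0-\mathbb{E}\bm{\mu}^0,u^0\rangle^2 \leq (u^0)^\top\operatorname{cov}(\bm{\mu}^0)u^0 \leq (u^0)^\top\operatorname{cov}(\bm{y}^0)u^0 \leq \|\operatorname{cov}(\bm{y}^0)\|_{\mathrm{op}}$, and hence $\|\mu_i^0-\mu_j^0\| \leq 2\sqrt{\|\operatorname{cov}(\bm{y}^0)\|_{\mathrm{op}}/\pmin}$, which is polynomial in the quantities appearing in the stated sample complexity.

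For the first factor I would use matrix perturbation. Setting $A=\hat{W}\operatorname{cov}(\bm{y}^0)\hat{W}^\top$, the definition of $W$ gives $W=A^{-1/2}\hat{W}$, so $W-\hat{W}=(A^{-1/2}-I_d)\hat{W}$. Since by construction $\hat{W}\widehat{\operatorname{cov}}(\bm{y}^0)\hat{W}^\top=I_d$, we obtain
\[
A-I_d=\hat{W}\bigl(\operatorname{cov}(\bm{y}^0)-\widehat{\operatorname{cov}}(\bm{y}^0)\bigr)\hat{W}^\top,
\]
and therefore $\|A-I_d\|_{\mathrm{op}} \leq \|\hat{W}\|_{\mathrm{op}}^2\cdot\|\operatorname{cov}(\bm{y}^0)-\widehat{\operatorname{cov}}(\bm{y}^0)\|_{\mathrm{op}}$. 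Whenever this is at most $1/2$, a Neumann-series expansion gives $\|A^{-1/2}-I_d\|_{\mathrm{op}} \leq 2\|A-I_d\|_{\mathrm{op}}$, so $\|W-\hat{W}\|_{\mathrm{op}} \leq 2\|\hat{W}\|_{\mathrm{op}}^3\cdot\|\operatorname{cov}(\bm{y}^0)-\widehat{\operatorname{cov}}(\bm{y}^0)\|_{\mathrm{op}}$.

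The main technical step, and the only genuinely probabilistic ingredient, is a matrix concentration inequality of the form $\|\widehat{\operatorname{cov}}(\bm{y}^0)-\operatorname{cov}(\bm{y}^0)\|_{\mathrm{op}} \lesssim \|\operatorname{cov}(\bm{y}^0)\|_{\mathrm{op}}\sqrt{d\log(d/\epsilon)/n}$, holding with probability $1-\epsilon$ for $n$ as in the statement, which via Weyl's inequality also yields $\|\hat{W}\|_{\mathrm{op}}^2 = 1/\lambda_{\min}(\widehat{\operatorname{cov}}(\bm{y}^0)) \leq 2\|\operatorname{cov}(\bm{y}^0)^{-1}\|_{\mathrm{op}}$. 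This is the same kind of bound that already appears inside \cref{lemma:separating-polynomial-finite-sample-norm} and \cref{lemma:finite-moments-change-covariance}, and can be established by a Matrix Bernstein argument applied to the recentered (and suitably truncated) samples. Combining all the estimates yields $\|W-\hat{W}\|_{\mathrm{op}}\cdot\|\mu_i^0-\mu_j^0\| \leq \eta$ for $n$ at the level stated, and a union bound over the $\binom{k}{2}$ pairs $(i,j)$ completes the proof.
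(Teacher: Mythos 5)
Your proposal is correct and follows essentially the same route as the paper: both arguments reduce the claim to a perturbation bound on the whitening map driven by empirical-covariance concentration (the paper's \cref{lemma:finite-sample-isotopic-samples} and \cref{lemma:finite-sample-isotropic-covariance-closeness} package exactly your estimates $\|\hat{W}\|_{\mathrm{op}}^2 \le 2\|\operatorname{cov}(\bm{y}^0)^{-1}\|$ and $\|I_d-(\hat{W}\operatorname{cov}(\bm{y}^0)\hat{W}^\top)^{1/2}\| \le O(\epsilon)\,\|\operatorname{cov}(\bm{y}^0)\|\cdot\|\operatorname{cov}(\bm{y}^0)^{-1}\|$), combined with a colinearity-plus-$\pmin$ bound on the mean gap. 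The only cosmetic difference is that the paper factors $W-\hat{W}=(I_d-\hat{W}W^{-1})W$ and bounds the whitened mean difference $\|W(\mu_i^0-\mu_j^0)\|\le 2\sqrt{\pmin^{-1}}$ via isotropy, whereas you bound $\|W-\hat{W}\|_{\mathrm{op}}$ and $\|\mu_i^0-\mu_j^0\|\le 2\sqrt{\|\operatorname{cov}(\bm{y}^0)\|/\pmin}$ in the original coordinates.
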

\begin{proof}
See \cref{proofs-colinear}.
\end{proof}

\begin{lemma}
\label{lemma:finite-sample-variance-change}
Let $\eta < 0.001$. Let $v \in \mathbb{R}^d$ be a unit vector. For
\[n \geq kd \log^2(d/\epsilon) \cdot \left(\frac{\pmin^{-1} \cdot \|\operatorname{cov}(\bm{y}^0)\| \cdot \|\operatorname{cov}(\bm{y}^0)^{-1}\|}{\eta}\right)^{O(1)},\]
with probability $1-\epsilon$,
\[\| v^\top W (\Sigma^0)^{1/2} - v^\top \hat{W} (\Sigma^0)^{1/2} \| \leq \eta.\]
\end{lemma}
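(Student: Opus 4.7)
The plan is to exploit the factorization $W = A^{-1/2}\hat{W}$, where $A \seteq \hat{W}\operatorname{cov}(\bm{y}^0)\hat{W}^\top$, which follows directly from the definitions. This gives
\[
v^\top(W - \hat{W})(\Sigma^0)^{1/2} = v^\top(A^{-1/2} - I_d)\hat{W}(\Sigma^0)^{1/2},
\]
whose Euclidean norm is bounded by $\Normop{A^{-1/2} - I_d} \cdot \Normop{\hat{W}(\Sigma^0)^{1/2}}$. It therefore suffices to establish, with probability $1-\epsilon$, that (i) $\Normop{A - I_d} \leq \eta/C$ for a suitable absolute constant $C$, and (ii) $\Normop{\hat{W}(\Sigma^0)^{1/2}} \leq O(1)$.

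For (i), using the identity $\hat{W}\widehat{\operatorname{cov}}(\bm{y}^0)\hat{W}^\top = I_d$, I would rewrite
\[
A - I_d = \hat{W}\bigparen{\operatorname{cov}(\bm{y}^0) - \widehat{\operatorname{cov}}(\bm{y}^0)}\hat{W}^\top,
\]
so that
\[
\Normop{A - I_d} \leq \Normop{\hat{W}}^2 \cdot \Normop{\widehat{\operatorname{cov}}(\bm{y}^0) - \operatorname{cov}(\bm{y}^0)} = \Normop{\widehat{\operatorname{cov}}(\bm{y}^0)^{-1}} \cdot \Normop{\widehat{\operatorname{cov}}(\bm{y}^0) - \operatorname{cov}(\bm{y}^0)}.
\]
Standard matrix Bernstein/Rosenthal concentration for the empirical covariance of a Gaussian mixture, applied at the stated sample complexity, gives $\Normop{\widehat{\operatorname{cov}}(\bm{y}^0) - \operatorname{cov}(\bm{y}^0)} \leq \eta'$ with probability $1-\epsilon$, where $\eta'$ can be taken as small as $\eta/\poly(\Normop{\operatorname{cov}(\bm{y}^0)}, \Normop{\operatorname{cov}(\bm{y}^0)^{-1}})$. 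A routine perturbation argument then yields $\Normop{\widehat{\operatorname{cov}}(\bm{y}^0)^{-1}} \leq 2\Normop{\operatorname{cov}(\bm{y}^0)^{-1}}$, which combined with the previous display gives $\Normop{A - I_d} \leq \eta/C$. The functional-calculus estimate $\Normop{A^{-1/2} - I_d} \leq O(\Normop{A - I_d})$ (valid once $\Normop{A - I_d} \leq 1/2$) upgrades this to the bound on $\Normop{A^{-1/2} - I_d}$ that we need.

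For (ii), since $\operatorname{cov}(\bm{y}^0) = \Sigma^0 + \operatorname{cov}(\bm{\mu}^0) \succeq \Sigma^0$, we have $\hat{W}\Sigma^0\hat{W}^\top \preceq \hat{W}\operatorname{cov}(\bm{y}^0)\hat{W}^\top = A$, and thus $\Normop{\hat{W}(\Sigma^0)^{1/2}}^2 = \Normop{\hat{W}\Sigma^0\hat{W}^\top} \leq \Normop{A} \leq 2$ once $\Normop{A - I_d} \leq 1$. Multiplying the bounds from (i) and (ii) gives the lemma.

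The main obstacle is the operator-norm concentration bound used in (i): for a mixture of Gaussians with minimum mixing weight $\pmin$, one needs to control $\Normop{\widehat{\operatorname{cov}}(\bm{y}^0) - \operatorname{cov}(\bm{y}^0)}$ at a rate polynomial in $\pmin^{-1}$, $d$, $\Normop{\operatorname{cov}(\bm{y}^0)}$, and $\Normop{\operatorname{cov}(\bm{y}^0)^{-1}}$. The cleanest route is truncated matrix Bernstein applied to the rank-one outer products $(\bm{y}^0 - \mathbb{E}\bm{y}^0)(\bm{y}^0 - \mathbb{E}\bm{y}^0)^\top$, where the per-sample norm is essentially $\poly(\pmin^{-1}, d, \Normop{\operatorname{cov}(\bm{y}^0)})$ because a single component's mean can be as large as $\pmin^{-1/2}\Normop{\operatorname{cov}(\bm{y}^0)}^{1/2}$; this truncation argument is what produces the $\pmin^{-1}$ factor in the sample-complexity bound of the lemma.
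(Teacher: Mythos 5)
Your proposal is correct and is essentially the paper's own argument in mirror image: the paper factors $W-\hat{W} = \bigl(I_d - (\hat{W}\operatorname{cov}(\bm y^0)\hat{W}^\top)^{1/2}\bigr)W$ and uses $\|W(\Sigma^0)^{1/2}\|\le 1$ (via $W = Q\operatorname{cov}(\bm y^0)^{-1/2}$), whereas you factor $(A^{-1/2}-I_d)\hat{W}$ and use $\|\hat{W}(\Sigma^0)^{1/2}\|\le \sqrt{2}$ via $\Sigma^0 \preceq \operatorname{cov}(\bm y^0)$. Your step (i) is precisely the content of \cref{lemma:finite-sample-isotropic-covariance-closeness} combined with the covariance concentration of \cref{lemma:finite-sample-isotopic-samples}, which the paper cites rather than re-derives by matrix Bernstein as you sketch.
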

\begin{proof}
See \cref{proofs-colinear}.
\end{proof}

\subsection{Proof of Theorem~\ref{thm:colinear-main}}
\label{subsec:proof-col}

\begin{proof}[Proof of \cref{thm:colinear-main}]
The first step of the algorithm is to apply the approximate istropic position transformation described in \cref{sec:finitesample-isotropic-transformation} to input samples $y_1^0, ..., y_n^0$.
Then, the new samples are $\hat{W}(y_1^0 - \hat{\mathbb{E}} y^0), ..., \hat{W}(y_n^0 - \hat{\mathbb{E}} y^0)$.
After that, the algorithm is the same as that of \cref{thm:parallel-pancakes-main}.

We now argue that, for $n \geq n_0$, the same guarantees as in \cref{thm:parallel-pancakes-main} hold.
Recall that \cref{thm:parallel-pancakes-main} is composed of two parts:
the algorithm of \cref{thm:pp-finite-sample-high-correlation} which computes a unit vector $\hat{u}$ with correlation $1 - 320\min(\sigma^2, \frac{1}{k})$ with $u$,
and the clustering algorithm of \cref{thm:clustering-main}, which uses a unit vector $\hat{u}$ with such correlation in order to cluster the samples.

For the algorithm of \cref{thm:pp-finite-sample-high-correlation}, we note that by \cref{lemma:finite-moments-change}, for $n \geq n_0$, we have sum-of-squares proofs that, for $t = O(\log \pmin^{-1})$,
\[\hat{\mathbb{E}}\langle \hat{W} (\bm y^0 - \hat{\mathbb{E}} \bm y^0), v\rangle^{2t} \leq \left(1+\frac{\sigma^2}{10000M}\right) \mathbb{E}\langle W (\bm y^0 - \mathbb{E} \bm y^0), v\rangle^{2t} + \frac{\sigma^2}{10000M}\]
and
\[\hat{\mathbb{E}}\langle \hat{W} (\bm y^0 - \hat{\mathbb{E}} \bm y^0), v\rangle^{2t} \geq \left(1-\frac{\sigma^2}{10000M}\right) \mathbb{E}\langle W (\bm y^0 - \mathbb{E} \bm y^0), v\rangle^{2t} - \frac{\sigma^2}{10000M},\]
where $\hat{W} (\bm y^0 - \hat{\mathbb{E}} \bm y^0)$ corresponds to the mixture in approximate isotropic position and $W (\bm y^0 - \mathbb{E} \bm y^0)$ corresponds to the mixture in exact isotropic position. It is easy to verify, similarly to the analysis of the errors in the proof of \cref{thm:pp-finite-sample-high-correlation}, that these errors are tolerated by the algorithm and that it behaves as if the distribution was in exact isotropic position.

For the clustering algorithm of \cref{thm:clustering-main}, the main issue is that the samples are colinear in direction $\frac{\hat{W}u^0}{\|\hat{W}u^0\|}$, but $\hat{u}$ is guaranteed to have large correlation with $\frac{Wu^0}{\|Wu^0\|}$. We prove, nevertheless, that the algorithm has the same guarantees. First, we show that the variance of the one-dimensional components $\hat{u}^\top \hat{W}^\top \Sigma^0 \hat{W} \hat{u}/(2(C+1)\sigma^2)$ is upper bounded by $1$, as required by the algorithm of \cite{MR3826314-Hopkins18}. We have by \cref{lemma:finite-sample-variance-change} that, for $n \geq n_0$, with high probability
\[ | \|\hat{u}^\top W (\Sigma^0)^{1/2}\| - \|\hat{u}^\top \hat{W} (\Sigma^0)^{1/2}\| |
\leq \|\hat{u}^\top W (\Sigma^0)^{1/2} - \hat{u}^\top \hat{W} (\Sigma^0)^{1/2}\| \leq \sigma/100,\]
so using that $\hat{u}^\top W \Sigma^0 W^T \hat{u} \geq \sigma^2$,
\[\hat{u}^\top W \Sigma^0 W^T \hat{u} \geq \frac{1}{2} \hat{u}^\top \hat{W} \Sigma^0 \hat{W}^T \hat{u}.\]
Therefore, using from the proof of \cref{thm:clustering-main} that $(\hat{u}^\top W^\top \Sigma^0 W \hat{u})/(2(C+1)\sigma^2) \leq 0.5$,
\[\frac{\hat{u}^\top \hat{W}^\top \Sigma^0 \hat{W} \hat{u}}{2(C+1)\sigma^2} \leq 2 \frac{\hat{u}^\top W^\top \Sigma^0 W \hat{u}}{2(C+1)\sigma^2} \leq 1.\]
Second, we show that the one-dimensional means $\langle \hat{W}(\mu_i^0 - \hat{\mathbb{E}}\bm{y}^0), \hat{u}\rangle$ have large separation. We have by the proof of \cref{thm:clustering-main} that, for $n \geq n_0$, with high probability
\[\langle W(\mu_i^0 - \mu_j^0), \hat{u}\rangle^2 \geq \frac{C_{sep}}{2} \sigma^2 \log \pmin^{-1}.\] 
We are interested in a similar bound with $W$ changed into $\hat{W}$. By \cref{lemma:finite-sample-direction-change}, for $n \geq n_0$, we have with high probability 
\[|\langle W(\mu_i^0 - \mu_j^0), \hat{u}\rangle - \langle \hat{W}(\mu_i^0 - \mu_j^0), \hat{u}\rangle| \leq \sigma/100,\]
so using that $\langle W(\mu_i^0 - \mu_j^0), \hat{u}\rangle^2 \geq \sigma^2$,
\[\langle \hat{W}(\mu_i^0 - \mu_j^0), \hat{u}\rangle^2 \geq \frac{1}{2} \langle W(\mu_i^0 - \mu_j^0), \hat{u}\rangle^2.\]
Therefore, 
\[\langle \hat{W}(\mu_i^0 - \mu_j^0), \hat{u}\rangle^2 \geq \frac{C_{sep}}{4} \sigma^2 \log \pmin^{-1},\]
so
\[\left(\frac{\langle \hat{W}(\mu_i^0 - \mu_j^0), \hat{u}\rangle}{\sqrt{2(C+1)\sigma^2}}\right)^2 \geq \frac{C_{sep}}{8(C+1)} \log \pmin^{-1}.\]
For $C_{sep}/C$ larger than some universal constant, the separation coefficient $C_{sep}/(8(C+1))$ is large enough for the guarantees of Theorem 5.1 of \cite{MR3826314-Hopkins18} to hold as before with $t = O(\log \pmin^{-1})$.

Then, overall, the same guarantees as in \cref{thm:parallel-pancakes-main} hold.
\end{proof}

\section{Small radius}
\label{sec:small-radius}

\paragraph{Setting.} We consider a mixture of \(k\) Gaussian distributions \(N(\mu_i, \Sigma)\) with mixing weights \(p_i\) for \(i=1,...,k\),
where \(\mu_i \in \mathbb{R}^d\), \(\Sigma \in \mathbb{R}^{d \times d}\) is positive definite, and \(p_i \geq 0\) and \(\sum_{i=1}^k p_i = 1\). 
Let \(\pmin = \min_i p_i\).

The distribution also satisfies mean separation and a small radius condition:
\begin{itemize}
  \tightlist
  \item
    Mean separation:
    for some $C_{sep} > 0$ and for all \(i \neq j\),
    \[\left\|\Sigma^{-1/2} (\mu_i - \mu_j)\right\|^2 \geq C_{sep} \log \pmin^{-1}.\]
  \item
    Small radius:
    for some $R > 0$ and for all $i$,
    \[\left\|\Sigma^{-1/2} \mu_i\right\| \leq R.\]
  \end{itemize}

\begin{theorem}[Small radius algorithm]
\label{thm:small-radius-main}
Consider the Gaussian mixture model defined above, with $C_{sep}$ larger than some universal constant.
Let $n_0 = (\pmin^{-1} d)^{O(R^2 + \log \pmin^{-1})}$.
Given a sample of size $n \geq n_0$ from the mixture, there exists an algorithm that runs in time $n^{O(R^2 + \log \pmin^{-1})}$ and returns with high probability a partition of \([n]\)  into $k$ sets $C_1, ..., C_k$ such that, if the true clustering of the samples is $S_1, ..., S_k$, then there exists a permutation $\pi$ of $[k]$ such that 
\[1 - \frac{1}{n} \sum_{i=1}^k |C_i \cap S_{\pi(i)}| \leq \left(\frac{\pmin}{k}\right)^{O(1)}.\]
\end{theorem}

We introduce some further notation for this section.
Let $\bm{y}$ be distributed according to the mixture.
We specify the model as \(\bm{y} = \bm{\mu} + \bm{w}\), where \(\bm{\mu}\) takes value \(\mu_i\) with probability \(p_i\) and \(\bm{w} \sim N(0, \Sigma)\), with \(\bm{\mu}\) and \(\bm{w}\) independent of each other.

\subsection{Component covariance estimation}

\cref{lemma:small-radius-cov-est} gives a sum-of-squares proof that, for $t = \Omega(R^4)$, the directional moment $\mathbb{E}\langle \bm{y}, v\rangle^{2t}$ approximates the $t$-th power of the variance of the components in direction $v$. This is the main ingredient of the algorithm, and it shows that the $t$-th moment of the distribution identifies within constant factors the covariance matrix of the components. \cref{lemma:bounded-mean-term} is a simple upper bound on the means of the mixture, used in the proof of \cref{lemma:small-radius-cov-est}.

\begin{lemma}[Bounded mean term]
  \label{lemma:bounded-mean-term}
  For $t \geq 1$ integer,
  \[\sststile{2t}{v} \mathbb{E} \langle \bm{\mu}, v \rangle^{2t} \leq R^{2t} (v^\top \Sigma v)^t.\]
\end{lemma}
\begin{proof}
\[\sststile{2t}{v} \mathbb{E} \langle \bm{\mu}, v\rangle^{2t}
= \mathbb{E} \langle \Sigma^{1/2}\Sigma^{-1/2}\bm{\mu}, v\rangle^{2t}
= \mathbb{E} \langle \Sigma^{-1/2}\bm{\mu}, \Sigma^{1/2} v\rangle^{2t}
\leq \mathbb{E} \|\Sigma^{-1/2} \bm{\mu}\|^{2t} \|\Sigma^{1/2} v\|^{2t},\]
where in the inequality we used \cref{lemma:sos-cs}. The conclusion follows by noting that $\|\Sigma^{-1/2} \bm{\mu}\| \leq R$ and $\|\Sigma^{1/2} v\|^2 = v^\top \Sigma v$.
\end{proof}

\begin{lemma}[Small radius component covariance estimation]
\label{lemma:small-radius-cov-est}
For $t \geq 4R^2$ integer,
\[\sststile{2t}{v} \frac{1}{4^t} (v^\top \Sigma v)^t \leq \frac{1}{t^t} \mathbb{E}\langle \bm{y}, v\rangle^{2t} \leq 4^t (v^\top \Sigma v)^t.\]
\end{lemma}
\begin{proof}
For the upper bound, by \cref{lemma:moment-upper-bound-weak} and \cref{lemma:bounded-mean-term},
\begin{align*}
\sststile{2t}{v} \mathbb{E} \langle \bm{y}, v \rangle^{2t}
&\leq 2^{2t-1} \mathbb{E} \langle \bm\mu, v\rangle^{2t} + 2^{2t-1} (v^\top \Sigma v)^t t^t\\
&\leq 2^{2t-1} (R^{2t} + t^t) (v^\top \Sigma v)^t\\
&\leq 4^t t^t (v^\top \Sigma v)^t.
\end{align*}
For the lower bound, by \cref{lemma:moment-lower-bound-weak} and \cref{lemma:bounded-mean-term},
\begin{align*}
\sststile{2t}{v} \mathbb{E} \langle \bm{y}, v \rangle^{2t}
&\geq \mathbb{E} \langle \bm{\mu}, v\rangle^{2t} + (v^\top \Sigma v)^t \frac{t^t}{2^t}\\
&\geq \left(- R^{2t} + \frac{1}{2^t} t^t\right) (v^\top \Sigma v)^t\\
&\geq \frac{1}{4^t} t^t (v^\top \Sigma v)^t.
\end{align*}
\end{proof}

We note that the term $\frac{1}{t^t} \mathbb{E} \langle \bm{y}, v\rangle^{2t}$ can be rewritten as 
\[\frac{1}{t^t} \mathbb{E} \langle \bm{y}, v\rangle^{2t} = \left\langle v^{\otimes t}, \frac{1}{t^t} \mathbb{E} (\bm{y}\bm{y}^\top)^{\otimes t} v^{\otimes t} \right\rangle = \left\langle \frac{1}{t^t} \mathbb{E} (\bm{y}\bm{y}^\top)^{\otimes t}, (vv^\top)^{\otimes t} \right\rangle.\]

\subsection{Finite sample bounds}

\cref{lemma:small-radius-finite-sample} gives a sum-of-squares proof that the empirical moments of the distribution are close to the population moments. We defer the proof to the appendix.

\begin{lemma}[Closeness of empirical moments and population moments]
\label{lemma:small-radius-finite-sample}
For \(n \geq (\pmin^{-1} d)^{O(t)} \eta^{-2} \epsilon^{-1}\), with probability $1-\epsilon$,
\[\sststile{2t}{v} (1-\eta)\cdot \mathbb{E}\langle \bm{y}, v\rangle^{2t} \leq \hat{\mathbb{E}}\langle \bm{y}, v\rangle^{2t} \leq (1+\eta)\cdot \mathbb{E}\langle \bm{y}, v\rangle^{2t}.\]
\end{lemma}
\begin{proof}
See \cref{proofs-small-radius}.
\end{proof}

\subsection{Proof of Theorem~\ref{thm:small-radius-main}}

If the covariance matrix of the components were known, we could apply an affine transformation to the samples and change the distribution into a mixture of spherical Gaussians.
After that we could simply apply an algorithm for clustering mixtures of spherical Gaussians.

It might look like the covariance matrix approximation of \cref{lemma:small-radius-cov-est} could be used to design a sum-of-squares program that identifies this covariance matrix.
However, because \cref{lemma:small-radius-cov-est} only gives an approximation in each direction for the $t$-th power of the variance of the components, and because it is non-trivial to take $t$-th roots in sum-of-squares proofs, we found it challenging to obtain a low-degree sum-of-squares proof of identifiability for the covariance matrix.

Instead, we observe that the sum-of-squares algorithm of \cite{MR3826314-Hopkins18} for clustering mixtures of spherical Gaussians only uses as axioms upper bounds on the $t$-th moments of the distribution of the components.
It is not difficult to adapt this algorithm to work with the $t$-th power approximations that we obtain from \cref{lemma:small-radius-cov-est}.

\begin{proof}[Proof of \cref{thm:small-radius-main}.]
The algorithm is:
\begin{enumerate}
  \tightlist
  \item Set $t = O(R^2 + \log \pmin^{-1})$ large enough.
  \item Estimate $D = \frac{1}{t^t} \hat{\mathbb{E}} (\bm{y}\bm{y}^\top)^{\otimes t}$.
  \item Apply the algorithm from Theorem 5.1 of \cite{MR3826314-Hopkins18}, but with the following moment constraint in the set of axioms instead of the original moment constraint:
  \[\forall v \in \mathbb{R}^d, \frac{1}{\alpha n} \sum_{i=1}^n w_i \langle y_i - \mu, v\rangle^{2t} \leq 4(8t)^{t} \langle v^{\otimes t}, D v^{\otimes t}\rangle,\]
  where $\alpha$ is a parameter and $w_i$ and $\mu$ are system variables, as in the original axioms.
  \item Return the clustering that this algorithm computes as an intermediate step.
\end{enumerate}

We now analyze the algorithm.
We first discuss the new constraint.
The universal quantifier over $v \in \mathbb{R}^d$ can be modeled by requiring that there exists a sum-of-squares proof in $v$ of the constraint (see \cite{DBLP:journals/fttcs/FlemingKP19}).
For the random variable $\Sigma^{-1/2}\bm{y}$, which is distributed according to a mixture of spherical Gaussians with covariance matrix $I_d$, it follows by standard arguments (see \cite{MR3826314-Hopkins18}) that, for our choice of $n$, with high probability
\[\sststile{2t}{v} \frac{1}{\alpha n} \sum_{i=1}^n w_i \langle \Sigma^{-1/2} (y_i - \mu), v\rangle^{2t} \leq 2 (2t)^{t} \|v\|^{2t}\]
when $\alpha$ is the fraction of samples coming from one of the components, $w_i$ is $1$ for all samples from that component and $0$ for all other samples, and $\mu$ is the mean of that component.
Then, by a change of variables $v \to \Sigma^{1/2} v$,
\[\sststile{2t}{v} \frac{1}{\alpha n} \sum_{i=1}^n w_i \langle y_i - \mu, v\rangle^{2t} \leq 2 (2t)^{t} (v^\top \Sigma v)^t.\]
We connect now this to $D$. By \cref{lemma:small-radius-finite-sample}, for our choice of $n$, with high probability
\[\sststile{2t}{v} \frac{1}{2} \cdot \mathbb{E}\langle \bm{y}, v\rangle^{2t} \leq t^t \langle v^{\otimes t}, D v^{\otimes t} \rangle \leq 2 \cdot \mathbb{E}\langle \bm{y}, v\rangle^{2t}.\]
By combining this result with \cref{lemma:small-radius-cov-est},
\[\sststile{2t}{v} \frac{1}{2 \cdot 4^t} (v^\top \Sigma v)^t \leq \langle v^{\otimes t}, D v^{\otimes t} \rangle \leq 2 \cdot 4^t \cdot (v^\top \Sigma v)^t.\]
Therefore,
\[\sststile{2t}{v} \frac{1}{\alpha n} \sum_{i=1}^n w_i \langle y_i - \mu, v\rangle^{2t} \leq 4 (8t)^{t} \langle v^{\otimes t}, D v^{\otimes t}\rangle,\]
so the constraint is valid.

To study the guarantees of the algorithm, we show that the new constraint implies a constraint of the form required by the original algorithm, which only includes a term $\|v\|^{2t}$ on the right-hand side.
We use that $\sststile{2t}{v} \langle v^{\otimes t}, D v^{\otimes t}\rangle \leq 2 \cdot 4^t \cdot (v^\top \Sigma v)^t$ in
\[\sststile{2t}{v} \frac{1}{\alpha n} \sum_{i=1}^n w_i \langle y_i - \mu, v\rangle^{2t} \leq 4 (8t)^{t} \langle v^{\otimes t}, D v^{\otimes t}\rangle\]
to obtain 
\[\sststile{2t}{v} \frac{1}{\alpha n} \sum_{i=1}^n w_i \langle y_i - \mu, v\rangle^{2t} \leq 8 (32t)^{t} (v^\top \Sigma v)^t.\]
By a change of variables $v \to \Sigma^{-1/2} v$,
\[\sststile{2t}{v} \frac{1}{\alpha n} \sum_{i=1}^n w_i \langle \Sigma^{-1/2}(y_i - \mu), v\rangle^{2t} \leq 8 (32t)^{t} \|v\|^{2t}.\]
Finally, by dividing both sides by $4 \cdot 16^t$ we obtain 
\[\sststile{2t}{v} \frac{1}{\alpha n} \sum_{i=1}^n w_i \left\langle \frac{1}{2^{1/t} \cdot 4} \Sigma^{-1/2}(y_i - \mu), v\right\rangle^{2t} \leq 2 (2t)^{t} \|v\|^{2t}.\]
Then the algorithm from Theorem 5.1 of \cite{MR3826314-Hopkins18} behaves as if we had samples from $\frac{1}{2^{1/t} \cdot 4} \Sigma^{-1/2} \bm{y}$, which is distributed according to a mixture of well-separated spherical Gaussians with covariance matrix $\frac{1}{4^{1/t} \cdot 16} I_d$.
It is easy then to verify that we inherit the guarantees of the algorithm of \cite{MR3826314-Hopkins18} and, for $t = O(R^2 + \log \pmin^{-1})$ large enough, we return a clustering that satisfies the statement of our theorem. 

The algorithm of \cite{MR3826314-Hopkins18} requires $n \geq (\pmin^{-1})^{O(1)} \cdot d^{O(t)} = (\pmin^{-1})^{O(1)} \cdot d^{O(R^2 + \log \pmin^{-1})}$, so our choice of $n$ is large enough to satisfy this.
The time complexity is dominated by the algorithm of \cite{MR3826314-Hopkins18}, which has a time complexity of $n^{O(t)} = n^{O(R^2 + \log \pmin^{-1})}$.

\end{proof}

\section*{Acknowledgement}
\addcontentsline{toc}{section}{Acknowledgement}

We thank Samuel B. Hopkins for the discussions related to this project.

\phantomsection
\addcontentsline{toc}{section}{References}
\bibliographystyle{amsalpha}
\bibliography{bib/mathreview,bib/dblp,bib/custom,bib/scholar}

\appendix

\section{Appendix}
\label{sec:appendix}

\subsection{Moment matching}

\cref{lemma:moment-matching} shows that there exists an equally-weighted discrete distribution on $k$ points whose first $k$ moments are equal to those of an equally-weighted mixture of $k$ Gaussian distributions with variance $1$.

We give the statement of \cref{lemma:moment-matching} below.
Then we state an auxiliarly definition and lemma, and finally we prove \cref{lemma:moment-matching}.

\begin{lemma}[Moment matching]
\label{lemma:moment-matching}
For $k \in \mathbb{N}$, there exist $x_1, ..., x_k \in \mathbb{R}$ and $y_1, ..., y_k \in \mathbb{R}$ such that the first $k$ moments of the equally-weighted discrete distribution on $\{x_1, ..., x_k\}$ are equal to the first $k$ moments of the equally-weighted mixture of Gaussian distributions $N(y_1, 1)$, ..., $N(y_k, 1)$. Furthermore, $|x_1|, ..., |x_k|$ and $|y_1|, ..., |y_k|$ are upper bounded by $2^{\operatorname{poly}(k)}$.
\end{lemma}

\begin{definition}[Newton's identities moment matrix]
\label{def:moment-matrix}
Given $m_1, ..., m_k \in \mathbb{R}$, define the moment matrix $\mathcal{M}(m_1, ..., m_k) \in \mathbb{R}^{k\times k}$ as follows.
Let $p_0 = k$ and $p_t = k \cdot m_t$ for all $t \in [k]$.
Also let $e_0 = 1$ and
\[e_t = \frac{1}{k} \sum_{i=1}^t (-1)^{i-1} e_{t-i} p_i, \quad \forall t \in [k],\]
\[p_t = \sum_{i=t-k}^{i-1} (-1)^{t+i-1} e_{t-i} p_i, \quad \forall t > k.\]
Then $\mathcal{M}(m_1, ..., m_k)$ has entries $\mathcal{M}(m_1, ..., m_k)_{i,j}=\frac{1}{k} p_{i+j-2}$. 
\end{definition}

\begin{lemma}[Newton's identities moment matrix condition]
\label{lemma:newton-identities}
Given $m_1, ..., m_k \in \mathbb{R}$, there exist $x_1, ..., x_k \in \mathbb{R}$ such that for all $t=1,...,k$
\[\frac{1}{k} \sum_{i=1}^k x_i^t = m_t\]
if and only if $\mathcal{M}(m_1, ..., m_k)$ is positive semi-definite.
\end{lemma}
\begin{proof}
The following technique has been used before, for example, in \cite{delves1967numerical}.
Let $e_i$ and $p_i$ be as in \cref{def:moment-matrix}.
Let the degree-$k$ monic polynomial $\sum_{i=0}^k (-1)^i e_i x^{k-i}$ have roots ${x_1, ..., x_k \in \mathbb{C}}$.
By expanding $\prod_{i=1}^k (x-x_i)$ and matching coefficients, these roots satisfy for all $t=1,...,k$
\[\sum_{\substack{S \subseteq [k]\\|S|=t}} \prod_{i \in S} x_i = e_t.\]
Then, by Newton's identities \cite{mead1992newton}, for all $t \in \mathbb{N}$, $\sum_{i=1}^k x_i^t = p_t$.
Note that $\frac{1}{k} p_1, ..., \frac{1}{k} p_k$ coincide with $m_1, ..., m_k$.

It remains to consider whether $x_1, ..., x_k$ are real.
It is known \cite{laurent2012approach} that the roots $x_1, ..., x_k$ are real if and only if their $k\times k$ moment matrix is positive semi-definite.
This moment matrix is defined as the matrix $M \in \mathbb{R}^{k \times k}$ with entries $M_{i,j} = \frac{1}{k} \sum_{\ell=1}^k x_\ell^{i+j-2}$, and by the fact that $\sum_{i=1}^k x_i^t = p_t$ and by \cref{def:moment-matrix} it follows that this matrix coincides with $\mathcal{M}(m_1, ..., m_k)$.
This completes the proof.
\end{proof}

\begin{proof}[Proof of \cref{lemma:moment-matching}]
Let $y_i = i$ for all $i \in [k]$.
Denote by $m_{t, \sigma^2}$ the $t$-th moment of the equally-weighted mixture of Guassian distributions $N(y_1, \sigma^2)$, ..., $N(y_k, \sigma^2)$.
Denote $\mathcal{M}_{\sigma^2} = \mathcal{M}(m_{1, \sigma^2}, ..., m_{k,\sigma^2})$.

We start by proving that the minimum eigenvalue of $\mathcal{M}_0$ is lower bounded by $2^{-\operatorname{poly}(k)}$. By the proof of \cref{lemma:newton-identities}, $\mathcal{M}_0$ is the moment matrix of the equally-weighted discrete distribution on $\{y_1, ..., y_k\}$, so 
\[\mathcal{M}_0 = \frac{1}{k} \left[\begin{array}{ccccc}
1 & y_1 & y_1^2 & \cdots & y_1^{k-1}\\
1 & y_2 & y_2^2 & \cdots & y_2^{k-1}\\
\vdots & \vdots & \vdots & \ddots & \vdots\\
1 & y_k & y_k^2 & \cdots & y_k^{k-1}
\end{array}\right]^\top\left[\begin{array}{ccccc}
1 & y_1 & y_1^2 & \cdots & y_1^{k-1}\\
1 & y_2 & y_2^2 & \cdots & y_2^{k-1}\\
\vdots & \vdots & \vdots & \ddots & \vdots\\
1 & y_k & y_k^2 & \cdots & y_k^{k-1}
\end{array}\right].\]
The expression above is a Cholesky decomposition, so $\mathcal{M}_0$ is positive semi-definite. Furthermore, the matrices in the decomposition are Vandermonde matrices with determinant $\prod_{1 \leq i < j \leq k} (y_j - y_i)$, for which a very weak lower bound is $1$. Then the determinant of $\mathcal{M}_0$ is at least $\frac{1}{k^k}$. We also have that the trace of $\mathcal{M}_0$ is $\frac{1}{k} \sum_{t=0}^{k-1} \sum_{i=1}^k y_i^{2t} \leq k^{O(k)}$, which is also an upper bound on the largest eigenvalue of $\mathcal{M}_0$. Then the minimum eigenvalue of $\mathcal{M}_0$ is at least $\frac{1}{k^k} \cdot \frac{1}{(k^{O(k)})^k} = 2^{-\operatorname{poly}(k)}$.

We show now that, for some $\sigma^2 = 2^{-\operatorname{poly}(k)}$ small enough, $\|\mathcal{M}_{\sigma^2} - \mathcal{M}_0\|$ is smaller than the minimum eigenvalue of $\mathcal{M}_0$. This implies that $\mathcal{M}_{\sigma^2}$ is also positive semi-definite. An inspection of $\frac{1}{\sigma^2}(\mathcal{M}_{\sigma^2} - \mathcal{M}_0)$ shows that its entries are polynomials in $y_1, ..., y_k$ and $\sigma^2$ of degree $\operatorname{poly(k)}$ with coefficients bounded in absolute value by $2^{\operatorname{poly}(k)}$, so the entries themselves are bounded in absolute value by $2^{\operatorname{poly}(k)}$. We have that $\|\frac{1}{\sigma^2}(\mathcal{M}_{\sigma^2} - \mathcal{M}_0)\|$ is bounded by $k$ times the maximum absolute value of an entry, so this spectral norm is also bounded by $2^{\operatorname{poly}(k)}$. Then, by choosing $\sigma^2 = 2^{-\operatorname{poly}(k)}$ small enough, we can ensure that $\|\mathcal{M}_{\sigma^2} - \mathcal{M}_0\|$ is smaller than the minimum eigenvalue of $\mathcal{M}_0$.

Therefore $\mathcal{M}_{\sigma^2}$ is positive semi-definite. Then, by \cref{lemma:newton-identities}, there exist $x_1, ..., x_k \in \mathbb{R}$ such that the first $k$ moments of the equally-weighted discrete distribution on $\{x_1, ..., x_k\}$ are equal to the first $k$ moments of the equally-weighted mixture of Gaussian distributions $N(y_1, \sigma^2)$, ..., $N(y_k, \sigma^2)$. We note that $|x_1|, ..., |x_k|$ must be bounded by $\operatorname{poly}(k)$, otherwise the second moments of the discrete distribution would be larger than those of the mixture of Gaussian distributions. Finally, by scaling, the desired moment matching holds for $\frac{1}{\sigma} x_1, ..., \frac{1}{\sigma} x_k$ and $\frac{1}{\sigma} y_1, ..., \frac{1}{\sigma} y_k$. These values clearly satisfy the stated upper bound.
\end{proof}

\subsection{Sum-of-squares lemmas}

We first prove a number of useful SOS facts.

\begin{lemma}[Restatement of Lemma A.1 in \cite{DBLP:journals/corr/abs-1711-11581}]
\label{lemma:sos-triangle-amgm}
For variables $X_1, ..., X_t \in \mathbb{R}$,
\[\sststile{t}{X} X_1 \cdot ... \cdot X_t \leq \frac{1}{t} (X_1^t + ... + X_t^t).\]
\end{lemma}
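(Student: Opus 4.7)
The plan is to prove this SOS form of AM--GM by induction on $t$, following the standard Cauchy forward--backward induction scheme adapted to the sum-of-squares setting, as in the cited reference.

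The base case is $t=2$, where the identity
\[
  \tfrac{1}{2}(X_1^2 + X_2^2) - X_1 X_2 \;=\; \tfrac{1}{2}(X_1 - X_2)^2
\]
is itself a degree-$2$ SOS certificate. For the forward (doubling) step, assuming the claim at degree $t$ on $t$ variables, I would establish it at degree $2t$ on $2t$ variables in two moves: first, apply the base case pairwise to get $X_{2i-1}X_{2i} \leq \tfrac{1}{2}(X_{2i-1}^2 + X_{2i}^2)$ for each $i \in [t]$ (these are degree-$2$ SOS inequalities); second, apply the induction hypothesis to the $t$ quantities $Y_i := X_{2i-1}X_{2i}$, which gives $\prod_i Y_i \leq \tfrac{1}{t}\sum_i Y_i^t$ and hence $\prod_{j=1}^{2t} X_j \leq \tfrac{1}{t}\sum_i (X_{2i-1}X_{2i})^t$. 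Combining the two, after multiplying through by appropriate SOS factors so that the degrees match, yields the desired degree-$2t$ SOS proof. This handles every $t$ that is a power of two.

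For $t$ not a power of two, one uses the Cauchy backward trick: pick the smallest $T = 2^k \geq t$, apply the $T$-variable inequality to $X_1, \ldots, X_t$ together with $T-t$ copies of an auxiliary variable $Z$, and then choose $Z$ in such a way that the resulting polynomial identity reduces to the $t$-variable inequality after clearing. The arithmetic-mean choice $Z = (X_1 + \cdots + X_t)/t$ is what works in the classical argument; what must be checked is that the corresponding algebraic manipulation preserves SOS-validity and stays within degree $t$.

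The main obstacle, as usual with SOS AM--GM, is this backward step: the natural substitution introduces $Z$ polynomially in the $X_i$ but the subsequent cancellations have to be carried out so that the final certificate is genuinely a sum of squares of polynomials of degree at most $t/2$ (possibly multiplied by the axioms --- here there are none). Since the lemma is an exact restatement of Lemma~A.1 of~\cite{DBLP:journals/corr/abs-1711-11581}, I would follow their construction verbatim for this backward step rather than re-derive it; the forward doubling step is standard and can be written out directly.
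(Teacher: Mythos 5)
The paper does not actually prove this lemma: it is imported verbatim as Lemma~A.1 of \cite{DBLP:journals/corr/abs-1711-11581} and used as a black box, so there is no in-paper argument to compare against. Judged on its own terms, your forward (doubling) step is essentially sound, with one wording fix: to pass from $\prod_{j=1}^{2t}X_j\le\frac1t\sum_i (X_{2i-1}X_{2i})^t$ to $\frac{1}{2t}\sum_j X_j^{2t}$ you should not try to ``multiply the degree-$2$ inequality by SOS factors'' (raising an SOS inequality to a power is not an admissible operation); instead apply the base case to the substituted variables $X_{2i-1}^t$ and $X_{2i}^t$, whose certificate is $\tfrac12(X_{2i-1}^t-X_{2i}^t)^2$. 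Substitution of polynomials into an SOS proof is degree-multiplicative and legitimate, so powers of two are handled.

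The backward step, however, is a genuine gap, and not one you can close by ``following the classical argument.'' First, you never carry it out: you defer precisely the step you yourself identify as the obstacle to the cited reference, so the proposal is not a proof. Second, the classical Cauchy backward induction sets $Z=\frac1t(X_1+\dots+X_t)$ and then \emph{divides} by $Z^{T-t}$; division by a polynomial has no counterpart in the SOS proof system, so the reduction does not transfer. Third, and decisively, the statement as written is \emph{false} over the reals for odd $t$: for $t=3$ and $X_1=X_2=-1$, $X_3=1$ one gets $X_1X_2X_3=1$ while $\frac13(X_1^3+X_2^3+X_3^3)=-\frac13$. Hence no backward step from $T=4$ to $t=3$ can exist, and any scheme (like yours) that purports to reach all $t$ without ever using parity of $t$ must break. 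A correct treatment has to restrict to even $t$ (equivalently, work with $|X_i|$ or with squared variables); note that the present paper only ever invokes the lemma with $t$ even, e.g.\ in \cref{lemma:sos-triangle}, where the two-variable weighted instance $A^sB^{t-s}\le\frac{s}{t}A^t+\frac{t-s}{t}B^t$ is a nonnegative binary form of even degree and hence SOS.
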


\begin{lemma}[Restatement of Lemma A.2 in \cite{DBLP:journals/corr/abs-1711-11581}]
\label{lemma:sos-triangle-two}
For variables $A, B \in \mathbb{R}$ and $t \geq 2$ even,
\[\sststile{t}{A, B} (A+B)^{t} \leq 2^{t-1} A^{t} + 2^{t-1} B^{t}.\]
\end{lemma}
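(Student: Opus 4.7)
The plan is to derive this SOS bound as a direct weighted sum of instances of Lemma A.1. First I would use the binomial theorem to write $(A+B)^t = \sum_{j=0}^t \binom{t}{j} A^j B^{t-j}$; this is a polynomial identity, so it holds trivially under $\sststile{t}{A,B}$.

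Next, for each $j \in \{0,1,\ldots,t\}$ I would apply Lemma A.1 (sos-triangle-amgm) with the substitution $X_1 = \cdots = X_j = A$ and $X_{j+1} = \cdots = X_t = B$, producing the degree-$t$ SOS inequality
\[
\sststile{t}{A,B} A^j B^{t-j} \leq \frac{j A^t + (t-j) B^t}{t}.
\]
Taking the nonnegative linear combination of these $t+1$ inequalities with weights $\binom{t}{j}$ (which preserves SOS derivability) and identifying the left-hand side via the binomial theorem yields
\[
\sststile{t}{A,B} (A+B)^t \leq \frac{A^t}{t}\sum_{j=0}^t j\binom{t}{j} + \frac{B^t}{t}\sum_{j=0}^t (t-j)\binom{t}{j}.
\]

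Finally, I would plug in the standard identities $\sum_{j=0}^t j\binom{t}{j} = t \cdot 2^{t-1}$ and (by the symmetry $j \leftrightarrow t-j$) $\sum_{j=0}^t (t-j)\binom{t}{j} = t \cdot 2^{t-1}$, which turn both coefficients on the right-hand side into $2^{t-1}$, giving $\sststile{t}{A,B} (A+B)^t \leq 2^{t-1} A^t + 2^{t-1} B^t$. This matches the claimed bound, reading the $A^{2t}$ in the displayed statement as a typographical slip for $A^t$ (otherwise the right-hand side would not even be a polynomial of degree $t$ matching the stated SOS degree). The only thing worth checking is that no step exceeds degree $t$, but this is immediate since Lemma A.1 already supplies a degree-$t$ certificate and nonnegative linear combinations preserve both SOS derivability and degree. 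There is essentially no obstacle here; the proof is a one-line application of the preceding lemma combined with a routine binomial identity, and the assumption that $t$ is even is used only implicitly, through Lemma A.1's requirement that $X_1^t + \cdots + X_t^t$ be a sum of squares.
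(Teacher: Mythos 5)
Your proof is correct, and it is essentially the canonical argument: the paper itself states this lemma without proof (it is imported from the cited reference), and your route --- binomial expansion, the SOS AM--GM of \cref{lemma:sos-triangle-amgm} applied with $X_1=\dots=X_j=A$ and $X_{j+1}=\dots=X_t=B$, and the identity $\sum_{j} j\binom{t}{j}=t\,2^{t-1}$ --- is exactly how that reference derives it, with all steps staying at degree $t$ since linear substitution into an SOS certificate preserves both squareness and degree. You are also right that the $A^{2t}$ in the displayed statement is a typographical slip for $A^{t}$: the lemma is only ever invoked in the homogeneous degree-$t$ form (e.g.\ in the proof of \cref{lemma:moment-upper-bound-weak}), and the $A^{2t}$ version would not even match the stated proof degree.
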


\begin{lemma}
\label{lemma:sos-triangle}
For variables $A, B \in \mathbb{R}$ and $\delta > 0$ and $t \geq 2$ even,
\[\sststile{t}{A, B} (A+B)^{t} \leq (1+\delta)^{t-1} A^{t} + \left(1+\frac{1}{\delta}\right)^{t-1} B^{t}.\]
\end{lemma}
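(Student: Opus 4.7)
The plan is to reduce to the $t=1$ case and then bootstrap to higher exponents by multiplying by a manifestly SOS polynomial. For the base case, I would exhibit the explicit identity
\[
(1+\delta)A^2 + \Paren{1+\tfrac{1}{\delta}} B^2 - (A+B)^2 \;=\; \Paren{\sqrt{\delta}\,A - \tfrac{1}{\sqrt{\delta}}\,B}^2,
\]
which is a single square and so gives a degree-$2$ SOS proof of $(A+B)^2 \le (1+\delta) A^2 + (1+1/\delta) B^2$. This supplies the only genuinely nontrivial piece; all subsequent reasoning is scalar algebra combined with standard SOS-AM-GM.

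For the inductive step I would multiply the base inequality by the polynomial $(A+B)^{2t-2}=\bigl((A+B)^2\bigr)^{t-1}$, which is SOS because it is an even power of a polynomial. This yields
\[
(A+B)^{2t} \;\le\; (1+\delta)\,A^2\,(A+B)^{2t-2} \;+\; \Paren{1+\tfrac{1}{\delta}}\,B^2\,(A+B)^{2t-2},
\]
and it remains to reduce the two right-hand terms to scalar multiples of $A^{2t}$ and $B^{2t}$. I would expand $(A+B)^{2t-2}$ via the binomial theorem and then handle every cross monomial $A^{a}B^{b}$ with $a+b=2t$, $0<a<2t$, using the one-line SOS certificate $\bigl(\sqrt{\alpha}\,A^{a/2} - B^{b/2}/\sqrt{\alpha}\bigr)^2 \ge 0$ (available whenever $a,b$ are even, which is the case here) to bound $2 A^{a}B^{b} \le \alpha A^{2t} \cdot (\text{rescaling}) + \alpha^{-1} B^{2t} \cdot (\text{rescaling})$ with a freely chosen parameter $\alpha>0$.

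The heart of the argument — and the main obstacle I anticipate — is the bookkeeping of the rescaling parameters $\alpha$ chosen at each cross monomial. A fully symmetric choice reproduces only the weaker bound $(1+\delta)^{2t-1}A^{2t}+(1+1/\delta)^{2t-1}B^{2t}$, obtained immediately from convexity applied to $x\mapsto x^{2t}$ with weights $1/(1+\delta)$ and $\delta/(1+\delta)$. Matching the asymmetric form stated in the lemma, in which the coefficient of $B^{2t}$ carries only the first power of $1+1/\delta$, requires choosing the $\alpha$'s to push essentially all of the slack from each AM-GM step onto the $A^{2t}$ side, so that the $(1+\delta)$ factors compound to $(1+\delta)^{2t-1}$ while the contributions to $B^{2t}$ telescope back down to a single $(1+1/\delta)$. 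Verifying that such a consistent asymmetric schedule of parameters exists across all $2t-2$ binomial terms is the delicate combinatorial calculation the proof must carry out, and it is where the degree-$2t$ SOS proof differs from a plain pointwise Young-type bound.
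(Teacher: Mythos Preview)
The statement as printed contains a typo: the coefficient of $B^{2t}$ should be $(1+1/\delta)^{2t-1}$, not $(1+1/\delta)$. The asymmetric version you are attempting to prove is in fact \emph{false}. Take $t=2$, $\delta=1$, $A=B=1$: the left side is $(1+1)^4=16$ while the right side would be $2^{3}+2=10$. The paper's own proof establishes the symmetric bound
\[
(A+B)^{t}\;\le\;(1+\delta)^{t-1}A^{t}+\Paren{1+\tfrac{1}{\delta}}^{t-1}B^{t}
\]
for even $t$, and every downstream use of the lemma in the paper invokes it with both exponents equal to $2t-1$. So the ``weaker'' bound you set aside is exactly the intended target; the delicate asymmetric parameter schedule you flagged as the main obstacle does not exist, because the inequality you were aiming for is not true.

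For the correct target your base-case-plus-bootstrap outline would go through, but the paper's route is more direct and avoids any recursion. It expands $(A+B)^{t}$ binomially, rewrites each term as $\binom{t}{s}\bigl(\delta^{1-s/t}A\bigr)^{s}\bigl(\delta^{-s/t}B\bigr)^{t-s}$, applies the SOS AM--GM inequality $X_1\cdots X_t\le \tfrac{1}{t}\sum_i X_i^t$ to obtain $\tfrac{s}{t}\,\delta^{t-s}A^{t}+\tfrac{t-s}{t}\,\delta^{-s}B^{t}$, and then observes that the two resulting sums collapse via $\binom{t}{s}\tfrac{s}{t}=\binom{t-1}{s-1}$ and $\binom{t}{s}\tfrac{t-s}{t}=\binom{t-1}{s}$ to $(1+\delta)^{t-1}$ and $(1+1/\delta)^{t-1}$ respectively. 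A single global weighting $\delta^{1-s/t}$ replaces the per-term parameter tuning you anticipated, and the two binomial identities do all the bookkeeping in one line.
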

\begin{proof}
\begin{align*}
\sststile{t}{A, B} (A+B)^{t}
&= \sum_{s=0}^{t} \binom{t}{s} A^{s} B^{t-s}\\
&= \sum_{s=0}^{t} \binom{t}{s} \left(\delta^{1-s/t} A\right)^s \left(\frac{1}{\delta^{s/t}} B\right)^{t-s}\\
&\stackrel{(1)}{\leq} \sum_{s=0}^{t} \binom{t}{s} \left(\frac{s}{t} \left(\delta^{1-s/t} A\right)^t + \frac{t-s}{s}\left(\frac{1}{\delta^{s/t}} B\right)^t\right)\\
&= \left(\sum_{s=0}^{t} \binom{t}{s} \frac{s}{t} \delta^{t-s}\right) A^t + \left(\sum_{s=0}^{t} \binom{t}{s} \frac{t-s}{t} \frac{1}{\delta^{s}}\right) B^t\\
&\stackrel{(2)}{=} (1+\delta)^{t-1} A^t + \left(1+\frac{1}{\delta}\right)^{t-1} B^t
\end{align*}
where in (1) we used \cref{lemma:sos-triangle-amgm} and in (2) we used the identities 
\[\sum_{s=0}^t \binom{t}{s} \frac{s}{t} x^{t-s} = \sum_{s=0}^{t-1} \binom{t-1}{s} x^{t-1-s} = (1+x)^{t-1}\]
and  
\[\sum_{s=0}^t \binom{t}{s} \frac{t-s}{t} x^{s} = \sum_{s=0}^{t-1} \binom{t-1}{s} x^{s} = (1+x)^{t-1}.\]
\end{proof}

\begin{lemma}
\label{lemma:sos-square-power-lower-bound}
For variable $X \in \mathbb{R}$ and $t \geq 0$ integer,
\[\{X \geq 0\} \sststile{t}{X} \{X^t \geq 0\}.\]
\end{lemma}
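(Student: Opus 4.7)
The plan is to split on the parity of $t$ and write $X^t$ explicitly in the form required by \cref{def:sos-proof-preliminaries}, namely as a sum of squares times products of the single available axiom polynomial $q_1(X) = X$.

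First I would dispatch the easy case $t$ even. Here $X^t = \bigl(X^{t/2}\bigr)^2$ is already a single square of a degree-$(t/2)$ polynomial, so the identity
\[
X^t = \bigl(X^{t/2}\bigr)^2
\]
is a sum-of-squares proof of degree $t$ that $X^t \geq 0$ with $S = \emptyset$, and the axiom $X \geq 0$ is not even needed.

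Next I would handle $t$ odd, say $t = 2k+1$. The key observation is
\[
X^t = \bigl(X^{k}\bigr)^2 \cdot X,
\]
which is the square of a degree-$k$ polynomial multiplied by the axiom polynomial $q_1(X) = X$. This fits the form of \cref{def:sos-proof-preliminaries} with $S = \{1\}$ and a single $r_{S,1}(X) = X^k$, and the total degree of $r_{S,1}(X)^2 \cdot q_1(X)$ is $2k+1 = t$, matching the claimed proof degree. The edge case $t = 0$ is trivial since $X^0 = 1 = 1^2$. No step here is a genuine obstacle — the whole lemma is a one-line algebraic identity in each parity class — so there is no ``hard part'' to flag beyond being careful that the degrees of the witnessing polynomials come out to exactly $t$ rather than $t+1$ or less.
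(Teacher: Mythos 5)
Your proof is correct and follows essentially the same route as the paper: for even $t$ you use that $X^t$ is itself a square, and for odd $t$ you write $X^t = (X^{(t-1)/2})^2 \cdot X$, which is exactly the paper's decomposition $X^t = X^{t-1}\cdot X$, just spelled out explicitly in the form of \cref{def:sos-proof-preliminaries}. Nothing to add.
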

\begin{proof}
For $t$ even, $\sststile{t}{X} X^t \geq 0$ is trivial. For $t$ odd, we have that
$\sststile{t}{X} X^t = X^{t-1} X \geq 0$, where we used that $\sststile{t-1}{X} X^{t-1} \geq 0$ because $t-1$ is even.
\end{proof}

\begin{lemma}
\label{lemma:sos-square-power-upper-bound}
For variable $X \in \mathbb{R}$ and $t \geq 1$ integer,
\[\{0 \leq X \leq 1\} \sststile{t}{X} \{X^t \leq 1\}.\]
\end{lemma}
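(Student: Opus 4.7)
The plan is to prove $1 - X^t \ge 0$ by exhibiting the explicit factorization
\[
1 - X^t \;=\; (1-X)\sum_{i=0}^{t-1} X^i,
\]
and then observing that the right-hand side is a manifest sum-of-squares combination of the axioms $\{X \ge 0,\ 1-X \ge 0\}$ of degree at most $t$.

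Concretely, I would first argue that $\sum_{i=0}^{t-1} X^i$ has a degree-$(t-1)$ sos proof under the single axiom $\{X \geq 0\}$ by handling even and odd powers separately: for even $i=2j$, the term $X^{2j} = (X^j)^2$ is itself a square, while for odd $i=2j+1$, the term $X^{2j+1} = X \cdot (X^j)^2$ is a square multiple of the axiom $X \ge 0$. Summing these contributions gives a nonnegativity certificate for $\sum_{i=0}^{t-1} X^i$ of degree $t-1$.

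Next I would multiply this certificate through by the axiom $(1-X) \ge 0$. The resulting expression decomposes as
\[
1 - X^t \;=\; \sum_{j=0}^{\lfloor (t-1)/2\rfloor} (1-X)\,(X^j)^2 \;+\; \sum_{j=0}^{\lfloor (t-2)/2\rfloor} (1-X)\,X\,(X^j)^2,
\]
where each summand is a square polynomial multiplied by a product of axioms in $\{X \ge 0,\ 1-X \ge 0\}$. In the first family the degree is $1 + 2j \le t$, and in the second family the degree is $2 + 2j \le t$ (using $2j+1 \le t-1$). Thus every term has total degree at most $t$, which matches the required proof degree in the statement.

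The only real thing to check is this degree bookkeeping; nothing else is subtle, and no induction is needed because the closed-form factorization gives the certificate directly. An alternative, slightly heavier route would be induction on $t$ using $X^t = X \cdot X^{t-1}$ together with the axiom $X \geq 0$ to multiply the inductive hypothesis $1 - X^{t-1} \ge 0$, but the geometric-series factorization is cleaner and avoids tracking how degrees accumulate through repeated multiplications.
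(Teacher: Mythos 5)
Your proof is correct and follows essentially the same route as the paper: the paper also writes $1-X^t=(1-X)(1+X+\dots+X^{t-1})$ and certifies each $X^i$ via the even/odd square decomposition (packaged there as \cref{lemma:sos-square-power-lower-bound}). Your degree bookkeeping is accurate, so nothing is missing.
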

\begin{proof}
We have $\sststile{X}{t} 1 - X^t = (1-X)(1+X+...+X^{t-1}) \geq 0$, where we used that, by \cref{lemma:sos-square-power-lower-bound}, $\sststile{i}{X} X^i \geq 0$ for $i \in \{0, ..., t-1\}$.
\end{proof}

\begin{lemma}[Restatement of Lemma A.3 in \cite{DBLP:journals/corr/abs-1711-11581}]
\label{lemma:sos-square-root-bound}
For variable $X \in \mathbb{R}$ and $t \geq 2$ even,
\[\{X^t \leq 1\} \sststile{t}{X} \{X \leq 1\}.\]
\end{lemma}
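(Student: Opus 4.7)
The plan is to exhibit an explicit SOS identity expressing $1-X$ as a nonnegative combination of $(1-X^t)$ (the axiom) and a polynomial that is a sum of squares of degree at most $t$. The key algebraic identity, verified by direct expansion, is
\[
t(1-X) \;=\; (1-X^t) + \bigl(X^t - tX + (t-1)\bigr),
\]
so it suffices to exhibit a degree-$t$ SOS decomposition of $f(X) := X^t - tX + (t-1)$.

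I would first verify that $f$ is pointwise nonnegative on $\mathbb{R}$. Since $t$ is even, $f(X)\to +\infty$ as $|X|\to\infty$. The derivative $f'(X) = t(X^{t-1}-1)$ vanishes at a unique real point $X=1$ (because $t-1$ is odd), and $f(1) = 1 - t + (t-1) = 0$. Hence $f \geq 0$ with equality only at $X=1$. By the classical univariate positivity result, any nonnegative univariate polynomial of degree $2d$ can be written as $g_1(X)^2 + g_2(X)^2$ with $\deg g_i \leq d$; applying this to $f$ yields $f = g_1^2 + g_2^2$ with $\deg g_i \leq t/2$. Substituting back produces
\[
1 - X \;=\; \Bigl(\tfrac{1}{\sqrt{t}}\,g_1(X)\Bigr)^2 + \Bigl(\tfrac{1}{\sqrt{t}}\,g_2(X)\Bigr)^2 + \Bigl(\tfrac{1}{\sqrt{t}}\Bigr)^2 \cdot (1-X^t),
\]
which is a valid degree-$t$ SOS derivation: the two squared polynomials attached to $S=\emptyset$ each have degree $\leq t$, and the coefficient $1/t$ attached to the axiom $1-X^t$ is a nonnegative constant so that the $S=\{1\}$ term also has degree $t$.

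The main obstacle is making the decomposition of $f$ fully explicit rather than invoking the univariate positivity theorem as a black box. Concretely, one can extract the double root at $X=1$ via the telescoping $X^t - 1 = (X-1)\sum_{i=0}^{t-1}X^i$ to obtain
\[
f(X) \;=\; (X^t-1) - t(X-1) \;=\; (X-1)^2 R(X), \qquad R(X) := \sum_{j=0}^{t-2}(t-1-j)\,X^j,
\]
and then either verify $R(X) \geq 0$ on $\mathbb{R}$ by a calculus argument, or pair the complex conjugate roots of $R$ into real quadratic factors, each of which has a manifest two-square decomposition. Either route yields the required SOS representation with each squared summand of degree at most $t/2$; the degree accounting then exactly matches the $\sststile{t}{X}$ annotation, completing the proof.
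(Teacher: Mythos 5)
Your proof is correct. Note first that the paper itself gives no proof of this lemma --- it is imported by citation from Lemma~A.3 of the referenced work --- so there is no in-text argument to compare against; your write-up supplies the standard self-contained proof. The central identity $t(1-X) = (1-X^t) + \bigl(X^t - tX + (t-1)\bigr)$ checks out, the nonnegativity of $f(X) = X^t - tX + (t-1)$ is correctly established (unique critical point at $X=1$ because $t-1$ is odd, with $f(1)=0$ and $f\to+\infty$ at $\pm\infty$), and the appeal to the univariate two-squares theorem with the degree bound $\deg g_i \le t/2$ (leading coefficients of squares cannot cancel) gives exactly the degree-$t$ certificate required by \cref{def:sos-proof-preliminaries}, with the constant square $1/t$ multiplying the axiom. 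One small simplification to your ``explicit'' route: having factored $f = (X-1)^2 R(X)$, you do not need a separate calculus argument for $R \ge 0$ --- it follows immediately from $f \ge 0$ on $\mathbb{R}\setminus\{1\}$ together with $R(1) = t(t-1)/2 > 0$, after which the same two-squares theorem applied to $R$ (degree $t-2$, even) yields $f = \bigl((X-1)h_1\bigr)^2 + \bigl((X-1)h_2\bigr)^2$ with each factor of degree at most $t/2$.
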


\begin{lemma}
\label{lemma:sos-cs}
For variables $u, v \in \mathbb{R}^d$,
\[\sststile{2}{u,v} \langle u, v\rangle^2 \leq \|u\|^2 \cdot \|v\|^2.\]
\end{lemma}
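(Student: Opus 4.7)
The plan is to give an explicit sum-of-squares certificate via the classical Lagrange identity. Specifically, for $u = (u_1,\ldots,u_d)$ and $v = (v_1,\ldots,v_d)$, I would write down the polynomial identity
\[
\|u\|^2\,\|v\|^2 - \langle u,v\rangle^2
= \tfrac{1}{2}\sum_{i,j=1}^{d} (u_i v_j - u_j v_i)^2.
\]
Verifying this is a routine expansion: the right-hand side equals $\tfrac{1}{2}\sum_{i,j}(u_i^2 v_j^2 + u_j^2 v_i^2) - \sum_{i,j} u_i u_j v_i v_j = \|u\|^2\|v\|^2 - \langle u,v\rangle^2$.

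Each term $(u_i v_j - u_j v_i)^2$ is a square of a bilinear polynomial in $(u,v)$, so the right-hand side is a manifest sum of squares. This establishes $\sststile{}{u,v} \|u\|^2 \|v\|^2 - \langle u,v\rangle^2 \ge 0$, which is exactly the claim. (The degree of each summand in the certificate is $4$; the degree bookkeeping then follows directly from the definition of sum-of-squares proofs given in \cref{def:sos-proof-preliminaries}.)

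There is no real obstacle here: the proof is a one-line algebraic identity. The only thing to double-check is the degree bound stated in the lemma, which is immediate from the fact that each polynomial $u_i v_j - u_j v_i$ is bilinear.
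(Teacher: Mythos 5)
Your proof is correct and is essentially the same as the paper's: the paper also invokes Lagrange's identity to exhibit $\|u\|^2\|v\|^2 - \langle u,v\rangle^2$ as an explicit sum of squares of the bilinear forms $u_iv_j - u_jv_i$ (summed over $i<j$, which matches your symmetrized sum with the factor $\tfrac12$).
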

\begin{proof}
By Lagrange's identity, 
\[\langle u, v\rangle^2 = \|u\|^2 \cdot \|v\|^2 + \sum_{i=1}^{d-1} \sum_{j=i+1}^d (u_i v_j - u_j v_j)^2,\]
so 
\[\sststile{2}{u,v} \langle u, v\rangle^2 \leq \|u\|^2 \cdot \|v\|^2.\]
\end{proof}

\begin{lemma}
\label{lemma:sample-direction-lower-bound-root}
For variable \(X \in \mathbb{R}\) and $\delta \in \mathbb{R}$ and $t \geq 1$ integer,
\[\{0 \leq X \leq 1, X^t \geq \delta\} \sststile{t}{X} \{X \geq \delta\}.\]
\end{lemma}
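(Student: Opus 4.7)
The plan is to express $X - \delta$ as a sum of two quantities, each with a short SOS derivation from the axioms. Specifically, I will use the telescoping identity
\[
X - \delta \;=\; X(1-X)\,\Paren{1 + X + X^2 + \cdots + X^{t-2}} \;+\; (X^t - \delta),
\]
which is immediate from $X - X^t = X(1 - X^{t-1})$ together with the geometric-sum factorization $1 - X^{t-1} = (1-X)\sum_{i=0}^{t-2} X^i$. The second summand is literally the axiom $X^t \geq \delta$, so the whole task reduces to producing a degree-$t$ SOS proof that the first summand is non-negative under the axioms $\{X \geq 0,\ 1-X \geq 0\}$.

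For this I would handle each monomial contribution $X(1-X)X^i$ with $0 \leq i \leq t-2$ separately, using a parity case split so that every factor of $X$ is either absorbed into a square or consumed by an axiom. If $i$ is even, write
\[
X(1-X)X^i \;=\; \bigl(X^{i/2}\bigr)^2 \cdot X \cdot (1-X),
\]
a square times a product of the two axioms, of degree $i+2 \leq t$. If $i$ is odd, write
\[
X(1-X)X^i \;=\; \bigl(X^{(i+1)/2}\bigr)^2 \cdot (1-X),
\]
a square times one axiom, again of degree $i+2 \leq t$. Summing these expressions over $i \in \{0, \ldots, t-2\}$ yields a degree-$t$ SOS certificate for the first summand, and adding the axiom $X^t - \delta \geq 0$ gives the desired degree-$t$ SOS proof of $X \geq \delta$.

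There is essentially no obstacle: the only insight is the factorization of $1 - X^{t-1}$, and the parity split is mechanical. The degree bound is tight, since the $i = t-2$ term already has total degree $t$, matching the claimed degree of the sum-of-squares proof.
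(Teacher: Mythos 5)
Your proof is correct and follows essentially the same route as the paper: the identity $X - \delta = (1-X)\sum_{i=1}^{t-1}X^i + (X^t-\delta)$ together with a parity split to certify each term $(1-X)X^{i}$ as a square times axioms, which is exactly the paper's decomposition combined with its helper lemma showing $\{X\ge 0\}\sststile{i}{}X^i\ge 0$. (Incidentally, your version of the geometric-sum identity is the correct one; the paper's displayed sum $1+X+\cdots+X^{t-1}$ telescopes to $1-X^t$ rather than $X-X^t$ and should start at $X$.)
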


\begin{proof}
We have $\sststile{t}{X} X = (X^t - \delta) + (1-X) (1 + X + ... + X^{t-1}) + \delta \geq \delta$, where we used that, by \cref{lemma:sos-square-power-lower-bound}, $\sststile{i}{X} X^i \geq 0$ for $i \in \{0, ..., t-1\}$.
\end{proof}

\begin{lemma}
\label{lemma:sample-direction-boost}
For variable \(X \in \mathbb{R}\) and $C \geq 2$ and $t \geq 1$ integer,
\[\left\{0 \leq X \leq \frac{1}{Ct}\right\} \sststile{t}{X} \left\{(1-X)^t \leq 1 - \frac{C-2}{C-1} tX\right\}.\]
\end{lemma}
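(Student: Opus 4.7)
The plan is to expand $(1-X)^t$ by the binomial theorem as $1 - tX + R(X)$ where $R(X) = \sum_{s=2}^t \binom{t}{s}(-X)^s$, and to bound $R(X) \leq \tfrac{tX}{C-1}$ via a degree-$t$ sum-of-squares derivation; the conclusion follows immediately. The first observation is that the odd terms only help: writing $R(X) = \sum_{s=2}^t \binom{t}{s} X^s - \sum_{s \geq 3,\ s\text{ odd}} 2\binom{t}{s} X^s$, and noting that every monomial $X^s$ is SOS-non-negative from $\{X \geq 0\}$ (since $X^s = X \cdot (X^{(s-1)/2})^2$ for odd $s$ and is a square for even $s$), one obtains $R(X) \leq \sum_{s=2}^t \binom{t}{s} X^s$.

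The main technical step is to upper bound each $X^s$ linearly in $X$. For $s \geq 2$ I use the telescoping factorization
\[
(Ct)^{-(s-1)} X - X^s \;=\; X \cdot \left(\tfrac{1}{Ct} - X\right) \cdot \sum_{a=0}^{s-2} (Ct)^{-a} X^{s-2-a},
\]
in which every factor is non-negative under the axioms (using $X \geq 0$, $\tfrac{1}{Ct} - X \geq 0$, and the previous remark on monomials $X^j$). This yields a degree-$s$ SOS proof of $X^s \leq (Ct)^{-(s-1)} X$. Multiplying by $\binom{t}{s}$ and summing over $s=2,\dots,t$,
\[
\sum_{s=2}^t \binom{t}{s} X^s \;\leq\; CtX \sum_{s=2}^t \binom{t}{s}(Ct)^{-s}.
\]

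All that remains is a scalar inequality in $C$. The binomial theorem gives $\sum_{s=0}^t \binom{t}{s}(Ct)^{-s} = (1 + 1/(Ct))^t \leq e^{1/C}$, so the tail from $s=2$ is at most $e^{1/C} - 1 - 1/C$. Applying the elementary bound $e^x - 1 - x \leq \tfrac{x^2}{2} e^x$ at $x = 1/C$ shows $C(C-1)(e^{1/C} - 1 - 1/C) \leq \tfrac{(C-1)\sqrt{e}}{2C} \leq \tfrac{\sqrt{e}}{2} < 1$ for every $C \geq 2$. Hence $CtX \sum_{s=2}^t \binom{t}{s}(Ct)^{-s} \leq \tfrac{tX}{C-1}$, which combined with the first paragraph gives $R(X) \leq \tfrac{tX}{C-1}$, and therefore $(1-X)^t \leq 1 - \tfrac{C-2}{C-1} tX$. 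The main obstacle is the middle paragraph: the inequality $X^s \leq (Ct)^{-(s-1)} X$ is analytically trivial, and the substance of the proof is producing a syntactic SOS certificate for it from the two given axioms, which is exactly what the telescoping identity accomplishes.
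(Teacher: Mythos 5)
Your proof is correct and follows essentially the same route as the paper's: binomial expansion, discarding the signs of the odd terms using $X\geq 0$, and the identical telescoping SOS certificate for $X^s \leq (Ct)^{-(s-1)}X$ from the two axioms. The only difference is cosmetic: the paper bounds $\binom{t}{s}\leq t^s$ and sums the geometric series $\sum_i C^{-i}\leq \frac{1}{C-1}$, whereas you retain the binomial coefficients and close with the estimate $C(C-1)\bigl(e^{1/C}-1-\tfrac1C\bigr)\leq \tfrac{\sqrt e}{2}<1$, which is a slightly sharper but equivalent scalar computation.
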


\begin{proof}
We have that
\begin{align*}
\sststile{t}{X}
(1-X)^t &= 1 - t X + \sum_{i=2}^t \binom{t}{i} (-1)^i X^i
\stackrel{(1)}{\leq} 1 - t X + \sum_{i=2}^t \binom{t}{i} X^i\\
&\stackrel{(2)}{\leq} 1 - t X + \sum_{i=2}^t t^i X^i
= 1 - t X + tX \sum_{i=1}^{t-1} t^i X^i\\
&\stackrel{(3)}{\leq} 1 - tX + tX \sum_{i=1}^{t-1} \frac{1}{C^i}
\stackrel{(4)}{\leq} 1 - tX + \frac{1}{C-1} tX\\
&= 1 - \frac{C-2}{C-1} tX.
\end{align*}

We use throughout that, by \cref{lemma:sos-square-power-lower-bound}, $\sststile{i}{X} X^i \geq 0$ for $i \in \{0, ..., t\}$. In (1) we used that \(\sststile{i}{X} -X^i \leq X^i\). In (2) we used that \(\binom{t}{i} \leq t^i\). In (3) we used that \(\sststile{1}{X} 0 \leq X \leq \frac{1}{Ct}\) implies that \(\sststile{i+1}{X} 0 \leq X^{i+1} \leq \frac{1}{(Ct)^i} X\). The upper bound is true because
\[\sststile{i+1}{X} \frac{1}{(Ct)^i} X - X^{i+1} = \left(\frac{1}{Ct} - X\right) \left(\sum_{j=0}^{i-1} \frac{1}{(Ct)^j} X^{i-1-j} \right) X \geq 0.\]
In (4) we used that $\sum_{i=1}^{t-1} \frac{1}{C^i} \leq \sum_{i=1}^\infty \frac{1}{C^i} = \frac{1}{1-\frac{1}{C}}-1=\frac{1}{C-1}$.
\end{proof}

\begin{lemma}[Restatement of Claim 1.5 in \cite{MR3966537-RaghavendraSchrammSteurerICM18}]
\label{lemma:sos-pe-cs}
If $\tilde{\mathbb{E}}$ is a degree-$d$ pseudo-expectation and if $p, q$ are polynomials of degree at most $\frac{d}{2}$, then $\tilde{\mathbb{E}}[q(x) \cdot p(x)] \leq \frac{1}{2} \tilde{\mathbb{E}}[q(x)^2] + \frac{1}{2} \tilde{\mathbb{E}}[p(x)^2]$.
\end{lemma}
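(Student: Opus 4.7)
The plan is to derive the inequality directly from the defining property of a pseudo-expectation, namely that $\tilde{\mathbb{E}}[r(x)^2] \geq 0$ for every polynomial $r(x)$ of degree at most $d/2$. The key observation is that the degree constraint on $p$ and $q$ ensures $p(x) - q(x)$ has degree at most $d/2$, so $(p(x) - q(x))^2$ is a valid square of degree at most $d$ that the pseudo-expectation must treat as non-negative.

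Concretely, I would first note that since $\deg(p), \deg(q) \leq d/2$, the polynomial $r(x) \seteq p(x) - q(x)$ also has degree at most $d/2$. Then the pseudo-expectation axiom for degree-$d$ pseudo-distributions gives
\[
\tilde{\mathbb{E}}\bigl[(p(x) - q(x))^2\bigr] \geq 0.
\]
Expanding the square and applying linearity of pseudo-expectation yields
\[
\tilde{\mathbb{E}}[p(x)^2] - 2\,\tilde{\mathbb{E}}[p(x) q(x)] + \tilde{\mathbb{E}}[q(x)^2] \geq 0,
\]
and rearranging gives the desired bound
\[
\tilde{\mathbb{E}}[q(x) \cdot p(x)] \leq \tfrac{1}{2}\tilde{\mathbb{E}}[q(x)^2] + \tfrac{1}{2}\tilde{\mathbb{E}}[p(x)^2].
\]

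There is no real obstacle here; the proof is a two-line consequence of pseudo-expectation positivity on squares, and the only thing to check carefully is that the degree bookkeeping works out so that $(p - q)^2$ stays within the degree budget of the pseudo-distribution. Since $\deg((p-q)^2) \leq 2 \cdot (d/2) = d$, this is automatic.
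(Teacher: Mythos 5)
Your proof is correct: since the paper only cites this as Claim 1.5 of \cite{MR3966537-RaghavendraSchrammSteurerICM18} without reproducing a proof, your argument via $\tilde{\mathbb{E}}\bigl[(p(x)-q(x))^2\bigr] \geq 0$ is exactly the standard (and intended) derivation, and the degree check $\deg\bigl((p-q)^2\bigr) \leq d$ is precisely what the paper's definition of a degree-$d$ pseudo-distribution requires. Nothing further is needed.
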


\begin{lemma}
\label{lemma:sos-pe-jensen}
If $\tilde{\mathbb{E}}$ is a degree-$d$ pseudo-expectation and if $p$ is a polynomial of degree at most $\frac{d}{2}$, then $(\tilde{\mathbb{E}}[p(x)])^2 \leq \tilde{\mathbb{E}}[p(x)^2]$.
\end{lemma}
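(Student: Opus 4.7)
The plan is to apply the pseudo-expectation to an explicit square and read off the desired inequality. Let $c \seteq \tilde{\mathbb{E}}[p(x)] \in \mathbb{R}$ and consider the polynomial $(p(x) - c)^2$. Since $p$ has degree at most $d/2$, this is a square of a polynomial of degree at most $d/2$, hence by definition of a pseudo-distribution of degree $d$ we have $\tilde{\mathbb{E}}[(p(x)-c)^2] \geq 0$.

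Expanding using linearity of the pseudo-expectation (and $\tilde{\mathbb{E}}[1] = 1$) gives
\[
0 \le \tilde{\mathbb{E}}[(p(x)-c)^2] = \tilde{\mathbb{E}}[p(x)^2] - 2c\,\tilde{\mathbb{E}}[p(x)] + c^2 = \tilde{\mathbb{E}}[p(x)^2] - c^2,
\]
which rearranges to $(\tilde{\mathbb{E}}[p(x)])^2 = c^2 \le \tilde{\mathbb{E}}[p(x)^2]$, as desired.

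Alternatively, one can derive the statement as a direct corollary of \cref{lemma:sos-pe-cs} by taking the constant polynomial $q(x) \equiv c = \tilde{\mathbb{E}}[p(x)]$: this yields $c^2 = \tilde{\mathbb{E}}[c \cdot p(x)] \le \tfrac{1}{2}c^2 + \tfrac{1}{2}\tilde{\mathbb{E}}[p(x)^2]$, from which $c^2 \le \tilde{\mathbb{E}}[p(x)^2]$ follows. There is no real obstacle here; the only thing to be careful about is the degree bookkeeping, i.e., verifying that $(p(x)-c)^2$ has degree at most $d$ so that nonnegativity under the degree-$d$ pseudo-expectation is legitimate, which is immediate from the hypothesis $\deg p \le d/2$.
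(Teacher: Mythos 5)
Your proof is correct, and it takes a genuinely different (and more elementary) route than the paper. The paper forms a second independent copy $\tilde{\mathbb{E}}_{x'}$ of the pseudo-expectation over a fresh variable $x'$, writes $(\tilde{\mathbb{E}}_x[p(x)])^2 = \tilde{\mathbb{E}}_{x,x'}[p(x)p(x')]$, and then applies \cref{lemma:sos-pe-cs} to the joint pseudo-expectation. That argument implicitly relies on the fact that the product of two pseudo-distributions is again a valid pseudo-distribution over the joint variables, which the paper does not spell out. Your argument sidesteps this entirely: setting $c = \tilde{\mathbb{E}}[p(x)]$ and using only nonnegativity of $\tilde{\mathbb{E}}[(p(x)-c)^2]$ together with linearity and $\tilde{\mathbb{E}}[1]=1$, all of which follow directly from the paper's definition of a degree-$d$ pseudo-distribution since $\deg\bigl((p-c)^2\bigr) \le d$. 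Your second variant (applying \cref{lemma:sos-pe-cs} with the constant polynomial $q \equiv c$) is also valid and stays within a single copy of the pseudo-expectation. Both of your routes are self-contained where the paper's tensoring step is not; the paper's approach is the one that generalizes when one genuinely needs a two-variable Cauchy--Schwarz, but for this statement your argument is the cleaner one.
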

\begin{proof}
Let $\tilde{\mathbb{E}}_x$ be the given pseudo-expectation over $x$, and let $\tilde{\mathbb{E}}_{x'}$ be a copy of the given pseudo-expectaiton but over $x'$ instead of $x$. Then we have
\[(\tilde{\mathbb{E}}_x[p(x)])^2 = (\tilde{\mathbb{E}}_x[p(x)])(\tilde{\mathbb{E}}_{x'}[p(x')]) = \tilde{\mathbb{E}}_{x,x'} [p(x) p(x')].\]
Then, by \cref{lemma:sos-pe-cs},
\[(\tilde{\mathbb{E}}_x[p(x)])^2 \leq \frac{1}{2} \tilde{\mathbb{E}}_{x,x'}[p(x)^2] + \frac{1}{2} \tilde{\mathbb{E}}_{x,x'}[p(x')^2] = \tilde{\mathbb{E}}_{x}[p(x)^2].\]
\end{proof}

\begin{lemma}[Restatement of Lemma 4.5 in \cite{MR3727623-BarakSteurerICM14}]
\label{lemma:sos-pe-triangle}
If $\tilde{\mathbb{E}}$ is a degree-$d$ pseudo-expectation over vectors $u$, $v$, then 
\[\left(\tilde{\mathbb{E}}\Norm{u+v}_d^d\right)^{1/d} \leq \left(\tilde{\mathbb{E}}\Norm{u}_d^d\right)^{1/d} + \left(\tilde{\mathbb{E}}\Norm{v}_d^d\right)^{1/d}.\]
\end{lemma}

We give now some sum-of-squares proofs that are more specific to our setting.
The purpose of \cref{lemma:find-direction-sos-power-aid-lower-bound} and \cref{lemma:find-direction-sos-power-aid-upper-bound} is to aid in transforming some sum-of-squares proofs about polynomials \(p(x)\) and \(q(x)\) into sum-of-squares proofs about polynomials \(p(x)^t\) and \(q(x)^t\).
\cref{lemma:find-direction-sos-power-aid-lower-bound} shows that, under some conditions, if \(\{p(x) \geq 1\} \sststile{}{x} \{q(x) \geq 1\}\), then also \(\{p(x)^t \geq 1\} \sststile{}{x} \{q(x)^t \geq 1\}\),
while \cref{lemma:find-direction-sos-power-aid-upper-bound} shows that, again under some conditions, if \(\{p(x) \leq 1\} \sststile{}{x} \{q(x) \geq 1\}\), then also \(\{p(x)^t \leq 1\} \sststile{}{x} \{q(x)^t \geq 1\}\).
These are used in \cref{lemma:find-direction-sos-power-main-lower-bound} and \cref{lemma:find-direction-sos-power-main-upper-bound}, which implement sum-of-squares proofs with some polynomials raised to the \(t\)-th power.

\begin{lemma}
\label{lemma:find-direction-sos-power-aid-lower-bound}
Let \(p, q: \mathbb{R} \to \mathbb{R}\) with \(p(x) \geq 0\) for all \(x \in \mathbb{R}\).
Let \(\gamma > 1\) be a real number and \(t \geq 2\) be an even integer.
Suppose that, for all \(x \in \mathbb{R}\), $q(x) - 1 - \gamma (p(x) - 1) \geq 0$.
Then, for all \(x \in \mathbb{R}\),
\[q(x)^t - 1 - \gamma (p(x)^t - 1) \geq 0.\]
\end{lemma}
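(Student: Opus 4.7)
The plan is to reduce the functional inequality to a numerical one by fixing $x$ and setting $a=p(x)\ge 0$, $b=q(x)$, so that the hypothesis becomes $b\ge 1+\gamma(a-1)$ and the goal becomes $b^t\ge 1+\gamma(a^t-1)$. I would split on the sign of $c:=1+\gamma(a-1)$ and, in the substantive branch $c>0$, recast the problem as a one-variable statement in the parameter $\gamma$ that is then handled by convexity.

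In the easy branch $c\le 0$ we have $a\le 1-1/\gamma<1$, and since $t\ge 1$ this gives $a^t\le a\le 1-1/\gamma$, so $1+\gamma(a^t-1)\le 0$; because $t$ is even, $b^t\ge 0$ and the claim follows at once. In the branch $c>0$, both $b$ and $c$ are nonnegative with $b\ge c$, hence $b^t\ge c^t$, and it suffices to prove $F(\gamma)\ge 0$ for
\[ F(\lambda) \;:=\; (1+\lambda(a-1))^t - 1 - \lambda(a^t-1). \]
A direct check gives $F(1)=0$. Moreover, since $t$ is even, the factor $(1+\lambda(a-1))^{t-2}$ is nonnegative, so
\[ F''(\lambda) \;=\; t(t-1)(1+\lambda(a-1))^{t-2}(a-1)^2 \;\ge\; 0, \]
meaning $F'$ is nondecreasing. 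It therefore would suffice to show $F'(1)\ge 0$, from which $F'(\lambda)\ge 0$ on $[1,\infty)$ and hence $F(\gamma)\ge F(1)=0$ follow by monotonicity.

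The main obstacle is the sign of $F'(1)$, which is the only step that substantively uses $a\ge 0$. I would factor $a^t-1=(a-1)\sum_{k=0}^{t-1}a^k$ and rewrite
\[ F'(1) \;=\; ta^{t-1}(a-1)-(a^t-1) \;=\; (a-1)\sum_{k=0}^{t-2} a^k\bigl(a^{t-1-k}-1\bigr). \]
A short case split on whether $a\ge 1$ or $0\le a<1$ then shows that $a-1$ and every factor $a^{t-1-k}-1$ in the sum share a common sign (all nonnegative in the first regime, all nonpositive in the second), so the product is nonnegative in either case. This yields $F'(1)\ge 0$ and closes the argument.
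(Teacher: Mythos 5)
Your proof is correct, and it takes a genuinely different route from the paper's. The paper fixes $\gamma$ and studies the difference as a univariate polynomial in the argument, i.e.\ $f(x)=(1+\gamma(x-1))^t-1-\gamma(x^t-1)$: it notes $f\to+\infty$ at $\pm\infty$ (this is where $\gamma>1$ enters, making the leading coefficient $\gamma^t-\gamma$ positive), shows the only critical point is $x=1$ (using that $t-1$ is odd, so $(\gamma(x-1)+1)^{t-1}=x^{t-1}$ forces $\gamma(x-1)+1=x$), and evaluates $f(1)=0$; the degenerate branch $1+\gamma(p(x)-1)<0$ is handled exactly as you handle $c\le 0$. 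You instead fix the argument, set $a=p(x)\ge 0$, and do the calculus in the parameter: $F(\lambda)=(1+\lambda(a-1))^t-1-\lambda(a^t-1)$ is convex in $\lambda$ (since $t-2$ is even), $F(1)=0$, and $F'(1)=ta^{t-1}(a-1)-(a^t-1)\ge 0$, so $F(\gamma)\ge 0$ for $\gamma\ge 1$; your factorization argument for $F'(1)$ is fine (it is the tangent-line inequality for the convex function $x\mapsto x^t$, and in fact holds for all real $a$ when $t$ is even, not only $a\ge 0$). Your route is more elementary — no limit argument and no uniqueness-of-critical-points step — and it makes transparent where $\gamma>1$ is used (monotonicity of $F'$ from $\lambda=1$ onward); the paper's argument is a bit shorter and proves nonnegativity of the auxiliary polynomial over all of $\mathbb{R}$, though that extra generality is not needed since $p\ge 0$ is assumed. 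One small inaccuracy in your commentary only: the sign analysis of $F'(1)$ is not ``the only step that substantively uses $a\ge 0$'' — the branch $c\le 0$ also needs $a\ge 0$ to conclude $a^t\le a$ (for $a<0$ and $t$ even one has $a^t\ge 0>a$, and indeed the lemma is false without $p\ge 0$). This does not affect the correctness of the proof itself.
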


\begin{proof}
We consider two cases.
First, suppose that \(1 + \gamma (p(x) - 1) < 0\).
This implies that \(p(x) < 1 - \frac{1}{\gamma}\), which implies that \(1+\gamma(p(x)^t - 1) < 1 + \gamma (p(x) - 1) < 0\).
Therefore $q(x)^t \geq 1 + \gamma(p(x)^t - 1)$ is satisfied trivially for \(t\) even.

Second, suppose that \(1 + \gamma(p(x) - 1) \geq 0\).
Then the given assumption implies that ${q(x)^t \geq \left(1 + \gamma (p(x) - 1)\right)^t}$.
Then
\[q(x)^t - 1 - \gamma (p(x)^t - 1) \geq \left(1 + \gamma (p(x) - 1)\right)^t - 1 - \gamma(p(x)^t - 1).\]

To show that the expression on the right-hand side is non-negative, it suffices to show that
\[f(x) = \left(1 + \gamma (x - 1)\right)^t - 1 - \gamma(x^t - 1)\]
is non-negative everywhere.
For \(\gamma > 1\), we have that \(\lim_{x \to -\infty} f(x) = \infty\) and \(\lim_{x\to\infty} f(x) = \infty\).
Then, it suffices to show that \(f(x)\) is non-negative at all its critical points. 
We have
\[\frac{d}{dx} f(x) = \gamma t (\gamma(x-1)+1)^{t-1} - \gamma t x^{t-1},\]
so
\[\frac{d}{dx} f(x) = 0 \Longleftrightarrow \gamma(x-1)+1 = x \Longleftrightarrow x=1.\]
We have \(f(1) = 0 \geq 0\).
Therefore, \(f(x) \geq 0\) for all \(x \in \mathbb{R}\).

\end{proof}

\begin{lemma}
\label{lemma:find-direction-sos-power-aid-upper-bound}
Let \(p, q: \mathbb{R} \to \mathbb{R}\) for all \(x \in \mathbb{R}\).
Let \(\gamma > 0\) be a real number and \(t \geq 2\) be an even integer.
Suppose that, for all \(x \in \mathbb{R}\), $q(x) - 1 - \gamma (1- p(x)) \geq 0$.
Then, for all \(x \in \mathbb{R}\),
\[q(x)^t - 1 - \gamma (1 - p(x)^t) \geq 0.\]
\end{lemma}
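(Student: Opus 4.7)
The plan is to follow the same two-case strategy used in the proof of Lemma~\ref{lemma:find-direction-sos-power-aid-lower-bound}, adapted to the ``upper-bound'' variant. Split on the sign of $1 + \gamma(1-p(x))$. In the trivial case where $1 + \gamma(1 - p(x)) < 0$, note that this is equivalent to $p(x) > 1 + 1/\gamma > 1$. Since $t \geq 2$ is even, $p(x) > 1$ implies $p(x)^t \geq p(x)^2 > p(x)$, hence
\[
1 + \gamma\bigl(1 - p(x)^t\bigr) < 1 + \gamma\bigl(1 - p(x)\bigr) < 0 \leq q(x)^t,
\]
so the inequality holds. This is the step that replaces the use of $p \geq 0$ in the earlier lemma; the fact that $t$ is even takes care of the sign of $q(x)^t$ and also of the monotonicity $p > 1 \Rightarrow p^t > p$.

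In the main case $1 + \gamma(1-p(x)) \geq 0$, the hypothesis gives $q(x) \geq 1 + \gamma(1-p(x)) \geq 0$, so I can raise both sides to the $t$-th power (with $t$ even and both sides nonnegative) to obtain $q(x)^t \geq (1 + \gamma(1-p(x)))^t$. It then suffices to prove the purely univariate inequality
\[
f(p) \;\seteq\; \bigl(1 + \gamma(1-p)\bigr)^t - 1 - \gamma(1 - p^t) \;\geq\; 0 \qquad \text{for all } p \in \R.
\]
I would argue this by a one-variable calculus check mirroring the previous lemma: both terms in $f$ are polynomials of even degree $t$ with positive leading coefficient, so $f(p) \to +\infty$ as $p \to \pm\infty$; differentiating,
\[
f'(p) = -\gamma t \bigl(1 + \gamma(1-p)\bigr)^{t-1} + \gamma t\, p^{t-1},
\]
and the equation $p^{t-1} = (1 + \gamma(1-p))^{t-1}$ simplifies (because $t-1$ is odd, so $x \mapsto x^{t-1}$ is strictly increasing) to the linear equation $p = 1 + \gamma(1-p)$ with unique solution $p = 1$. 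Since $f(1) = 0$, this unique critical point is the global minimum and $f \geq 0$ everywhere.

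The main subtlety, and essentially the only place where parity enters, is the uniqueness of the critical point: if $t-1$ were even one would also have to consider the solution $p = -(1+\gamma(1-p))$, which would complicate the sign analysis. Because the statement restricts to even $t \geq 2$, the argument stays clean, and no hypothesis on the sign of $p$ is needed. The rest is routine verification that I would not spell out further.
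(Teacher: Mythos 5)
Your proposal is correct and follows essentially the same two-case argument as the paper: the trivial case where $1+\gamma(1-p(x))<0$ (using $p(x)>1$ and evenness of $t$), and the main case where one raises the hypothesis to the $t$-th power and verifies the univariate inequality $\bigl(1+\gamma(1-p)\bigr)^t - 1 - \gamma(1-p^t)\ge 0$ by checking the limits at $\pm\infty$ and the unique critical point $p=1$. Your explicit remarks on why $t-1$ odd gives uniqueness of the critical point and why no sign hypothesis on $p$ is needed are correct elaborations of steps the paper leaves implicit.
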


\begin{proof}
We consider two cases.
First, suppose that \(1 + \gamma (1 - p(x)) < 0\).
This implies that \(p(x) > 1 + \frac{1}{\gamma}\), which implies that \(1+\gamma(1-p(x)^t) < 1 + \gamma (1-p(x)) < 0\).
Therefore $q(x)^t \geq 1 + \gamma(1 - p(x)^t)$ is satisfied trivially for \(t\) even.

Second, suppose that \(1 + \gamma(1 - p(x)) \geq 0\).
Then the given assumption implies that ${q(x)^t \geq \left(1 + \gamma (1 - p(x))\right)^t}$.
Then
\[q(x)^t - 1 - \gamma (1 - p(x)^t) \geq \left(1 + \gamma (1 - p(x))\right)^t - 1 - \gamma(1 - p(x)^t).\]

To show that the expression on the right-hand side, it suffices to show that
\[f(x) = \left(1 + \gamma (1 - x)\right)^t - 1 - \gamma(1 - x^t)\]
is non-negative everywhere.
For \(\gamma > 0\), we have that \(\lim_{x \to -\infty} f(x) = \infty\) and \(\lim_{x\to\infty} f(x) = \infty\).
Then, it suffices to show that \(f(x)\) is non-negative at all its critical points. 
We have
\[\frac{d}{dx} f(x) = \gamma t x^{t-1} - \gamma t (\gamma(1-x)+1)^{t-1},\]
so
\[\frac{d}{dx} f(x) = 0 \Longleftrightarrow x = \gamma(1-x)+1 \Longleftrightarrow x=1.\]
We have \(f(1) = 0 \geq 0\).
Therefore, \(f(x) \geq 0\) for all \(x \in \mathbb{R}\).

\end{proof}

\cref{lemma:find-direction-sos-power-main-lower-bound}, which is used in \cref{lemma:find-direction-moment-maximization}, provides a sum-of-squares proof that if \(\left(x^2+\frac{1}{M}(1-(1-\sigma^2)x^2)\right)^t \geq \frac{1}{\gamma^t}\), then \(x^{2t} \geq \left(\frac{M-\gamma}{\gamma}\frac{1}{M-1+\sigma^2}\right)^t\).

\begin{lemma}
\label{lemma:find-direction-sos-power-main-lower-bound}
For a variable \(x \in \mathbb{R}\) and for \(0 \leq \sigma^2 < 1\) and \(0 < \gamma < M\) and \(M \geq 2\), we have that
\[\left\{\left(\gamma\left(x^2+\frac{1}{M}(1-(1-\sigma^2)x^2)\right)\right)^t \geq 1\right\} \sststile{2t}{x} \left\{\left(\frac{\gamma}{M-\gamma}(M-1+\sigma^2)x^2\right)^t \geq 1\right\}.\]
\end{lemma}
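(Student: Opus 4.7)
The plan is to reduce \cref{lemma:find-direction-sos-power-main-lower-bound} to \cref{lemma:find-direction-sos-power-aid-lower-bound} via a clean affine identity between the two polynomials in play, and then convert the resulting pointwise non-negativity into a degree-$2t$ SOS certificate.

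First I would simplify and name the two relevant polynomials. A direct computation gives
\[
P(x)\;:=\;\gamma\bigl(x^{2}+\tfrac{1}{M}(1-(1-\sigma^{2})x^{2})\bigr)\;=\;\tfrac{\gamma}{M}\bigl((M-1+\sigma^{2})x^{2}+1\bigr),
\]
so the hypothesis becomes $P(x)^{t}\ge 1$ and the target becomes $Q(x)^{t}\ge 1$, where $Q(x):=\tfrac{\gamma}{M-\gamma}(M-1+\sigma^{2})x^{2}$. The heart of the argument is the identity
\[
Q(x)-1\;=\;\gamma'\bigl(P(x)-1\bigr),\qquad \gamma':=\tfrac{M}{M-\gamma},
\]
which is verified by clearing denominators (both sides equal $\tfrac{\gamma(M-1+\sigma^{2})x^{2}-(M-\gamma)}{M-\gamma}$). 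Crucially, $\gamma'>1$ since $0<\gamma<M$, and $P(x)\ge \gamma/M>0$ pointwise since $M-1+\sigma^{2}>0$.

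Applying \cref{lemma:find-direction-sos-power-aid-lower-bound} with the substitution $(p,q,\gamma)\mapsto(P,Q,\gamma')$ then yields
\[
Q(x)^{t}-1-\gamma'\bigl(P(x)^{t}-1\bigr)\;\ge\;0\quad\text{for all }x\in\mathbb{R}.
\]
The underlying reason is convexity of $y\mapsto y^{t}$ on $[0,\infty)$ applied to the representation $P(x)=\tfrac{1}{\gamma'}\bigl(1+\gamma'(P(x)-1)\bigr)+\tfrac{\gamma'-1}{\gamma'}\cdot 1$ as a convex combination of two nonnegative reals (the first term being nonnegative precisely because $P(x)\ge(\gamma'-1)/\gamma'=\gamma/M$, which we have). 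This viewpoint also covers odd $t$ should it arise.

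Finally, the above expression is a univariate polynomial in $x$ of degree at most $2t$ that is non-negative on all of $\mathbb{R}$, hence by the classical Hilbert theorem it is a sum of two squares of polynomials of degree at most $t$, giving an explicit degree-$2t$ SOS certificate. Combining this SOS identity with the axiom $P(x)^{t}-1\ge 0$ weighted by the positive scalar $\gamma'$ yields the required degree-$2t$ SOS proof that $Q(x)^{t}\ge 1$. The only non-mechanical step is identifying the correct multiplier $\gamma'=M/(M-\gamma)$ and verifying the affine identity $Q-1=\gamma'(P-1)$; once that is in hand, \cref{lemma:find-direction-sos-power-aid-lower-bound} together with univariate non-negativity-to-SOS finishes the argument routinely.
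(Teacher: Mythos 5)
Your proposal is correct and takes essentially the same route as the paper's proof: the same exact affine identity $Q(x)-1=\tfrac{M}{M-\gamma}\,(P(x)-1)$ (which the paper reaches by choosing the free parameter $\delta=\tfrac{\gamma}{M-\gamma}$ so that the $x^{2}$ terms cancel), the same reduction to \cref{lemma:find-direction-sos-power-aid-lower-bound} with multiplier $\gamma'=M/(M-\gamma)>1$, and the same conversion of the resulting non-negative univariate degree-$2t$ polynomial into a sum-of-squares certificate. The differences are purely presentational (you verify the identity directly rather than solving for the multiplier, and your convexity remark is an unneeded but harmless bonus).
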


\begin{proof}
Let
\[p(x) = \gamma\left(x^2+\frac{1}{M}(1-(1-\sigma^2)x^2)\right) = \gamma\left(\frac{M-1+\sigma^2}{M} x^2 + \frac{1}{M}\right)\]
and
\[q(x) = \delta (M-1+\sigma^2) x^2,\]
for some \(\delta > 0\) to be determined later.
Note that \(p(x) \geq 0\) for all \(x \in \mathbb{R}\).

We check now that, for all \(x \in \mathbb{R}\),
\[q(x) - 1 - \frac{M\delta}{\gamma}(p(x)-1)\geq 0,\]
which corresponds to a sum-of-squares proof that \(\{p(x) \geq 1\} \sststile{}{x} \{q(x) \geq 1\}\).
We note that the coefficient \(\frac{M\delta}{\gamma}\) was chosen such that \(x^2\) cancels.
We have then
\begin{align*}
&q(x) - 1 - \frac{M\delta}{\gamma}(p(x)-1) = - 1 - \frac{M\delta}{\gamma} \left(\frac{\gamma}{M} - 1\right) = \frac{M\delta}{\gamma} - 1 - \delta.
\end{align*}
Set \(\delta = \frac{\gamma}{M-\gamma}\), which makes the term equal to $0$. Therefore, for all \(x \in \mathbb{R}\),
\[q(x) - 1 - \frac{2\frac{\gamma}{M-\gamma}}{\gamma}(p(x)-1) \geq 0.\]
Therefore, by \cref{lemma:find-direction-sos-power-aid-lower-bound}, for all \(x \in \mathbb{R}\),
\[f(x) = q(x)^t - 1 - \frac{2\frac{\gamma}{M-\gamma}}{\gamma}(p(x)^t-1) \geq 0.\]
Because \(f(x)\) is a univariate polynomial of degree \(2t\), there also exists a sum-of-squares proof of degree at most \(2t\) that \(f(x) \geq 0\).
Note that this constitutes a degree-\(2t\) sum-of-squares proof that \(\{p(x)^t\geq 1\} \sststile{}{x} \{q(x)^t\geq 1\}\).
This concludes the proof.

\end{proof}

\cref{lemma:find-direction-sos-power-main-upper-bound}, which is used in \cref{lemma:find-direction-moment-minimization}, provides a sum-of-squares proof that if \(\left(\frac{x^2+\Delta(1-(1-\sigma^2)x^2)}{1+8\Delta \sigma^2}\right)^t \leq \frac{1}{\gamma^t}\), then \(x^{2t} \geq \left(\frac{\gamma\Delta - 1}{\gamma(\Delta - 1)}(1-10\sigma^2)\right)^t\).

\begin{lemma}
\label{lemma:find-direction-sos-power-main-upper-bound}
For a variable \(x \in \mathbb{R}\) and for \(0 \leq \sigma^2 < 0.1\) and \(\Delta \geq 10\) and \(t\) even and \(\gamma \geq 0.9\), we have that
\[\left\{\left(\gamma \frac{x^2+\Delta(1-(1-\sigma^2)x^2)}{1+8\Delta \sigma^2}\right)^t \leq 1\right\} \sststile{2t}{x} \left\{\left(\frac{\gamma(\Delta-1)}{\gamma\Delta - 1} \frac{x^2}{1 - 10 \sigma^2}\right)^t \geq 1\right\}.\]
\end{lemma}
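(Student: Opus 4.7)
The plan is to mirror the structure of the proof of \cref{lemma:find-direction-sos-power-main-lower-bound}, substituting \cref{lemma:find-direction-sos-power-aid-upper-bound} for \cref{lemma:find-direction-sos-power-aid-lower-bound}. Set
\[
p(x) = \gamma \cdot \frac{x^2 + \Delta(1-(1-\sigma^2)x^2)}{1+8\Delta\sigma^2}, \qquad q(x) = \frac{\gamma(\Delta-1)}{\gamma\Delta-1} \cdot \frac{x^2}{1-10\sigma^2}.
\]
The hypothesis is $p(x)^t \le 1$, and the goal is to produce a degree-$2t$ sum-of-squares proof of $q(x)^t \ge 1$. Via \cref{lemma:find-direction-sos-power-aid-upper-bound}, it suffices to find a scalar $\alpha > 0$ such that the pointwise inequality $q(x) - 1 - \alpha(1-p(x)) \ge 0$ holds for every $x \in \mathbb{R}$; the aid lemma will then lift this to $q(x)^t - 1 - \alpha(1-p(x)^t) \ge 0$, from which $q(x)^t \ge 1$ follows because the hypothesis yields $1 - p(x)^t \ge 0$.

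Following the template of the companion lemma, I would choose $\alpha$ so that the coefficient of $x^2$ in $q(x) - 1 - \alpha(1 - p(x))$ cancels, reducing the statement to an inequality among constants. Expanding $x^2 + \Delta(1-(1-\sigma^2)x^2) = \Delta - (\Delta - 1 - \Delta\sigma^2)\,x^2$, matching the $x^2$ terms forces
\[
\alpha = \frac{(\Delta-1)(1+8\Delta\sigma^2)}{(\gamma\Delta-1)(\Delta-1-\Delta\sigma^2)(1-10\sigma^2)},
\]
which is positive for the parameter ranges $\sigma^2 < 0.1$, $\Delta \ge 10$, $\gamma \ge 0.9$. With this value of $\alpha$ the constant term becomes $\alpha\gamma\Delta/(1+8\Delta\sigma^2) - \alpha - 1$, and requiring this to be nonnegative reduces, after clearing denominators, to the inequality
\[
(\Delta-1)(\gamma\Delta - 1 - 8\Delta\sigma^2) \ge (\gamma\Delta-1)(\Delta-1-\Delta\sigma^2)(1-10\sigma^2).
\]

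Verifying this numerical inequality in the allowed parameter range is the main obstacle, but it is routine bookkeeping rather than a conceptual hurdle: expand both sides, cancel the common $(\Delta-1)(\gamma\Delta-1)$ term, group the remaining contributions by powers of $\sigma^2$, and use $\gamma\Delta - 1 \ge 8$ (which follows from $\gamma \ge 0.9$, $\Delta \ge 10$) to bound the positive $\sigma^2$ terms against the negative ones; the $\sigma^4$ correction is absorbed using $\sigma^2 < 0.1$. Once the pointwise inequality is established, \cref{lemma:find-direction-sos-power-aid-upper-bound} delivers the polynomial identity $f(x) \defeq q(x)^t - 1 - \alpha(1 - p(x)^t) \ge 0$ for all $x \in \mathbb{R}$. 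Since $f$ is a univariate polynomial of degree $2t$ (with $t$ even) that is nonnegative on $\mathbb{R}$, it admits a representation as a sum of two squares of polynomials of degree at most $t$, hence a degree-$2t$ sum-of-squares certificate. This certificate, combined with the nonnegativity of $\alpha$ and the axiom $1 - p(x)^t \ge 0$, gives the required derivation $\{p(x)^t \le 1\} \sststile{2t}{x} \{q(x)^t \ge 1\}$.
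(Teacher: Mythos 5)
Your proposal is correct and follows essentially the same route as the paper: the same choice of multiplier $\alpha$ (cancelling the $x^2$ coefficient), the same reduction to a pointwise constant inequality in $\sigma^2,\Delta,\gamma$, the lift via \cref{lemma:find-direction-sos-power-aid-upper-bound}, and the observation that a nonnegative univariate degree-$2t$ polynomial admits a degree-$2t$ sum-of-squares certificate. The remaining numerical inequality you leave as "bookkeeping" does hold (it is equivalent to the paper's claim that the second root of the resulting quadratic in $\sigma^2$ is at least $0.1$, which is tight at $\gamma=0.9$, $\Delta=10$), and your sketched bounding strategy goes through.
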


\begin{proof}
Note that we need \(\sigma^2 < 0.1\) in order to have \(1-10\sigma^2 > 0\).

Let
\[p(x) = \gamma \frac{x^2+\Delta(1-(1-\sigma^2)x^2)}{1+8\Delta \sigma^2} = \gamma \frac{\left(1-\Delta(1-\sigma^2)\right)x^2 + \Delta}{1+8\Delta \sigma^2}\]
and
\[q(x) = \delta \frac{x^2}{1 - 10 \sigma^2},\]
for some \(\delta > 0\) to be determined later.

We check now that, for all \(x \in \mathbb{R}\),
\[q(x) - 1 - \frac{\delta(1+8\Delta \sigma^2)}{\gamma(\Delta(1-\sigma^2)-1)(1 - 10 \sigma^2)} (1-p(x))\geq 0,\]
which corresponds to a sum-of-squares proof that \(\{p(x) \leq 1\} \sststile{}{x} \{q(x) \geq 1\}\).
We note that the coefficient \(\frac{\delta(1+8\Delta \sigma^2)}{\gamma(\Delta(1-\sigma^2)-1)(1 - 10 \sigma^2)}\) was chosen such that \(x^2\) cancels.
We have then
\begin{align*}
&q(x) - 1 - \frac{\delta(1+8\Delta \sigma^2)}{\gamma(\Delta(1-\sigma^2)-1)(1 - 10 \sigma^2)} (1-p(x))\\
&\quad = -1 - \frac{\delta(1+8\Delta \sigma^2)}{\gamma(\Delta(1-\sigma^2)-1)(1 - 10 \sigma^2)} \left(1- \frac{\gamma\Delta}{1+8\Delta \sigma^2}\right)\\
&\quad = -1 - \frac{\delta(1+8\Delta \sigma^2)}{\gamma(\Delta(1-\sigma^2)-1)(1 - 10 \sigma^2)} \frac{1+8\Delta \sigma^2-\gamma \Delta}{1+8\Delta \sigma^2}\\
&\quad = -1 - \frac{\delta(1+8\Delta \sigma^2-\gamma\Delta)}{\gamma(\Delta(1-\sigma^2)-1)(1 - 10 \sigma^2)}\\
&\quad = \frac{-\gamma(\Delta(1-\sigma^2)-1)(1 - 10 \sigma^2)-\delta(1+8\Delta \sigma^2-\gamma\Delta)}{\gamma(\Delta(1-\sigma^2)-1)(1 - 10 \sigma^2)}\\
&\quad = \frac{(-10\gamma\Delta) \sigma^4 + (11\gamma\Delta-10\gamma-8\delta\Delta)\sigma^2 + (\gamma\delta\Delta - \gamma\Delta +\gamma -\delta)}{\gamma(\Delta(1-\sigma^2)-1)(1 - 10 \sigma^2)}.
\end{align*}
Note that the denominator is positive.
Set \(\delta = \frac{\gamma(\Delta-1)}{\gamma\Delta - 1}\).
Then the numerator, viewed as a quadratic in \(\sigma^2\), has roots at \(0\) and at \(\frac{11\gamma\Delta^2 -10\gamma\Delta-8\Delta^2-3\Delta+10}{10\Delta(\gamma\Delta-1)}\).
Furthermore, when the second root is positive, the quadratic is also positive for all \(\sigma^2\) between the two roots.
Hence, in order to prove that the expression is positive for all \(0 \leq \sigma^2 < 0.1\), it suffices to show that the second root is at least \(0.1\) in our setting.
Indeed, for all \(\gamma \geq 0.9\) and all \(\Delta \geq 10\), we have that \(\frac{11\gamma\Delta^2 -10\gamma\Delta-8\Delta^2-3\Delta+10}{10\Delta(\gamma\Delta-1)} \geq 0.1\).

Therefore, for all \(x \in \mathbb{R}\),
\[f(x) = q(x) - 1 - \frac{\frac{\gamma(\Delta-1)}{\gamma\Delta - 1}(1+8\Delta \sigma^2)}{\gamma(\Delta(1-\sigma^2)-1)(1 - 10 \sigma^2)}(1-p(x)) \geq 0.\]
Therefore, by \cref{lemma:find-direction-sos-power-aid-upper-bound}, for all \(x \in \mathbb{R}\),
\[f(x) = q(x)^t - 1 - \frac{\frac{\gamma(\Delta-1)}{\gamma\Delta - 1}(1+8\Delta \sigma^2)}{\gamma(\Delta(1-\sigma^2)-1)(1 - 10 \sigma^2)}(1-p(x)^t) \geq 0.\]
Because \(f(x)\) is a univariate polynomial of degree \(2t\), there also exists a sum-of-squares proof of degree at most \(2t\) that \(f(x) \geq 0\).
Note that this constitutes a degree-\(2t\) sum-of-squares proof that \(\{p(x)^t\leq 1\} \sststile{}{x} \{q(x)^t\geq 1\}\).
This concludes the proof.

\end{proof}

\subsection{Finite sample lemmas}

\begin{lemma}[Restatement of Theorem 4 in \cite{DBLP:conf/focs/BrubakerV08}]
\label{lemma:finite-sample-isotopic-samples}
For \(n \geq C\frac{kd \log^2(d/\delta)}{\epsilon^2}\), with probability \(1-\delta\),
\[\|\operatorname{cov}(\bm y^0)^{-1/2} (\hat{\mathbb{E}} \bm y^0 - \mathbb{E} \bm y^0)\| \leq \epsilon\]
and
\[\|I_d - \operatorname{cov}(\bm y^0)^{-1/2} \widehat{\operatorname{cov}}(\bm y^0) \operatorname{cov}(\bm y^0)^{-1/2}\| \leq \epsilon.\]
\end{lemma}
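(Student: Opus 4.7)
The plan is to reduce the claim to standard matrix concentration inequalities via whitening, and then to handle the mixture structure by conditioning on the component assignments.

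First, I would pass to the whitened variable $\bm{z}^0 = \operatorname{cov}(\bm{y}^0)^{-1/2}(\bm{y}^0 - \mathbb{E}\bm{y}^0)$, which has mean zero and identity covariance. The first conclusion then becomes $\|\hat{\mathbb{E}}\bm{z}^0\| \leq \epsilon$, and the second becomes $\|I_d - \widetilde{\operatorname{cov}}(\bm{z}^0)\|_{\mathrm{op}} \leq \epsilon$, where $\widetilde{\operatorname{cov}}(\bm{z}^0) = \tfrac{1}{n}\sum_i \bm{z}^0_i (\bm{z}^0_i)^\top$ is the uncentered empirical second moment; the difference between the centered and uncentered versions is $\hat{\mathbb{E}}\bm{z}^0 (\hat{\mathbb{E}}\bm{z}^0)^\top$, which is absorbed into the first bound. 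After whitening, $\bm{z}^0$ is distributed as a mixture of Gaussians $N(\nu_i,\Sigma_i)$ with $\nu_i = \operatorname{cov}(\bm{y}^0)^{-1/2}(\mu_i^0 - \mathbb{E}\bm{y}^0)$ and $\Sigma_i = \operatorname{cov}(\bm{y}^0)^{-1/2}\Sigma^0\operatorname{cov}(\bm{y}^0)^{-1/2}$. The isotropy identity $\sum_i p_i(\nu_i\nu_i^\top + \Sigma_i) = I_d$ forces $p_i\|\nu_i\|^2 \leq d$ and $p_i \|\Sigma_i\|_{\mathrm{op}} \leq 1$, giving quantitative control on the per-component parameters.

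Next, I would condition on the component-assignment vector of the $n$ samples. A standard Chernoff bound shows that, with probability $1-\delta/2$, every component $i$ is seen $n_i \in [p_i n/2, 3p_i n/2]$ times, provided $n \geq C \log(k/\delta)/p_i$. Conditional on the assignments, the samples from component $i$ are i.i.d.\ $N(\nu_i,\Sigma_i)$, so I can apply (i) standard Gaussian vector concentration to control $\|\hat{\mathbb{E}}_i \bm{z}^0 - \nu_i\|$ by $O(\sqrt{\operatorname{tr}(\Sigma_i)/n_i} + \sqrt{\|\Sigma_i\|_{\mathrm{op}}\log(k/\delta)/n_i})$, and (ii) a matrix Bernstein / Rudelson--Vershynin bound for Gaussian samples to control $\|\hat{\Sigma}_i - \Sigma_i\|_{\mathrm{op}}$ by $O(\|\Sigma_i\|_{\mathrm{op}}\sqrt{d\log(d/\delta)/n_i})$. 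Aggregating these estimates with the mixture weights $p_i \approx n_i/n$, using the identity $\sum_i p_i(\nu_i\nu_i^\top + \Sigma_i) = I_d$ to recombine the per-component contributions into $\widetilde{\operatorname{cov}}(\bm{z}^0)$ and $\hat{\mathbb{E}}\bm{z}^0$, and taking a union bound over the $k$ components produces the two claimed bounds; the factor $k$ in the sample complexity is exactly the cost of requiring every component to be individually well-estimated.

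The main obstacle is that a mixture of Gaussians is not sub-Gaussian in a useful sense, so one cannot invoke matrix Bernstein directly on the mixture samples without paying an extra $\log n$ factor from tail truncation. The conditioning strategy above circumvents this by working component-wise, but then demands that even the least-represented component be sampled enough times, which is where the $1/\pmin$ (absorbed into $k$ here) enters. Handling components with $p_i \ll 1/k$ that may receive zero or a handful of samples requires either bounding their contribution to $\widetilde{\operatorname{cov}}$ and $\hat{\mathbb{E}}$ as a negligible bias term (using $p_i\|\nu_i\|^2 \leq d$ and $p_i\|\Sigma_i\|_{\mathrm{op}} \leq 1$) or verifying that the aggregation is stable under an $O(\epsilon/k)$ perturbation of the component frequencies; the $\log^2(d/\delta)$ rather than $\log(d/\delta)$ factor reflects this extra union-bound / truncation overhead.
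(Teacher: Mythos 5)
There is a genuine gap, and it sits exactly where you tried to wave it away. First, note that the paper does not prove \cref{lemma:finite-sample-isotopic-samples} at all: it is imported verbatim as Theorem~4 of \cite{DBLP:conf/focs/BrubakerV08}, so the only question is whether your argument is sound on its own. Its core step is conditioning on the component assignments and demanding that every component be seen $n_i \approx p_i n$ times with $n_i$ large; your Chernoff step explicitly needs $n \gtrsim \log(k/\delta)/p_i$ (and in fact $n \gtrsim \log(k/\delta)/(\epsilon^2 p_i)$, since the frequencies $n_i/n$ must match $p_i$ to relative accuracy $\epsilon$ for the reaggregation to be $\epsilon$-accurate). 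Neither requirement follows from the stated hypothesis $n \geq C\,kd\log^2(d/\delta)/\epsilon^2$, which carries no lower bound on $\pmin$.

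Your proposed fallback for under-sampled components --- treat their contribution as a ``negligible bias term'' using $p_i\|\nu_i\|^2 \leq d$ --- does not work. Isotropy gives the sharper bound $p_i\nu_i\nu_i^\top \preceq I_d$, i.e.\ $p_i\|\nu_i\|^2 \leq 1$, but that is not negligible at scale $\epsilon$: a component with arbitrarily small weight can still carry a constant fraction of the population variance in one direction. Concretely, take two components sharing $\Sigma$ with $p_2$ exponentially small and the means separated so that, after whitening, $p_2\langle\nu_2,u\rangle^2 = 1/2$ along some unit vector $u$. With any polynomial number of samples one sees no draw from component~2 with overwhelming probability, and the empirical variance along $u$ is then $\approx u^\top\Sigma_1 u \approx 1/2$, so $\|I_d - \widetilde{\operatorname{cov}}(\bm z^0)\| = \Omega(1)$, not $O(\epsilon)$. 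Hence the per-component conditioning route cannot yield a bound whose sample complexity is independent of $\pmin$; any correct version of this argument (and, implicitly, of the cited theorem) must either assume a weight lower bound (e.g.\ weights $\Omega(1/k)$, so that the $k$ in the sample bound plays the role of $1/\pmin$) or let $n$ scale with $1/\pmin$. The $\log^2$ factor is not, as you suggest, the price of this issue --- it is a union-bound/truncation artifact --- and the gap is orthogonal to it. (For the paper's downstream use this is harmless, since the lemma is only invoked alongside sample sizes of order $(\pmin^{-1}d)^{O(t)}$, but your proof as written does not establish the lemma as stated.)
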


\begin{lemma}
\label{lemma:finite-sample-isotopic-samples-inverted}
For \(n \geq C\frac{kd \log^2(d/\delta)}{\epsilon^2}\), with probability \(1-\delta\),
\[\|I_d - \widehat{\operatorname{cov}}(\bm y^0)^{-1/2} \operatorname{cov}(\bm y^0) \widehat{\operatorname{cov}}(\bm y^0)^{-1/2}\| \leq \epsilon.\]
\end{lemma}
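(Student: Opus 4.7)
The plan is to deduce the claim from the previous lemma (the Brubaker--Vempala bound stated just above) via a short eigenvalue comparison. Apply that lemma with the error parameter $\epsilon/2$ in place of $\epsilon$ (which only affects the absolute constant $C$ in the sample size). This yields, with probability $1-\delta$, that
\[
  \bignorm{I_d - \operatorname{cov}(\bm y^0)^{-1/2}\,\widehat{\operatorname{cov}}(\bm y^0)\,\operatorname{cov}(\bm y^0)^{-1/2}} \le \epsilon/2.
\]
Denote the matrix on the right by $M$ and write $N = \widehat{\operatorname{cov}}(\bm y^0)^{-1/2}\,\operatorname{cov}(\bm y^0)\,\widehat{\operatorname{cov}}(\bm y^0)^{-1/2}$. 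Both $M$ and $N$ are symmetric positive definite, so their operator norms are determined by their eigenvalues.

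The key observation is that the eigenvalues of $M$ and of $N$ are reciprocals of each other. Indeed, $M$ is similar to $\widehat{\operatorname{cov}}(\bm y^0)\,\operatorname{cov}(\bm y^0)^{-1}$ (conjugate by $\operatorname{cov}(\bm y^0)^{1/2}$), while $N$ is similar to $\operatorname{cov}(\bm y^0)\,\widehat{\operatorname{cov}}(\bm y^0)^{-1}$ (conjugate by $\widehat{\operatorname{cov}}(\bm y^0)^{1/2}$). Since these two products are inverses of each other, the eigenvalues of $N$ are exactly the reciprocals of those of $M$.

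From the hypothesis $\|I_d - M\|\le \epsilon/2$, every eigenvalue $\mu$ of $M$ lies in $[1-\epsilon/2,\,1+\epsilon/2]$, so every eigenvalue of $N$ lies in $[(1+\epsilon/2)^{-1},(1-\epsilon/2)^{-1}]$. For $\epsilon \le 1$ (which we may assume, else the bound is trivial), the elementary estimate $|1/(1\pm x)-1|\le 2x$ valid for $x\le 1/2$ yields that every eigenvalue of $N$ lies in $[1-\epsilon,1+\epsilon]$. Hence $\|I_d - N\|\le \epsilon$, as required.

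No step is really a serious obstacle; the only thing to be careful about is to absorb the factor $2$ cleanly into the universal constant $C$ in the sample-size bound. If a minor subtlety arises, it would be ensuring that $\widehat{\operatorname{cov}}(\bm y^0)$ is invertible with high probability under the assumed sample size, but this is automatic once $M$ is within operator-norm distance $\epsilon/2 < 1$ of the identity, since then $M\succ 0$ and consequently $\widehat{\operatorname{cov}}(\bm y^0)\succ 0$ as well.
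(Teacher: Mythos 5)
Your proof is correct and follows essentially the same route as the paper: both arguments reduce to the observation that the spectrum of $\widehat{\operatorname{cov}}(\bm y^0)^{-1/2}\operatorname{cov}(\bm y^0)\widehat{\operatorname{cov}}(\bm y^0)^{-1/2}$ consists of the reciprocals of the spectrum of $\operatorname{cov}(\bm y^0)^{-1/2}\widehat{\operatorname{cov}}(\bm y^0)\operatorname{cov}(\bm y^0)^{-1/2}$ (the paper phrases this via a \Lowner-order sandwich and conjugation rather than your similarity argument, but the content is identical), followed by the elementary bound $1/(1\pm x)=1+O(x)$. The only cosmetic difference is in where the factor of $2$ is absorbed: you rescale $\epsilon$ upfront, while the paper ends with a $2\epsilon$ bound and absorbs it into the constant.
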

\begin{proof}
By \cref{lemma:finite-sample-isotopic-samples},
\[\|I_d - \operatorname{cov}(\bm y^0)^{-1/2} \widehat{\operatorname{cov}}(\bm y^0) \operatorname{cov}(\bm y^0)^{-1/2}\| \leq 2\epsilon.\]
Then 
\[(1-\epsilon)I_d \preceq \operatorname{cov}(\bm y^0)^{-1/2} \widehat{\operatorname{cov}}(\bm y^0) \operatorname{cov}(\bm y^0)^{-1/2} \preceq (1+\epsilon)I_d,\]
\[(1-\epsilon)\operatorname{cov}(\bm y^0) \preceq \widehat{\operatorname{cov}}(\bm y^0) \preceq (1+\epsilon)\operatorname{cov}(\bm y^0),\]
\[\frac{1}{1+\epsilon} \widehat{\operatorname{cov}}(\bm y^0) \preceq \operatorname{cov}(\bm y^0) \preceq \frac{1}{1-\epsilon} \widehat{\operatorname{cov}}(\bm y^0).\]
Using that $\frac{1}{1+\epsilon} \geq 1-2\epsilon$ and $\frac{1}{1-\epsilon} \leq 1+2\epsilon$ for $\epsilon \leq 1/2$,
\[(1-2\epsilon) \widehat{\operatorname{cov}}(\bm y^0) \preceq \operatorname{cov}(\bm y^0) \preceq (1+2\epsilon) \widehat{\operatorname{cov}}(\bm y^0),\]
\[(1-2\epsilon) I_d \preceq \widehat{\operatorname{cov}}(\bm y^0)^{-1/2} \operatorname{cov}(\bm y^0) \widehat{\operatorname{cov}}(\bm y^0)^{-1/2} \preceq (1+2\epsilon) I_d,\]
\[\|I_d - \widehat{\operatorname{cov}}(\bm y^0)^{-1/2} \operatorname{cov}(\bm y^0) \widehat{\operatorname{cov}}(\bm y^0)^{-1/2}\| \leq 2\epsilon.\]
\end{proof}

\begin{lemma}[Restatement of Lemma 22 in \cite{DBLP:conf/focs/MoitraV10}]
\label{lemma:finite-sample-isotropic-onedimensional}
Let the random variable \(\overline{\bm{y}} \in \mathbb{R}\) be distributed according to an istotropic mixture of \(k\) one-dimensional Gaussian distributions with minimum mixing weight \(\pmin\).
Let \(\overline{y}_1, ..., \overline{y}_n \in \mathbb{R}\) be generated i.i.d. according to the distribution of \(\overline{\bm{y}}\).
Then, with probability \(1-\delta\),
\[\left( \frac{1}{n} \sum_{i=1}^n \overline{y}_i^t - \mathbb{E} \overline{\bm{y}}^t \right)^2 \leq \frac{1}{n \delta} \pmin^{-O(t)}.\]
\end{lemma}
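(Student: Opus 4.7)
The plan is to apply Chebyshev's inequality to the empirical estimator $\overline{Y}_t = \frac{1}{n} \sum_{i=1}^n \overline{y}_i^t$, which has mean $\mathbb{E} \overline{\bm y}^t$ and variance $\frac{1}{n} \operatorname{Var}(\overline{\bm y}^t) \leq \frac{1}{n} \mathbb{E} \overline{\bm y}^{2t}$. Since $\overline{y}_1, \ldots, \overline{y}_n$ are i.i.d., it will suffice to show that $\mathbb{E} \overline{\bm y}^{2t} \leq \pmin^{-O(t)}$, after which Chebyshev yields
\[
\Pr\!\left[ (\overline{Y}_t - \mathbb{E} \overline{\bm y}^t)^2 \geq \frac{\operatorname{Var}(\overline{\bm y}^t)}{n\delta} \right] \leq \delta,
\]
which is exactly the claimed bound.

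To bound $\mathbb{E} \overline{\bm y}^{2t}$, I would use the isotropic position assumption: writing the mixture as $N(\mu_i, \sigma_i^2)$ with weights $p_i$, we have $\sum_i p_i \mu_i = 0$ and $\sum_i p_i (\mu_i^2 + \sigma_i^2) = 1$. In particular, $p_i \mu_i^2 \leq 1$ and $p_i \sigma_i^2 \leq 1$ for each $i$, so $|\mu_i|, \sigma_i \leq \pmin^{-1/2}$. For a single Gaussian component, the standard bound $\mathbb{E}_{X \sim N(\mu_i, \sigma_i^2)} X^{2t} \leq (|\mu_i| + \sigma_i \sqrt{2t})^{2t} \cdot O(1)^t$ (obtained e.g.\ by decomposing $X = \mu_i + \sigma_i Z$ with $Z \sim N(0,1)$ and using $\mathbb{E} Z^{2t} \leq (2t)^t$ together with the triangle inequality for even powers) gives
\[
\mathbb{E}_{X \sim N(\mu_i, \sigma_i^2)} X^{2t} \leq \pmin^{-t} \cdot O(t)^t = \pmin^{-O(t)},
\]
where we use $t \leq \pmin^{-1}$ (otherwise the conclusion is trivial after adjusting the $O$-constant, since $t^t \leq \pmin^{-t}$). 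Averaging over $i$ with weights $p_i$ preserves this bound.

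Combining the two steps, $\operatorname{Var}(\overline{Y}_t) \leq \frac{1}{n} \pmin^{-O(t)}$, and Chebyshev's inequality concludes the proof. There is no real obstacle here; the only mildly careful point is the moment bound for a single Gaussian, which must be expressed directly in terms of $\pmin$ via the isotropic position constraints rather than via the unknown individual component variances.
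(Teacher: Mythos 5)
Your proof is correct and takes essentially the same route as the paper: the paper itself merely cites Lemma 22 of Moitra--Vempala for this statement, but its proof of the multidimensional analogue (\cref{lemma:finite-sample-isotropic-multidimensional}) is exactly your argument---Chebyshev's inequality for the i.i.d.\ average together with the bound $\E \overline{\bm y}^{2t} \le \pmin^{-O(t)}$ extracted from isotropic position. The only imprecise aside is your parenthetical that the conclusion is ``trivial'' when $t > \pmin^{-1}$; in that regime the $t^t$ factor is not absorbed into $\pmin^{-O(t)}$ and the stated bound can genuinely fail (already for a single standard Gaussian), but this caveat is implicit in the lemma as stated and harmless here, since the paper only invokes it with $t = O(\log \pmin^{-1})$, where your argument goes through.
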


\begin{lemma}
\label{lemma:finite-sample-isotropic-multidimensional}
Let the random variable \(\bm{y} \in \mathbb{R}^d\) be distributed according to an istotropic mixture of \(k\) \(d\)-dimensional Gaussian distributions with minimum mixing weight \(\pmin\). Let \(y_1, ..., y_n \in \mathbb{R}^d\) be generated i.i.d. according to the distribution of \(\bm{y}\). Then, with probability \(1-d^{t}\delta\),
\[\left\| \frac{1}{n} \sum_{i=1}^n y_i^{\otimes t} - \mathbb{E} \bm{y}^{\otimes t} \right\|^2 \leq \frac{1}{n\delta} (\pmin^{-1} d)^{O(t)}.\]
\end{lemma}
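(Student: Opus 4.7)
The plan is to bound the squared Frobenius norm entry by entry via Chebyshev's inequality and then take a union bound over the $d^t$ coordinate entries of the tensor. Concretely, I would write
\[\left\| \tfrac{1}{n} \sum_{i=1}^n y_i^{\otimes t} - \mathbb{E} \bm y^{\otimes t} \right\|^2 = \sum_{(i_1, \ldots, i_t) \in [d]^t} \left( \tfrac{1}{n} \sum_{j=1}^n y_{j, i_1} \cdots y_{j, i_t} - \mathbb{E}[\bm y_{i_1} \cdots \bm y_{i_t}] \right)^2,\]
so it suffices to control the variance of each scalar random variable $\bm y_{i_1} \cdots \bm y_{i_t}$.

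For the per-entry variance bound, I would exploit the isotropy of $\bm y$: the constraint $\mathrm{Var}(\bm y_j) = 1$ for every coordinate $j$ gives $\sum_i p_i (\mu_i)_j^2 + \Sigma_{jj} = 1$, and hence $|(\mu_i)_j| \leq p_i^{-1/2} \leq \pmin^{-1/2}$ as well as $\Sigma_{jj} \leq 1$. Standard Gaussian moment bounds then yield $\mathbb{E}[\bm y_j^{2t}] \leq \pmin^{-O(t)}$ for each $j$ (absorbing factors of $t^{O(t)}$). Applying H\"older's inequality gives $\mathbb{E}[(\bm y_{i_1} \cdots \bm y_{i_t})^2] \leq \prod_{k=1}^t (\mathbb{E} \bm y_{i_k}^{2t})^{1/t} \leq \pmin^{-O(t)}$, so the variance of each empirical tensor entry is at most $\pmin^{-O(t)}/n$.

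Chebyshev's inequality applied per entry then gives that with probability at least $1-\delta$ the squared deviation of that entry is at most $\pmin^{-O(t)}/(n\delta)$. A union bound over the $d^t$ entries implies that with probability at least $1 - d^t \delta$ the total squared Frobenius norm is at most $d^t \cdot \pmin^{-O(t)}/(n\delta) = (\pmin^{-1} d)^{O(t)}/(n\delta)$, which is the stated bound.

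The main technical input is the $\pmin^{-O(t)}$ bound on the $2t$-th moment of each coordinate: the isotropy-derived inequality $(\mu_i)_j^2 \leq \pmin^{-1}$ is essentially tight and drives this scaling. A cleaner alternative avoiding the H\"older step would be to expand each mixed product $\bm y_{i_1} \cdots \bm y_{i_t}$ by the polarization identity as a bounded signed sum of terms $\langle \bm y, v_S\rangle^t$ with $\|v_S\|\le \sqrt{t}$, and then invoke the one-dimensional \cref{lemma:finite-sample-isotropic-onedimensional} directly on each normalized projection $\langle \bm y, v_S\rangle/\|v_S\|$; the $2^t$ polarization terms only contribute an additional $2^{O(t)}$ factor that is absorbed into $\pmin^{-O(t)}$.
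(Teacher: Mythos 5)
Your proof is correct and follows essentially the same route as the paper's: decompose the squared Frobenius norm into the $d^t$ tensor entries, bound each entry's variance by $\pmin^{-O(t)}/n$ using the isotropy-derived coordinate moment bound $\mathbb{E}\,\bm y_j^{2t}\le \pmin^{-O(t)}$ together with H\"older, then apply Chebyshev per entry and a union bound over the $d^t$ entries. The polarization alternative you mention at the end is a fine variant but is not needed.
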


\begin{proof} The proof is similar to the proof of Lemma 22 in \cite{DBLP:conf/focs/MoitraV10}.

We denote by \(\bm{y}^{(j)}\) the \(j\)-th coordinate of \(\bm{y}\). Let \(\alpha \in \mathbb{N}^{d}\) satisfy \(\sum_{j=1}^d \alpha_j = t\). Let \(\bm{z}^{\alpha} = \prod_{j=1}^d (\bm{y}^{(j)})^{\alpha_j}\). By Chebyshev's inequality, with probability at least \(1-\delta\),
\[\left(\frac{1}{n} \sum_{i=1}^n z_i^{\alpha} - \mathbb{E} \bm{z}^\alpha \right)^2 \leq \frac{1}{\delta} \mathbb{E}\left[\left(\frac{1}{n} \sum_{i=1}^n z_i^{\alpha} - \mathbb{E} \bm{z}^\alpha\right)^2\right].\]
We now bound the right-hand side. Note that \(\mathbb{E}[\frac{1}{n} \sum_{i=1}^n z_i^{\alpha} - \mathbb{E} \bm{z}^\alpha] = 0\). Using that for independent random variables the variance of the sum is equal to the sum of the variances,
\begin{align*}
\mathbb{E}\left[\left(\frac{1}{n} \sum_{i=1}^n z_i^{\alpha} - \mathbb{E} \bm{z}^\alpha\right)^2\right]
= \frac{1}{n} \mathbb{E}\left[\left(\bm{z}^{\alpha} - \mathbb{E} \bm{z}^\alpha\right)^2\right]
\leq \frac{1}{n} \mathbb{E}\left[\left(\bm{z}^{\alpha}\right)^2\right]
\leq \frac{1}{n} \pmin^{-O(t)}.
\end{align*}
The last inequality follows by using that \(\mathbb{E} (\bm{y}^{(j)})^t \leq \pmin^{-O(t)}\) for all \(j\) and that, for random varaibles \(\bm{x}_1, ..., \bm{x}_t \in \mathbb{R}\),
$|\mathbb{E}[\bm{x}_1 \cdot ... \cdot \bm{x}_t]| \leq (\mathbb{E} \bm{x}_1^t \cdot ... \cdot \mathbb{E} \bm{x}_t^t)^{1/t}$. Then, by a union bound, with probability at least \(1-d^{t}\delta\),
\[\left\| \frac{1}{n} \sum_{i=1}^n y_i^{\otimes t} - \mathbb{E} \bm{y}^{\otimes t} \right\|^2 \leq \frac{1}{n\delta} d^t \pmin^{-O(t)}.\]

\end{proof}

\begin{lemma}
\label{lemma:finite-sample-isotorpic-moments-noncentered}
Let the random variable \(\bm{y} \in \mathbb{R}^d\) be distributed according to an istotropic mixture of \(k\) \(d\)-dimensional Gaussian distributions with minimum mixing weight \(\pmin\). Let \(y_1, ..., y_n \in \mathbb{R}^d\) be generated i.i.d. according to the distribution of \(\bm{y}\). Then, for $n \geq \frac{1}{\delta}$, with probability \(1-d \delta\),
\[\frac{1}{n} \sum_{i=1}^n \left\| y_i \right\|^{2t} \leq (\pmin^{-1} d)^{O(t)}.\]
\end{lemma}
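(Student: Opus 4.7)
The plan is to reduce the multivariate moment bound to coordinate-wise moment bounds via the power-mean inequality
\[
\|y\|^{2t} = \left(\sum_{j=1}^d (y^{(j)})^2\right)^t \leq d^{t-1} \sum_{j=1}^d (y^{(j)})^{2t}\,,
\]
which follows from Jensen applied to the convex function $x \mapsto x^t$. Averaging over the sample gives
\[
\frac{1}{n} \sum_{i=1}^n \|y_i\|^{2t} \leq d^{t-1} \sum_{j=1}^d \frac{1}{n} \sum_{i=1}^n (y_i^{(j)})^{2t}\,,
\]
so it suffices to control each coordinate marginal separately.

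Next I would observe that because $\bm{y}$ is isotropic, each coordinate marginal $\bm{y}^{(j)}$ is itself a one-dimensional mixture of $k$ Gaussians in isotropic position (mean zero and variance one), with minimum mixing weight at least $\pmin$. Hence \cref{lemma:finite-sample-isotropic-onedimensional} applied to $\bm{y}^{(j)}$ with moment order $2t$ gives that, with probability at least $1-\delta$,
\[
\left(\frac{1}{n}\sum_{i=1}^n (y_i^{(j)})^{2t} - \mathbb{E}(\bm{y}^{(j)})^{2t}\right)^2 \leq \frac{1}{n\delta}\, \pmin^{-O(t)}\,.
\]
The standard bound $\mathbb{E}(\bm{y}^{(j)})^{2t}\leq \pmin^{-O(t)}$ for an isotropic one-dimensional mixture (each component has variance at most $1$ and each mean satisfies $|\mu^{(j)}_i|\leq \pmin^{-1/2}$ since $\sum_i p_i(\mu^{(j)}_i)^2\leq 1$) together with the assumption $n\geq 1/\delta$ then yields $\frac{1}{n}\sum_{i=1}^n (y_i^{(j)})^{2t} \leq \pmin^{-O(t)}$.

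Finally, a union bound over the $d$ coordinates produces the same per-coordinate bound simultaneously for every $j\in[d]$ with probability at least $1-d\delta$. Plugging into the power-mean reduction,
\[
\frac{1}{n}\sum_{i=1}^n \|y_i\|^{2t} \leq d^{t-1}\cdot d\cdot \pmin^{-O(t)} = (\pmin^{-1}d)^{O(t)}\,,
\]
which is the stated conclusion. No step is a serious obstacle; the only mild care needed is verifying that the coordinate marginals inherit isotropy from $\bm{y}$ (so that \cref{lemma:finite-sample-isotropic-onedimensional} applies verbatim) and that the constants in the exponent $O(t)$ accumulate only by a fixed multiplicative factor through the reduction.
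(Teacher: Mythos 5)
Your proposal is correct and matches the paper's own proof essentially step for step: the same power-mean reduction $\|y\|^{2t}\leq d^{t-1}\sum_j (y^{(j)})^{2t}$, the same application of \cref{lemma:finite-sample-isotropic-onedimensional} to each isotropic coordinate marginal, the same union bound over the $d$ coordinates, and the same use of $\mathbb{E}(\bm{y}^{(j)})^{2t}\leq \pmin^{-O(t)}$ together with $n\delta\geq 1$ to finish. No gaps.
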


\begin{proof}
Denote by \(\bm{y}^{(j)}\) the \(j\)-th coordinate of \(\bm{y}\). We have
\begin{align*}
\frac{1}{n} \sum_{i=1}^n \|y_i\|^{2t}
&= \frac{1}{n} \sum_{i=1}^n \left( \sum_{j=1}^d  (y_i^{(j)})^2 \right)^{t}
\leq \frac{1}{n} \sum_{i=1}^n d^{t-1} \sum_{j=1}^d  (y_i^{(j)})^{2t}
= d^{t-1} \sum_{j=1}^d \frac{1}{n} \sum_{i=1}^n  (y_i^{(j)})^{2t}.
\end{align*}
Note that, for each \(j\), \(\bm{y}^{(j)}\) is distributed according to an isotropic mixture of \(k\) one-dimensional Gaussian distributions. By a union bound, with probability \(1-d\delta\) the result in \cref{lemma:finite-sample-isotropic-onedimensional} holds for each coordinate \(\bm{y}^{(j)}\). Then
\begin{align*}
\frac{1}{n} \sum_{i=1}^n \|y_i\|^{2t}
&\leq d^{t-1} \sum_{j=1}^d \left(\mathbb{E} (\bm y^{(j)})^{2t} + \frac{1}{\sqrt{n\delta}} \pmin^{-O(t)}\right).
\end{align*}
We have that \(\mathbb{E} (\bm y^{(j)})^{2t} \leq \pmin^{-O(t)}\). Using that \(n\delta \geq 1\), we get then
\[\frac{1}{n} \sum_{i=1}^n \|y_i\|^{2t} \leq d^t \cdot \pmin^{-O(t)} \leq (\pmin^{-1} d)^{O(t)}.\]

\end{proof}

\subsection{Isotropic position transformation lemmas}
The setting for the following two lemmas is that of \cref{sec:finitesample_col}.

\begin{lemma}[See Lemma 10 in \cite{MR3385380-Hsu13}]
\label{lemma:finite-sample-isotropic-matrix-facts}
We have
\begin{itemize}
\tightlist
\item
    \(\hat{W} \widehat{\operatorname{cov}(\bm y^0)} \hat{W}^\top = I_d\),
\item
    \(\hat{W} \operatorname{cov}(\bm y^0) \hat{W}^\top \succ 0\),
\item
    \(W \operatorname{cov}(\bm y^0) W^\top = I_d\).
\end{itemize}
\end{lemma}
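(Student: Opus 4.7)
The plan is to verify each of the three claims by direct computation, exploiting the way $\hat W$ and $W$ are defined as compositions of symmetric inverse square roots with a matrix that makes the inner quadratic form diagonal.

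For claim (1), I would substitute the definition $\hat W = (\hat U^\top \widehat{\operatorname{cov}}(\bm y^0)\hat U)^{-1/2}\hat U^\top$ and compute
\begin{equation*}
\hat W\,\widehat{\operatorname{cov}}(\bm y^0)\,\hat W^\top
= (\hat U^\top \widehat{\operatorname{cov}}(\bm y^0)\hat U)^{-1/2}\,\hat U^\top \widehat{\operatorname{cov}}(\bm y^0)\hat U\,(\hat U^\top \widehat{\operatorname{cov}}(\bm y^0)\hat U)^{-1/2}.
\end{equation*}
Since $\widehat{\operatorname{cov}}(\bm y^0)=\hat U\hat\Lambda\hat U^\top$ with $\hat U$ orthogonal, the middle factor equals $\hat\Lambda$, which is diagonal with strictly positive entries (empirical covariance of a sample of size $n\ge d$ from a full-rank Gaussian mixture is almost surely positive definite), so its symmetric inverse square root is well-defined and commutes trivially, yielding $I_d$. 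A brief remark that $\hat W$ is therefore invertible will also be useful for the next two parts.

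For claim (2), I would note that $\operatorname{cov}(\bm y^0)\succ 0$ by hypothesis, and that $\hat W$ is a product of invertible matrices (the orthogonal $\hat U^\top$ and the symmetric invertible $\hat\Lambda^{-1/2}$), hence has full rank. Then for any nonzero $v\in\R^d$, $\hat W^\top v\ne 0$, so $v^\top \hat W\operatorname{cov}(\bm y^0)\hat W^\top v = (\hat W^\top v)^\top \operatorname{cov}(\bm y^0)(\hat W^\top v)>0$, proving positive definiteness.

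For claim (3), I use that, by (2), the matrix $M\seteq \hat W\operatorname{cov}(\bm y^0)\hat W^\top$ is positive definite, so its symmetric inverse square root $M^{-1/2}$ is well-defined and symmetric. Then
\begin{equation*}
W\operatorname{cov}(\bm y^0)W^\top
= M^{-1/2}\,\hat W\operatorname{cov}(\bm y^0)\hat W^\top\,M^{-1/2}
= M^{-1/2}\,M\,M^{-1/2}
= I_d.
\end{equation*}
No step here presents a genuine obstacle; the only care needed is verifying that the matrices whose symmetric inverse square roots we take are in fact positive definite, which is handled along the way in parts (1) and (2).
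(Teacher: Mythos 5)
Your proof is correct and follows the same route as the paper, which simply states that the claims are immediate by substitution; you carry out that substitution explicitly, using that $\hat U^\top\widehat{\operatorname{cov}}(\bm y^0)\hat U=\hat\Lambda$ for claim (1), invertibility of $\hat W$ for claim (2), and the definition of $W$ as $M^{-1/2}\hat W$ with $M=\hat W\operatorname{cov}(\bm y^0)\hat W^\top$ for claim (3). Nothing further is needed.
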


\begin{proof}
The results are immediate by substitution.
\end{proof}

\begin{lemma}[See Lemma 10 in \cite{MR3385380-Hsu13}]
\label{lemma:finite-sample-isotropic-covariance-closeness}
Suppose that
\[\|I_d - \operatorname{cov}(\bm y^0)^{-1/2} \widehat{\operatorname{cov}}(\bm y^0) \operatorname{cov}(\bm y^0)^{-1/2}\| \leq \epsilon.\]
Then
\[\|I_d - (\hat{W} \operatorname{cov}(\bm y^0) \hat{W}^\top)^{1/2}\| \leq O(\epsilon) \cdot \|\operatorname{cov}(\bm y^0)\| \cdot \|\operatorname{cov}(\bm y^0)^{-1}\|.\]
\end{lemma}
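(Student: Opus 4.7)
The plan is to first reduce the square-root bound to a bound on $\hat W \operatorname{cov}(\bm y^0)\hat W^\top - I_d$ itself, and then handle the square root by a simple eigenvalue comparison.

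First I would unpack the hypothesis. The assumption $\|I_d - \operatorname{cov}(\bm y^0)^{-1/2}\widehat{\operatorname{cov}}(\bm y^0)\operatorname{cov}(\bm y^0)^{-1/2}\| \leq \epsilon$ is equivalent to the matrix inequality $(1-\epsilon)\operatorname{cov}(\bm y^0) \preceq \widehat{\operatorname{cov}}(\bm y^0) \preceq (1+\epsilon)\operatorname{cov}(\bm y^0)$, which gives both $\|\operatorname{cov}(\bm y^0) - \widehat{\operatorname{cov}}(\bm y^0)\| \leq \epsilon\|\operatorname{cov}(\bm y^0)\|$ (by sandwiching $\operatorname{cov}(\bm y^0)^{1/2}(\cdot)\operatorname{cov}(\bm y^0)^{1/2}$) and $\|\widehat{\operatorname{cov}}(\bm y^0)^{-1}\| \leq (1-\epsilon)^{-1}\|\operatorname{cov}(\bm y^0)^{-1}\| \leq 2\|\operatorname{cov}(\bm y^0)^{-1}\|$ for, say, $\epsilon \leq 1/2$. (If $\epsilon \geq 1/2$ the conclusion is trivial after adjusting the hidden constant, since the right-hand side is then $\Omega(1)\cdot \|\operatorname{cov}(\bm y^0)\|\cdot\|\operatorname{cov}(\bm y^0)^{-1}\| \geq \Omega(1)$.)

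Next I would use the identity $\hat W \widehat{\operatorname{cov}}(\bm y^0)\hat W^\top = I_d$ from \cref{lemma:finite-sample-isotropic-matrix-facts} to rewrite
\[
\hat W \operatorname{cov}(\bm y^0)\hat W^\top - I_d \;=\; \hat W\bigl(\operatorname{cov}(\bm y^0) - \widehat{\operatorname{cov}}(\bm y^0)\bigr)\hat W^\top.
\]
Writing $\widehat{\operatorname{cov}}(\bm y^0) = \hat U \hat\Lambda\hat U^\top$ with $\hat U$ orthogonal, we have $\hat W = \hat\Lambda^{-1/2}\hat U^\top$, hence $\hat W\hat W^\top = \hat\Lambda^{-1}$ and $\|\hat W\|^2 = \|\widehat{\operatorname{cov}}(\bm y^0)^{-1}\|$. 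Submultiplicativity then gives
\[
\bigl\|\hat W\operatorname{cov}(\bm y^0)\hat W^\top - I_d\bigr\| \;\leq\; \|\hat W\|^2 \cdot \bigl\|\operatorname{cov}(\bm y^0) - \widehat{\operatorname{cov}}(\bm y^0)\bigr\| \;\leq\; 2\epsilon\cdot \|\operatorname{cov}(\bm y^0)\|\cdot\|\operatorname{cov}(\bm y^0)^{-1}\|.
\]

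Finally I would pass from $\|M - I_d\|$ to $\|M^{1/2} - I_d\|$ for $M = \hat W\operatorname{cov}(\bm y^0)\hat W^\top$, which is symmetric positive-definite. Diagonalizing $M$, every eigenvalue $\lambda$ lies in $[1-\delta, 1+\delta]$ for $\delta = 2\epsilon\|\operatorname{cov}(\bm y^0)\|\cdot\|\operatorname{cov}(\bm y^0)^{-1}\|$, and the elementary inequality $|\sqrt{\lambda} - 1| = |\lambda - 1|/(\sqrt{\lambda}+1) \leq |\lambda - 1|$ (valid for $\lambda \geq 0$) shows that every eigenvalue of $M^{1/2} - I_d$ has absolute value at most $\delta$. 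Hence $\|I_d - M^{1/2}\| \leq \delta = O(\epsilon)\cdot \|\operatorname{cov}(\bm y^0)\|\cdot\|\operatorname{cov}(\bm y^0)^{-1}\|$, as required.

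There is no real obstacle here: the steps are standard perturbation arguments, and the only mild care needed is (i) to convert the operator-norm hypothesis into a two-sided matrix inequality on $\widehat{\operatorname{cov}}(\bm y^0)$ to control $\|\hat W\|$, and (ii) to apply the scalar Lipschitz bound for $\sqrt{\cdot}$ to move from $M$ to $M^{1/2}$.
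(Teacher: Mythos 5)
Your proof is correct and follows essentially the same route as the paper's: the key identity $\hat W\operatorname{cov}(\bm y^0)\hat W^\top - I_d = \hat W(\operatorname{cov}(\bm y^0)-\widehat{\operatorname{cov}}(\bm y^0))\hat W^\top$, the bound $\|\hat W\|^2 = \|\widehat{\operatorname{cov}}(\bm y^0)^{-1}\| \lesssim \|\operatorname{cov}(\bm y^0)^{-1}\|$ via the Löwner sandwich, and an eigenvalue perturbation bound for the square root. The only quibble is your parenthetical dismissal of the case $\epsilon\ge 1/2$ (the left-hand side is not obviously bounded there, since $\|\widehat{\operatorname{cov}}(\bm y^0)^{-1}\|$ is no longer controlled), but the paper makes the same implicit smallness assumption on $\epsilon$, so this does not affect the lemma as used.
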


\begin{proof}
The given assumption implies that all eigenvalues of \(\operatorname{cov}(\bm y^0)^{-1/2} \widehat{\operatorname{cov}}(\bm y^0) \operatorname{cov}(\bm y^0)^{-1/2}\) lie between \(1-\epsilon\) and \(1+\epsilon\).
Hence all eigenvalues of the inverse of this matrix lie between \(\frac{1}{1+\epsilon}=1+O(\epsilon)\) and \(\frac{1}{1-\epsilon}=1-O(\epsilon)\).
Then
\[\|I_d - \operatorname{cov}(\bm y^0)^{1/2} \widehat{\operatorname{cov}}(\bm y^0)^{-1} \operatorname{cov}(\bm y^0)^{1/2}\| \leq O(\epsilon),\]
\[(1-O(\epsilon)) \cdot \operatorname{cov}(\bm y^0)^{-1} \preceq \widehat{\operatorname{cov}}(\bm y^0)^{-1} \preceq (1+O(\epsilon)) \cdot \operatorname{cov}(\bm y^0)^{-1}.\]
Then
\begin{align*}
\|\hat{W}\|
&= \|(\hat{U}^\top\widehat{\operatorname{cov}}(\bm y^0)\hat{U})^{-1/2} \hat{U}^\top\| \leq \|(\hat{U}^\top\widehat{\operatorname{cov}}(\bm y^0)\hat{U})^{-1/2}\| = \|\widehat{\operatorname{cov}}(\bm y^0)^{-1/2}\|\\
&= \|\widehat{\operatorname{cov}}(\bm y^0)^{-1}\|^{1/2} \leq ((1+O(\epsilon)) \cdot \|\operatorname{cov}(\bm y^0)^{-1}\|)^{1/2}.
\end{align*}
The given assumption also implies that
\[-\epsilon \cdot \operatorname{cov}(\bm y^0) \preceq \widehat{\operatorname{cov}}(\bm y^0) - \operatorname{cov}(\bm y^0) \preceq \epsilon \cdot \operatorname{cov}(\bm y^0).\]
Hence
\begin{align*}
\|\widehat{\operatorname{cov}}(\bm y^0) - \operatorname{cov}(\bm y^0)\| \leq \epsilon \cdot \|\operatorname{cov}(\bm y^0)\|.
\end{align*}
Using these bounds on $\|\hat{W}\|$ and $\|\widehat{\operatorname{cov}}(\bm y^0) - \operatorname{cov}(\bm y^0)\|$, together with the fact that \(\hat{W} \widehat{\operatorname{cov}}(\bm y^0) \hat{W}^\top = I_d\), we get that
\begin{align*}
\|I_d - \hat{W} \operatorname{cov}(\bm y^0) \hat{W}^\top\|
&= \|\hat{W} (\widehat{\operatorname{cov}}(\bm y^0) - \operatorname{cov}(\bm y^0))\hat{W}^\top\|\\
&\leq \|\hat{W}\|^2 \cdot \|\widehat{\operatorname{cov}}(\bm y^0) - \operatorname{cov}(\bm y^0)\|\\
&\leq \epsilon \cdot (1+O(\epsilon)) \cdot \|\operatorname{cov}(\bm y^0)\| \cdot \|\operatorname{cov}(\bm y^0)^{-1}\|\\
&\leq O(\epsilon) \cdot \|\operatorname{cov}(\bm y^0)\| \cdot \|\operatorname{cov}(\bm y^0)^{-1}\|.
\end{align*}
Then all eigenvalues of \(\hat{W} \operatorname{cov}(\bm y^0) \hat{W}^\top\) lie between \(1-\delta\) and \(1+\delta\), for \(\delta = O(\epsilon) \cdot \|\operatorname{cov}(\bm y^0)\| \cdot \|\operatorname{cov}(\bm y^0)^{-1}\|\).
Hence all eigenvalues of the square root of this matrix lie between \(\sqrt{1-\delta} = 1-O(\delta)\) and \(\sqrt{1+\delta } = 1+O(\delta)\).
Then
\[\|I_d - (\hat{W} \operatorname{cov}(\bm y^0) \hat{W}^\top)^{1/2}\| \leq O(\epsilon) \cdot \|\operatorname{cov}(\bm y^0)\| \cdot \|\operatorname{cov}(\bm y^0)^{-1}\|.\]
\end{proof}

\subsection{Miscellaneous lemmas}

\begin{lemma}
\label{lemma:finite-sample-isotropic-matrix-orthogonal}
Let \(W \in \mathbb{R}^{d \times d}\) and \(\Sigma \in \mathbb{R}^{d \times d}\) with \(\Sigma \succ 0\) symmetric. Suppose that \(W \Sigma W^\top = I_d\). Then \(W = Q \Sigma^{-1/2}\) for some orthogonal matrix \(Q \in \mathbb{R}^{d \times d}\).
\end{lemma}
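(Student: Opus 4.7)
The plan is to exhibit the orthogonal matrix $Q$ explicitly, rather than trying to characterize it abstractly. Since $\Sigma$ is symmetric and positive definite, it admits a unique symmetric positive-definite square root $\Sigma^{1/2}$, which is invertible, so $\Sigma^{-1/2}$ is well-defined and symmetric.

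The natural candidate is $Q \seteq W \Sigma^{1/2}$. With this choice, the identity $W = Q \Sigma^{-1/2}$ is immediate by multiplying on the right by $\Sigma^{-1/2}$. It therefore only remains to verify that $Q$ is orthogonal, i.e., that $Q Q^\top = I_d$. This follows from a direct computation using the symmetry of $\Sigma^{1/2}$ and the hypothesis $W \Sigma W^\top = I_d$:
\[
Q Q^\top \;=\; \bigl(W \Sigma^{1/2}\bigr)\bigl(W \Sigma^{1/2}\bigr)^\top \;=\; W \Sigma^{1/2} \Sigma^{1/2} W^\top \;=\; W \Sigma W^\top \;=\; I_d.
\]

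There is no real obstacle here; the only subtlety worth flagging is ensuring that $\Sigma^{1/2}$ is well-defined and invertible, which is guaranteed by $\Sigma \succ 0$. Note also that $W$ is necessarily invertible (since $W\Sigma W^\top = I_d$ forces $W$ to have full row rank, and $W$ is square), which is consistent with $Q = W\Sigma^{1/2}$ being orthogonal.
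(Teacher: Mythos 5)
Your proof is correct and follows essentially the same route as the paper: both define $Q = W\Sigma^{1/2}$ and verify $QQ^\top = W\Sigma W^\top = I_d$ using the symmetry of $\Sigma^{1/2}$. Your version is slightly more explicit about the well-definedness of $\Sigma^{1/2}$, but the argument is identical.
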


\begin{proof} We have
\begin{align*}
W\Sigma W^\top = I_d \Longleftrightarrow (W\Sigma^{1/2})(W\Sigma^{1/2})^\top = I_d \Longleftrightarrow W\Sigma^{1/2} = Q \Longleftrightarrow W = Q \Sigma^{-1/2}
\end{align*}
for some orthogonal matrix \(Q\).
\end{proof}

\begin{lemma}
\label{lemma:find-direction-binom}
For integers \(0 \leq s \leq t\),
\[\binom{2t}{2s} (2t-2s-1)!! \leq \binom{t}{s} (et)^{t-s},\]
\[\binom{2t}{2s} (2t-2s-1)!! \geq \binom{t}{s} (t/2)^{t-s}.\]
\end{lemma}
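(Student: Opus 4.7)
My plan is to reduce both inequalities to estimates on the product $P(t,s) \seteq \prod_{i=s+1}^{t}(2i-1)$, which has exactly $t-s$ factors. Applying the identity $(2m-1)!! = (2m)!/(2^m m!)$ with $m \in \{t-s,\,t,\,s\}$ separately and simplifying yields
\[
  \binom{2t}{2s}(2t-2s-1)!!
  = \frac{(2t)!}{(2s)!\,2^{t-s}(t-s)!}
  = \binom{t}{s}\cdot\frac{(2t-1)!!}{(2s-1)!!}
  = \binom{t}{s}\cdot P(t,s),
\]
so the two claims become $P(t,s) \le (et)^{t-s}$ and $P(t,s) \ge (t/2)^{t-s}$ respectively.

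The upper bound is immediate: each factor satisfies $2i-1 \le 2t-1 < 2t \le et$ (using $e > 2$), hence $P(t,s) \le (et)^{t-s}$.

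For the lower bound I would induct on $t$ with $s$ held fixed. The base case $t=s$ is trivial since $P(s,s) = 1 = (s/2)^0$. For the step $t \to t+1$, the inductive hypothesis gives $P(t+1,s) = (2t+1)\,P(t,s) \ge (2t+1)(t/2)^{t-s}$, and it suffices to show
\[
  (2t+1)(t/2)^{t-s} \ge ((t+1)/2)^{t+1-s},
\]
which rearranges to
\[
  \frac{2(2t+1)}{t+1}\cdot\Paren{\frac{t}{t+1}}^{t-s} \ge 1.
\]
For $t \ge 1$ the first factor equals $4 - 2/(t+1) \ge 3$, and the second is at least $(t/(t+1))^t = 1/(1+1/t)^t \ge 1/e$ by the standard bound $(1+1/t)^t \le e$; hence their product is at least $3/e > 1$. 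The degenerate case $t=0$ forces $s=0$ and is checked directly ($P(1,0) = 1 \ge 1/2$).

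This is a purely elementary argument and I do not foresee any real obstacle beyond verifying that the inductive ratio above exceeds $1$, for which the only analytic input required is $(1+1/t)^t \le e$. This also explains why the lower bound carries the clean constant $1/2$ rather than the tighter (but less convenient) $2/e$ suggested by Stirling's formula: the factor-of-$3$ slack in $2(2t+1)/(t+1) \ge 3$ is exactly what allows the induction to close with $(t/2)^{t-s}$ on the right-hand side.
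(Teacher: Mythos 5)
Your proof is correct, but it is organized differently from the paper's. Both arguments start from the identity $(2m-1)!!=(2m)!/(2^m m!)$, but you factor $\binom{t}{s}$ out \emph{exactly}, reducing everything to the single product $P(t,s)=\tfrac{(2t-1)!!}{(2s-1)!!}=\prod_{i=s+1}^{t}(2i-1)$, whereas the paper instead forms the ratio against $\binom{t}{s}\,t^{t-s}$, obtaining $\tfrac{(t+1)\cdots(2t)}{(s+1)\cdots(2s)\,(2t)^{t-s}}$, and then splits this into the two blocks $\tfrac{(t+1)\cdots(t+s)}{(s+1)\cdots(2s)}$ and $\tfrac{(t+s+1)\cdots(2t)}{(2t)^{t-s}}$. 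The trade-off runs in opposite directions for the two inequalities. Your upper bound is a genuine one-liner (every factor of $P(t,s)$ is below $2t\le et$), while the paper's needs the estimate $(t/s)^s\le e^{t-s}$ for its first block. Conversely, the paper's lower bound is a direct termwise comparison (the first block is $\ge 1$ factor by factor, and each factor of the second block is $\ge t$, giving $2^{-(t-s)}$), whereas you must resort to induction on $t$ because the smallest factor $2s+1$ of $P(t,s)$ can be far below $t/2$; your inductive ratio computation and the inputs $4-2/(t+1)\ge 3$ and $(1+1/t)^t\le e$ all check out, as does the separate verification of the degenerate step $t=0\to 1$ when $s=0$. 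Both proofs are complete and elementary; yours buys a cleaner upper bound at the price of a slightly longer lower bound.
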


\begin{proof}
We use the known fact that \((2t-2s-1)!!=\frac{(2t-2s)!}{2^{t-s} (t-s)!}\). Then
\[\frac{\binom{2t}{2s} (2t-2s-1)!!}{\binom{t}{s} t^{t-s}} = \frac{\frac{(2t)!}{(2s)!(2t-2s)!} \frac{(2t-2s)!}{2^{t-s}(t-s)!}}{\frac{t!}{s!(t-s)!} t^{t-s}} = \frac{(t+1)(t+2)\cdots(2t)}{(s+1)(s+2)\cdots(2s) (2t)^{t-s}}.\]
For the upper bound, we have
\begin{align*}
\frac{(t+1)(t+2)\cdots(2t)}{(s+1)(s+2)\cdots(2s) (2t)^{t-s}}
&= \frac{(t+1)(t+2)\cdots(t+s)}{(s+1)(s+2)\cdots(2s)} \frac{(t+s+1)(t+s+2)\cdots(2t)}{(2t)^{t-s}}\\
&\leq \frac{(t+1)(t+2)\cdots(t+s)}{(s+1)(s+2)\cdots(2s)}\\
&\leq \left(\frac{t}{s}\right)^{s} \leq e^{t-s},
\end{align*}
where in the last inequality we used that \(\left(\frac{t}{s}\right)^{\frac{s}{t-s}} = \left(1 + \frac{t-s}{s}\right)^{\frac{s}{t-s}} \leq e\).

For the lower bound, we have
\begin{align*}
\frac{(t+1)(t+2)\cdots(2t)}{(s+1)(s+2)\cdots(2s) (2t)^{t-s}}
&= \frac{(t+1)(t+2)\cdots(t+s)}{(s+1)(s+2)\cdots(2s)} \frac{(t+s+1)(t+s+2)\cdots(2t)}{(2t)^{t-s}}\\
&\geq \frac{(t+s+1)(t+s+2)\cdots(2t)}{(2t)^{t-s}}\\
&\geq \frac{1}{2^{t-s}}.
\end{align*}

\end{proof}

\subsection{Proofs deferred from Section~\ref{sec:separatingpolynomial}}
\label{proofs-separatingpolynomial}

\begin{proof}[Proof of \cref{lemma:separating-polynomial-finite-sample-norm}.]
For the first proof, with probability $1-\epsilon/100$,
\begin{align*}
\sststile{2}{v} v^\top \operatorname{cov}(\bm z) v
&= v^\top \widehat{\operatorname{cov}}(\bm z)^{1/2} \widehat{\operatorname{cov}}(\bm z)^{-1/2} \operatorname{cov}(\bm z) \widehat{\operatorname{cov}}(\bm z)^{-1/2} \widehat{\operatorname{cov}}(\bm z)^{1/2} v\\
&\stackrel{(*)}{\leq} \|\widehat{\operatorname{cov}}(\bm z)^{-1/2} \operatorname{cov}(\bm z) \widehat{\operatorname{cov}}(\bm z)^{-1/2}\| \cdot \|\widehat{\operatorname{cov}}(\bm z)^{1/2} v\|^{2}\\
&\leq (1+\eta) C,
\end{align*}
where in (*) we used \cref{lemma:finite-sample-isotopic-samples-inverted}.

Similarly, for the second proof, with probability $1-\epsilon/100$
\begin{align*}
\sststile{2}{v} v^\top \widehat{\operatorname{cov}}(\bm z) v
&= v^\top \operatorname{cov}(\bm z)^{1/2} \operatorname{cov}(\bm z)^{-1/2} \widehat{\operatorname{cov}}(\bm z) \operatorname{cov}(\bm z)^{-1/2} \operatorname{cov}(\bm z)^{1/2} v\\
&\stackrel{(*)}{\leq} \|\operatorname{cov}(\bm z)^{-1/2} \widehat{\operatorname{cov}}(\bm z) \operatorname{cov}(\bm z)^{-1/2}\| \cdot \|\operatorname{cov}(\bm z)^{1/2} v\|^{2}\\
&\leq (1+\eta) C.
\end{align*}
where in (*) we used \cref{lemma:finite-sample-isotopic-samples}.
\end{proof}

\begin{proof}[Proof of \cref{lemma:separating-polynomial-finite-sample-closeness}.]
We have
\[\sststile{2t}{v} \hat{\mathbb{E}}\langle \bm z, v\rangle^{2t} = \mathbb{E}\langle \bm z, v\rangle^{2t} + \left(\hat{\mathbb{E}}\langle \bm z, v\rangle^{2t} - \mathbb{E}\langle \bm z, v\rangle^{2t}\right).\]

For the second term we have that
\begin{align*}
&\sststile{4t}{v} \left(\hat{\mathbb{E}}\langle \bm z, v\rangle^{2t} - \mathbb{E}\langle \bm z, v\rangle^{2t}\right)^2\\
&\quad = \left(\hat{\mathbb{E}}\langle \operatorname{cov}(\bm{z})^{-1/2} \bm z, \operatorname{cov}(\bm{z})^{1/2} v\rangle^{2t} - \mathbb{E}\langle \operatorname{cov}(\bm{z})^{-1/2} \bm z, \operatorname{cov}(\bm{z})^{1/2} v\rangle^{2t}\right)^2\\
&\quad= \left( \hat{\mathbb{E}}\langle  (\operatorname{cov}(\bm{z})^{-1/2}\bm z)^{\otimes 2t}, (\operatorname{cov}(\bm{z})^{1/2}v)^{\otimes 2t}\rangle - \mathbb{E}\langle  (\operatorname{cov}(\bm{z})^{-1/2}\bm z)^{\otimes 2t}, (\operatorname{cov}(\bm{z})^{1/2}v)^{\otimes 2t}\rangle \right)^2\\
&\quad= \langle \hat{\mathbb{E}} (\operatorname{cov}(\bm{z})^{-1/2}\bm z)^{\otimes 2t} - \mathbb{E}  (\operatorname{cov}(\bm{z})^{-1/2}\bm z)^{\otimes 2t}, (\operatorname{cov}(\bm{z})^{1/2}v)^{\otimes 2t}\rangle^2\\
&\quad\leq \|\hat{\mathbb{E}}  (\operatorname{cov}(\bm{z})^{-1/2}\bm z)^{\otimes 2t} - \mathbb{E} (\operatorname{cov}(\bm{z})^{-1/2}\bm z)^{\otimes 2t}\|^2 \cdot \|(\operatorname{cov}(\bm{z})^{1/2}v)^{\otimes 2t}\|\\
&\quad \leq \|\hat{\mathbb{E}}  (\operatorname{cov}(\bm{z})^{-1/2}\bm z)^{\otimes 2t} - \mathbb{E} (\operatorname{cov}(\bm{z})^{-1/2}\bm z)^{\otimes 2t}\|^2 \cdot C^t
\end{align*}

Note that $\mathbb{E} \bm{z} = 0$. Then $\operatorname{cov}(\bm{z})^{-1/2} \bm{z}$ is in isotropic position.
By \cref{lemma:finite-sample-isotropic-multidimensional}, with probability \(1-d^{2t}\delta\), we have that
\[\|\hat{\mathbb{E}}  (\operatorname{cov}(\bm{z})^{-1/2}\bm z)^{\otimes 2t} - \mathbb{E} (\operatorname{cov}(\bm{z})^{-1/2}\bm z)^{\otimes 2t}\|^2 \leq \frac{1}{n\delta} (\pmin^{-2} d)^{O(t)}.\]
Select $n = (C \pmin^{-1} d)^{O(t)} \eta^{-1} \epsilon^{-1}$ large enough the right-hand side is upper boundeed by \(\eta^2 C^{-t}\) with probability at least $1-\epsilon$.
Then it follows that
\[\sststile{4t}{v} \left(\hat{\mathbb{E}}\langle \bm z, v\rangle^{2t} - \mathbb{E}\langle \bm z, v\rangle^{2t}\right)^2 \leq \eta^2.\]
Then, by \cref{lemma:sos-square-root-bound},
we get that
\[\sststile{O(t)}{v} \hat{\mathbb{E}}\langle \bm z, v\rangle^{2t} - \mathbb{E}\langle \bm z, v\rangle^{2t} \leq \eta,\]
\[\sststile{O(t)}{v} -\hat{\mathbb{E}}\langle \bm z, v\rangle^{2t} + \mathbb{E}\langle \bm z, v\rangle^{2t} \leq \eta.\]
Rearranging leads to the desired results.

\end{proof}

\subsection{Proofs deferred from Section~\ref{sec:pp}}
\label{proofs-pp}

\begin{proof}[Proof of \cref{lemma:finite-moments-change-empirical}.]
We have
\[\sststile{2t}{v} \hat{\mathbb{E}}\langle \bm y, v\rangle^{2t} = \mathbb{E}\langle \bm y, v\rangle^{2t} + \left(\hat{\mathbb{E}}\langle \bm y, v\rangle^{2t} - \mathbb{E}\langle \bm y, v\rangle^{2t}\right).\]

For the second term we have that
\begin{align*}
\sststile{4t}{v} \left(\hat{\mathbb{E}}\langle \bm y, v\rangle^{2t} - \mathbb{E}\langle \bm y, v\rangle^{2t}\right)^2
&= \left( \hat{\mathbb{E}}\langle \bm y^{\otimes 2t}, v^{\otimes 2t}\rangle - \mathbb{E}\langle \bm y^{\otimes 2t}, v^{\otimes 2t}\rangle \right)^2\\
&= \langle \hat{\mathbb{E}} \bm y^{\otimes 2t} - \mathbb{E} \bm y^{\otimes 2t}, v^{\otimes 2t}\rangle^2\\
&\leq \|\hat{\mathbb{E}} \bm y^{\otimes 2t} - \mathbb{E} \bm y^{\otimes 2t}\|^2.
\end{align*}

By \cref{lemma:finite-sample-isotropic-multidimensional}, with probability \(1-d^{2t}\delta\), we have that
\[\|\hat{\mathbb{E}} \bm y^{\otimes 2t} - \mathbb{E} \bm y^{\otimes 2t}\|^2 \leq \frac{1}{n\delta} (\pmin^{-1} d)^{O(t)}.\]
For \(n \geq (\pmin^{-1} d)^{O(t)} \eta^{-2} \epsilon^{-1}\) the right-hand side is \(\eta^2\) with probability at least $1-\epsilon$.
Then it follows that
\[\sststile{4t}{v} \left(\hat{\mathbb{E}}\langle \bm y, v\rangle^{2t} - \mathbb{E}\langle \bm y, v\rangle^{2t}\right)^2 \leq \eta^2.\]
Then, by \cref{lemma:sos-square-root-bound}, we get that
\[\sststile{O(t)}{v} \hat{\mathbb{E}}\langle \bm y, v\rangle^{2t} - \mathbb{E}\langle \bm y, v\rangle^{2t} \leq \eta,\]
\[\sststile{O(t)}{v} -\hat{\mathbb{E}}\langle \bm y, v\rangle^{2t} + \mathbb{E}\langle \bm y, v\rangle^{2t} \leq \eta.\]
Rearranging leads to the desired results.
\end{proof}

\subsection{Proofs deferred from Section~\ref{sec:colinear}}
\label{proofs-colinear}

\begin{proof}[Proof of \cref{lemma:finite-moments-change-mean}.]
We have
\[\sststile{2t}{v} \hat{\mathbb{E}}\langle \hat{W} (\bm y^0 - \hat{\mathbb{E}} \bm y^0), v\rangle^{2t} = \hat{\mathbb{E}}\left(\langle \hat{W} (\bm y^0 - \mathbb{E} \bm y^0), v\rangle + \langle \hat{W} (\mathbb{E} \bm y^0 - \hat{\mathbb{E}} \bm y^0), v\rangle\right)^{2t}.\]

Let $\delta > 0$ to be specified later. For the upper bound:
\begin{align*}
&\sststile{2t}{v} \hat{\mathbb{E}}\left(\langle \hat{W} (\bm y^0 - \mathbb{E} \bm y^0), v\rangle + \langle \hat{W} (\mathbb{E} \bm y^0 - \hat{\mathbb{E}} \bm y^0), v\rangle\right)^{2t}\\
&\quad \leq \left(1+\delta\right)^{2t-1} \cdot \hat{\mathbb{E}} \langle \hat{W} (\bm y^0 - \mathbb{E} \bm y^0), v\rangle^{2t} + \left(1+\frac{1}{\delta}\right)^{2t-1} \cdot \langle \hat{W} (\mathbb{E} \bm y^0 - \hat{\mathbb{E}} \bm y^0), v\rangle^{2t},
\end{align*}
where in the inequality we used \cref{lemma:sos-triangle}.

For the lower bound:
\begin{align*}
&\sststile{2t}{v} \hat{\mathbb{E}}\left(\langle \hat{W} (\bm y^0 - \mathbb{E} \bm y^0), v\rangle + \langle \hat{W} (\mathbb{E} \bm y^0 - \hat{\mathbb{E}} \bm y^0), v\rangle\right)^{2t}\\
&\quad \stackrel{(1)}{\geq} \left(\frac{1}{1+\delta}\right)^{2t-1} \cdot \hat{\mathbb{E}} \langle \hat{W} (\bm y^0 - \mathbb{E} \bm y^0), v\rangle^{2t} - \left(\frac{1+\frac{1}{\delta}}{1+\delta}\right)^{2t-1} \cdot \langle \hat{W} (\mathbb{E} \bm y^0 - \hat{\mathbb{E}} \bm y^0), v\rangle^{2t}\\
&\quad \stackrel{(2)}{\geq} \left(\frac{1}{1+\delta}\right)^{2t-1} \cdot \hat{\mathbb{E}} \langle \hat{W} (\bm y^0 - \mathbb{E} \bm y^0), v\rangle^{2t} - O\left(1+\frac{1}{\delta}\right)^{2t-1} \cdot \langle \hat{W} (\mathbb{E} \bm y^0 - \hat{\mathbb{E}} \bm y^0), v\rangle^{2t},
\end{align*}
where in (1) we used that, by \cref{lemma:sos-triangle}, \(\sststile{2t}{A, B} A^{2t} \leq (1+\delta)^{2t-1} (A+B)^{2t} + (1+\frac{1}{\delta})^{2t-1} B^{2t}\), so \(\sststile{2t}{A,B} (A+B)^{2t} \geq (\frac{1}{1+\delta})^{2t-1} A^{2t} - (\frac{1+\frac{1}{\delta}}{1+\delta})^{2t-1} B^{2t}\). In (2) we assumed that $\delta = O(1)$, which will be the case for our choice.

Now take \(\delta = \frac{\eta}{100t}\). Then $(1+\delta)^{2t-1} \leq 1+\eta$ and $\left(\frac{1}{1+\delta}\right)^{2t-1} \geq 1-\eta$ for $\eta$ small.

For the second term in both bounds, we use that
\begin{align*}
\sststile{2t}{v} \langle \hat{W} (\mathbb{E} \bm y^0 - \hat{\mathbb{E}} \bm y^0), v\rangle^{2t}
&\leq \|\hat{W} (\mathbb{E} \bm y^0 - \hat{\mathbb{E}} \bm y^0)\|^{2t}\\
&= \|\hat{W} W^{-1} W (\mathbb{E} \bm y^0 - \hat{\mathbb{E}} \bm y^0)\|^{2t}\\
&\leq \|\hat{W} W^{-1}\|^{2t} \cdot \|W (\mathbb{E} \bm y^0 - \hat{\mathbb{E}} \bm y^0)\|^{2t}\\
&= \|(\hat{W} \operatorname{cov}(\bm y^0)^{-1} \hat{W}^\top)^{1/2}\|^{2t} \cdot \|W (\mathbb{E} \bm y^0 - \hat{\mathbb{E}} \bm y^0)\|^{2t}.
\end{align*}
By \cref{lemma:finite-sample-isotopic-samples} and \cref{lemma:finite-sample-isotropic-covariance-closeness}, with probability $1-\epsilon$,
\begin{align*}
\|W (\mathbb{E} \bm y^0 - \hat{\mathbb{E}} \bm y^0)\| &\leq \left(\frac{\eta}{t}\right)^{O(1)}
\end{align*}
and 
\[\|(\hat{W} \operatorname{cov}(\bm y^0)^{-1} \hat{W}^\top)^{1/2}\| \leq 1.\]

Then the second term in both bounds becomes
\begin{align*}
\sststile{2t}{v} O\left(1+\frac{1}{\delta}\right)^{2t-1} \cdot \langle \hat{W} (\mathbb{E} \bm y^0 - \hat{\mathbb{E}} \bm y^0), v\rangle^{2t}
\leq O\left(\frac{100t}{\eta}\right)^{2t-1} \cdot \left( \frac{\eta}{t} \right)^{O(t)}
\leq \eta.
\end{align*}

\end{proof}

\begin{proof}[Proof of \cref{lemma:finite-moments-change-covariance}.]
We have
\[\sststile{2t}{v} \hat{\mathbb{E}}\langle \hat{W} (\bm y^0 - \mathbb{E} \bm y^0), v\rangle^{2t} = \hat{\mathbb{E}}\left(\left\langle W (\bm y^0 - \mathbb{E} \bm y^0), v\right\rangle + \left\langle \left(\hat{W} - W \right) (\bm y^0 - \mathbb{E} \bm y^0), v\right\rangle\right)^{2t}.\]

Let $\delta > 0$ to be specified later. For the upper bound:
\begin{align*}
&\sststile{2t}{v} \hat{\mathbb{E}}\left(\left\langle W (\bm y^0 - \mathbb{E} \bm y^0), v\right\rangle + \left\langle \left(\hat{W} - W \right) (\bm y^0 - \mathbb{E} \bm y^0), v\right\rangle\right)^{2t}\\
&\quad \leq \left(1+\delta\right)^{2t-1} \cdot \hat{\mathbb{E}}\left\langle W(\bm y^0 - \mathbb{E} \bm y^0), v\right\rangle^{2t} + \left(1+\frac{1}{\delta}\right)^{2t-1} \cdot \hat{\mathbb{E}}\left\langle\left(\hat{W} - W \right) (\bm y^0 - \mathbb{E} \bm y^0), v\right\rangle^{2t},
\end{align*}
where in the inequality we used \cref{lemma:sos-triangle}.

For the lower bound:
\begin{align*}
&\sststile{2t}{v} \hat{\mathbb{E}}\left(\left\langle W (\bm y^0 - \mathbb{E} \bm y^0), v\right\rangle + \left\langle \left(\hat{W} - W \right) (\bm y^0 - \mathbb{E} \bm y^0), v\right\rangle\right)^{2t}\\
&\quad \stackrel{(1)}{\geq} \left(\frac{1}{1+\delta}\right)^{2t-1} \cdot \hat{\mathbb{E}}\left\langle W (\bm y^0 - \mathbb{E} \bm y^0), v\right\rangle^{2t} - \left(\frac{1+\frac{1}{\delta}}{1+\delta}\right)^{2t-1} \cdot \hat{\mathbb{E}}\left\langle\left(\hat{W} - W\right) (\bm y^0 - \mathbb{E} \bm y^0), v\right\rangle^{2t}\\
&\quad \stackrel{(2)}{\geq} \left(\frac{1}{1+\delta}\right)^{2t-1} \cdot \hat{\mathbb{E}}\left\langle W (\bm y^0 - \mathbb{E} \bm y^0), v\right\rangle^{2t} - O\left(1+\frac{1}{\delta}\right)^{2t-1} \cdot \hat{\mathbb{E}}\left\langle\left(\hat{W} - W\right) (\bm y^0 - \mathbb{E} \bm y^0), v\right\rangle^{2t},
\end{align*}
where in (1) we used that, by \cref{lemma:sos-triangle}, \(\sststile{2t}{A, B} A^{2t} \leq (1+\delta)^{2t-1} (A+B)^{2t} + (1+\frac{1}{\delta})^{2t-1} B^{2t}\), so \(\sststile{2t}{A,B} (A+B)^{2t} \geq (\frac{1}{1+\delta})^{2t-1} A^{2t} - (\frac{1+\frac{1}{\delta}}{1+\delta})^{2t-1} B^{2t}\). In (2) we assumed that $\delta = O(1)$, which will be the case for our choice.

Now take \(\delta = \frac{\eta}{100t}\). Then $(1+\delta)^{2t-1} \leq 1+\eta$ and $\left(\frac{1}{1+\delta}\right)^{2t-1} \geq 1-\eta$ for $\eta$ small.

For the second term in both bounds, we use that
\begin{align*}
\sststile{2t}{v} \hat{\mathbb{E}}\left\langle\left(\hat{W} - W \right) (\bm y^0 - \mathbb{E} \bm y^0), v\right\rangle^{2t}
&= \hat{\mathbb{E}}\left\langle\left(\hat{W}W^{-1} - I_d \right) W (\bm y^0 - \mathbb{E} \bm y^0), v\right\rangle^{2t}\\
&\leq \left\|I_d - \hat{W}W^{-1}\right\|^{2t} \cdot \hat{\mathbb{E}} \left\|W (\bm y^0 - \mathbb{E} \bm y^0)\right\|^{2t}\\
&= \left\|I_d - (\hat{W} \operatorname{cov}(\bm y^0)\hat{W}^\top)^{1/2}\right\|^{2t} \cdot \hat{\mathbb{E}} \left\|W (\bm y^0 - \mathbb{E} \bm y^0)\right\|^{2t}.
\end{align*}

By \cref{lemma:finite-sample-isotopic-samples} and \cref{lemma:finite-sample-isotropic-covariance-closeness}, with probability $1-\epsilon$,
\[\left\|I_d - (\hat{W} \operatorname{cov}(\bm y^0)\hat{W}^\top)^{1/2}\right\| \leq \left(\frac{\eta}{t \pmin^{-1} d}\right)^{O(1)}.\]
By \cref{lemma:finite-sample-isotorpic-moments-noncentered}, with probability $1-\epsilon$,
\[\hat{\mathbb{E}} \left\|W (\bm y^0 - \mathbb{E} \bm y^0)\right\|^{2t} \leq (\pmin^{-1} d)^{O(t)}.\]

Then the second term in both bounds becomes
\begin{align*}
&\sststile{2t}{v} O\left(1+\frac{1}{\delta}\right)^{2t-1} \cdot \hat{\mathbb{E}}\left\langle\left(\hat{W} - W\right) (\bm y^0 - \mathbb{E} \bm y^0), v\right\rangle^{2t}\\
&\qquad \leq O\left(\frac{100t}{\eta}\right)^{2t-1} \cdot \left(\frac{\eta}{t \pmin^{-1} d}\right)^{O(t)} \cdot (\pmin^{-1} d)^{O(t)}
\leq \eta.
\end{align*}

\end{proof}

\begin{proof}[Proof of \cref{lemma:finite-sample-direction-change}]
We have 
\begin{align*}
|\langle W(\mu_i^0 - \mu_j^0), v\rangle - \langle \hat{W}(\mu_i^0 - \mu_j^0), v\rangle|
&= |\langle (W - \hat{W})(\mu_i^0 - \mu_j^0), v\rangle|\\
&= |\langle (I_d - \hat{W}W^{-1})W(\mu_i^0 - \mu_j^0), v\rangle|\\
&\leq \|I_d - \hat{W}W^{-1}\| \cdot \|W(\mu_i^0 - \mu_j^0)\|\\
&= \|I_d - (\hat{W} \operatorname{cov}(\bm{y}^0) \hat{W}^\top)^{1/2}\| \cdot \|W(\mu_i^0 - \mu_j^0)\|.
\end{align*}

By \cref{lemma:finite-sample-isotopic-samples} and \cref{lemma:finite-sample-isotropic-covariance-closeness}, with probability $1-\epsilon$,
\[\left\|I_d - (\hat{W} \operatorname{cov}(\bm y^0)\hat{W}^\top)^{1/2}\right\| \leq \left(\frac{\eta}{\pmin^{-1}}\right)^{O(1)}.\]

Note that $W(\mu_i^0 - \mu_j^0) = \mu_i - \mu_j = \langle \mu_i - \mu_j, u\rangle u$, so $\|W(\mu_i^0 - \mu_j^0)\| = |\langle \mu_i - \mu_j, u\rangle|$. Using that $\sum_{i=1}^k p_i \langle \mu_i, u\rangle^2 \leq 1$, we have that $\sum_{i=1}^k \langle \mu_i, u\rangle^2 \leq \pmin^{-1}$, so $\langle \mu_i, u\rangle^2 \leq \pmin^{-1}$, so $|\langle \mu_i - \mu_j, u\rangle| \leq 2 \sqrt{\pmin^{-1}}$. Then $\|W(\mu_i^0 - \mu_j^0)\| \leq 2 \sqrt{\pmin^{-1}}$.

Therefore, 
\begin{align*}
|\langle W(\mu_i^0 - \mu_j^0), v\rangle - \langle \hat{W}(\mu_i^0 - \mu_j^0), v\rangle|
&\leq \left(\frac{\eta}{\pmin^{-1}}\right)^{O(1)} \cdot 2 \sqrt{\pmin^{-1}} \leq \eta.
\end{align*}
\end{proof}

\begin{proof}[Proof of \cref{lemma:finite-sample-variance-change}.]
We have 
\begin{align*}
\| v^\top W (\Sigma^0)^{1/2} - v^\top \hat{W} (\Sigma^0)^{1/2} \|
&= \| v^\top (W-\hat{W}) (\Sigma^0)^{1/2}\|\\
&= \|v^\top (I_d-\hat{W}W^{-1}) W (\Sigma^0)^{1/2}\|\\
&\leq \|I_d - \hat{W}W^{-1}\| \cdot \|W (\Sigma^0)^{1/2}\|\\
&= \|I_d - (\hat{W} \operatorname{cov}(\bm{y}^0) \hat{W}^\top)^{1/2}\| \cdot \|W (\Sigma^0)^{1/2}\|.
\end{align*}

By \cref{lemma:finite-sample-isotopic-samples} and \cref{lemma:finite-sample-isotropic-covariance-closeness}, with probability $1-\epsilon$,
\[\left\|I_d - (\hat{W} \operatorname{cov}(\bm y^0)\hat{W}^\top)^{1/2}\right\| \leq \eta.\]

Note that, by \cref{lemma:finite-sample-isotropic-matrix-orthogonal}, $W(\Sigma^0)^{1/2} = Q(\Sigma^0)^{-1/2}(\Sigma^0)^{1/2} = Q$ for an orthogonal matrix $Q$. We have $\|Q\|=1$, so $\|W(\Sigma^0)^{1/2}\| = 1$.

Therefore, 
\begin{align*}
\| v^\top W (\Sigma^0)^{1/2} - v^\top \hat{W} (\Sigma^0)^{1/2} \|
&\leq \eta.
\end{align*}
\end{proof}

\subsection{Proofs deferred from Section~\ref{sec:small-radius}}
\label{proofs-small-radius}

\begin{proof}[Proof of \cref{lemma:small-radius-finite-sample}.]
We have
\begin{align*}
\sststile{2t}{v} \hat{\mathbb{E}}\langle \bm y, v\rangle^{2t} - \mathbb{E}\langle \bm y, v\rangle^{2t}
&= \hat{\mathbb{E}}\langle \bm y^{\otimes 2t}, v^{\otimes 2t}\rangle - \mathbb{E}\langle \bm y^{\otimes 2t}, v^{\otimes 2t}\rangle\\
&= \langle \hat{\mathbb{E}} \bm y^{\otimes 2t} - \mathbb{E} \bm y^{\otimes 2t}, v^{\otimes 2t}\rangle\\
&= \langle \hat{\mathbb{E}} (\bm{y}\bm{y}^\top)^{\otimes t} - \mathbb{E} (\bm{y}\bm{y}^\top)^{\otimes t}, (vv^\top)^{\otimes t}\rangle.
\end{align*}

We now bound $\hat{\mathbb{E}} (\bm{y}\bm{y}^\top)^{\otimes t} - \mathbb{E} (\bm{y}\bm{y}^\top)^{\otimes t}$. Define 
\[E = \hat{\mathbb{E}} (\operatorname{cov}(\bm{y})^{-1/2}\bm{y}\bm{y}^\top\operatorname{cov}(\bm{y})^{-1/2})^{\otimes t} - \mathbb{E} (\operatorname{cov}(\bm{y})^{-1/2}\bm{y}\bm{y}^\top\operatorname{cov}(\bm{y})^{-1/2})^{\otimes t}.\] 
By \cref{lemma:finite-sample-isotropic-multidimensional}, with probability \(1-d^{2t}\delta\), we have that
\[\|E\|_F = \|\hat{\mathbb{E}} (\operatorname{cov}(\bm{y})^{-1/2}\bm{y})^{\otimes 2t} - \mathbb{E} (\operatorname{cov}(\bm{y})^{-1/2}\bm{y})^{\otimes 2t}\| \leq \frac{1}{\sqrt{n\delta}} (\pmin^{-1} d)^{O(t)}.\]
For \(n \geq (\pmin^{-1} d)^{O(t)} \eta^{-2} \epsilon^{-1}\) this term is \(\eta\) with probability at least $1-\epsilon$. In this case $\|E\| \leq \|E\|_F \leq \eta$, so 
\[-\eta \cdot \operatorname{cov}(\bm{y})^{\otimes t}\preceq (\operatorname{cov}(\bm{y})^{1/2})^{\otimes t} E (\operatorname{cov}(\bm{y})^{1/2})^{\otimes t} \preceq \eta \cdot \operatorname{cov}(\bm{y})^{\otimes t}.\]
We observe the connection between $\hat{\mathbb{E}} (\bm{y}\bm{y}^\top)^{\otimes t} - \mathbb{E} (\bm{y}\bm{y}^\top)^{\otimes t}$ and $E$:
\[\hat{\mathbb{E}} (\bm{y}\bm{y}^\top)^{\otimes t} - \mathbb{E} (\bm{y}\bm{y}^\top)^{\otimes t}
= (\operatorname{cov}(\bm{y})^{1/2})^{\otimes t} E (\operatorname{cov}(\bm{y})^{1/2})^{\otimes t}.\]
Using this and using that $\operatorname{cov}(\bm{y}) \preceq \mathbb{E} \bm{y}\bm{y}^\top$, we finally obtain that
\[-\eta \cdot \mathbb{E} (\bm{y}\bm{y}^\top)^{\otimes t} \preceq \hat{\mathbb{E}} (\bm{y}\bm{y}^\top)^{\otimes t} - \mathbb{E} (\bm{y}\bm{y}^\top)^{\otimes t} \preceq \eta \cdot \mathbb{E} (\bm{y}\bm{y}^\top)^{\otimes t}.\]

Then 
\begin{align*}
\sststile{2t}{v} \hat{\mathbb{E}}\langle \bm y, v\rangle^{2t} - \mathbb{E}\langle \bm y, v\rangle^{2t}
&= \langle \hat{\mathbb{E}} (\bm{y}\bm{y}^\top)^{\otimes t} - \mathbb{E} (\bm{y}\bm{y}^\top)^{\otimes t}, (vv^\top)^{\otimes t}\rangle\\
&\leq \eta \cdot \langle \mathbb{E} (\bm{y}\bm{y}^\top)^{\otimes t}, (vv^\top)^{\otimes t}\rangle\\
&= \eta \cdot \mathbb{E} \langle \bm{y}, v\rangle^{2t}
\end{align*}
and 
\begin{align*}
\sststile{2t}{v} \hat{\mathbb{E}}\langle \bm y, v\rangle^{2t} - \mathbb{E}\langle \bm y, v\rangle^{2t}
&= \langle \hat{\mathbb{E}} (\bm{y}\bm{y}^\top)^{\otimes t} - \mathbb{E} (\bm{y}\bm{y}^\top)^{\otimes t}, (vv^\top)^{\otimes t}\rangle\\
&\geq -\eta \cdot \langle \mathbb{E} (\bm{y}\bm{y}^\top)^{\otimes t}, (vv^\top)^{\otimes t}\rangle\\
&= -\eta \cdot \mathbb{E} \langle \bm{y}, v\rangle^{2t}.
\end{align*}
The conclusion follows.
\end{proof}

\end{document}